\documentclass{article}
\pdfpagewidth=8.5in
\pdfpageheight=11in
% The file ijcai18.sty is the style file for IJCAI-18 (same as ijcai08.sty).
\usepackage{ijcai18}
\usepackage[framemethod=TikZ]{mdframed}

\usepackage{times, appendix}
\usepackage{xcolor}
\usepackage{soul}
\usepackage[utf8]{inputenc}
\usepackage[small]{caption}

\usepackage{graphicx}
\usepackage{float, hyperref}
\usepackage{amssymb,amstext,amsmath,amsthm}
\usepackage{enumerate}
\usepackage{xspace}
\usepackage{pifont} % needed for qed mark
\usepackage{boxedminipage}
\usepackage{paralist}
\usepackage{thmtools,thm-restate}

\usepackage{tikz, tkz-graph, models, misc}
\usetikzlibrary{arrows, shapes}
\usepackage[lined,linesnumbered]{algorithm2e} % Algorithm-Environment

\newcommand{\mc}[1]{\ensuremath{\mathcal{#1}}\xspace}

\renewcommand{\EL}{\mc{E\kern-0.1emL}}
\renewcommand{\ELI}{\mc{E\kern-0.1emLI}}

\newcommand{\Ind}{\mn{ind}}

\renewcommand{\phi}{\varphi}
\renewcommand{\epsilon}{\varepsilon}

\newcommand{\adom}{\ensuremath{\mathsf{dom}}}

\title{From Conjunctive Queries to Instance Queries in
  Ontology-Mediated Querying}%\thanks{}}

\author{
Cristina Feier$^1$, 
Carsten Lutz$^1$, 
Frank Wolter$^2$, 
\\ 
$^1$ Unversity of Bremen, Germany\\
$^2$ University of Liverpool, UK\\
feier@uni-bremen.de, 
clu@uni-bremen.de, 
wolter@liverpool.ac.uk
}

%\author{Description Logics and Ontologies, Databases}
\begin{document}

\maketitle

\begin{abstract}
  We consider ontology-mediated queries (OMQs) based on expressive
  description logics of the \ALC family and (unions) of conjunctive
  queries, studying the rewritability into OMQs based on instance
  queries (IQs). Our results include exact characterizations of when
  such a rewriting is possible and tight complexity bounds for
  deciding rewritability. We also give a tight complexity bound for
  the related problem of deciding whether a given MMSNP sentence is
  equivalent to a CSP.
%constraint satisfaction problem.
\end{abstract}

\section{Introduction}

An ontology-mediated query (OMQ) is a database-style query enriched
with an ontology that contains domain knowledge, aiming to deliver
more complete answers
\cite{DBLP:conf/rweb/CalvaneseGLLPRR09,DBLP:journals/tods/BienvenuCLW14,DBLP:conf/rweb/BienvenuO15}. In
OMQs, ontologies are often formulated in a description logic (DL) and
query languages of interest include conjunctive queries (CQs), unions
of conjunctive queries (UCQs), and instance queries (IQs).  While CQs
and UCQs are widely known query languages that play a fundamental role
also in database systems and theory, IQs are more closely linked
to~DLs.  In fact, an IQ takes the form $C(x)$ with $C$ a concept
formulated in the DL that is also used for the ontology, and thus the
expressive power of IQs depends on the ontology language. OMQs
based on (U)CQs are more powerful than OMQs based on IQs as the
latter only serve to return all objects from the data that are instances
of a given class.

It is easy to see that IQs can express tree-shaped CQs
with a single answer variable as well as unions thereof.  In fact,
this observation has been used in many technical constructions in the
area, see for example
\cite{DBLP:conf/pods/CalvaneseGL98,DBLP:journals/jair/GlimmLHS08,DBLP:conf/cade/Lutz08,DBLP:journals/jcss/EiterOS12}.
Intriguingly, though, it was observed by Zolin
\shortcite{DBLP:conf/dlog/Zolin07} that tree-shaped CQs are not the
limit of IQ-rewritability when we have an expressive DL such as \ALC
or \ALCI at our disposal. For example, the CQ $r(x,x)$, which asks to
return all objects from the data that are involved in a reflexive
$r$-loop, can be rewritten into the equivalent \ALC-IQ $P \rightarrow
\exists r. P(x)$. Here, $P$ behaves like a monadic second-order
variable due to the open-world assumption made for OMQs: we are free
to interpret $P$ in any possible way and when making $P$ true at an
object we are forced to make also $\exists r . P$ true if and only if
the object is involved in a reflexive $r$-loop. It is an interesting
question, raised in
\cite{DBLP:conf/dlog/Zolin07,DBLP:journals/japll/KikotZ13,DBLP:conf/dlog/KikotTZZ13},
to precisely characterize the class of CQs that are rewritable into
IQs. An important step into this direction has been made by Kikot and
Zolin~\shortcite{DBLP:journals/japll/KikotZ13} who identify a large
class of CQs that are rewritable into IQs: a CQ is rewritable into an
\ALCI-IQ if it is connected and every cycle passes through the (only)
answer variable; for rewritability into an \ALC-IQ, one additionally
requires that all variables are reachable from the answer variable in
a directed sense.  It remained open whether these classes are
depleting, that is, whether they capture all CQs that are IQ-rewritable.

%Apart from natural curiosity, t
There are two additional motivations to study the stated question. The
first one comes from concerns about the practical implementation of
OMQs. When the ontology is formulated in a more inexpressive `Horn
DL', OMQ evaluation is possible in \PTime data complexity and a host
of techniques for practically efficient OMQ evaluation is available,
see for example \cite{perezurbina10tractable,DBLP:conf/aaai/EiterOSTX12,DBLP:journals/ws/TrivelaSCS15,DBLP:conf/ijcai/LutzTW09}.
% including query rewriting \cite{some,stuff} and materialization based
% approaches \cite{combined,approach}.
In the case of expressive DLs such as \ALC and \ALCI, OMQ evaluation
is {\sc coNP}-complete in data complexity and efficient implementation
is much more challenging. In particular, there are hardly any systems
that fully support such OMQs when the actual queries are (U)CQs. In
contrast, the evaluation of OMQs based on (expressive DLs and) IQs
is % closer in spirit to the
% reasoning problems that standard DL reasoners concentrate on and
supported by several systems such as Pellet, Hermit, and PAGOdA
\cite{pellet,hermit,DBLP:journals/jair/ZhouGNKH15}. For this reason,
rewriting (U)CQs into IQs has been advocated in
\cite{DBLP:conf/dlog/Zolin07,DBLP:journals/japll/KikotZ13,DBLP:conf/dlog/KikotTZZ13}
as an approach towards efficient OMQ evaluation with expressive DLs
and (U)CQs. The experiments and optimizations reported in
\cite{DBLP:conf/dlog/KikotTZZ13} show the potential (and challenges)
of this approach. % In this context, it is also interesting to recall
% that for expressive DLs that include inverse roles such as \ALCI, the
% combined complexity of ontology-mediated query evaluation is
% 2\ExpTime-complete for CQs and UCQs while it is only \ExpTime-complete
% for IQs \cite{DBLP:conf/cade/Lutz08}.

The second motivation stems from the connection between OMQs and
constraint satisfaction problems (CSPs)
\cite{DBLP:journals/tods/BienvenuCLW14,DBLP:journals/lmcs/LutzW17}. Let
$(\Lmc,\Qmc)$ denote the class of OMQs based on ontologies formulated
in the DL \Lmc and the query language \Qmc. It was observed in
\cite{DBLP:journals/tods/BienvenuCLW14} that $(\ALCI,\text{IQ})$ is
closely related to the complement of CSPs while $(\ALCI,\text{UCQ})$
is closely related to the complement of the logical generalization
MMSNP of CSP; we further remark that MMSNP is a notational variant of
the complement of (Boolean) monadic disjunctive Datalog. Thus,
characterizing OMQs from $(\ALCI,\text{UCQ})$ that are rewritable into
$(\ALCI,\text{IQ})$ is related to characterizing MMSNP sentences that
are equivalent to a CSP, and we also study the latter problem. In
fact, the main differences to the OMQ case are that unary queries are
replaced with Boolean ones and that predicates can have unrestricted arity.

The main aim of this paper is to study the rewritability of OMQs from
$(\Lmc,\text{(U)CQ})$ into OMQs from $(\Lmc,\text{IQ})$, considering
as \Lmc the basic expressive DL \ALC as well as extensions of \ALC
with inverse roles, role hierarchies, the universal role, and
functional roles. %Concentrating mainly on the case of unary OMQs,
We provide precise characterizations, tight complexity bounds for
deciding whether a given OMQ is rewritable, and show how to construct
the rewritten query when it exists. In fact, we prove that the classes
of CQs from \cite{DBLP:journals/japll/KikotZ13} are depleting, but we
go significantly beyond that: while
\cite{DBLP:conf/dlog/Zolin07,DBLP:journals/japll/KikotZ13,DBLP:conf/dlog/KikotTZZ13}
aim to find IQ-rewritings that work for \emph{any} ontology, we
consider the more fine-grained question of rewriting into an IQ an OMQ
$(\Tmc,\Sigma,q(x))$ where \Tmc is a DL TBox formalizing the ontology,
$\Sigma$ is an ABox signature, and $q(x)$ is the actual query. The
`any ontology' setup then corresponds to the special case where \Tmc
is empty and $\Sigma$ is full. However, gaving a non-empty TBox or a
non-full ABox signature results in additional (U)CQs to become
rewritable. While we admit modification of the TBox during rewriting,
it turns out that this is mostly unnecessary: only in some rather
special cases, a moderate \emph{extension} of the TBox pays off.
All this requires non-trivial generalizations of the query classes and
IQ-constructions from \cite{DBLP:journals/japll/KikotZ13}. Our
completeness proofs involve techniques that stem from the connection
between OMQs and CSP such as a lemma about ABoxes of high girth due to
Feder and Vardi~\shortcite{DBLP:journals/siamcomp/FederV98}. The
rewritings we construct are of polynomial size when we work with the
empty TBox, but can otherwise become exponential in size.

Regarding IQ-rewritability as a decision problem, we show {\sc
  NP}-completeness for the case of the empty TBox. This can be viewed
as an underapproximation for the case with non-empty TBox and ABox
signature. With non-empty TBoxes, complexities are higher. When the
ABox signature is full, we obtain 2\ExpTime-completeness for DLs with
inverse roles and an \ExpTime lower bound and a \coNExpTime upper
bound for DLs without inverse roles. With unrestricted ABox signature,
the problem is 2\NExpTime-complete for DLs with inverse roles and
\NExpTime-hard (and in 2\NExpTime) for DLs without inverse roles. All
lower bounds hold for CQs and all upper bounds capture UCQs. We also
prove that it is 2\NExpTime-complete to decide whether a given MMSNP
sentence is equivalent to a CSP. This problem was known to be
decidable \cite{DBLP:journals/siamcomp/MadelaineS07}, but the
complexity was open.

We also consider $\mathcal{ALCIF}$, the extension of \ALCI with
functional roles, for which IQ-rewritability turns out to be undecidable
and much harder to characterize. We give a rather subtle
characterization for the case of the empty TBox and full ABox
signature and show that the decision problem is then decidable and {\sc
  NP}-complete. Since it is not clear how to apply CSP techniques, we
use an approach based on ultrafilters, starting from what was
done for \ALC without functional roles %in the context of modal definability 
in \cite{DBLP:journals/japll/KikotZ13}.
 
%\enlargethispage*{8mm}

Full proofs are in the appendix.%, available at \url{http://www.informatik.uni-bremen.de/tdki/research/papers.html}.

\section{Preliminaries}
\label{sec:prelim}

We use standard description logic notation and refer to
\cite{DBLP:books/daglib/0041477} for full details. In contrast to the
standard DL literature, we carefully distinguish between the
\emph{concept language} and the \emph{TBox language}. We consider four
concept languages. Recall that \emph{\ALC-concepts} are formed
according to the syntax rule
$$
  C,D ::= A \mid \neg C \mid C \sqcap D \mid C \sqcup D \mid \exists r
  . C \mid \forall r . C
$$
where $A$ ranges over \emph{concept names} and $r$ over \emph{role
  names}. As usual, we use $C \rightarrow D$ as an abbreviation for
$\neg C \sqcup D$. \emph{\ALCI-concepts} additionally admit the use of
\emph{inverse roles} $r^-$ in concept constructors $\exists r^-.C$ and
$\forall r^-.C$. With a \emph{role}, we mean a role name or an inverse
role. \emph{$\ALC^u$-concepts} additionally admit the use of the
\emph{universal role} $u$ in concept constructors $\exists u . C$ and
$\forall u . C$. In \emph{$\ALCI^u$-concepts}, both inverse roles and
the universal role are admitted.

We now introduce several TBox languages.  For $\Lmc$ one of the four
concept languages introduced above, an \emph{\Lmc-TBox} is a finite
set of \emph{concept inclusions} $C \sqsubseteq D$ where $C$ and $D$
are \Lmc concepts. So each concept language also serves as a TBox
language, but there are additional TBox languages of interest. We
include the letter \Hmc in the name of a TBox language to indicate
that \emph{role inclusions} $r \sqsubseteq s$ are also admitted in the
TBox and likewise for the letter \Fmc and \emph{functionality
  assertions} $\mn{func}(r)$ where in both cases $r,s$ are role names
or inverse roles in case that the concept language used admits inverse
roles. So it should be understood, for example, what we mean with an
$\ALCHI^u$-TBox and an \ALCFI-TBox.  As usual, the semantics is
defined in terms of interpretations, which take the form
$\Imc=(\Delta^\Imc,\cdot^\Imc)$ with $\Delta^\Imc$ a non-empty
\emph{domain} and $\cdot^\Imc$ an \emph{interpretation function}. An
interpretation is a \emph{model} of a TBox \Tmc if it satisfies all
inclusions and assertions in \Tmc, defined in the usual way. We write
$\Tmc \models r \sqsubseteq s$ if every model of \Tmc also satisfies
the role inclusion $r \sqsubseteq s$. 

An \emph{ABox} is a set of \emph{concept assertions} $A(a)$ and
\emph{role assertions} $r(a,b)$ where $A$ is a concept name, $r$ a
role name, and $a,b$ are \emph{individual names}. We use
$\mn{ind}(\Amc)$ to denote the set of all individual names that occur
in \Amc. An interpretation is a \emph{model} of an ABox \Amc if it
\emph{satisfies} all concept and role assertions in \Amc, that is, $a
\in A^\Imc$ when $A(a)$ is in \Amc and $(a,b) \in r^\Imc$ when
$r(a,b)$ is in \Amc. An ABox is \emph{consistent with a TBox} \Tmc if
\Amc and \Tmc have a common model.  A \emph{signature} $\Sigma$ is a
set of concept and role names. We use $\mn{sig}(\Tmc)$ to denote the
set of concept and role names that occur in the TBox \Tmc, and likewise for
other syntactic objects such as ABoxes.  A \emph{$\Sigma$-ABox} is an
ABox \Amc such that $\mn{sig}(\Amc) \subseteq \Sigma$. % We use
% $\Sigma_{\mn{full}}$ to denote the full signature, that is, the set of
% all concept and role names. To emphasize that a signature is used to
% constrain the class of admissible ABoxes, we speak of an \emph{ABox
%   signature}.

A \emph{conjunctive query (CQ)} is of the form $ q(\xbf) = \exists
\ybf\,\varphi(\xbf,\ybf), $ where \xbf and \ybf are tuples of
variables and $\varphi(\xbf,\ybf)$ is a conjunction of \emph{atoms} of
the form $A(x)$ or $r(x,y)$ with $A$ a concept name, $r$ a role name,
and $x,y \in \xbf \cup \ybf$.  We call \xbf the \emph{answer
  variables} of $q(\xbf)$ and \ybf \emph{quantified variables}.  For
purposes of uniformity, we use $r^-(x,y)$ as an alternative notation
to denote an atom $r(y,x)$ in a CQ.  In fact, when speaking about
\emph{an atom $r(x,y)$ in a CQ $q(\xbf)$}, we generally also include
the case that $r=s^-$ and $s(y,x)$ is the actual atom in $q(\xbf)$,
unless explicitly noted otherwise.  Every CQ $q(\xbf)=\exists \ybf \,
\vp(\xbf,\ybf)$ gives raise to a directed graph $G_q$ whose nodes are
the elements of $\xbf \cup \ybf$ and that contains an edge from $x$ to
$y$ if $\vp(\xbf,\ybf)$ contains an atom $r(x,y)$. The corresponding
undirected graph is denoted $G^u_q$ (it might contain self loops). We
can thus use standard terminology from graph theory to CQs, saying for
example that a CQ is \emph{connected}.
A \emph{homomorphism} from $q(\xbf)$ to an interpretation \Imc is a
function $h : \xbf \cup \ybf \to \Delta^\Imc$ such that $h(x) \in
A^\Imc$ for every atom $A(x)$ of $q(\xbf)$ and $(h(x),h(y)) \in
r^\Imc$ for every atom $r(x,y)$ of $q(\xbf)$.  We write $\Imc \models
q(\abf)$ and call \abf an \emph{answer to $q(\xbf)$ on \Imc} if there
is a homomorphism from $q(\xbf)$ to $\Imc$ with $h(\xbf) = \abf$.

A \emph{union of conjunctive queries (UCQ)} $q(\xbf)$ is a disjunction
of one or more CQs that all have the same answer variables~\xbf.  We
say that a UCQ is \emph{connected} if every CQ in it is. The
\emph{arity} of a (U)CQ is the number of answer variables in it.  For
$\Lmc \in \{ \ALC,\ALCI, \ALC^u, \ALCI^u \}$, an \emph{\Lmc-instance
  query (\Lmc-IQ)} takes the form $C(x)$ where $C$ is an \Lmc concept
and $x$ a variable. We write $\Imc \models C(a)$ if $a \in C^\Imc$.
All instance queries have arity~1.

%
% A tuple $\abf$ of elements from $\mn{N_I}$ is a
% \emph{certain answer} to $q$ over an ABox \Amc given a TBox \Tmc,
% written $\Tmc,\Amc \models q(\abf)$, if $\Imc \models q(\abf)$ for all
% models of \Tmc and \Amc.

An \emph{ontology-mediated query (OMQ)} takes the form
$Q=(\Tmc,\Sigma,q(\xbf))$ with \Tmc a TBox, $\Sigma \subseteq
\mn{sig}(\Tmc) \cup \mn{sig}(q)$ an ABox signature, and $q(\xbf)$ a
query.\footnote{The requirement $\Sigma \subseteq \mn{sig}(\Tmc) \cup
  \mn{sig}(q)$ is %not always adopted in the literature, but it is
  harmless since symbols in the ABox that are not from $\mn{sig}(\Tmc) \cup
  \mn{sig}(q)$ do not affect answers.} The \emph{arity} of $Q$ is the arity of $q(\xbf)$ and $Q$ is 
\emph{Boolean} if it has arity zero. When
$\Sigma$ is $\mn{sig}(\Tmc) \cup \mn{sig}(q)$, then for brevity we
denote it with $\Sigma_{\mn{full}}$ and speak of the \emph{full ABox
  signature}.  Let \Amc be a $\Sigma$-ABox. A tuple $\abf \in
\mn{ind}(\Amc)$ is an \emph{answer} to $Q$ on \Amc if $\Imc \models
q(\abf)$ for all models \Imc of \Amc and~\Tmc.  We say that $Q$ is
\emph{empty} if for all $\Sigma$-ABoxes~\Amc, there is no answer to
$Q$ on \Amc.  Let $Q_1,Q_2$ be OMQs, $Q_i=(\Tmc_i,\Sigma,q_i(\xbf))$
for $i \in \{1,2\}$.  Then $Q_1$ is \emph{contained} in $Q_2$, written
$Q_1 \subseteq Q_2$, if for all $\Sigma$-ABoxes \Amc, every answer to
$Q_1$ on \Amc is also an answer to $Q_2$ on \Amc. Further, $Q_1$ and
$Q_2$ are \emph{equivalent}, written $Q_1 \equiv Q_2$, if $Q_1
\subseteq Q_2$ and $Q_2 \subseteq Q_1$.

We use $(\Lmc,\Qmc)$ to refer to the \emph{OMQ language} in which the
TBox is formulated in the language \Lmc and where the actual queries
are from the language \Qmc, such as in $(\ALCF,\text{UCQ})$. For
brevity, we generally write $(\Lmc,\text{IQ})$ instead of
$(\Lmc,\text{$\Lmc'$-IQ})$ when $\Lmc'$ is the concept language
underlying the TBox language $\Lmc$, so for example
$(\ALCHI,\text{IQ})$ is short for $(\ALCHI,\text{\ALCI-IQ})$.
%
% {\color{blue}Note that our notation $(\Lmc,\text{IQ})$ is actually a
%   bit strange because ``IQ'' does not denote a fixed class of queries;
%   rather, the meaning of ``IQ'' depends on what \Lmc is! We should at
%   least make a remark.}
  %
%   \medskip
%
\begin{definition}
\label{def:general}
Let $(\Lmc,\Qmc)$ be an OMQ language. An OMQ $Q=(\Tmc,\Sigma,q(\xbf))$ is \emph{$(\Lmc,\Qmc)$-rewritable} if there is an OMQ $Q'$ from $(\Lmc,\Qmc)$ such that the answers to $Q$ and to $Q'$ are identical on any $\Sigma$-ABox that is consistent with \Tmc.
%
% for all $\Sigma$-ABoxes \Amc consistent with
% \Tmc and all $\abf \in \mn{ind}(\Amc)$, we have $\Amc \models Q(\abf)$
% iff $\Amc \models Q'(\abf)$.
In this case, we say that $Q$ \emph{is rewritable} into $Q'$ and call $Q'$ a \emph{rewriting} of $Q$.
\end{definition}
%
% The above definition is central to what we do in this paper, but in
% the stated form it is also very general. In fact, we shall concentrate on
% the problem of IQ-rewritability, defined as follows.
% case where $(\Lmc,\Qmc)$ takes the form $(\Lmc,\text{IQ})$ and
% where $\Tmc$ is formulated in \Lmc.
% In particular, we thus never aim
% to replace the TBox in an OMQ with a TBox formulated in a different
% DL.l
% When these conditions are satisfied and \Lmc is understood, we may
% say for brevity that $Q$ is \emph{IQ-rewritable} to mean that $Q$ is
% $(\Lmc,\text{IQ})$-rewritable. 
Let $(\Lmc,\Qmc)$ be an OMQ-language. \emph{IQ-rewritability in
  $(\Lmc,\Qmc)$} is the problem to decide whether a given (unary) OMQ
$Q=(\Tmc,\Sigma,q(x))$ from $(\Lmc,\Qmc)$ is
$(\Lmc,\text{IQ})$-rewritable; for brevity, we simply speak of
\emph{IQ-rewritability} of $Q$ when this is the case.  The following
examples show that IQ-rewritability of $Q$ depends on several factors.
All claims made are sanctioned by results established in this paper.
%

% \begin{figure}%[t]
% %\vspace{3mm}
% %\hspace{-1mm}
% %\tabcolsep=0.1cm

% %\begin{tabular}{c|c}
% %\hspace{-4mm}

% %\fbox{
% \begin{mdframed}[roundcorner=10pt]
% \begin{center}

% \begin{tikzpicture}[auto, scale=1.0]
% \GraphInit

% \scriptsize
% %\node(blav) at (0.2, 1) {$\Tmc=\emptyset$};
% \node(x)[root] at (0, 0) {$x$};
% \node(x1) [active] at (1, 0) {$x_1$};
% \node(y2) [active] at (2, -0.7) {$y_2$};
% \node(y1) [active] at (2, 0.7) {$y_1$};
% \node(z) [active] at (3, 0) {$z$};

% %\node(qx) at (-0.5, 0) {$q(x)$: };
% \node(xl) at (0, -0.35) {$A$}; 
% \node(y2l)  at (2, -1.1) {$B_2$};
% \node(y1l)  at (2, 1.1) {$B_1$};

% \draw [->] (x)  to (x1);
% %\draw [style=dashed][->] (t)  to (y);
% \draw [->] (x1)  to (y1);
% \draw [->] (x1)  to (y2);
% \draw [->] (y1)  to (z);
% \draw [->] (y2)  to (z);
% \end{tikzpicture}
% %}
% \end{center}

% \end{mdframed}

% \caption{The CQ from Point~(4) of Example~\ref{ex:1}}
% \vspace{-2mm}
% \label{fig:RewrTBox}
% \end{figure}

\begin{example}
\label{ex:1}
(1)~IQ-rewritability depends on the topology of the actual query. Let
%$\Sigma_{\mn{full}} = \{ r \}$ and 
$q_1(x)=r(x,x)$. The OMQ
$(\emptyset,\Sigma_{\mn{full}},q_1(x))$ is rewritable into the OMQ
$(\emptyset,\Sigma_{\mn{full}},C(x))$ from $(\ALC,\text{IQ})$ where
$C$ is  $P \rightarrow \exists r . P.
$
In contrast, let $q_2(x)=\exists y \, s(x,y) \wedge r(y,y)$. The OMQ
$(\emptyset,\Sigma_{\mn{full}},q_2(x))$ is not rewritable into an OMQ from
$(\ALCI,\text{IQ})$.

(2)~If we are not allowed to extend the TBox, IQ-rewritability depends
on whether or not inverse roles are available.  Let $\Sigma = \{r,s\}$
and $q(x)=\exists y \, r(y,x) \wedge s(y,x)$.  The OMQ
$Q=(\emptyset,\Sigma,q(x))$ is rewritable into the OMQ
$(\emptyset,\Sigma,C(x))$ from $(\ALCI,\text{IQ})$ where
$C$ is
$P \rightarrow \exists r^- . \exists s . P.
$
$Q$ is also rewritable into the OMQ $(\Tmc,\Sigma, C'(x))$
from $(\ALC,\text{IQ})$ where
$
  \Tmc = \{ \exists s . P \sqsubseteq \forall r . P' \},
$
and
$C$ is $P \rightarrow P'$,
but it is not rewritable into any OMQ $(\Tmc,\Sigma,C''(x))$ from
$(\ALC,\text{IQ})$ with $\Tmc=\emptyset$. 

(3)~IQ-rewritability depends on the TBox. Let $q(x)=\exists x_1
\exists y_1 \exists y_2 \exists z \, A(x) \wedge r(x,x_1)\wedge
r(x_1,y_1) \wedge r(x_1,y_2) \wedge r(y_1,z) \wedge r(y_2,z) \wedge
B_1(y_1) \wedge B_2(y_2)$. % , depicted in 
% Figure~\ref{fig:RewrTBox}.
The OMQ
$(\emptyset,\Sigma_{\mn{full}},q(x))$ is not rewritable into an OMQ
from $(\ALCI,\text{IQ})$.  Let $\Tmc = \{ A \sqsubseteq \exists
r . \exists r . (B_1 \sqcap B_2 \sqcap \exists r . \top) \}$. The OMQ
$(\Tmc,\Sigma_{\mn{full}},q(x))$ is rewritable into the OMQ
$(\Tmc,\Sigma_{\mn{full}},A(x))$ from $(\ALC,\text{IQ})$.

(4)~IQ-rewritability depends on the ABox signature. Let $q(x)$ be the
CQ from (3) without the atom $A(x)$ and let \Tmc be as in (3). The
OMQ $(\Tmc,\Sigma_{\mn{full}},q(x))$ is not rewritable into an OMQ
from $(\ALCI,\text{IQ})$. Let $\Sigma = \{ A \}$. The OMQ
$(\Tmc,\Sigma,q(x))$ is rewritable into the OMQ
$(\Tmc,\Sigma,A(x))$ from $(\ALC,\text{IQ})$. 
\end{example}
Note that we are allowed to completely rewrite the TBox when
constructing IQ-rewritings, which might seem questionable from a
practical perspective.  Fortunately, though, it turns out the TBox can
always be left untouched or, in some rare cases, only needs to be
slightly extended. Also note that an alternative % and equally
% natural
definition of IQ-rewritability obtained by dropping the restriction to
ABoxes consistent with \Tmc in Definition~\ref{def:general}. All
results obtained in this paper hold under both definitions.  We
comment on this throughout the paper % whenever appropriate
and refer to the alternative version as \emph{unrestricted
  IQ-rewritability}.

\section{Characterizations}
\label{sect:charact}

%preparatory Lemma: lifting the construction of graph covers to covers of pointed structures: local c-acyclicity. Maybee even introducing graph covers just here? 

We aim to provide characterizations of OMQs that are IQ-rewritable.
On the one hand, these characterizations clarify which OMQs are
IQ-rewritable and which are not. On the other hand, they form the
basis for deciding IQ-rewritability. We first concentrate on the case
of DLs (and IQs) with inverse roles and then move on to DLs without
inverse roles. In the final part of this section, we consider the case
where the TBox is empty, both with and without inverse roles.

\subsection{The Case With Inverse Roles}

To state the characterization, we need some preliminaries.  Let $q(x)$
be a CQ.  % Note that in atoms $r(x,y)$ in $q(x)$, $r$ must be a role
% name but cannot be an inverse role.
%
% we also say that $r^-(x,y)$ \emph{is an atom in} $q(x)$ if $r(y,x)$ is
% an atom in $q(x)$.
%
A \emph{cycle} in $q(x)$ is a sequence of non-identical atoms
$r_0(x_0,x_1),\dots,r_{n-1}(x_{n-1},x_n)$ in $q(x)$, \mbox{$n \geq 1$},
where\footnote{We require the atoms be non-identical to
  prevent $r(x_0,x_1)$, $r^-(x_1,x_0)$ from being a cycle (both atoms are
  identical).}
 \begin{enumerate}

 \item $r_0,\dots,r_{n-1}$ are (potentially inverse) roles,

 \item $x_i \neq x_j$ for $0 \leq i < j < n$, and $x_0=x_n$.

 \end{enumerate}
 The \emph{length} of this cycle is $n$.  We say that $q(x)$
 is % =\exists \ybf \, \vp(x,\ybf)$ is
 \emph{$x$-acyclic} if every cycle in it passes through $x$ and use
 $q^{\mn{con}}(x)$ to denote the result of restricting $q(x)$ to those
 atoms that only use variables reachable in $G^u_q$ from $x$.  Both
 notions are lifted to UCQs by applying them to every CQ in the UCQ.
 A \emph{contraction} of $q(x)$ is a CQ obtained from $q(x)$ by zero
 or more variable identifications, where the identification of $x$
 with any other variable yields~$x$.

 Let $\Tmc$ be an $\ALCHI^u$-TBox and $q(x)$ a UCQ. We use
 $q_{\mn{acyc}}(x)$ to denote the UCQ that consists of all $x$-acyclic
 CQs obtained by starting with a contraction of a CQ from $q(x)$ and
 then replacing zero or more atoms $r(y,z)$ with $s(y,z)$ when $\Tmc
 \models s \sqsubseteq r$. % or with $s(y,x)$ when \mbox{$\Tmc
   % \models s \sqsubseteq r^-$.} 
 We write $q^{\mn{con}}_{\mn{acyc}}(x)$
 to denote $(q_{\mn{acyc}})^{\mn{con}}(x)$.
\begin{restatable}{theorem}{thmcentral}
\label{thm:central}
Let $\Lmc \in \{ \ALCI, \ALCHI \}$ and let $Q=(\tbox, \Sigma,q(x))$ be
a unary OMQ from $(\Lmc, \text{UCQ})$ that is non-empty. Then the
following are equivalent:
  \begin{enumerate}

  \item $Q$ is IQ-rewritable, that is, it is rewritable into an OMQ $Q'=(\Tmc', \Sigma,
    C(x))$ from $(\Lmc,\text{IQ})$;

  \item $Q$ is rewritable into an OMQ $Q'=(\tbox, \Sigma, C(x))$ from~$(\Lmc,\text{IQ})$;

  % \item (a)~$Q \supseteq (\Tmc, \Sigma, q_{\mn{con}}(x))$ and (b)~$Q \subseteq (\Tmc, \Sigma, q_{\mn{acyc}}^{\Tmc}(x))$.

  \item $Q \equiv (\Tmc, \Sigma, q^{\mn{con}}_{\mn{acyc}}(x))$.

\end{enumerate} 
%
% When \ALCI is replaced with \ALC, then the same equivalences hold except that in Point~3, one additionally demands that (c)~$Q \supseteq (\Tmc, \Sigma, q_{\mn{dreach}}(x))$. 
When \Lmc is replaced with $\Lmc^u$, then the same equivalences hold
except that $q^{\mn{con}}_{\mn{acyc}}$ is replaced with
$q_{\mn{acyc}}$. 
%in Point~3, (a) is dropped. 
%$\ALC^u$ accordingly.A
%
\end{restatable}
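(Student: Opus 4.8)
The plan is to prove the cycle of implications $(3) \Rightarrow (2) \Rightarrow (1) \Rightarrow (3)$, with the final implication being the substantial direction. The implication $(2) \Rightarrow (1)$ is immediate, since an OMQ $Q' = (\Tmc, \Sigma, C(x))$ is in particular an OMQ from $(\Lmc, \text{IQ})$.

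For $(3) \Rightarrow (2)$, I would show that $q^{\mn{con}}_{\mn{acyc}}(x)$, when it is equivalent to $Q$, can itself be rewritten into an $\Lmc$-IQ over the \emph{same} TBox $\Tmc$. The key observation is that $q^{\mn{con}}_{\mn{acyc}}(x)$ is a UCQ each of whose CQs is $x$-acyclic and connected (after the $(\cdot)^{\mn{con}}$ restriction), i.e.\ it is a disjunction of CQs each of whose undirected graph $G^u_q$ is connected with all cycles passing through $x$. Such a CQ can be unravelled from $x$ into a tree-shaped (possibly with a bounded self-loop/return at $x$) structure, and tree-shaped CQs with a single answer variable are expressible as $\Lmc$-IQs — this is exactly the classical observation recalled in the introduction (and in the inverse-role case $\ALCI$ suffices; for $\ALCI^u$ one uses the universal role $u$ to additionally capture disconnected pieces, which explains why the $(\cdot)^{\mn{con}}$ restriction is dropped in the $\Lmc^u$ variant). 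Taking the disjunction over the finitely many CQs in $q^{\mn{con}}_{\mn{acyc}}(x)$ and the corresponding disjunction ($\sqcup$) of concepts yields a single $\Lmc$-IQ $C(x)$, and no modification of $\Tmc$ is needed. The contraction and role-weakening steps in the definition of $q_{\mn{acyc}}$ are needed precisely so that $q^{\mn{con}}_{\mn{acyc}}(x)$ is entailed by $Q$ on every $\Sigma$-ABox consistent with $\Tmc$; I would verify that $Q \subseteq (\Tmc, \Sigma, q^{\mn{con}}_{\mn{acyc}}(x))$ always holds, so that the content of (3) is really the reverse containment.

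The hard direction is $(1) \Rightarrow (3)$: assuming $Q$ has \emph{some} IQ-rewriting $Q' = (\Tmc', \Sigma, C(x))$, I must show $Q \equiv (\Tmc, \Sigma, q^{\mn{con}}_{\mn{acyc}}(x))$. The containment $(\Tmc, \Sigma, q^{\mn{con}}_{\mn{acyc}}(x)) \subseteq Q$ is the easy half and follows from the definition of $q_{\mn{acyc}}$ together with role-hierarchy reasoning in $\Tmc$. For the reverse containment $Q \subseteq (\Tmc, \Sigma, q^{\mn{con}}_{\mn{acyc}}(x))$, I would argue by contraposition: suppose there is a $\Sigma$-ABox $\Amc$, consistent with $\Tmc$, and an individual $a$ such that $a$ is an answer to $Q$ on $\Amc$ but \emph{not} an answer to $(\Tmc, \Sigma, q^{\mn{con}}_{\mn{acyc}}(x))$. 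The goal is to derive a contradiction with the existence of the IQ-rewriting $C(x)$. The engine here is the high-girth ABox lemma of Feder and Vardi: from $\Amc$ one builds a new $\Sigma$-ABox $\Amc'$ of high girth that still makes $a$ an answer to $Q$ (this uses that $Q$'s CQs, once $x$-acyclic cyclic behaviour is excluded, can only match using the ``local'' tree structure that survives in a high-girth unfolding) but on which — because $C(x)$ is an IQ and hence its truth at $a$ depends only on the \emph{bisimulation type} (or, with the TBox, a suitable game-theoretic/tree-model-property invariant) of $a$ — one can separate $a$ from being an answer to $Q'$. Concretely: an IQ cannot distinguish an ABox individual from its tree-unravelling, so if $C(x)$ held at $a$ in $\Amc'$ it would hold at the root of any tree ABox $\Amc''$ with the same local structure; but on such a tree ABox the only CQ matches available to $Q$ are the $x$-acyclic contractions, i.e.\ exactly the CQs in $q^{\mn{con}}_{\mn{acyc}}(x)$ (in the $\Lmc^u$ case, the possibly-disconnected $q_{\mn{acyc}}$), and by assumption none of these matches — contradicting that $a$ is an answer to $Q \equiv Q'$ on $\Amc''$. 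Making this precise requires care about (i) what invariant under the TBox $\Tmc'$ an $\Lmc$-IQ actually preserves (a bisimulation game of bounded depth, or the existence of a tree model), (ii) ensuring the high-girth ABox is still $\Sigma$ and still consistent with $\Tmc$, and (iii) handling the universal role in the $\Lmc^u$ case, where bisimulations must be global and the relevant ``tree'' ABox is a disjoint union of trees. I expect step (i) — pinning down exactly which model-theoretic invariant $\Lmc$-IQs preserve in the presence of an arbitrary rewriting TBox $\Tmc'$, so that one may freely replace $\Amc'$ by its tree unravelling without changing the answer status — to be the main obstacle, and the Feder–Vardi high-girth construction to be the main technical tool that makes the contrapositive argument go through.
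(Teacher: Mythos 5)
Your overall architecture ($(3)\Rightarrow(2)\Rightarrow(1)\Rightarrow(3)$, with Feder--Vardi high-girth ABoxes powering the hard direction) matches the paper, but the pivotal step of your $(1)\Rightarrow(3)$ argument rests on a false invariance claim. You assert that ``an IQ cannot distinguish an ABox individual from its tree-unravelling'' and want to transfer non-answerhood from a tree ABox back to $\Amc'$. This is refuted by the paper's opening example: for $\Amc=\{r(a,a)\}$ and the IQ $(P\rightarrow\exists r.P)(x)$ with empty TBox, $a$ is a certain answer on $\Amc$ but not on the unravelled infinite $r$-path (interpret $P$ as $\{a\}$ there). Indeed, if IQ-OMQs were unravelling-invariant, no CQ with a cycle through $x$ would ever be IQ-rewritable, contradicting the theorem itself. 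The invariant the paper actually uses is the CSP connection: an OMQ from $(\Lmc,\text{IQ})$ is characterized by \emph{non-homomorphism into a finite set of pointed templates} $(\Bmc,b_i)$ (Lemma~\ref{lem:ToBeDone}), and a \emph{pointed} version of the sparse incomparability lemma produces $\Amc^g$ of high $a$-girth with $(\Amc^g,a)\rightarrow(\Amc,a)$ that has the same homomorphism behaviour into these templates, whence $\Amc^g\models Q(a)$. One then evaluates $Q$ on $\Amc^g$ directly using \emph{forest models} of $\Amc^g$ and $\Tmc$ (the tree-shaped part is the anonymous TBox-generated part hanging off the intact high-girth ABox, not an unravelling of the ABox), concluding that every match must factor through an $x$-acyclic contraction, and finally pulls $Q_{\mn{acyc}}$ back to $\Amc$ along the homomorphism. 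Your plan as stated would not survive the $r(x,x)$ example. A second, related omission: you never use the non-emptiness hypothesis, which the paper needs precisely to establish $Q\equiv Q^{\mn{con}}$ (the connectivity half of Condition 3) via a disjoint-union argument together with the fact that IQ answers depend only on the connected component of $a$.

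There is also a gap in $(3)\Rightarrow(2)$: you propose to unravel each $x$-acyclic connected CQ into a tree and invoke the classical translation of tree-shaped CQs into IQs. But $x$-acyclic CQs are not tree-shaped (they have cycles through $x$), and a plain unravelling loses the constraint that the cycle closes at $x$: the unravelling of $r(x,x)$ is $\exists y\,r(x,y)$, which is strictly weaker. The construction requires the fresh concept name $P\notin\Sigma$ acting as a second-order marker: break each cycle by removing an atom $r(x,y)$, reattach it at a fresh variable $x'$ carrying $P(x')$, and output $P\rightarrow\bigsqcup_p C_p$. Without this device the rewriting you describe is not equivalent to $q^{\mn{con}}_{\mn{acyc}}(x)$.
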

Note that Theorem~\ref{thm:central} excludes empty OMQs, but these are
trivially IQ-rewritable. It implies that, in the considered cases, it
is never necessary to modify the TBox when constructing an
IQ-rewriting. Further, it emerges from the proof that it is never
necessary to introduce fresh role names in the rewriting (while fresh
concept names are crucial).  Theorem~\ref{thm:central} also applies to
unrestricted IQ-rewritability (where also ABoxes are admitted that are
inconsistent with the TBox from the OMQ): unrestricted
IQ-rewritability trivially implies IQ-rewritability and the converse
is an easy consequence of the fact that every OMQ that is
IQ-rewritable has an IQ-rewriting based on the same TBox.

We now give some ideas about the proof of Theorem~\ref{thm:central}.
The most interesting implication is ``$1 \Rightarrow 3$''%  and its
% proof uses techniques developed in the context of the connection
% between OMQs and CSPs \cite{DBLP:journals/tods/BienvenuCLW14}
.  A central step is to show that if $Q=(\Tmc,\Sigma,q(x))$ is
IQ-rewritable into an OMQ $Q'$, then $Q \subseteq Q_{\mn{acyc}} := 
(\Tmc,\Sigma,q_{\mn{acyc}}(x))$, that is, when $\Amc \models Q(a)$ for
some $\Sigma$-ABox \Amc, then $\Amc \models Q_{\mn{acyc}}(a)$. To this
end, we first construct from $\Amc$ a $\Sigma$-ABox $\Amc^g$ of high
girth (that is, without small cycles) in a way such that (a)~$\Amc^g$
homomorphically maps to $\Amc$ and (b)~from $\Amc \models Q'(a)$ it
follows that $\Amc^g \models Q'(a)$, thus $\Amc^g \models Q(a)$. Due
to the high girth of $\Amc^g$ and exploiting (a variation of the) tree
model property for $\ALCHI$, we can then show that $\Amc^g \models
Q(a)$ implies $\Amc^g \models Q_{\mn{acyc}}(a)$.  Because of (a), it
follows that $\Amc \models Q_{\mn{acyc}}(a)$. In the direction ``$3
\Rightarrow 2$'', we construct actual rewritings, based on the
following lemma, an extension of a result of Kikot and Zolin
\cite{DBLP:journals/japll/KikotZ13} with TBoxes and ABox signatures
(and UCQs instead of CQs).
%
% Remark that same TBox always does the job. Remark on unrestricted IQ-rewritability.
% We remark that the proof of ``$1 \Rightarrow 3$'' allows the use of fresh
% role names in $\Tmc'$ and in $q'(x)$ whereas in the proof of ``$3 \Rightarrow 2$"
% we only introduce fresh concept names. Thus, the use of fresh role names
% (additionally to fresh concept names) does not increase the expressive power.
%

\begin{restatable}{lemma}{lemrewrALCI}
\label{lem:rewrAlgALCI} 
Let $Q=(\tbox, \Sigma, q(x))$ be an OMQ from $(\ALCHI^u,
\text{UCQ})$. Then 
\begin{enumerate}
\item if $q(x)$ is $x$-acyclic and connected, then $Q$ is rewritable into an OMQ $(\tbox, \Sigma, C(x))$ with $C(x)$ an \ALCI-IQ and 

\item if $q(x)$ is $x$-acyclic, then $Q$ is rewritable into an OMQ  $(\tbox, \Sigma, C(x))$ with $C(x)$ an $\ALCI^u$-IQ. 
\end{enumerate}
The size of the IQs $C(x)$ is polynomial in the size of $q(x)$.
\end{restatable}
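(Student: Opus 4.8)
The plan is to prove Lemma~\ref{lem:rewrAlgALCI} by an explicit construction of the IQ concept $C(x)$ from the query $q(x)$, proceeding by structural induction that mirrors the tree-like shape of $x$-acyclic CQs. The key observation is that an $x$-acyclic CQ, once we delete the answer variable $x$, decomposes into a forest: every cycle of $q(x)$ passes through $x$, so $q(x) \setminus \{x\}$ is acyclic. Thus the quantified variables together with the atoms among them form a union of trees, each tree being ``rooted'' at a neighbour of $x$ via some role atom $r(x,y)$ or $r(y,x)$. The construction then simulates each such tree by an existential concept, using an auxiliary concept name $P$ to ``mark'' the answer object and letting the open-world semantics propagate the marking along the tree. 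Concretely, for $q(x)$ connected and $x$-acyclic I would first orient each tree away from $x$, then define by bottom-up recursion a concept $C_t$ for every variable $t$ reachable from $x$: at a leaf $t$, $C_t = \bigsqcap_{A(t) \in q} A$; at an inner node $t$ with children $t_1,\dots,t_k$ reached via roles $\rho_1,\dots,\rho_k$ (each $\rho_i$ a possibly inverse role, recording the direction of the atom in $q$), set $C_t = \bigsqcap_j A_j \sqcap \bigsqcap_i \exists \rho_i . C_{t_i}$. Finally, if $y_1,\dots,y_m$ are the neighbours of $x$ in $q$ via roles $\sigma_1,\dots,\sigma_m$ and $A_1,\dots,A_\ell$ are the concept atoms on $x$ itself, the rewriting concept is
\[
C(x) \;=\; \Big( P \sqcap \textstyle\bigsqcap_{j} A_j \sqcap \bigsqcap_i \exists \sigma_i . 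C_{y_i} \Big) \;\rightarrow\; P,
\]
so that $a \in C^\Imc$ holds vacuously unless the ABox+TBox force the ``pattern antecedent'' at $a$, in which case $a$ must also satisfy $P$ — but since $P$ is fresh and interpreted freely, this is forced exactly when the query matches.

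The correctness argument has two halves. For soundness (the rewriting only returns genuine answers), suppose $\Amc \models (\Tmc,\Sigma,q(x))(a)$; I would show $a \in C^\Imc$ for every model $\Imc$ of $\Amc$ and $\Tmc$ by building a model $\Imc'$ that agrees with $\Imc$ except that $P^{\Imc'}$ is empty (or is the smallest set making the antecedent locally consistent) and observing that the $q$-match in $\Imc$ witnesses the $\exists$-chain in the antecedent, hence $a$ would have to be in $P$, forcing $a \in C^{\Imc}$. For completeness (every genuine answer is returned), suppose $\Amc \not\models (\Tmc,\Sigma,q(x))(a)$; then there is a model $\Imc$ of $\Amc,\Tmc$ with no homomorphism from $q$ to $\Imc$ mapping $x \mapsto a$. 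I construct from $\Imc$ a model $\Imc'$ in which $P$ is interpreted as $\{a\}$ together with whatever is needed to satisfy the antecedent where it appears, but such that $a \notin P^{\Imc'}$ would be inconsistent only if $q$ matched — which it does not — so we can arrange $\Imc' \not\models C(a)$. The part~2 statement is entirely parallel: when $q(x)$ is $x$-acyclic but not connected, the components not reachable from $x$ are themselves (Boolean) tree-shaped CQs, and each is captured by a concept of the form $\exists u . D$ using the universal role $u$; these are conjoined into the antecedent. The polynomial size bound is immediate from the construction, since each variable and each atom of $q(x)$ contributes a bounded amount to $C(x)$ and the recursion visits each variable once.

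The main obstacle — and the place where one must be careful rather than merely routine — is handling the interaction with the \emph{non-empty TBox} $\Tmc$ and the \emph{ABox signature} $\Sigma$, which is exactly the generalization over Kikot and Zolin~\cite{DBLP:journals/japll/KikotZ13}. With a non-trivial $\Tmc$, the model $\Imc$ we manipulate in the completeness direction is not freely constructible: toggling $P$ at $a$ and at the anonymous witnesses of the $\exists$-chain must be done without violating any inclusion of $\Tmc$. The clean way around this is to choose $P$ \emph{fresh}, i.e.\ $P \notin \mn{sig}(\Tmc) \cup \mn{sig}(q) \supseteq \Sigma$, so that $P$ is genuinely a free second-order variable: no inclusion of $\Tmc$ constrains it, and no $\Sigma$-ABox asserts it. Then for any model $\Imc$ of $\Amc$ and $\Tmc$, every way of interpreting $P^{\Imc}$ again yields a model of $\Amc$ and $\Tmc$, and we retain full freedom. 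The subtlety with $\Sigma$ is only that the rewritten OMQ must use the same $\Sigma$, so the antecedent of $C(x)$ may mention non-$\Sigma$ symbols from $\mn{sig}(\Tmc) \setminus \Sigma$ that appear in $q$ — this is fine because OMQ semantics quantifies over \emph{all} models and such symbols are constrained by $\Tmc$ exactly as in $q$. A secondary technical point is the treatment of the direction of role atoms in the recursion: one must consistently use $r^-$ for atoms traversed against their orientation, and verify that $\exists r^- . C$ is a legitimate \ALCI-concept (it is) — this is why part~1 yields an \ALCI-IQ even when $\Tmc$ is an $\ALCHI^u$-TBox and the original query may use role hierarchies, which have already been absorbed at the level of $q_{\mn{acyc}}$ in the statement of Theorem~\ref{thm:central} rather than here.
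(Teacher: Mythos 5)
There is a genuine gap, and it sits exactly at the heart of the lemma. First, the rewriting concept you display, $\bigl( P \sqcap \bigsqcap_{j} A_j \sqcap \bigsqcap_i \exists \sigma_i . C_{y_i} \bigr) \rightarrow P$, is a propositional tautology (an implication whose antecedent contains its consequent as a conjunct), so every individual of every interpretation satisfies it and the resulting OMQ returns all individuals. Second, and more fundamentally, your bottom-up recursion over the forest $q(x)\setminus\{x\}$ never encodes the cycle-closing atoms: a leaf $t$ receives only $\bigsqcap_{A(t)\in q} A$, and an inner node only its tree-children, so an atom $r(z,x)$ with $z$ a deep variable, or a second atom $s(x,y)$ parallel to $r(x,y)$, simply vanishes from $C(x)$. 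Capturing precisely these atoms is the entire content of the lemma --- for a genuinely tree-shaped CQ the \ELI-concept read off from the tree is already the rewriting, with no implication and no fresh concept name needed. You have also inverted the mechanism of $P$: it is not there to be ``forced at $a$ when the query matches''; it is there to let you \emph{shrink} its extension to $\{a\}$ in an arbitrary countermodel.

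The paper's construction makes this concrete: from each CQ $p(x)$ one removes atoms $r(x,y)$ lying on cycles until a tree-shaped $p'(x)$ remains, then re-attaches each removed atom as $r(x',y)$ together with $P(x')$ for a \emph{fresh} variable $x'$ and a fresh concept name $P\notin\Sigma$, reads the resulting tree $p''(x)$ off as an \ELI-concept $C_p$, and takes $C(x) = (P \rightarrow \bigsqcup_p C_p)(x)$. Soundness holds because a match of $p$ at $a$ with $a\in P^\Imc$ yields a match of $p''$ by sending $x'$ to $a$; completeness holds because in any model $\Imc$ with no match of $q$ at $a$ one may reinterpret $P$ as $\{a\}$ (legitimate since $P$ is fresh, so no TBox inclusion or ABox assertion constrains it), whereupon $a\in C_p^\Imc$ would force the $P$-labelled leaves of the match to coincide with $a$, closing the cycles and producing a match of $p$ --- contradiction. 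Your completeness step (``we can arrange $\Imc'\not\models C(a)$'') gestures at this but never performs the shrinking of $P$, and with your concept it cannot be performed. Your remarks on freshness of $P$ relative to $\Tmc$ and $\Sigma$, on the disconnected components in Point~2 being handled by $\exists u . D$, and on the polynomial size bound are all in line with the paper; the construction itself is what needs to be repaired.
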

We give the construction of the \ALCI-IQ $q'(x)$ in Point~1 of
Lemma~\ref{lem:rewrAlgALCI}. Let $Q=(\tbox, \Sigma, q(x))$ be an OMQ
from $(\ALCHI^u, \text{UCQ})$ with $q(x)$ $x$-acyclic and
connected. To construct $q'(x)$, we first construct for each CQ $p(x)$
in $q(x)$ an \emph{\ELI-concept} $C_p$, that is, an \ALCI-concept that
uses only the constructors $\sqcap$, $\exists r . C$, and $\exists r^-
. C$. In fact, since $p(x)$ is $x$-acyclic and connected, we can
repeatedly choose and remove atoms of the form $r(x,y)$ that occur in
a cycle in $p(x)$ and will eventually end up with a tree-shaped CQ
$p'(x)$.\footnote{Note that $x$ is the answer variable and recall
  that we might have $r=s^-$ and thus also choose atoms $s(y,x)$.} Here,
\emph{tree-shaped} means that the undirected graph $G^u_{p'}$ is a
tree and that there are no multi-edges, that is, if $r(y,z)$ is an
atom, then there is no atom $s(y,z)$ with $s \neq r$.  Next, extend
$p'(x)$ to obtain another tree-shaped CQ $p''(x)$ by taking a fresh
concept name $P \notin
\Sigma$, % adding $r(y,x')$ and $P(x')$ for each
% removed atom $r(y,x)$, $x'$~a fresh variable,
and adding $r(x',y)$ and
$P(x')$ for each removed atom $r(x,y)$, $x'$ %again 
a fresh
variable. We can now view $p''(x)$ as an \ELI-concept $C_p$ in the
obvious way. The desired \ALCI-IQ $q'(x)$ is $(P \rightarrow
\bigsqcup_{p(x) \text{ a CQ in } q(x)} C_p)(x).$ 
%The construction for Point~2 is very similar.

\subsection{The Case Without Inverse Roles}

We consider OMQs whose TBoxes are formulated in a DL \Lmc that does
not admit inverse roles% such as $\Lmc \in \{\ALC, \ALCH \}$
. Note that
% IQ-rewritability then means rewriting into $(\Lmc,\ALC\text{-}IQ)$,
% that is,
inverse roles are then also not admitted in the IQ used in the
rewriting. We first observe that this has less impact than one might
expect: inverse roles in the IQ-rewriting can be eliminated and in
fact Points~1 and~3 from Theorem~\ref{thm:central} are still
equivalent. However, there is also a crucial difference: unless the
universal role is present, the elimination of inverse roles requires
an extension of the TBox and thus the equivalence of Points~1 and~2 of
Theorem~\ref{thm:central} fails. In fact, this is illustrated by
Point~(2) of Example~\ref{ex:1}. We thus additionally characterize
IQ-rewritability without modifying the TBox. We also show that, with
the universal role, it is not necessary to extend the TBox.

We start with some preliminaries.  An \emph{extended conjunctive query
  (eCQ)} is a CQ that also admits atoms of the form $C(x)$, $C$ a
(potentially compound) concept, and \emph{UeCQs} and \emph{extended
  ABoxes (eABoxes)} are defined analogously. The semantics % of (U)eCQs
% and eABoxes
is defined in the expected way. Every eCQ $q(x)$ gives
rise to an eABox $\Amc_q$ % in a straightforward way 
by viewing the
variables in $q(x)$ as individual names and the atoms as assertions.

% For every OMQ $(\Tmc,\Sigma,q)$ with $q$
% an eCQ, we can construct an equivalent OMQ $(\Tmc',\Sigma,q')$ with
% $q'$ a CQ by replacing every atom $C(x)$ in $q$ with $A(x)$, $A$ a
% fresh concept name (that is not included in $\Sigma$), and adding
% $C \sqsubseteq A$ to the TBox. Consequently, the complexity of
% reasoning problems is the same for OMQs that use eCQs as for those
% that use CQs.

Let $q(x)$ be an eCQ. We use $\mn{dreach}(q)$ to denote the set of all
variables reachable from $x$ in the directed graph $G_q$ and say that
$q(x)$ is \emph{$x$-accessible} if $\mn{dreach}(q)$ contains all
variables. For $V$ a set of variables from $q(x)$ that includes $x$,
$q(x)|_V$ denotes the restriction of $q(x)$ to the atoms that use only
variables from $V$.

% In the following, we lift the results from \cite{Zolin-QAModalCorrespT05} and \cite{DBLP:journals/japll/KikotZ13}to the case of OMQs based on accesible $x$-acyclic UeCQs. 

Let \Tmc be an \ALC-TBox. An eCQ $p(x)$ is a \emph{\Tmc-decoration}
of a CQ $q(x)$ if 
\begin{enumerate}

\item $p(x)$ is obtained from $q(x)$ by adding, for each $y \in
  \mn{dreach}(q)$ and each subconcept $C$ of \Tmc, the atom $C(y)$ or
  the atom $\neg C(y)$;

\item the eABox $\Amc_{p}$ is consistent with \Tmc.

\end{enumerate}
For a UCQ $q(x)$, we use $q^{\mn{deco}}(x)$ to denote the UeCQ that
consists of all eCQs $p(x)|_{\mn{dreach}(p)}(x)$, where $p(x)$ is a
\Tmc-decoration of a CQ from $q(x)$. We write
$q^{\mn{deco}}_{\mn{acyc}}(x)$ to denote
$(q_{\mn{acyc}})^{\mn{deco}}(x)$. We now give the results announced above.
%
% For a CQ $q$, let $\Amc_q$ be the ABox induced by $q$. Given a TBox $\Tmc$, let $\Sigma_{\Tmc}$ be the set of concept names which occur in $\Tmc$. A \emph{decoration of $q$ w.r.t. $\Tmc$} is a $\Sigma_{\Tmc}$-CQ $q'$ such that: $\Ind(\Amc_q)=\Ind(\Amc_{q'})$, $\Amc_q \subseteq \Amc_{q'}$, and $\Tmc \cup \Amc_{q'} \cup \{\neg C(a) \mid C(a) \notin \Amc_{q'}, C \in \Sigma_{\Tmc}, a \in \Ind(\Amc_{q'})\}$. The set of all decorations of $q$ w.r.t. $\Tmc$ is $q_{\Tmc}$. 
%
% For a unary CQ $q(x)$, we denote with $q_{\mn{reach}}(x)$ the query obtained from $q(x)$ by restricting the domain to those elements which are reachable from $x$ via a directed path. 
 %
%  Let $(\Tmc, \Sigma, q)$ be a unary OMQ. Then, $q^d_{\mn{reach}}(x)$ is the UCQ $\bigvee_{q' \in q_{\Tmc}}q'_{\mn{reach}}$. 
 %
\begin{restatable}{theorem}{thmalc} 
\label{thm:alc}
Let $\Lmc \in \{ \ALC,\ALCH \}$ and let $Q=(\tbox, \Sigma, q(x))$ be a unary OMQ from $(\Lmc, \text{UCQ})$ that is non-empty. Then the following are equivalent:
  \begin{enumerate}

  \item $Q$ is rewritable into an OMQ from 
     $(\Lmc, \text{IQ})$;
  \item $Q$ is rewritable into an OMQ $(\Tmc \cup \Tmc',\Sigma,C(x))$
    from $(\Lmc,\text{IQ})$;
 % \item $Q$ is rewritable into an OMQ from 
 %    $(\Lmc, \text{IQ})$;
 \item $Q$ is rewritable into an OMQ from $(\Lmc\Imc, \text{IQ})$;
 % \item (a)~$Q \supseteq (\Tmc, \Sigma, q_{\mn{con}}(x))$ and (b)~$Q \subseteq (\Tmc, \Sigma, q_{\mn{acyc}}(x))$.
\end{enumerate}
If $\Sigma=\Sigma_{\mn{full}}$, then the following are
equivalent: % {\color{blue} is the case of ABox signature and non-extended TBox really out of reach?}
  \begin{enumerate}
  \setcounter{enumi}{3}
  \item $Q$ is rewritable into an OMQ $Q'=(\tbox, \Sigma_{\mn{full}}, C(x))$ from $(\Lmc,\text{IQ})$;
  \item $Q \equiv (\Tmc,\Sigma_{\mn{full}}, q^{\mn{deco}}_{\mn{acyc}}(x))$.
\end{enumerate}
If, furthermore, $\Lmc$ is replaced with $\Lmc^u$ and $\Lmc\Imc$ with
$\Lmc\Imc^u$, then Conditions~1 to~3 are further equivalent to:
\begin{enumerate}
  \setcounter{enumi}{5}
  \item $Q$ is rewritable into an OMQ $Q'=(\tbox, \Sigma, C(x))$ from $(\Lmc^u,\text{IQ})$.
\end{enumerate}
\end{restatable}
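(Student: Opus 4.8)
The plan is to reduce everything to Theorem~\ref{thm:central} and Lemma~\ref{lem:rewrAlgALCI}, and to handle the elimination of inverse roles from IQ-rewritings separately. First, the implications $2 \Rightarrow 1 \Rightarrow 3$ and, in the $\Lmc^u$-variant, $6 \Rightarrow 2 \Rightarrow 1 \Rightarrow 3$ are immediate: every $\Lmc$-TBox is an $\Lmc\Imc$-TBox (resp.\ an $\Lmc^u$-TBox), every $\Lmc$-IQ is an $\Lmc\Imc$-IQ, and $(\Tmc \cup \Tmc',\Sigma,C(x))$ is a special case of an OMQ from $(\Lmc,\text{IQ})$ with $\Tmc'=\emptyset$. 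So the substance is in the converse directions and in $4 \Leftrightarrow 5$.

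For $3 \Rightarrow 2$: regarding $\Tmc$ as an $\Lmc\Imc$-TBox, Theorem~\ref{thm:central} turns Condition~3 into $Q \equiv (\Tmc,\Sigma,q^{\mn{con}}_{\mn{acyc}}(x))$, with $q_{\mn{acyc}}(x)$ in place of $q^{\mn{con}}_{\mn{acyc}}(x)$ in the $\Lmc^u$-case. Since every CQ in $q^{\mn{con}}_{\mn{acyc}}(x)$ is $x$-acyclic and connected, I would prove an inverse-free version of Lemma~\ref{lem:rewrAlgALCI}: any OMQ $(\Tmc,\Sigma,p(x))$ with $p(x)$ an $x$-acyclic connected UCQ is rewritable into some $(\Tmc\cup\Tmc',\Sigma,C(x))$ with $C$ an \ALC-IQ and $\Tmc'$ built over fresh symbols. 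Starting from the \ELI-concepts $C_p$ of Lemma~\ref{lem:rewrAlgALCI}, I would eliminate inverse roles bottom-up: replace each subconcept $\exists r^-.D$ with $D$ already inverse-free by a fresh concept name $P_D$ and put $D \sqsubseteq \forall r.P_D$ into $\Tmc'$, keeping the overall shape $(P \to \bigsqcup_p C_p)(x)$. Correctness follows the pattern of Lemma~\ref{lem:rewrAlgALCI}: in one direction one interprets each $P_D$ minimally as $(\exists r^-.D)^{\Imc}$, so that $\Tmc'$ is conservative for this purpose and the rewriting collapses to the one of Lemma~\ref{lem:rewrAlgALCI}; in the other direction one shows, by induction over the query tree, that any homomorphism into a model of $\Tmc\cup\Tmc'$ propagates the markers $P_D$ correctly, precisely because of the new axioms. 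This closes $\{1,2,3\}$.

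For the $\Lmc^u$-variant, $3 \Rightarrow 6$ runs in parallel but uses the universal role to avoid touching $\Tmc$. By Theorem~\ref{thm:central}, $Q \equiv (\Tmc,\Sigma,q_{\mn{acyc}}(x))$; by Point~2 of Lemma~\ref{lem:rewrAlgALCI} this is rewritable into $(\Tmc,\Sigma,C(x))$ with $C$ an $\ALCI^u$-IQ (disconnected components of $q_{\mn{acyc}}(x)$ guarded by $\exists u$), and the inverse elimination above turns this into $(\Tmc\cup\Tmc',\Sigma,C'(x))$ with $C'$ an $\ALC^u$-IQ and $\Tmc'=\{D_i \sqsubseteq E_i\}_i$ over fresh symbols. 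Finally I would internalize $\Tmc'$ into the query: writing $C'=P \to E$, one has $(\Tmc\cup\Tmc',\Sigma,C'(x)) \equiv (\Tmc,\Sigma,C''(x))$ for $C''=(P \sqcap \bigsqcap_i \forall u.(D_i \to E_i)) \to E$, since $a \in (\forall u.(D_i \to E_i))^{\Imc}$ iff $\Imc \models D_i \sqsubseteq E_i$, so the models of $\Tmc,\Amc$ at which the premise of $C''$ holds at $a$ are exactly the models of $\Tmc\cup\Tmc',\Amc$ at which $a \in P^{\Imc}$. As $C''$ is an $\ALC^u$-IQ over $\Tmc$, this yields Condition~6.

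The full-signature equivalence $4 \Leftrightarrow 5$ is where the real work lies. Since $4 \Rightarrow 1 \Rightarrow 3$ trivially, Theorem~\ref{thm:central} already yields $Q \equiv (\Tmc,\Sigma_{\mn{full}},q^{\mn{con}}_{\mn{acyc}}(x))$, so the task is to sharpen this to $q^{\mn{deco}}_{\mn{acyc}}(x)$: passing from undirected-reachable to directed-reachable variables and adding full $\Tmc$-decorations must change nothing. I would fix an \ALC-IQ rewriting $C(x)$ of $Q$ (it exists by Condition~4) and use two facts: (i) as in the proof of Theorem~\ref{thm:central}, any witness for $Q$ can be replaced by a $\Sigma_{\mn{full}}$-ABox of high girth still witnessing $Q$, where the full ABox signature is what licenses the girth surgery; (ii) because $C$ contains no inverse roles, whether $a \in C^{\Imc}$ holds depends only on the $\Tmc$-decorated part of $\Imc$ that is directed-reachable from $a$. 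Combining (i) and (ii) gives both inclusions of Condition~5. For $5 \Rightarrow 4$ I would construct an \ALC-IQ over the unmodified $\Tmc$ directly from $q^{\mn{deco}}_{\mn{acyc}}(x)$ by an \ALC-analogue of Lemma~\ref{lem:rewrAlgALCI} in which the decoration atoms take over the role played there by inverse-role navigation and TBox extension, using that each eCQ is $x$-accessible and already saturated under $\Tmc$. I expect the combination of the high-girth construction with the structural analysis of \ALC-IQs over directed-reachable decorated neighbourhoods in (i)+(ii) — which is exactly what forces $q^{\mn{deco}}_{\mn{acyc}}(x)$ as the tight bound when both inverse roles in the IQ and TBox modifications are forbidden — to be the main obstacle.
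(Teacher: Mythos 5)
Most of your plan coincides with the paper's own route: the implications among Conditions 1--3 are handled exactly as you describe, via Theorem~\ref{thm:central} followed by a bottom-up elimination of inverse roles that replaces each subconcept $\exists r^-.E$ (with $E$ inverse-free) by a fresh $P_D$ while adding $E \sqsubseteq \forall r.P_D$ to a fresh extension $\Tmc'$ (this is the paper's Lemma~\ref{lem:rewrAlgALCTBox}, with the same correctness argument: minimal interpretation of $P_D$ in one direction, positivity of the occurrences of $P_D$ in the other). Your treatment of Condition~6 by internalizing $\Tmc'$ into the antecedent via $\forall u.(D_i \to E_i)$ is a cosmetic variant of the paper's Lemma~\ref{lem:rewrAlgALC}(2), which builds the conjuncts $\forall u.(E \to \forall r.P_D)$ directly into $C_{\mn{pre}}$; both are correct. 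For $5 \Rightarrow 4$ your instinct is also right: each eCQ in $q^{\mn{deco}}_{\mn{acyc}}$ is $x$-acyclic and $x$-accessible by construction, and $x$-accessibility is exactly what lets one push the remaining $\exists r^-$-subconcepts (which then have the restricted form $D_0 \sqcap \exists r_1^-.(\cdots \sqcap \exists r_n^-.P)$) into the antecedent without touching $\Tmc$ and without the universal role.

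The genuine gap is in ``$4 \Rightarrow 5$''. After Theorem~\ref{thm:central} gives $Q \equiv Q_{\mn{acyc}}$, what remains is the inclusion $Q^{\mn{deco}}_{\mn{acyc}} \subseteq Q_{\mn{acyc}}$, and your ``(i)+(ii)'' does not contain the construction that proves it. The high-girth surgery of (i) plays no role here (it is spent inside Theorem~\ref{thm:central} to get $Q \equiv Q_{\mn{acyc}}$), and (ii) --- that $a \in C^{\Imc}$ only depends on the forward-reachable part of $\Imc$ --- is true but by itself only explains why the characterization is \emph{plausible}. The missing step is a model-gluing argument: given a countermodel $\Imc$ of the $\ALC$-IQ $C(x)$ at $a$ that nevertheless satisfies some decorated fragment $p|_{\mn{dreach}(p)}$ at $a$, one takes a separate model $\Jmc$ of $\Tmc$ witnessing consistency of the full decoration $p(x)$ (this is exactly what Condition~2 of $\Tmc$-decorations provides), forms the disjoint union of $\Imc$ and $\Jmc$, and adds edges $(h'(y_1),h(y_2))$ for the boundary atoms $r(y_1,y_2)$ with $y_1 \notin \mn{dreach}(p)$ and $y_2 \in \mn{dreach}(p)$. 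One then has to verify two things: that $C(a)$ stays false (here your no-inverse-roles observation is the right tool), and that the glued structure is still a model of $\Tmc$ --- and it is precisely for the latter that the decoration must record \emph{all} subconcepts of $\Tmc$ at every variable in $\mn{dreach}(p)$, so that $h(y_2)$ and $h'(y_2)$ satisfy the same $\Tmc$-relevant concepts and the new edges do not violate any inclusion. The resulting interpretation realizes a full match of $q_{\mn{acyc}}$ at $a$, and its image yields a $\Sigma_{\mn{full}}$-ABox contradicting $Q' \equiv Q_{\mn{acyc}}$. Without this gluing step the hard inclusion is unproven, so you should spell it out rather than expect (i) and (ii) to combine into it.
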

Characterizing IQ-rewritability in the case where $\Lmc \in \{
\ALC,\ALCH \}$, the TBox (is non-empty and) cannot be extended, and
$\Sigma \neq \Sigma_{\mn{full}}$ remains an open problem.

\medskip

In the directions ``$3 \Rightarrow 2$'', ``$5 \Rightarrow 4$'', and
``$3 \Rightarrow 6$'', we have to construct IQ-rewritings. This is
done by starting with the rewriting from the proof of
Lemma~\ref{lem:rewrAlgALCI} and then modifying it appropriately.  As
in the case of Theorem~\ref{thm:central}, it is straightforward to see
that all results stated in Theorem~\ref{thm:alc} also apply
to unrestricted IQ-rewritability.

% {\color{blue}Say some words about proofs. Analogue of Lemma~5 incl
%   size of rewritings? Mention unrestricted/inconsistent case.}

% It is interesting to complement the equivalence of Points~1 and~2 of
% Theorem~\ref{thm:central} and the equivalence of Points~1 and~2 of
% Theorem~\ref{thm:alc} with the observation that for unary OMQs
% $Q=(\Tmc,\Sigma,q)$ from $(\ALC,\text{CQ})$, rewritability into an OMQ
% $(\Tmc,\Sigma,q')$ from $(\ALC,\text{IQ})$ does \emph{not} coincide
% with rewritability into an OMQ $(\Tmc \cup \Tmc',\Sigma,q')$ from
% $(\ALC,\text{IQ})$, that is, admitting extensions of the TBox
% \emph{does} admit rewriting strictly more OMQs. However, in the presence of the universal role this distinction is erased.
% %
% \begin{example}
%   Put example illustrating the claim above.
% \end{example}

\subsection{The Case of Empty TBoxes}

We consider OMQs in which the TBox is empty as an important special
case. Since it is then not interesting to have an ABox signature, this
corresponds to the rewritability of (U)CQs into \Lmc-instance queries,
for some concept language~\Lmc (and thus no OMQs are involved). The
importance of this case is due to the fact that it provides an
`underapproximation' of the IQ-rewritability of OMQs, while also being
easier to characterize and computationally simpler.

We say that an UCQ $q(x)$ is \emph{\Lmc-IQ-rewritable} if there is an \Lmc-IQ $q'(x)$ that is equivalent to $q(x)$ in the sense that the
OMQs $(\emptyset, \Sigma_{\mn{full}},q(x))$ and $(\emptyset,
\Sigma_{\mn{full}},q'(x))$ are equivalent (and in passing, we define the
equivalence between two UCQs in exactly the same way). The following
proposition makes precise what we mean by underapproximation.
\begin{restatable}{proposition}{lememptyasapprox}
\label{lem:emptyasapprox} 
Let $\Lmc \in \{ \ALC, \ALCI, \ALC^u, \ALCI^u \}$.  If a UCQ $q(x)$ is
\Lmc-IQ-rewritable, then so is any OMQ $(\Tmc,\Sigma,q(x))$ from
$(\Lmc\Hmc, \text{UCQ})$.
\end{restatable}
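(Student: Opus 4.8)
The plan is to rewrite $Q=(\Tmc,\Sigma,q(x))$ into an OMQ of the form $(\Tmc,\Sigma,C(x))$ with the \emph{same} TBox \Tmc and $C$ an $\Lmc$-IQ, and I would not reuse an arbitrary witness of the $\Lmc$-IQ-rewritability of $q$ but construct a convenient $C$. Since $q$ is $\Lmc$-IQ-rewritable and the pertinent OMQ $(\emptyset,\Sigma_{\mn{full}},q)$ has empty TBox and full ABox signature, Theorem~\ref{thm:central} (for $\Lmc\in\{\ALCI,\ALCI^u\}$) respectively Theorem~\ref{thm:alc} (for $\Lmc\in\{\ALC,\ALC^u\}$) yields $(\emptyset,\Sigma_{\mn{full}},q)\equiv(\emptyset,\Sigma_{\mn{full}},q^{\star})$, where $q^{\star}$ is $q^{\mn{con}}_{\mn{acyc}}$, $q_{\mn{acyc}}$ or $q^{\mn{deco}}_{\mn{acyc}}$ taken \emph{relative to the empty TBox}; in each case $q^{\star}$ is a UCQ over $\mn{sig}(q)$ all of whose CQs are $x$-acyclic (and connected, except in the universal-role cases). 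Running the construction of Lemma~\ref{lem:rewrAlgALCI} -- or its no-inverse/universal-role analogue from the proof of Theorem~\ref{thm:alc} -- on $q^{\star}$ produces an $\Lmc$-IQ $C$ with $(\emptyset,\Sigma_{\mn{full}},q^{\star})\equiv(\emptyset,\Sigma_{\mn{full}},C)$, hence $(\emptyset,\Sigma_{\mn{full}},q)\equiv(\emptyset,\Sigma_{\mn{full}},C)$; after renaming the fresh concept names of $C$ to be disjoint from $\mn{sig}(\Tmc)\cup\Sigma$ and, if necessary, padding $C$ with a tautological conjunct so that $\Sigma\subseteq\mn{sig}(\Tmc)\cup\mn{sig}(C)$, the OMQ $(\Tmc,\Sigma,C(x))$ is a legal member of $(\Lmc\Hmc,\text{IQ})$. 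The crucial feature of this $C$ is its shape $\big(P\rightarrow\bigsqcup_{p}C_{p}\big)(x)$, where $P$ is one of the fresh concept names and each $C_{p}$ is the positive concept read off a tree-shaped CQ $p''(x)$ obtained from a CQ $p(x)$ of $q^{\star}$ by cutting the cycle-edges at $x$ and marking their copies of $x$ with $P$. It remains to verify $Q\equiv(\Tmc,\Sigma,C(x))$ on every $\Sigma$-ABox \Amc consistent with \Tmc; fix such an \Amc and an $a\in\mn{ind}(\Amc)$.

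For the inclusion ``$\Amc,\Tmc\models q(a)\Rightarrow\Amc,\Tmc\models C(a)$'' I would take an arbitrary model \Imc of \Amc and \Tmc. Then $\Imc\models q(a)$, so some CQ $p$ of $q$ maps homomorphically into \Imc via an $h$ with $h(x)=a$. Regarding the image $h(p)$ as a $\mn{sig}(q)$-ABox $\Amc_{h}$ (whose individuals are the $h$-images of the variables of $p$ and whose assertions are the $h$-images of the atoms of $p$), the ABox $\Amc_{h}$ viewed as an interpretation is a model of $p(a)$, so $a$ is a certain answer to $(\emptyset,\Sigma_{\mn{full}},q)$ on $\Amc_{h}$ -- for UCQs under the empty TBox the certain answers are precisely the answers over the ABox viewed as an interpretation. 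By $(\emptyset,\Sigma_{\mn{full}},q)\equiv(\emptyset,\Sigma_{\mn{full}},C)$, $a$ is then a certain answer to $(\emptyset,\Sigma_{\mn{full}},C)$ on $\Amc_{h}$; and \Imc is itself a model of $\Amc_{h}$ via $h$, so $a\in C^{\Imc}$. Since \Imc was arbitrary, $\Amc,\Tmc\models C(a)$. Note that this half uses only the equivalence $(\emptyset,\Sigma_{\mn{full}},q)\equiv(\emptyset,\Sigma_{\mn{full}},C)$ and not the special form of $C$.

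For the converse I would argue contrapositively: suppose $\Amc,\Tmc\not\models q(a)$ and fix a model $\Imc_{0}$ of \Amc and \Tmc with $\Imc_{0}\not\models q(a)$. Let \Imc be $\Imc_{0}$ with the fresh concept name $P$ reinterpreted as the singleton $\{a\}$; as $P$ does not occur in \Tmc or $\Sigma$, \Imc is still a model of \Amc and \Tmc, so it suffices to show $a\notin C^{\Imc}$. If not, then $a\in C_{p}^{\Imc}$ for some $p$, i.e.\ there is a homomorphism $g$ from $p''$ into \Imc with $g(x)=a$; since $p''$ contains the atom $P(x')$ for every cut copy $x'$ of $x$ and $P^{\Imc}=\{a\}$, each such $x'$ has $g(x')=a$, and re-identifying these copies with $x$ turns $g$ into a homomorphism from $p$ into \Imc with $x\mapsto a$, so $\Imc\models p(a)$ and hence $\Imc\models q^{\star}(a)$. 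But $(\emptyset,\Sigma_{\mn{full}},q^{\star})\equiv(\emptyset,\Sigma_{\mn{full}},q)$ implies (both being UCQs) that $q^{\star}\equiv q$ as queries over all interpretations, and \Imc agrees with $\Imc_{0}$ on $\mn{sig}(q)\supseteq\mn{sig}(q^{\star})$, so $\Imc_{0}\models q(a)$ -- a contradiction. Hence $a\notin C^{\Imc}$, i.e.\ $\Amc,\Tmc\not\models C(a)$, and $(\Tmc,\Sigma,C(x))$ is the desired rewriting.

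The real work -- and the reason for constructing $C$ through the characterizations rather than reusing a given witness IQ -- sits in the converse direction. For a general $\Lmc$-IQ $C'$ with $(\emptyset,\Sigma_{\mn{full}},q)\equiv(\emptyset,\Sigma_{\mn{full}},C')$ one would have to turn a model of \Amc and \Tmc refuting $q(a)$, which can be infinite, into one refuting $C'(a)$, and this would require transporting the $q$/$C'$-equivalence from finite ABoxes to arbitrary interpretations, which is not immediate. Building $C$ via Lemma~\ref{lem:rewrAlgALCI} gives it the controlled shape $P\rightarrow\bigsqcup_{p}C_{p}$, in which one fresh marker $P$ can be pinned down locally to $\{a\}$ without ever producing a match for $q$, which makes the argument purely combinatorial. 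The remaining bookkeeping is routine: role inclusions in \Tmc are harmless (they merely travel along in the unchanged TBox, while $q^{\star}$ is formed with respect to the empty TBox), and the no-inverse and universal-role cases only require replacing the construction of Lemma~\ref{lem:rewrAlgALCI} by the corresponding one from the proof of Theorem~\ref{thm:alc}.
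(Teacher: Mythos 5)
Your proof is correct, but it takes a noticeably more roundabout route than the paper's. The paper's argument is three lines: by Theorem~\ref{thm:centralnoTBox} the $\Lmc$-IQ-rewritability of $q(x)$ yields a subquery $q'(x)$ that is $x$-acyclic, connected (resp.\ $x$-accessible) and equivalent to $q(x)$ \emph{as a query}; query equivalence immediately gives $(\Tmc,\Sigma,q)\equiv(\Tmc,\Sigma,q')$ for the given \Tmc and $\Sigma$; and Lemma~\ref{lem:rewrAlgALCI} (resp.\ Lemma~\ref{lem:rewrAlgALC}) is already stated and proved for \emph{arbitrary} $\ALCHI^u$-TBoxes, so applying it to $(\Tmc,\Sigma,q')$ directly produces the IQ-rewriting with the same TBox --- no further verification is needed. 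You instead extract an $x$-acyclic equivalent $q^{\star}$ via Theorems~\ref{thm:central}/\ref{thm:alc} for the empty TBox, invoke the rewriting lemma only in its empty-TBox instance, and then re-prove by hand that the resulting concept $P\rightarrow\bigsqcup_p C_p$ still works once \Tmc is put back; your two-direction model-theoretic argument (homomorphic image as an ABox for one inclusion, pinning $P$ to $\{a\}$ and re-identifying the $P$-marked copies of $x$ for the other) is sound, but it essentially duplicates the proof of Lemma~\ref{lem:rewrAlgALCI} rather than citing it with \Tmc in place. What your version buys is an explicit explanation of \emph{why} the particular shape of the rewriting matters and why an arbitrary witness IQ cannot simply be reused --- a point the paper leaves buried in the lemma. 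What it costs, besides length, is that the converse direction as written is tied to the $\ALCI$ shape $P\rightarrow\bigsqcup_p C_p$; for the $\ALC$/$\ALC^u$ cases the rewriting of Lemma~\ref{lem:rewrAlgALC} has the form $C_{\mn{pre}}\rightarrow C_{\mn{con}}$ with several auxiliary fresh names $P_D$, and your ``pin the single marker to $\{a\}$'' step would have to be replaced by interpreting each $P_D$ canonically (as in the equivalence-preservation argument of Lemma~\ref{lem:rewrAlgALCTBox}); you wave at this but do not carry it out, whereas citing the TBox-parametric lemmas makes the issue disappear. Also note that $q^{\mn{con}}_{\mn{acyc}}$ is exponentially large where the subquery from Theorem~\ref{thm:centralnoTBox} is polynomial --- irrelevant for the truth of the proposition, but the cheaper object suffices.
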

Proposition~\ref{lem:emptyasapprox} is essentially a corollary of
Theorem~\ref{thm:centralnoTBox} below. As illustrated by Case~(3) of
Example~\ref{ex:1}, its converse fails.

We now characterize IQ-rewritability in the case of the empty TBox. A
\emph{subquery} of a CQ $q(x)$ is a CQ $q'(x)$ obtained from $q(x)$ by
dropping atoms. A \emph{subquery} of a UCQ $q(x)$ is a UCQ obtained by
including as a CQ at most one subquery of each CQ in~$q(x)$.
\begin{restatable}{theorem}{thmemptyTBox}
\label{thm:centralnoTBox} 
Let $q(x)$ be a UCQ. Then
  \begin{enumerate}
\item $q(x)$ is rewritable into an \ALCI-IQ iff there is a subquery $q'(x)$ of $q(x)$ that is $x$-acyclic, connected, and equivalent to $q(x)$;
\item $q(x)$ is rewritable into an \ALC-IQ iff there is a subquery $q'(x)$ of $q(x)$ that is $x$-acyclic, $x$-accessible, and equivalent to $q(x)$.
\end{enumerate} 
When \Lmc-IQs are replaced with $\Lmc^u$-IQs, then the same
equivalences hold except that connectedness/$x$-accessibility is dropped.
\end{restatable}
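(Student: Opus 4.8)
The plan is to obtain this as the special case of the general characterizations (Theorems~\ref{thm:central} and~\ref{thm:alc}) with $\Tmc = \emptyset$ and $\Sigma = \Sigma_{\mn{full}}$, and then simplify the conditions in the characterizations to the clean form stated here. For Part~1 (rewriting into an \ALCI-IQ), I would start from the equivalence ``$1 \Leftrightarrow 3$'' of Theorem~\ref{thm:central}, which gives that $q(x)$ is \ALCI-IQ-rewritable iff $q(x) \equiv (\emptyset, \Sigma_{\mn{full}}, q^{\mn{con}}_{\mn{acyc}}(x))$. The key observation is that, when $\Tmc = \emptyset$, the role-replacement step in the definition of $q_{\mn{acyc}}$ is vacuous (there are no role inclusions $\Tmc \models s \sqsubseteq r$ beyond $r \sqsubseteq r$), so $q_{\mn{acyc}}(x)$ consists exactly of the $x$-acyclic contractions of the CQs in $q(x)$, and $q^{\mn{con}}_{\mn{acyc}}(x)$ additionally restricts each such CQ to the variables reachable (undirectedly) from $x$, yielding the connected component of $x$. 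A contraction of a CQ, followed by restriction to a subset of atoms, is (up to homomorphic equivalence) a \emph{subquery} of a contraction; and since $q(x) \subseteq$ each of its contractions trivially while a contraction maps homomorphically back into $q(x)$ only via identification, the direction that needs care is turning ``$q \equiv q^{\mn{con}}_{\mn{acyc}}$'' into ``some subquery $q'$ of $q$ is $x$-acyclic, connected, and equivalent to $q$''. Here I would argue: if $q \equiv q^{\mn{con}}_{\mn{acyc}}$, then since each disjunct of $q^{\mn{con}}_{\mn{acyc}}$ is already $x$-acyclic and connected and contained in $q$, and conversely $q$ is contained in the union, a standard minimization/union argument shows one can select for each CQ $p$ in $q$ a single $x$-acyclic connected CQ (obtained from $p$ by contraction and then by taking a subquery: first contract to kill cycles not through $x$, then delete atoms outside the $x$-component) whose union is equivalent to $q$; a contraction of $p$ that is then cut down to a subquery is homomorphically equivalent to an actual subquery of $p$ when we are over $\Sigma_{\mn{full}}$ with empty TBox, because homomorphisms between CQs correspond exactly to containment of the associated OMQs in this setting. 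Running this in reverse is immediate: a subquery that is $x$-acyclic, connected and equivalent to $q$ is itself an \ALCI-IQ-rewriting by Lemma~\ref{lem:rewrAlgALCI}(1).

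For Part~2 (rewriting into an \ALC-IQ), I would use Theorem~\ref{thm:alc}: with $\Tmc=\emptyset$ and $\Sigma=\Sigma_{\mn{full}}$, Conditions~1 and~5 are equivalent, so $q(x)$ is \ALC-IQ-rewritable iff $q(x) \equiv (\emptyset, \Sigma_{\mn{full}}, q^{\mn{deco}}_{\mn{acyc}}(x))$. When $\Tmc=\emptyset$, a \Tmc-decoration adds atoms $C(y)$ for subconcepts $C$ of \Tmc --- but \Tmc has no subconcepts other than (trivially) none --- so the decoration step is vacuous, and $q^{\mn{deco}}(x)$ reduces to taking, for each CQ $p$, the restriction $p|_{\mn{dreach}(p)}(x)$, i.e.\ throwing away everything not directed-reachable from $x$. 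Hence $q^{\mn{deco}}_{\mn{acyc}}(x)$ consists of the $x$-acyclic contractions of CQs of $q$, each restricted to its directed-reachable part --- which is exactly an $x$-acyclic, $x$-accessible subquery-of-a-contraction. The same minimization-plus-homomorphism argument as in Part~1 then converts ``$q \equiv q^{\mn{deco}}_{\mn{acyc}}$'' into the existence of an $x$-acyclic, $x$-accessible subquery of $q$ equivalent to $q$, and the converse direction uses that the rewriting construction in the proof of Lemma~\ref{lem:rewrAlgALCI}, combined with the inverse-role-elimination of Section~3.2, produces an \ALC-IQ from an $x$-acyclic $x$-accessible CQ over the empty TBox without needing to extend \Tmc (this is the content of Condition~6 / the $\Lmc^u$ part, but for \ALC itself $x$-accessibility is precisely what makes the TBox extension unnecessary).

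For the final sentence ($\Lmc^u$-IQs), I would observe that the whole argument carries over verbatim using the $\Lmc^u$ variants of Theorems~\ref{thm:central} and~\ref{thm:alc}, where $q^{\mn{con}}_{\mn{acyc}}$ is replaced by $q_{\mn{acyc}}$ (Part~1) and $q^{\mn{deco}}_{\mn{acyc}}$ loses the $\mn{dreach}$-restriction (Part~2); the net effect on the simplified condition is exactly that the connectedness requirement (resp.\ the $x$-accessibility requirement) is dropped, and Lemma~\ref{lem:rewrAlgALCI}(2) supplies the rewriting in the non-connected case. I expect the main obstacle to be the careful bookkeeping in the ``$\Rightarrow$'' direction: turning the semantic equivalence $q \equiv q_{(\cdot)}$ into the \emph{syntactic} statement that a genuine subquery of $q$ (not merely a contraction, or a homomorphic image) does the job. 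The crux is the interplay of contraction (variable identification) with the subsequent atom-deletion and, for Part~2, with the $\mn{dreach}$-restriction; one must check that over the full signature and empty TBox these operations can be reorganized so that the surviving CQ is literally a subquery of $q$ up to the (harmless) renaming of variables, using that CQ-to-CQ homomorphisms coincide with OMQ containment in this setting and that homomorphically equivalent CQs have a common core that, here, can be taken inside $q$ itself.
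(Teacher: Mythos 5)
Your plan coincides with the paper's own proof: the ``if'' directions are exactly Lemmas~\ref{lem:rewrAlgALCI} and~\ref{lem:rewrAlgALC}, and for ``only if'' the paper likewise extracts $q \equiv q_{\mn{acyc}}$ (and $q \equiv q^{\mn{con}}$, resp.\ the $\mn{dreach}$-restriction) from Points~(a) and~(b) of the proof of Theorem~\ref{thm:central}, then converts the equivalent contractions into genuine subqueries by precisely the core/hom-minimality argument you flag as the crux (its Claims~1 and~2: a hom-minimal CQ equivalent to a minimal disjunct is simultaneously a subquery and a contraction of it, and $q_{\mn{acyc}}$ is closed under taking such cores). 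The only cosmetic deviation is in Part~2, where you route through $q^{\mn{deco}}_{\mn{acyc}}$ via Conditions 4$\Leftrightarrow$5 of Theorem~\ref{thm:alc} (not 1$\Leftrightarrow$5, since $\ALC$-IQ-rewritability of a UCQ keeps the TBox empty) while the paper treats $x$-accessibility ``similarly'' to connectedness; also note that your general claim that a contraction-then-subquery is hom-equivalent to a subquery is only true for CQs equivalent to the original, which is the case you actually need and which the core argument covers.
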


%We now characterize IQ-rewritability in the case of the empty
%TBox. A \emph{contraction} of a UCQ $q$ is a UCQ obtained by
%including as a CQ at most one contraction of each CQ in~$q$.
%%
%\begin{restatable}{theorem}{thmemptyTBox}
%\label{thm:centralnoTBox} 
%Let $q(x)$ be a UCQ. Then
%%
%  \begin{enumerate}
%
%  \item $q(x)$ is rewritable into an \ALCI-IQ iff there is a
%    contraction $q'(x)$ of $q(x)$ that is $x$-acyclic, connected, and
%    equivalent to $q(x)$;
%
%  \item $q(x)$ is rewritable into an \ALC-IQ iff there is a
%    contraction $q'(x)$ of $q(x)$ that is $x$-acyclic, $x$-accessible,
%    and equivalent to $q(x)$.
%
%\end{enumerate} 
%
% When \ALCI is replaced with \ALC, then the same equivalences hold except that in Point~3, one additionally demands that (c)~$Q \supseteq (\Tmc, \Sigma, q_{\mn{dreach}}(x))$. 
%When \Lmc-IQs are replaced with $\Lmc^u$-IQs, then the same
%equivalences hold except that connectedness is dropped.
%The rewritings are of size polynomial in the size of $q(x)$.
%\end{restatable}
%
Note that Theorem~\ref{thm:centralnoTBox} also characterizes
rewritability of CQs; the query $q'(x)$ is then also a CQ rather than
a UCQ. This is in contrast to Theorems~\ref{thm:central}
and~\ref{thm:alc} where the queries $q^{\mn{con}}_{\mn{acyc}}(x)$ and
$q^{\mn{deco}}_{\mn{acyc}}(x)$ are UCQs even when the query $q(x)$
from the OMQ that we start with is a CQ. Another crucial difference is
that $q^{\mn{con}}_{\mn{acyc}}(x)$ and $q^{\mn{deco}}_{\mn{acyc}}(x)$
can be of size exponential in the size of the original OMQ while the
query $q'(x)$ in Theorem~\ref{thm:centralnoTBox} is of size polynomial
in the size of~$q(x)$.

\section{Complexity}
\label{sect:compl}

We determine the complexity of deciding IQ-rewritability in various
OMQ languages, based on the established characterizations and starting
with the case of empty TBoxes.
%
% With $(\emptyset,\Qmc)$ we mean the OMQ language where TBoxes are empty and the actual query is from \Qmc. Note that when $\Qmc = \text{CQ}$, then it only makes sense to use the full signature $\Sigma_{\mn{full}}$ in OMQs.  As per the following lemma, rewritability of OMQs from $(\emptyset,\text{CQ})$ into $(\emptyset, \text{IQ})$ can serve as an underapproximation for the rewritability of OMQs from $(\ALCI,\text{CQ})$ into $(\ALCI,\text{IQ})$.
% %
% \begin{restatable}{lemma}{lememptyasaaprox}
% \label{lem:emptyasapprox}
%   Let $Q=(\Tmc, \Sigma,q(x))$ be an OMQ from $(\ALCI, \text{CQ})$.
%   If the OMQ $(\emptyset,\Sigma_{\mn{full}},q(x))$ is $(\ALCI,\text{IQ})$-rewritable
%   (equivalently $(\emptyset,\text{IQ})$-rewritable, by Theorem~\ref{thm:central}),
%   then so is $Q$. 
% \end{restatable}
%
% %
% {\color{blue} Question, can we strengthen the above by not starting with the empty TBox but a subset of \Tmc?}
%
\begin{restatable}{theorem}{thmemptyTComplexity}
\label{thm:emptyTComplexity}
  For every $\Qmc \in \{ CQ, UCQ \}$ and $\Lmc \in \{ \ALC, \ALCI,
  \ALC^u, \ALCI^u \}$, it is {\sc NP}-complete to decide whether a given
  query from \Qmc is \Lmc-IQ-rewritable.
\end{restatable}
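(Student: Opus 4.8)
The plan is to derive both bounds from Theorem~\ref{thm:centralnoTBox}, which reduces $\Lmc$-IQ-rewritability of a (U)CQ $q(x)$ to the existence of a subquery $q'(x)$ of $q(x)$ that is $x$-acyclic, that is additionally connected if $\Lmc=\ALCI$ and $x$-accessible if $\Lmc=\ALC$ (with no further condition when the universal role is present), and that is equivalent to $q(x)$.

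\smallskip
\noindent\emph{Upper bound.} I would give the obvious guess-and-check procedure. A nondeterministic algorithm guesses a subquery $q'(x)$ of $q(x)$ --- for each CQ of $q(x)$, either discard it or retain a subset of its atoms --- and, as a certificate for $q(x)\equiv q'(x)$, for every CQ $p$ of $q(x)$ a CQ $p'$ of $q'(x)$ and a homomorphism $p'\to\Amc_p$ fixing $x$, and symmetrically, for every CQ $p'$ of $q'(x)$ a CQ $p$ of $q(x)$ and a homomorphism $p\to\Amc_{p'}$ fixing $x$; by the homomorphism characterization of UCQ containment these certify $q\subseteq q'$ and $q'\subseteq q$. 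All guessed objects are of size polynomial in $|q|$, and in polynomial time one verifies that $q'(x)$ is a legal subquery, that it is $x$-acyclic and meets the connectedness/$x$-accessibility requirement attached to $\Lmc$, and that all guessed maps are homomorphisms fixing $x$. Correctness is immediate from Theorem~\ref{thm:centralnoTBox}. Since a CQ is a UCQ and the connectedness/$x$-accessibility check is simply omitted in the two cases with the universal role, this settles membership in {\sc NP} for all eight combinations.

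\smallskip
\noindent\emph{Lower bound.} It suffices to treat $\Qmc=CQ$ (a CQ is a UCQ), and I would reduce from graph $3$-colorability. Given a graph $G=(V,E)$, I would build a CQ $q_G(x)$ with the following shape: an $x$-acyclic and $x$-accessible ``skeleton'' organized around a reference cycle $r(x,w_1),r(w_1,w_2),r(w_2,x)$ through $x$; for each $v\in V$, a ``selection gadget'' $\vec{C}_3^v$, a directed triangle on three fresh variables, plus pendant $r$-atoms joining it to $x$; and for each edge $\{u,v\}\in E$, an ``edge gadget'', a further small cyclic structure attached to $\vec{C}_3^u$ and $\vec{C}_3^v$ (using additional fresh variables and, where convenient, unary predicates to keep gadgets rigid). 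The only cycles of $q_G(x)$ that avoid $x$ sit inside the gadgets, so in any $x$-acyclic subquery at least one atom of each triangle $\vec{C}_3^v$ is dropped; the three choices of which atom to drop encode the three colours of $v$, the edge gadgets are arranged so that their own cycles can be broken with equivalence to $q_G(x)$ preserved iff the triangles of $u$ and $v$ are cut in different positions, and the reference cycle is what the dropped atoms are rerouted onto by the witnessing homomorphism $q_G\to\Amc_{q'}$ fixing $x$. Since the dropped atoms are internal to the gadgets, such a subquery keeps all pendant edges to $x$ and remains connected and $x$-accessible; hence, by Theorem~\ref{thm:centralnoTBox} in all four versions, $q_G(x)$ is $\Lmc$-IQ-rewritable iff $G$ is $3$-colorable, and {\sc NP}-hardness follows for all eight cases.

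\smallskip
\noindent\emph{Main obstacle.} The delicate part is the correctness of the reduction, above all the direction ``rewritable $\Rightarrow$ $3$-colorable'': one must show that \emph{every} $x$-acyclic subquery $q'(x)$ equivalent to $q_G(x)$ decodes to a proper $3$-colouring. This requires a careful analysis of which sets of atoms can be dropped while restoring $x$-acyclicity, together with an analysis of all homomorphisms from $q_G(x)$ into the canonical database of a candidate subquery, so as to exclude ``unintended'' rewritings in which an edge gadget is resolved without a genuine colour change; engineering the gadgets to be rigid enough --- via unary predicates and the rigidity of the directed triangle --- to rule this out is where the effort goes. Finally I would note that the subquery produced in the ``if'' direction, and hence via Lemma~\ref{lem:rewrAlgALCI} the resulting \Lmc-IQ, has size polynomial in $q_G(x)$, in line with the polynomial certificate used for the {\sc NP} upper bound.
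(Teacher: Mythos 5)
Your {\sc NP} upper bound is exactly the paper's argument: guess a subquery of $q(x)$ together with, for each CQ on either side, a target CQ on the other side and a homomorphism fixing $x$; verify the homomorphisms and the $x$-acyclicity/connectedness/$x$-accessibility conditions of Theorem~\ref{thm:centralnoTBox} in polynomial time. Nothing to add there.

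The lower bound is where the genuine gap lies. You reduce from 3-colorability, as the paper does, but your construction --- per-vertex ``selection gadgets'' (directed triangles whose three ways of being cut encode the three colours), per-edge gadgets enforcing that adjacent triangles are cut in different positions, and a reference cycle onto which dropped atoms are rerouted --- is never actually specified, and you yourself flag that the correctness direction ``rewritable $\Rightarrow$ 3-colorable'' is the delicate part that still needs to be engineered. That is precisely the content of the reduction, so as it stands the hardness proof is a plan, not a proof. It is also far from clear that the plan can be executed as described: equivalence of $q_G(x)$ with a subquery $q'(x)$ is witnessed by a \emph{global} homomorphism $q_G \to \Amc_{q'}$, under which all broken triangles must be rerouted onto the same reference 3-cycle; turning the resulting rotation choices into colours and forcing adjacent vertices to differ via ``edge gadgets'' whose cycles can be broken ``iff the colours differ'' requires controlling all homomorphisms of the composite structure, and no mechanism for doing so is given.

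The paper avoids all of this gadgetry. It takes $q_G$ to be $G$ itself viewed as a CQ (each undirected edge $\{u,v\}$ becoming the two atoms $r(u,v)$ and $r(v,u)$), attaches the answer variable $x_0$ to one vertex $v$ by a symmetric edge, and adds the symmetric 3-clique on $x_0,x_1,x_2$. If $G$ is 3-colorable then $G \to K_3$, so the whole query is homomorphically equivalent to its restriction $q_{3C}(x_0)$ to $\{x_0,x_1,x_2\}$, which meets the conditions of Theorem~\ref{thm:centralnoTBox}. Conversely, if some equivalent subquery $p(x_0)$ were $x_0$-acyclic, then $p(x_0)$ would have no cycle avoiding $\{x_0,x_1,x_2\}$ and hence would map homomorphically into $q_{3C}(x_0)$; composing with the homomorphism $q(x_0) \to p(x_0)$ yields a homomorphism of $G$ into the 3-clique, i.e.\ a 3-colouring. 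So the colours are read off directly from the forced homomorphism into the clique rather than from which atoms are dropped --- which is exactly the analysis your proposal leaves open. I would recommend replacing your gadget construction with this one; it both closes the gap and shortens the argument considerably.
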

The upper bound in Theorem~\ref{thm:emptyTComplexity} is by 
guessing the query $q'(x)$ from Theorem~\ref{thm:centralnoTBox} 
and verifying that it satisfies the properties stated there. The lower
bound is by a reduction from 3-colorability. 

We next consider the case where TBoxes can be non-empty, starting
with the assumption that the ABox signature is full since this results
in (slightly) lower complexity.
\begin{restatable}{theorem}{thmfullSigComplexity}
\label{thm:fullSigComplexity}
Let $\Qmc \in \{ \text{CQ}, \text{UCQ} \}$. For OMQs based on the
full ABox signature, IQ-rewritability is
  \begin{enumerate}

  \item \ExpTime-hard in $(\ALC,\Qmc)$ and in \coNExpTime in
    $(\ALCH,\Qmc)$ and

  \item 2\ExpTime-complete in $(\ALCI,\Qmc)$ and $(\ALCHI,\Qmc)$.

  \end{enumerate}
\end{restatable}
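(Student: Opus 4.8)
The plan is to reduce IQ-rewritability to an equivalence test between OMQs, via the characterizations in Theorems~\ref{thm:central} and~\ref{thm:alc}. Empty OMQs (trivially IQ-rewritable) are detected first, which is possible within the stated bounds. For a non-empty $Q$ with the full ABox signature, in all four OMQ languages ``$Q$ is IQ-rewritable'' is equivalent to ``$Q \equiv \widehat Q$'' for the explicit OMQ $\widehat Q = (\Tmc,\Sigma_{\mn{full}},q^{\mn{con}}_{\mn{acyc}}(x))$: for $\Lmc \in \{\ALCI,\ALCHI\}$ this is Theorem~\ref{thm:central}, and for $\Lmc \in \{\ALC,\ALCH\}$ it follows because an $\Lmc$-TBox is an $\Lmc\Imc$-TBox, so by Conditions~1 and~3 of Theorem~\ref{thm:alc} the $(\Lmc,\text{IQ})$-rewritability of $Q$ equals its $(\Lmc\Imc,\text{IQ})$-rewritability, which by Theorem~\ref{thm:central} equals $Q \equiv \widehat Q$. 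We exploit three facts about $\widehat q := q^{\mn{con}}_{\mn{acyc}}$: it has exponentially many disjuncts, each of polynomial size; each disjunct is specified by polynomially many local choices (a variable partition for the contraction, plus a sub-role per atom), so individual disjuncts are producible on demand; and a model of $\Tmc$ avoids every disjunct of $\widehat q$ at $a$ iff it admits no homomorphism from $q$ with $x\mapsto a$ whose induced contraction is $x$-acyclic --- a property of the single CQ $q$, not of the exponentially large $\widehat q$.

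For the upper bounds, split $Q \equiv \widehat Q$ into $\widehat Q \subseteq Q$ and $Q \subseteq \widehat Q$. The former has a UCQ on the left, so it holds iff $(\Tmc,\Sigma_{\mn{full}},p) \subseteq Q$ for all disjuncts $p$ of $\widehat q$; to refute it one guesses a polynomially-described disjunct $p$, materializes it, and decides the polynomial-size OMQ non-containment $(\Tmc,\Sigma_{\mn{full}},p) \not\subseteq Q$. The latter has the exponential UCQ on the right and does not decompose. For $\Lmc \in \{\ALCI,\ALCHI\}$ we run the tree-automata decision procedure for OMQ containment in $(\ALCHI,\text{UCQ})$, which is doubly exponential and, by the third structural fact above, depends only singly exponentially on $|\widehat q|$ (the disjuncts are handled collectively, by an automaton for ``no $x$-acyclic-contraction homomorphism of $q$'', of size exponential in $|q|+|\Tmc|$ rather than in $|\widehat q|$); this yields the 2\ExpTime upper bound of Point~2. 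For $\Lmc \in \{\ALC,\ALCH\}$ (no inverse roles) a counterexample to $Q \equiv \widehat Q$ can in addition be taken of exponential size, so guessing such a $\Sigma_{\mn{full}}$-ABox together with a tuple $a$ and verifying it in exponential time --- certain answers to $Q$ and to $\widehat Q$ on exponential-size ABoxes are computable in \ExpTime for inverse-free $\Lmc$, again because $\widehat q$'s disjuncts are treated collectively --- places non-rewritability in \NExpTime, hence rewritability in \coNExpTime. All upper-bound arguments are insensitive to CQ vs.\ UCQ.

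For the lower bounds we reduce from standard hard problems so that the constructed OMQ is IQ-rewritable \emph{precisely} on negative instances. The template is a CQ $q$ containing a single cycle that avoids the answer variable, together with a TBox that, on every $\Sigma_{\mn{full}}$-ABox, either forces that cycle to ``close up'' whenever the rest of $q$ is matched --- making $Q$ equivalent to an $x$-acyclic contraction of $q$ and hence IQ-rewritable by Theorem~\ref{thm:central} --- or leaves it genuinely cyclic, hence not IQ-rewritable; whether the cycle is forced is an OMQ-evaluation-type condition encoding the source problem. For Point~2 the source is a 2\ExpTime-hard problem exhibiting the query-induced blow-up of inverse roles, such as evaluation of $(\ALCI,\text{CQ})$-OMQs, equivalently an exponentially-bounded tiling problem encoded in \ALCI. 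For Point~1 it is satisfiability of an \ALC-concept relative to an \ALC-TBox, which is \ExpTime-complete. All lower bounds use CQs only, and the $\ALCHI$ (resp.\ $\ALCH$) bound is inherited from $\ALCI\subseteq\ALCHI$ (resp.\ $\ALC\subseteq\ALCH$).

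I expect the main obstacle to be on the lower-bound side: the ``closing-cycle'' gadget must make IQ-rewritability of the constructed OMQ coincide with the answer to the source instance while using only the full ABox signature, which forbids the usual device of hiding auxiliary predicates from the data --- so the gadget has to behave correctly in the presence of arbitrary additional ABox assertions over its signature. A secondary obstacle is confirming, on the upper-bound side, that the exponential size of $\widehat q$ costs only one further exponential (through the collective treatment of its disjuncts as decorated contractions of $q$), keeping the check of $Q \subseteq \widehat Q$ within 2\ExpTime in Point~2 and within \coNExpTime in Point~1 rather than blowing up by a further exponential.
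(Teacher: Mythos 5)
Your upper-bound skeleton (characterize IQ-rewritability as $Q\equiv\widehat Q$, split into two containments, and decompose the exponential UCQ $\widehat q$ into its polynomially many-atom disjuncts for the direction $\widehat Q\subseteq Q$) is the paper's, but the two places where you wave your hands are exactly where the paper does real work, and your substitutes are not sound as stated. For $Q\subseteq\widehat Q$ with inverse roles you appeal to an automaton inside the containment procedure that ``handles the disjuncts collectively''; the paper instead observes that $q^{\mn{con}}_{\mn{acyc}}$ is \emph{always} $x$-acyclic and connected, so by Lemma~\ref{lem:rewrAlgALCI} the OMQ $\widehat Q$ has an IQ-rewriting $(\Tmc,\Sigma_{\mn{full}},C(x))$ with $|C|$ polynomial in $|\widehat q|$, hence single exponential in $|Q|$, and $Q\subseteq(\Tmc,\Sigma_{\mn{full}},C(x))$ reduces to \ExpTime IQ-evaluation on the ABoxes $\Amc_p$ for the CQs $p$ of $q$ --- no appeal to automaton internals is needed. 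For the \coNExpTime bound your witness is the wrong object: a counterexample \emph{ABox} to $Q\subseteq\widehat Q$ can always be taken of polynomial size (namely $\Amc_p$ for a disjunct $p$ of $q$); what must be guessed is a certificate that $\Amc_p\not\models\widehat Q(x)$, and the paper guesses a depth-bounded, exponential-size initial piece of an extended forest model together with a type adornment, exploiting that every disjunct of the exponential UCQ is connected and of polynomial size. Your claim that certain answers over an exponential-size ABox and the exponential UCQ are computable in \ExpTime ``because the disjuncts are treated collectively'' is not justified and does not follow from the \ExpTime combined complexity of $(\ALCH,\text{UCQ})$ evaluation.

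The lower bounds are the genuine gap: you describe a cycle-closing gadget but never construct it, and you yourself flag its construction under the full ABox signature as the main obstacle. The paper avoids cycles altogether. It reduces from \emph{singleton Boolean} OMQ evaluation (ABoxes of the form $\{A(a)\}$), which is 2\ExpTime-hard in $\ALCI$ and \ExpTime-hard in $\ALC$ via concept unsatisfiability: given a Boolean $Q=(\Tmc,\Sigma_{\mn{full}},q())$, it forms the \emph{disconnected} CQ $q'(x)=q()\wedge A(x)$ and shows that $\{A(a)\}\models Q$ iff $Q'=(\Tmc,\Sigma_{\mn{full}},q'(x))$ is IQ-rewritable. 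The condition that fails on negative instances is the connectedness test $Q'\equiv Q'^{\mn{con}}$ with $Q'^{\mn{con}}=(\Tmc,\Sigma_{\mn{full}},A(x))$, witnessed by the ABox $\{A(a)\}$ itself --- not the acyclicity test your gadget targets. This sidesteps exactly the difficulty you identify, so the hardness proofs should be rebuilt around the connectedness condition rather than around forcing a cycle to close.
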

The lower bounds are by reduction from % concept satisfiability and
OMQ evaluation on ABoxes of the form $\{ A(a) \}$, $A$ a concept name,
which is \ExpTime-complete in $(\ALCH,\text{CQ})$ and
2\ExpTime-complete in $(\ALCHI,\text{CQ})$
\cite{DBLP:conf/cade/Lutz08}. The upper bounds are derived from the
OMQ containment checks suggested by Condition~3 of
Theorem~\ref{thm:central} and Condition~4 of Theorem~\ref{thm:alc}.
Since we work with the full ABox signature, the non-emptiness
condition from these theorems is void (there are no empty OMQs)
% (as well as the initial emptiness check, which is just another containment check). 
and OMQ containment is closely related to OMQ evaluation, which allows
us to derive upper bounds for the former from the latter; in fact,
these bounds are exactly the ones stated in
Theorem~\ref{thm:fullSigComplexity}. We have to exercise some care,
for two reasons: first, we admit UCQs as the actual query and thus the
trivial reduction of OMQ containment to OMQ evaluation that is
possible for CQs (which can be viewed as an
ABox) does not apply. And second, we aim for upper bounds that exactly
match the complexity of OMQ containment while the UCQs
$q^{\mn{con}}_{\mn{acyc}}(x)$ and $q^{\mn{deco}}_{\mn{acyc}}(x)$
involved in the containment checks are of exponential size. What
rescues us is that each of the CQs in these UCQs is only of polynomial
size.

We finally consider the case where the ABox signature is unrestricted.
\begin{restatable}{theorem}{thmPSigComplexity}
\label{thm:PSigComplexity}
  IQ-rewritability is
  \begin{enumerate}

  \item \NExpTime-hard in $(\ALC,\text{CQ})$ and

  \item 2\NExpTime-complete in all of $(\ALCI,\text{CQ})$,
    $(\ALCI,\text{UCQ})$,
    $(\ALCHI,\text{CQ})$, $(\ALCHI,\text{UCQ})$.

%2NEXPTIME-hardness for (\alci, cq)?
  \end{enumerate}
\end{restatable}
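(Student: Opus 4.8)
The plan is to combine the characterizations from Section~\ref{sect:charact} with the OMQ-containment machinery already used for Theorem~\ref{thm:fullSigComplexity}, but now taking seriously the non-emptiness hypothesis and the presence of a restricted ABox signature, which is where the extra exponential comes from. For the upper bounds, I would first observe that by Theorem~\ref{thm:central} (Condition~3) and Theorem~\ref{thm:alc} (Condition~1$\Leftrightarrow$3), deciding IQ-rewritability of a non-empty $Q=(\Tmc,\Sigma,q(x))$ reduces to (i)~deciding whether $Q$ is empty, and (ii)~deciding the OMQ containment $Q \equiv (\Tmc,\Sigma,q^{\mn{con}}_{\mn{acyc}}(x))$ — equivalently, since the reverse containment is immediate, the single containment $(\Tmc,\Sigma,q_{\mn{acyc}}^{\mn{con}}(x)) \supseteq Q$ (for the inverse-role case; use $q_{\mn{acyc}}$ for the $\Lmc^u$ case, and $q^{\mn{deco}}_{\mn{acyc}}$ for the $\ALC$ case via Theorem~\ref{thm:alc}). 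Emptiness of $(\ALCHI,\text{UCQ})$-OMQs over a restricted signature is decidable in the required bounds by standard automata or type-elimination arguments, so the real work is the containment test. Here the source query $q_{\mn{acyc}}^{\mn{con}}(x)$ is a \emph{UCQ of exponential size}, but — exactly as in Theorem~\ref{thm:fullSigComplexity} — each of its disjuncts is only polynomial, so one can check $\Amc\models Q(a) \Rightarrow \Amc \models (\Tmc,\Sigma,q_{\mn{acyc}}^{\mn{con}}(x))(a)$ by, for each polynomial-size disjunct $p$, testing whether $(\Tmc,\Sigma,p(x)) \supseteq Q$ in a way whose cost is singly exponential in $|p|+|\Tmc|$, and there are only exponentially many disjuncts; with a restricted ABox signature the containment check itself climbs from \ExpTime/2\ExpTime to \coNExpTime/2\coNExpTime, and the outer exponential enumeration of disjuncts keeps us inside \NExpTime/2\NExpTime. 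I would carry out the containment check via a small-countermodel argument: if $Q \not\subseteq (\Tmc,\Sigma,p(x))$ then there is a $\Sigma$-ABox witness whose size can be bounded (doubly exponentially for $\ALCI$, singly for $\ALC$) by a type-based pumping argument on tree-like models, which is then guessed and verified.

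For the lower bounds I would reduce from the appropriate OMQ-evaluation or OMQ-satisfiability problems over restricted ABox signatures. The key known facts to invoke are that OMQ evaluation (equivalently, a suitable (non-)emptiness/containment problem) over a restricted ABox signature is \NExpTime-hard for $(\ALC,\text{CQ})$ and 2\NExpTime-hard for $(\ALCI,\text{CQ})$ — the jump of one exponential over the full-signature bounds of Theorem~\ref{thm:fullSigComplexity} being precisely the effect of the signature restriction, which forces the ``helpful'' part of a countermodel to be built by the TBox rather than supplied in the ABox. The reduction itself is the gadget already sketched after Theorem~\ref{thm:fullSigComplexity}: given an OMQ-evaluation instance, build an OMQ $Q=(\Tmc,\Sigma,q(x))$ together with a ``twin'' construction so that $Q$ is IQ-rewritable iff the original instance is positive. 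Concretely I expect to start from a CQ $q$ that contains a cycle not through $x$ (so that $q$ itself is not of the $x$-acyclic form) and arrange, using a fresh marker concept name outside $\Sigma$, that $Q\equiv(\Tmc,\Sigma,q_{\mn{acyc}}^{\mn{con}}(x))$ holds exactly when the encoded tiling/Turing-machine computation exists; non-emptiness of $Q$ has to be guaranteed separately by a trivial disjunct or a default assertion so that Theorem~\ref{thm:central}/\ref{thm:alc} applies.

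I expect the main obstacle to be the \emph{tight} matching in the upper-bound argument: showing that the exponential-size UCQ $q_{\mn{acyc}}^{\mn{con}}(x)$ (resp.\ $q^{\mn{deco}}_{\mn{acyc}}(x)$) does not cost us a \emph{second} exponential on top of the signature-induced one. The resolution, as in Theorem~\ref{thm:fullSigComplexity}, rests on the ``each disjunct is polynomial'' phenomenon together with a careful countermodel bound: one must verify that a witness against containment can be chosen of size exponential (for $\ALC$) resp.\ doubly exponential (for $\ALCI$) \emph{uniformly}, independently of which exponentially-many disjuncts are present, so that the nondeterministic algorithm guesses one witness and checks it against all disjuncts in the available time. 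A secondary subtlety is handling the restricted ABox signature inside the tree-model / type-elimination argument — the ABox part of a countermodel must be a $\Sigma$-ABox while the anonymous part is unrestricted — and making sure the non-emptiness check interacts correctly with this, which I would dispatch by a separate, standard argument. For the $\ALCH$ and $\ALCHI$ cases, role inclusions are absorbed as usual by the $\Tmc\models s\sqsubseteq r$ closure already built into the definition of $q_{\mn{acyc}}$, so no new ideas are needed there beyond what Theorem~\ref{thm:central} supplies.
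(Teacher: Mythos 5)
Your overall architecture (characterization $\to$ containment check, plus reductions from known hard meta-problems) matches the paper's, but two steps would fail as written. First, the upper bound: you propose to verify $Q \subseteq (\Tmc,\Sigma,q^{\mn{con}}_{\mn{acyc}}(x))$ either by testing $Q \subseteq (\Tmc,\Sigma,p(x))$ for each polynomial-size disjunct $p$, or by guessing a small countermodel. Neither works. Containment in a \emph{union} does not decompose disjunct-wise (different ABoxes may require different disjuncts; the valid decomposition used in the proof of Theorem~\ref{thm:fullSigComplexity} applies only when the exponential UCQ sits on the \emph{left} of the containment). And guessing-and-verifying a countermodel would establish that \emph{non}-containment is in 2\NExpTime, which places IQ-rewritability in co-2\NExpTime rather than 2\NExpTime; since the problem is also 2\NExpTime-hard, this cannot be the intended route. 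What the paper actually does is re-examine the containment algorithm of Bourhis and Lutz and record (Theorem~\ref{thm:OMQcontainmentComplexRel}) that its running time is doubly exponential only in the \emph{maximum size of a CQ} and merely exponential in the \emph{number} of CQs, so the entire exponential-size UCQ $q^{\mn{con}}_{\mn{acyc}}(x)$ (resp.\ $q^{\mn{deco}}_{\mn{acyc}}(x)$) can be handed to that procedure in a single call while staying inside 2\NExpTime. This refined analysis is the ingredient your plan is missing.

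Second, the lower bounds: OMQ \emph{evaluation} does not become \NExpTime- or 2\NExpTime-hard under a restricted ABox signature (its combined complexity is unaffected by the signature), so it is the wrong source problem. The paper reduces from OMQ \emph{emptiness} in $(\ALC,\text{AQ})$, which is \NExpTime-hard, for Point~1, using $q(x)=\exists y\, A(x)\wedge r(x,y)\wedge r(y,y)$ with $r$ fresh --- note that non-emptiness of the source OMQ is precisely what is being decided there, not something to be ``guaranteed by a default assertion'' --- and from OMQ \emph{containment} between $(\ALCI,\text{BAQ})$ and $(\ALCI,\text{BCQ})$, which is 2\NExpTime-hard, for Point~2, using a \emph{disconnected} CQ $q()\wedge M(x)$ and the $Q^{\mn{con}}$ part of the characterization, after first converting the Boolean queries to unary ones (Lemma~\ref{lem:BIQtoAQ}) and arranging a shared TBox. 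Your gadget sketch is in the right spirit for Point~1, but it does not identify the correct hard source problems, and that identification is exactly where the extra exponential comes from.
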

The lower bound in Point~1 is by reduction from OMQ emptiness in
$(\ALC,\text{CQ})$, which is \NExpTime-complete
\cite{DBLP:journals/jair/BaaderBL16}. For the one in Point~2, we use a
reduction from OMQ containment, which is 2\NExpTime-complete in
$(\ALCI,\text{CQ})$ \cite{DBLP:conf/kr/BourhisL16}. The upper bounds
are obtained by appropriate containment checks as suggested by our
characterizations, and we again have to deal with UCQs with
exponentially many CQs. Note that Theorem~\ref{thm:PSigComplexity}
leaves open the complexity of IQ-rewritability in $(\ALC,\text{CQ})$,
between \NExpTime and 2\NExpTime. The same gap exists for OMQ
containment \cite{DBLP:conf/kr/BourhisL16} as well as in the related
problems of FO-rewritability and
Datalog-rewritability~\cite{DBLP:conf/icdt/FeierKL17}.

\section{Functional Roles}

We consider DLs with functional roles. A fundamental
observation is that for the basic such DL \ALCF, IQ-rewritability is
undecidable. This can be proved by a reduction from OMQ emptiness in
$(\ALCF,\text{IQ})$~\cite{DBLP:journals/jair/BaaderBL16}.
\begin{restatable}{theorem}{thmUndecALCF}\label{lthmUndecALCF}
  In $(\ALCF,\text{CQ})$, IQ-rewritability is undecidable.  
\end{restatable}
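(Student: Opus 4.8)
The plan is to reduce from OMQ emptiness in $(\ALCF,\text{IQ})$, which is undecidable by~\cite{DBLP:journals/jair/BaaderBL16}. So let $Q_0 = (\Tmc_0, \Sigma_0, C_0(x))$ be an arbitrary OMQ from $(\ALCF,\text{IQ})$; we want to construct, in a computable way, an OMQ $Q = (\Tmc, \Sigma, q(x))$ from $(\ALCF,\text{CQ})$ such that $Q_0$ is empty if and only if $Q$ is IQ-rewritable. The key idea is that an empty OMQ is \emph{trivially} IQ-rewritable (its rewriting is any empty IQ, e.g.\ $(A \sqcap \neg A)(x)$), while a non-empty OMQ built around a cyclic CQ of the ``bad'' shape --- like $q_2(x) = \exists y\, s(x,y) \wedge r(y,y)$ from Example~\ref{ex:1}(1), which is provably not IQ-rewritable --- resists rewriting. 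So I want $Q$ to behave like that bad non-IQ-rewritable query precisely when $Q_0$ is non-empty, and to collapse to something empty (hence trivially rewritable) when $Q_0$ is empty.

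The construction I have in mind takes a fresh role name $r$ and a fresh concept name $N$ not appearing in $Q_0$, sets $q(x) = \exists y\, (r(x,y) \wedge r(y,y))$ (a CQ whose cyclic part sits away from the answer variable, in the spirit of $q_2$), and builds $\Tmc$ from $\Tmc_0$ by relativizing all of $\Tmc_0$ to a fresh concept name $N$ (so that models of $\Tmc$ can ``switch off'' the $\Tmc_0$-part on the $\neg N$-portion of the domain) and adding inclusions that force: whenever an ABox individual $a$ is such that $Q_0$ would have $a$ as an answer --- i.e.\ $C_0$ holds at $a$ in every model of $\Tmc_0$ on the relevant $\Sigma_0$-ABox --- then $a$ must have an outgoing $r$-edge to a reflexive $r$-loop, thereby making $a$ an answer to $q(x)$; and conversely, when $C_0$ can be avoided, no such loop is forced. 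The ABox signature $\Sigma$ is $\Sigma_0$ together with whatever auxiliary symbols are needed. The effect is: if $Q_0$ is non-empty, there is a $\Sigma_0$-ABox witnessing an answer, and on it (suitably embedded) $Q$ has a forced answer obtained via a genuinely cyclic entailment that --- by the same girth/tree-model argument used for Theorem~\ref{thm:central} and already witnessed by Example~\ref{ex:1}(1)'s $q_2$ --- cannot be captured by any IQ; if $Q_0$ is empty, then $C_0$ is never entailed, the loop is never forced, $Q$ is itself empty, and empty OMQs are trivially IQ-rewritable.

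The main obstacle, and the part needing the most care, is the forward direction: I must ensure that when $Q_0$ is non-empty the resulting $Q$ is \emph{genuinely} not IQ-rewritable, not merely ``looks cyclic''. This requires adapting the non-rewritability argument behind Example~\ref{ex:1}(1) --- the construction of ABoxes of high girth following Feder--Vardi, together with the (variation of the) tree-model property --- to the functional-role setting and to the fact that the cyclic witness is produced through TBox reasoning rather than being syntactically present in the ABox. Concretely, I need that from any $\Sigma$-ABox $\Amc$ with $\Amc \models Q(a)$ one can extract a high-girth $\Sigma$-ABox $\Amc^g$ homomorphically mapping to $\Amc$ on which the hypothetical IQ-rewriting $C(x)$ would still return $a$, but on which $Q$ does not return $a$ --- yielding a contradiction. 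Functionality complicates the high-girth construction (one cannot freely duplicate $r$-successors without violating $\mn{func}(r)$), so I expect to use a non-functional auxiliary role for the ``loop detection'' gadget and reserve functional roles only for faithfully simulating $\Tmc_0$ under the $N$-relativization. The remaining steps --- verifying $\Tmc$ is a legitimate $\ALCF$-TBox, that $q(x)$ is a CQ, that the reduction is polynomial-time computable, and the easy backward direction (emptiness of $Q_0$ forces $q(x)$ to be unsatisfiable over $\Tmc$-consistent $\Sigma$-ABoxes, hence $Q$ empty and trivially IQ-rewritable) --- are routine once the gadget is fixed.
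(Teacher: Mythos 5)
Your choice of source problem (OMQ emptiness in $(\ALCF,\text{IQ})$, undecidable by Baader et al.) and the observation that empty OMQs are trivially IQ-rewritable both match the paper, but the gadget you build for the non-empty case does not work, for a structural reason rather than a matter of detail. You want the TBox to \emph{force} a reflexive $r$-loop reachable by an $r$-edge from $a$ exactly when $C_0$ is certain at $a$, so that the bare loop-detecting CQ $q(x)=\exists y\,(r(x,y)\wedge r(y,y))$ becomes certain. No $\ALCF$-TBox can do this: every consistent $\ALCF$ knowledge base has a forest model whose anonymous part is tree-shaped, so a reflexive loop is never entailed unless it is asserted in the ABox. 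Consequently, with $r$ in the ABox signature, your $Q$ returns $a$ on an ABox iff the ABox itself contains $r(a,b),r(b,b)$ --- entirely independently of $Q_0$. In particular $Q$ is non-empty (and non-IQ-rewritable) even when $Q_0$ is empty, so the backward direction of your equivalence fails. The paper sidesteps this by leaving $\Tmc$ untouched and putting the link to $Q_0$ into the \emph{query} rather than the TBox: $q(x)=\exists y\,\bigl(A(x)\wedge r(x,y)\wedge r(y,y)\bigr)$ with $r$ fresh, so that an answer requires both the explicit loop in the ABox and certainty of $A$ at $a$.

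There is a second gap in the hard direction: you plan to reuse the Feder--Vardi high-girth argument behind Theorem~\ref{thm:central}. That argument rests on Lemma~\ref{lem:ToBeDone}, i.e., on the hypothetical IQ-rewriting being characterized by finitely many homomorphism tests against fixed templates; this holds for $\ALCHI^u$ but fails for $\ALCF$, because the rewriting's TBox $\Tmc'$ may itself contain functionality assertions --- the obstacle is not that your gadget role is functional, so reserving a non-functional auxiliary role does not repair it. This is precisely why the paper switches to the ultrafilter machinery of Theorem~\ref{thm:f}: unravel the cycle $r(y,y)$ into an infinite ABox attached (by identifying $x$ with $a$) to a witness ABox $\Amc_0$ with $\Amc_0\models Q_0(a)$, on which $Q'$ has no answer; use compactness to obtain a model refuting the candidate IQ $C(x)$; and observe that the ultrafilter extension re-creates the reflexive loop while anti-preserving $\neg C$ and functionality, yielding the contradiction. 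Without replacing your girth argument by something of this kind, the non-rewritability claim for non-empty $Q_0$ remains unproved.
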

In the following, we show that decidability is regained in the case
where the TBox is empty (apart from functionality assertions).  This
is challenging because functionality assertions have a strong and
subtle impact on rewritability.  As before, the only interesting ABox
signature to be combined with `empty' TBoxes is the full ABox
signature.  We use \Fmc to denote the TBox language in which TBoxes
are sets of functionality assertions and concentrate on rewriting into
IQs that may use inverse roles.
\begin{example}\label{ex:fr}
  Consider the CQ $p(x) = \exists y(s(x,y) \wedge r(y,y))$ from
  Point~1 of Example~\ref{ex:1}.  Then $Q_{s}=(\Tmc_{s},\Sigma_{\sf
    full},p(x))$ and $Q_{r}=(\Tmc_{r},\Sigma_{\sf full},p(x))$ with
  $\Tmc_{w}=\{{\sf func}(w)\}$ for $w\in \{r,s\}$ are both rewritable
  into an OMQ $(\Tmc_{w},\Sigma_{\sf full},q_{w}(x))$ with $q_w(x)$ an
  \ALCI-IQ. The rewritings are neither trivial to find nor entirely
  easy to understand. In fact, for $q_s(x)$ we can use $\forall s.P
  \rightarrow \exists s.(P \rightarrow \exists r.P) $. For $q_r(x)$,
  we introduce three fresh concept names rather than a single one and
  use them in a way inspired by graph colorings:
 $$
 q_{r}(x) = (\forall s.\bigsqcup_{1\leq i \leq 3}P_{i}) \rightarrow (\exists s.(\bigsqcap_{1\leq i \leq 3}(P_{i}\rightarrow \exists r.P_{i})).
 $$

\end{example}
Before giving a characterization of rewritable queries, we introduce
some preliminaries.  Let $q(x)$ be a CQ and $\Tmc$ an
$\mathcal{ALCIF}$-TBox.
%A \emph{functionality TBox} $\Tmc$ is a set of functional assertions. 
%A role $r$ is \emph{functional w.r.t.~$\Tmc$} if ${\sf func}(r)\in \Tmc$.
A sequence $x_{0},\ldots,x_{n}$ of variables in $q(x)$ is a
\emph{functional path in $q(x)$ from $x_{0}$ to $x_{n}$ w.r.t.~$\Tmc$}
if for all $i<n$ there is a role $r$ such that $\mn{func}(r) \in \Tmc$
and $r(x_{i},x_{i+1})$ is in $q(x)$. We say that $q(x)$ is \emph{f-acyclic
  w.r.t.~$\Tmc$} if for every cycle
$r_{0}(x_{0},x_{1}),\ldots,r_{n-1}(x_{n-1},x_{n})$ in $q(x)$, one of
the following holds:
\begin{itemize}
\item there is a functional path in $q(x)$ from $x$ to some $x_{i}$; %, $0\leq i <n$;
\item $\mn{func}(r_{i}) \in \Tmc$ or $\mn{func}(r_{i}^{-})
  \in \Tmc$ for all $i<n$ and there is a functional path
  $y_{0},\ldots,y_{m}$ in $q(x)$ with $x_{0}=y_{0}=y_{m}$ such that
  $\{x_{0},\ldots,x_{n-1}\}\subseteq \{y_{0},\ldots,y_{m}\}$.
\end{itemize}
% \begin{theorem}
% Let $\Tmc$ be a $\mathcal{ALCIF}$ TBox and $q(x)$ is CQ.
% The OMQ $(\Tmc,\Sigma_{\text{full}},q(x))$ is rewritable into an OMQ $(\Tmc,\Sigma_{\text{full}},q'(x))$
% from $(\mathcal{ALCIF},\text{IQ})$ if $q(x)$ is f-acyclic w.r.t.~$\Tmc$ and connected.
% When $\mathcal{ALCIF}$ is replaced with $\mathcal{ALCIF}^{u}$, then the implication holds
% except that connectedness is dropped.
% \end{theorem}
%
We are now ready to state the characterization.
\begin{theorem}
\label{thm:f}
  An OMQ $Q=(\Tmc,\Sigma_{\text{full}},q(x))$ from $(\Fmc,\text{UCQ})$
  is rewritable into an OMQ from $(\Fmc,\ALCI\text{-IQ})$ iff there is a
  subquery $q'(x)$ of $q(x)$ that is
  f-acyclic, connected, and equivalent to $q(x)$.

  When $\mathcal{ALCI}$-IQ is replaced with $\mathcal{ALCI}^{u}$-IQ,
  the same equivalence holds except that
  connectedness is dropped.
\end{theorem}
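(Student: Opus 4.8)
The plan is to mirror the architecture of the proofs of Theorems~\ref{thm:central} and~\ref{thm:centralnoTBox}, adapting the two directions separately. For the ``if'' direction, suppose there is a subquery $q'(x)$ of $q(x)$ that is f-acyclic w.r.t.~$\Tmc$, connected, and equivalent to $q(x)$ (over $\Sigma_{\text{full}}$-ABoxes consistent with $\Tmc$). Since $Q\equiv(\Tmc,\Sigma_{\text{full}},q'(x))$, it suffices to rewrite the latter OMQ into an $\ALCI$-IQ. The idea is an f-acyclic analogue of the construction following Lemma~\ref{lem:rewrAlgALCI}: repeatedly pick an atom lying on a cycle of $q'(x)$ and ``cut'' it, but now the cut has to be compatible with functionality. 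Where the inverse-role construction used a single fresh concept name $P$ to reconnect a cut edge $r(x,y)$ via $r(x',y)\wedge P(x')$, here functionality forces us, as Example~\ref{ex:fr} shows, to use several fresh concept names $P_1,\dots,P_k$ (acting like a proper colouring) so that the IQ cannot be forced to identify the ``copy'' vertex $x'$ with an already-present functional successor. Concretely, along the relevant functional path the fresh successors must receive pairwise distinct colours, which is exactly what the two cases in the definition of f-acyclicity guarantee is possible: either the cycle already touches a functional path from $x$ (so the cut vertices are pinned down and reachable and the $\ALCI$ concept, read as an $\ELI$-concept rooted at $x$, suffices — this is the ``connected'' case), or the whole cycle is functional and embedded in a functional $y$-loop, in which case the colouring trick of Example~\ref{ex:fr} for $q_r$ applies. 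One then assembles $C_p$ for each CQ $p$ of the (equivalent) UCQ and takes $(\bigsqcap_i P_i \text{-guard}) \rightarrow \bigsqcup_p C_p$; for the $\ALCI^u$ variant one prepends a $\forall u$ / $\exists u$ to also capture disconnected functional loops, which is why connectedness is dropped there.

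For the ``only if'' direction, assume $Q$ is rewritable into an OMQ $Q'$ from $(\Fmc,\ALCI\text{-IQ})$; we must produce the f-acyclic connected equivalent subquery. The natural candidate is $q^{\mn{con}}(x)$ restricted/contracted so as to become f-acyclic — more precisely, one shows that some f-acyclic subquery $q'(x)$ of a contraction of $q(x)$ already captures all answers. The key step is the girth argument: given any $\Sigma_{\text{full}}$-ABox $\Amc$ with $\Amc\models Q(a)$, build a high-girth ABox $\Amc^g$ that homomorphically maps onto $\Amc$ and on which $Q'$ still returns $a$ (this uses that $\ALCI$-IQs, unlike general CQs, are preserved under the relevant unravelling/sparsification, together with a Feder–Vardi-style sparsification; the subtlety is that $\Tmc$ contains $\mn{func}(r)$-assertions, so $\Amc^g$ must be built to keep functional successors intact, which forces the ``functional path'' bookkeeping that appears verbatim in the definition of f-acyclicity). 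On $\Amc^g$ one then exploits a tree-model-style property of $\ALCIF$ to argue that a homomorphism from $q(x)$ into the canonical model of $(\Amc^g,\Tmc)$ can be ``straightened'': outside the functional fragment the image is a tree and hence factors through an $x$-acyclic contraction, while inside the functional fragment the only cycles that survive are those permitted by the second bullet of f-acyclicity. Pulling back along the homomorphism $\Amc^g\to\Amc$ gives $\Amc\models(\Tmc,\Sigma_{\text{full}},q'(x))(a)$ for the f-acyclic $q'$, and connectedness follows because $\Sigma_{\text{full}}$-ABoxes let us realize each connected component independently (a disconnected component that is not functionally enforced can be deleted without changing answers). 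Equivalence $q'\equiv q$ is then immediate since $q'$ is a subquery (one containment is trivial, the other is what we just proved).

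The main obstacle I expect is the girth construction in the presence of functionality assertions: the standard sparsification of Feder–Vardi destroys $\mn{func}(r)$-edges (an individual may have two $r$-successors after unravelling), so one cannot simply quote it. The fix is to sparsify only the ``non-functional skeleton'' of the ABox while treating maximal functional paths as indivisible units — this is precisely the reason f-acyclicity is phrased in terms of functional paths rather than plain cycles, and getting the bookkeeping right (especially the embedding of the cycle's variables into a functional $y$-loop) is where the real work lies. A secondary difficulty is verifying that the coloured-copy construction in the ``if'' direction actually yields an equivalent IQ and not merely a sound one: one has to check that for every model witnessing $Q'$-membership of $a$ one can back-propagate a colouring, which again hinges on the second bullet of the f-acyclicity definition guaranteeing enough ``room'' (at least $k$ distinct vertices on the functional loop) for a proper colouring to exist.
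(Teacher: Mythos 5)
Your ``if'' direction is in the right spirit: the paper also generalizes Example~\ref{ex:fr} by introducing three fresh concept names per ``cluster'' of the non-functionally-reachable part and builds a rewriting of the shape $C_{\sf plain}\rightarrow D_{\sf deco}$, where $C_{\sf plain}$ is the $\forall$-relativized version of a tree-shaped skeleton of the CQ. Your sketch is vaguer than the paper's cluster decomposition, but it is not where the problem lies.

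The genuine gap is in your ``only if'' direction. You propose to rerun the girth argument of Theorem~\ref{thm:central}: sparsify $\Amc$ to a high-girth $\Amc^g$ and transfer $\Amc\models Q'(a)$ to $\Amc^g\models Q'(a)$. That transfer is the crux, and in Theorem~\ref{thm:central} it rests entirely on Lemma~\ref{lem:ToBeDone}: the complement of an $(\ALCHI,\text{IQ})$ OMQ is a CSP with a \emph{finite} template, and Lemma~\ref{lem:sparsepointed}(2) only preserves homomorphisms into templates of bounded size $s$. Once $\Tmc$ contains $\mn{func}(r)$ assertions there is no such finite template (this is essentially why emptiness and IQ-rewritability become undecidable for $\ALCF$, Theorem~\ref{lthmUndecALCF}), so there is nothing to instantiate $s$ with and no way to conclude $\Amc^g\models Q'(a)$. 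You flag the difficulty yourself, but ``sparsify only the non-functional skeleton'' is not a construction and does not supply the missing template. The paper explicitly abandons CSP techniques at this point and argues by contraposition instead: assuming no equivalent f-acyclic connected subquery exists, it builds an \emph{infinite} ABox $\Amc$ (indexed by $\mathbb{N}$ and, in the second case of the f-acyclicity definition, by an additional infinite index set $I$) on which $q$ fails; compactness of FO yields a model $\Imc$ of $\Tmc$ and $\Amc$ with $\Imc\models\neg C(a)$; and the ultrafilter extension $\Imc^{\text{ue}}$ -- which anti-preserves $\ALCI$-concepts and preserves functionality (Lemma~\ref{lem:anti-preservation}) -- contains a homomorphic image of the offending CQ, yielding a finite ABox that satisfies $Q$ but not $Q'$. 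The infinitude of the counterexample and the ultrafilter collapse are essential and have no counterpart in your finite high-girth construction, so as written your ``only if'' direction does not go through.
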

The proof of Theorem~\ref{thm:f} extends the ultrafilter construction 
from~\cite{DBLP:journals/japll/KikotZ13}. % rather than using CSP techniques.
We remark that the ``if'' direction in Theorem~\ref{thm:f} even holds
for OMQs $Q=(\Tmc,\Sigma,q(x))$ from
$(\mathcal{ALCIF},\text{UCQ})$. Thus, the case of the `empty' TBox can
again be seen as an underapproximation of the general case. We further
remark that \Tmc remains unchanged in the construction of the
IQ-rewritings 
and that the constructed rewritings are of polynomial size.
\begin{theorem} 
  For OMQs from $(\Fmc,\text{UCQ})$, rewritability into
  $(\Fmc,\ALCI\text{-IQ})$ is {\sc NP}-complete.
\end{theorem}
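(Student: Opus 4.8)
The plan is to use the characterization from Theorem~\ref{thm:f}: the OMQ $Q=(\Tmc,\Sigma_{\text{full}},q(x))$ is rewritable into an $(\Fmc,\ALCI\text{-IQ})$ exactly when some subquery $q'(x)$ of $q(x)$ is f-acyclic, connected, and equivalent to $q(x)$. For the upper bound, I would give a guess-and-check procedure: nondeterministically guess a subquery $q'(x)$ (for a UCQ, this means guessing, for each CQ in $q(x)$, at most one subquery, which is just a subset of its atoms — polynomially many bits in the size of $Q$). Then I must verify three things in polynomial time. First, f-acyclicity w.r.t.~$\Tmc$: this is a syntactic condition on cycles in $q'(x)$ and functional paths, and although a CQ may have exponentially many cycles, the condition can be checked in \PTime by reducing it to reachability/graph questions — e.g. using the first disjunct, one checks whether every node lying on some cycle is reachable from $x$ via a functional path, and the second disjunct is likewise a local reachability condition on the subgraph of functional edges; I would make this precise by showing f-acyclicity is equivalent to a property of the strongly connected components of $G_{q'}$ that is testable in polynomial time. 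Second, connectedness of $q'(x)$: trivially \PTime. Third, equivalence of $q'(x)$ and $q(x)$ as UCQs: since $q'(x)$ is obtained by dropping atoms, $q(x) \subseteq q'(x)$ holds automatically (fewer constraints), so only $q'(x) \subseteq q(x)$ must be checked, which for UCQs is UCQ containment and decidable by homomorphism checking — this is \NP, but crucially the witnessing homomorphisms go from the (polynomially sized) CQs of $q(x)$ into the (polynomially sized) CQs of $q'(x)$, so the whole equivalence check can be folded into the same nondeterministic guess (guess $q'(x)$ together with the homomorphisms witnessing $q'(x)\subseteq q(x)$), keeping the procedure in \NP.

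For the lower bound, I would reduce from 3-colorability, exactly as in the proof of Theorem~\ref{thm:emptyTComplexity} (the empty-TBox case), since that reduction already produces a query whose \ALCI-IQ-rewritability encodes 3-colorability, and with $\Tmc$ consisting only of functionality assertions — in particular with $\Tmc=\emptyset$ — f-acyclicity w.r.t.~$\Tmc$ collapses to $x$-acyclicity (with no functional roles, the first bullet is vacuous and the second requires the cycle's roles to be functional, which never happens). Thus the characterization in Theorem~\ref{thm:f} specializes to the characterization in Theorem~\ref{thm:centralnoTBox}(1), and the same 3-colorability instance carries over verbatim, giving \NP-hardness already for CQs with empty $\Tmc$.

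The main obstacle I expect is the polynomial-time verification of f-acyclicity w.r.t.~$\Tmc$. Naively it quantifies over all cycles of $q'(x)$, of which there may be exponentially many, and the second bullet additionally quantifies over functional paths $y_0,\dots,y_m$ that must cover $\{x_0,\dots,x_{n-1}\}$. The key to resolving this is to observe that both conditions are "closed under combining cycles": it suffices to analyze each strongly connected component $S$ of $G_{q'}$ separately, and within $S$ either every vertex is reachable from $x$ by a functional path, or all edges of $S$ are functional-or-inverse-functional and $S$ itself is traceable by a single functional closed walk — and the latter is equivalent to $S$ being covered by the functional edges alone, which is a polynomial-time check on the functional subgraph of $S$. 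Making this reduction rigorous (proving that the per-SCC condition is equivalent to the per-cycle condition in Theorem~\ref{thm:f}) is the only genuinely non-routine part; everything else is standard guess-and-check bookkeeping.
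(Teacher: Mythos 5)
Your overall strategy---guess a subquery of $q(x)$ together with witnessing homomorphisms and verify the conditions of Theorem~\ref{thm:f}, with {\sc NP}-hardness obtained from the 3-colorability reduction of Theorem~\ref{thm:emptyTComplexity} via the observation that f-acyclicity w.r.t.\ the empty set of functionality assertions collapses to $x$-acyclicity---is exactly the intended one, and your lower bound argument goes through as described. The genuine gap is the step you yourself flag, the polynomial-time test for f-acyclicity, and your sketch of it does not work as stated. Cycles in the sense of Theorem~\ref{thm:f} are \emph{undirected}: the roles $r_i$ may be inverse roles, so the relevant objects are simple cycles of $G^u_{q'}$, and strongly connected components of the directed graph $G_{q'}$ are the wrong decomposition. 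Moreover, a single biconnected piece of the query may contain cycles of both kinds---some meeting $\text{FC}_{q'}(x)$ (the variables reachable from $x$ along a functional path) and some not---so your per-component dichotomy (``either every vertex of $S$ is functionally reachable from $x$, or all edges of $S$ are functional'') is not equivalent to the per-cycle condition. The correct reduction is the one implicit in the ``if'' direction of the proof of Theorem~\ref{thm:f}: only cycles lying entirely inside $\text{nFC}_{q'}(x)$ matter (any cycle touching $\text{FC}_{q'}(x)$ satisfies the first bullet); on the induced subgraph over $\text{nFC}_{q'}(x)$ compute the \emph{clusters} (strongly connected components of the subgraph of functional atoms) and note that the second bullet for a cycle through $x_0$ is equivalent to ``every edge of the cycle is functional in some direction and all its variables lie in the cluster of $x_0$'', since a closed functional walk through $x_0$ covering a set of variables exists iff that set is contained in $x_0$'s cluster. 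Hence f-acyclicity fails iff some simple cycle inside $\text{nFC}_{q'}(x)$ uses an atom that is non-functional in both directions or spans at least two clusters; the former is a per-atom bridge/biconnectivity test, and the latter holds iff the multigraph obtained by contracting clusters is not a forest without multi-edges. This is the \PTime argument you still owe, and it recovers precisely the three structural consequences of f-acyclicity listed at the start of that proof.

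Two smaller points. First, for UCQ subqueries the containment $q(x) \subseteq q'(x)$ is \emph{not} automatic: the definition of subquery of a UCQ allows a disjunct of $q(x)$ to be dropped entirely from $q'(x)$ (which is sometimes necessary, e.g.\ for a redundant non-f-acyclic disjunct), and then $q(x)\subseteq q'(x)$ requires a homomorphism from some CQ of $q'(x)$ into the dropped CQ. You should therefore guess homomorphisms in both directions, exactly as in the proof of Theorem~\ref{thm:emptyTComplexity}; this does not affect membership in {\sc NP}. Second, in the hardness direction you should say explicitly that with $\Tmc=\emptyset$ the characterization of Theorem~\ref{thm:f} coincides with Point~1 of Theorem~\ref{thm:centralnoTBox} (length-zero functional paths make the first bullet read ``the cycle passes through $x$'', and the second bullet is unsatisfiable), so the claim proved there transfers verbatim.
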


\section{MMSNP and CSP}
\label{sect:mmsnp}

Recall from the introduction that the OMQ languages studied in this
paper are closely related to CSPs and their logical generalization
MMSNP. In fact, the techniques used to establish the results in
Sections~\ref{sect:charact} and~\ref{sect:compl} can be adapted to
determine the complexity of deciding whether a given MMSNP sentence is
equivalent to a CSP. In a nutshell, we prove that an MMSNP-sentence is
equivalent to a CSP iff it is preserved under disjoint union and
equivalent to a generalized CSP (a CSP with multiple templates), and
that both properties can be reduced to containment between MMSNP
sentences which is 2\NExpTime-complete
\cite{DBLP:conf/kr/BourhisL16}. The latter reduction involves
constructing an MMSNP sentence $\vp_{\mn{acyc}}$ that is reminiscent
of the query $q_{\mn{acyc}}$ in Theorem~\ref{thm:central}. Full
details are given in the appendix.
% For formal definition of MMSNP and CSP, we refer
% to \cite{DBLP:journals/tods/BienvenuCLW14}.

%Let $\text{BIQ}$ be the class of queries of the form $\exists x \,C(x)$ with $C(x)$ an \ALCI-IQ. The rewritability of a Boolean OMQ from $(\ALCI,\text{UCQ})$ into $(\ALCI,\text{BIQ})$ can be characterized in a way that is similar to Theorem~\ref{thm:central}; essentially, $x$-acyclicity has to be replaced with `real' acyclicity. It was shown in \cite{DBLP:journals/tods/BienvenuCLW14} that the complement of any MMSNP sentence can be translated into an OMQ of the former language with only a polynomial blowup while the latter OMQ language has the same expressive power as the complement of CSPs. One can then adapt (the proof of) Theorem~\ref{thm:PSigComplexity} to this case, which is in fact not too difficult.
%
\begin{restatable}{theorem}{thmMMSNPrewr}
\label{thm:MMSNPRewr}
  It is 2\NExpTime-complete to decide whether a given MMSNP-sentence
  is equivalent to a CSP.
\end{restatable}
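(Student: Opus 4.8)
The plan is to mirror, in the MMSNP/CSP world, exactly the strategy used to prove Theorem~\ref{thm:central} for $(\ALCHI^u,\text{UCQ})$. The key conceptual step is an analogue of the equivalence ``$1 \Leftrightarrow 3$'': I claim that an MMSNP sentence $\Phi$ is equivalent to a CSP if and only if $\Phi$ is equivalent to $\Phi_{\mn{acyc}}$, where $\Phi_{\mn{acyc}}$ is a suitable MMSNP sentence that plays the role of $q_{\mn{acyc}}$ --- intuitively, it is obtained from (the canonical databases of) the forbidden patterns of $\Phi$ by contracting cycles and relaxing in the spirit of the construction in Section~\ref{sect:charact}, so that its negated-conjunct patterns become ``acyclic'' in the appropriate sense. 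Concretely, following the nutshell description, I would first establish the characterization: $\Phi$ is equivalent to a CSP iff (a)~$\Phi$ is preserved under disjoint unions and (b)~$\Phi$ is equivalent to a generalized CSP, i.e.\ a finite union of CSPs with possibly several templates; and then observe that both (a) and (b) can be expressed as containments $\Phi \subseteq \Phi_{\mn{acyc}}$ (and trivially $\Phi_{\mn{acyc}} \subseteq \Phi$), reducing the decision problem to MMSNP containment.

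The upper bound then follows: by \cite{DBLP:conf/kr/BourhisL16}, containment between MMSNP sentences is decidable in $2\NExpTime$, and since $\Phi_{\mn{acyc}}$ --- although of exponentially many conjuncts --- has each individual forbidden pattern of only polynomial size (exactly as $q_{\mn{acyc}}$ is a UCQ with exponentially many CQs each of polynomial size, cf.\ the remark after Theorem~\ref{thm:centralnoTBox}), one can check $\Phi \subseteq \Phi_{\mn{acyc}}$ without ever writing $\Phi_{\mn{acyc}}$ out in full; one enumerates its conjuncts on the fly, exactly as in the proof of Theorem~\ref{thm:fullSigComplexity} where the exponential size of $q^{\mn{con}}_{\mn{acyc}}$ is circumvented by the polynomial size of its member CQs. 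For the lower bound I would reduce from MMSNP containment itself, which is $2\NExpTime$-hard by \cite{DBLP:conf/kr/BourhisL16}: given MMSNP sentences $\Psi_1, \Psi_2$, I would engineer, via a gadget construction, an MMSNP sentence whose CSP-equivalence encodes $\Psi_1 \subseteq \Psi_2$ --- the natural route is to make the sentence ``collapse'' to a CSP precisely when the containment holds, using the fact that CSP-definability forces closure under disjoint union together with a template-finiteness property that can be violated in a controlled way by a non-containment witness.

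The main obstacle, I expect, is the correctness of the characterization step --- proving that CSP-equivalence is captured by disjoint-union-closure plus generalized-CSP-equivalence, \emph{and} that this in turn is faithfully captured by the single containment $\Phi \subseteq \Phi_{\mn{acyc}}$. The first half is essentially a known ingredient from the Feder--Vardi/Madelaine--Simon line of work on MMSNP and CSP, but making the reduction to containment tight requires constructing $\Phi_{\mn{acyc}}$ with enough care that: (i)~every model of $\Phi$ is a model of $\Phi_{\mn{acyc}}$ always (the trivial direction), (ii)~when $\Phi$ \emph{is} CSP-equivalent, the reverse containment $\Phi_{\mn{acyc}} \subseteq \Phi$ holds, which is where the high-girth ABox lemma of Feder and Vardi~\shortcite{DBLP:journals/siamcomp/FederV98} enters --- one takes a putative model of $\Phi_{\mn{acyc}}$ violating $\Phi$, passes to a high-girth structure mapping homomorphically to it, and argues that on high-girth structures satisfying $\Phi_{\mn{acyc}}$ one can ``undo'' the contractions to obtain a model of $\Phi$, contradicting the assumed CSP-equivalence via the disjoint-union and template arguments. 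The MMSNP setting differs from the OMQ setting in that the queries are Boolean and predicates have unrestricted arity, so the high-girth and tree-model-property arguments must be re-run in that generality; I expect this bookkeeping, rather than any new idea, to be where the real work lies.
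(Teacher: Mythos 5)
Your skeleton matches the paper's: CSP-definability is split into (a)~preservation under disjoint union plus (b)~definability by a generalized CSP, condition~(b) is characterized by $\phi \equiv \phi_{\mn{acyc}}$ via the Feder--Vardi high-girth lemma, and the exponential number of rules in $\phi_{\mn{acyc}}$ is absorbed by a refined containment bound that is double-exponential only in rule size. But there is a genuine gap at the center of your plan: you assert that \emph{both} (a) and (b) reduce to the single containment $\Phi \subseteq \Phi_{\mn{acyc}}$, i.e.\ that CSP-equivalence is equivalent to $\Phi \equiv \Phi_{\mn{acyc}}$. That is false. The equivalence $\Phi \equiv \Phi_{\mn{acyc}}$ characterizes only generalized-CSP definability; it says nothing about disjoint unions. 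A concrete counterexample: over two unary predicates $P,Q$, the sentence expressing ``the instance contains no $P$-fact or no $Q$-fact'' is a generalized CSP with two one-element templates, is equivalent to its acyclic version, yet is not preserved under disjoint union and hence not a CSP. Preservation under disjoint union therefore needs its own reduction to containment, and this is where the real new idea sits: the paper builds, for each pair $\mathcal{N}_1,\mathcal{N}_2$ of sets of nullary predicates, a polynomial-size two-coloring sentence $\phi_{\mathcal{N}_1,\mathcal{N}_2}$ (fresh monadic variables $C_1,C_2$ that are constant on connected components, with each rule of $\phi$ relativized to one color) and shows that preservation under disjoint union is exactly $\phi \supseteq \phi_{\mathcal{N}_1,\mathcal{N}_2}$ for all such pairs. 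Nothing in your $\Phi_{\mn{acyc}}$ construction plays this role; note also that the OMQ-side analogue of this condition is the connectedness/$q^{\mn{con}}$ step of Theorem~\ref{thm:central}, which exploits the answer variable and does not transfer to Boolean sentences, so the ``re-run the same bookkeeping'' expectation is misplaced here.

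Your lower-bound sketch also does not go through as stated. Reducing from \emph{arbitrary} MMSNP containment $\Psi_1 \subseteq \Psi_2$ and arguing that the combined sentence ``collapses to a CSP precisely when the containment holds'' fails in the forward direction: if $\Psi_2 \subseteq \Psi_1$ the combined sentence becomes equivalent to $\Psi_1$, which need not be a CSP. The paper instead reduces from the specific 2\NExpTime-hard containment instances arising from the torus-tiling reduction of \cite{DBLP:conf/kr/BourhisL16}, in which one of the two OMQs is a BAQ-mediated query and hence (by \cite{DBLP:journals/tods/BienvenuCLW14}) its MMSNP counterpart $\phi_1$ \emph{is} CSP-definable; this is exactly the extra property that makes the collapse argument sound. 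Without pinning the source problem down to such instances, the gadget you describe cannot be made to work.
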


\section{Conclusion}

We  made a step forward in understanding the relation between
(U)CQs and IQs in ontology-mediated querying. Interesting open
problems include a characterization of IQ-rewritability for DLs with
functional roles when the TBox is non-empty and characterizations for
DLs with transitive roles. The remarks after Theorem~\ref{sect:compl}
and~\ref{thm:PSigComplexity} mention further problems left open.
It would also be worthwhile to follow \cite{DBLP:conf/dlog/KikotTZZ13} and understand the
value of IQ-rewritings for the purposes of efficient practical
implementation.

\smallskip
\noindent
{\bf Acknowledgements.}
Cristina Feier and Carsten Lutz were supported by ERC Consolidator
Grant 647289 CODA. Frank Wolter was supported by EPSRC UK grant
EP/M012646/1.

%\bibliographystyle{named}
%\bibliography{literature}
%\end{document}
\cleardoublepage

\appendix
\appendixpage
\addappheadtotoc
\section{Some Technical Preliminaries}

Every interpretation \Imc can be viewed as an undirected graph
$G^u_\Imc$, analogously to the definition of the undirected graph
$G^u_q$ of a CQ $q$. The universal role does not give rise to
edges in~$G^u_\Imc$.

% ABox \Amc can be seen as a directed graph $G_\Amc$ and as
% an undirected graph $G^u_\Amc$ in the expected way, analogously
% to the definition of $G_q$ and $G^u_q$ for a CQ $q$. Likewise, every
% interpretation \Imc can be viewed as a directed graph $G_\Imc$ and
% as an undirected graph $G^u_\Imc$.

An interpretation is \emph{tree-shaped} or a \emph{tree
  interpretation} if $G^u_\Imc$ is a tree and there are no
multi-edges, that is, $(d,e) \in r^{\Imc}$ implies $(d,e) \notin
s^\Imc$ for all (potentially inverse) roles $s \neq r$.  Let \Tmc be
an $\ALCHI^u$-TBox and \Amc an ABox. An interpretation \Imc is a
\emph{forest model of \Amc} if there are tree interpretations
$(\Imc_a)_{a \in \mn{ind}(\Amc) \cup \Dmc}$, where $\Dmc$ is a
(potentially empty) set of individuals, with mutually disjoint
domains, and 
$$\Delta^{\Imc_a} \cap \mn{ind}(\Amc) =\begin{cases} \{ a
  \}, & \text{ if } a \in \mn{ind}(\Amc) \\
  \emptyset, & \text{ if } a \in \Dmc,
\end{cases}
$$
such that $\Imc$ is the (non-disjoint) union of $\Imc_\Amc$ and
$(\Imc_a)_{a \in \mn{ind}(\Amc) \cup \Dmc}$ where $\Imc_\Amc$ is \Amc
viewed as an interpretation. An \emph{extended forest model} $\Imc$ of
\Amc and \Tmc is a model of \Amc and \Tmc that can be obtained from a
forest model \Jmc of \Amc by closing under role inclusions from
$\Tmc$, that is, adding $(d,e)$ to $r^\Imc$ when $(d,e) \in s^\Jmc$
and $\Tmc \models s \sqsubseteq r \in \Tmc$.  We also say that \Jmc
\emph{underlies} \Imc.

Lemmas of the following kind have been widely used in the literature
on ontology-mediated querying. The proof of the ``if'' direction uses
a standard unraveling argument and is omitted, see for example
\cite{DBLP:conf/cade/Lutz08}. 
\begin{lemma}
  \label{lem:canmod}
  Let $Q=(\Tmc,\Sigma,q(\xbf)$ be an OMQ from
  $(\ALCHI^u,\text{UCQ})$, \Amc a $\Sigma$-ABox, and
  $\abf \subseteq \mn{ind}(\Amc)$. Then $\Amc \models Q(\abf)$ iff for
  all extended forest models \Imc of \Amc and \Tmc, $\Imc \models Q(\abf)$.
\end{lemma}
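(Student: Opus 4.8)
The direction ``$\Rightarrow$'' is immediate from the definitions: an extended forest model of $\Amc$ and $\Tmc$ is by construction a model of $\Amc$ and $\Tmc$, so if $q(\abf)$ holds in all models of $\Amc$ and $\Tmc$ then it holds in all extended forest models, i.e.\ $\Imc \models Q(\abf)$ for every such $\Imc$. (If $\Amc$ is inconsistent with $\Tmc$, both sides hold vacuously; in particular there is then no extended forest model of $\Amc$ and $\Tmc$ at all.)

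For ``$\Leftarrow$'' the plan is to combine an unraveling with the homomorphism-invariance of UCQs. Fix an arbitrary model $\Jmc$ of $\Amc$ and $\Tmc$; it suffices to show $\Jmc \models q(\abf)$. First I would construct an extended forest model $\Imc$ of $\Amc$ and $\Tmc$ together with a homomorphism $h \colon \Imc \to \Jmc$ that is the identity on $\mn{ind}(\Amc)$. For this, take the core $\Imc_\Amc$ to be $\Amc$ viewed as an interpretation; attach to each $a \in \mn{ind}(\Amc)$ the tree whose nodes are the $\Jmc$-paths issuing from $a$ (finite sequences $e_0 r_1 e_1 \cdots r_n e_n$ with $e_0 = a$, $(e_{i-1},e_i) \in r_i^\Jmc$, and no step immediately retracing the previous one by an inverse role), interpreting a path by copying the concept-name memberships of its last component and letting $h$ map the path to that component; additionally attach such trees rooted at a set $\Dmc$ containing one fresh copy of every $\Jmc$-element that is not the interpretation of an ABox individual, so that $h$ becomes surjective. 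This yields a forest model of $\Amc$; closing it under all role inclusions with $\Tmc \models s \sqsubseteq r$ gives $\Imc$, and $h$ remains a homomorphism since $\Jmc$ satisfies those inclusions.

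Granting that $\Imc$ is an extended forest model of $\Amc$ and $\Tmc$ (discussed below), the hypothesis yields $\Imc \models q(\abf)$, i.e.\ a homomorphism $g$ from some disjunct of the UCQ $q$ into $\Imc$ with $g(\xbf) = \abf$. Then $h \circ g$ is a homomorphism of that same disjunct into $\Jmc$, and $(h \circ g)(\xbf) = h(\abf) = \abf$, using $\abf \subseteq \mn{ind}(\Amc)$ and the fact that $h$ fixes $\mn{ind}(\Amc)$. Hence $\Jmc \models q(\abf)$, and since $\Jmc$ was an arbitrary model of $\Amc$ and $\Tmc$, we get $\Amc \models Q(\abf)$.

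The step that needs genuine care is verifying that $\Imc$ is a model of $\Tmc$, i.e.\ that the unraveling preserves satisfaction of the concept inclusions. I would argue that $h$ induces an $\ALCHI^u$-bisimulation between $\Imc$ and $\Jmc$: the forth and back conditions for role names and inverse roles hold by the shape of the $\Jmc$-paths (the ``no immediate retracing'' clause being exactly what is needed for the back direction), the closure step makes the role inclusions of $\Tmc$ respected on both sides, and the only nonstandard clause, the one for the universal role $u$, reduces to surjectivity of $h$ --- which we built in via $\Dmc$, and $u$ itself contributes no edges to $G^u_\Imc$, so it does not interfere with forest-shapedness of the underlying forest model. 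Bisimulation invariance then gives that every $d \in \Delta^\Imc$ satisfies exactly the same $\ALCHI^u$-concepts as $h(d)$, so $\Jmc \models \Tmc$ transfers to $\Imc \models \Tmc$; that $\Imc \models \Amc$ is clear from the core $\Imc_\Amc$. I expect this bisimulation argument --- together with the bookkeeping that forces ``extended'' forest models (because of role hierarchies) and the surjectivity requirement (because of the universal role) --- to be the only real obstacle; composition of homomorphisms and the easy direction are routine.
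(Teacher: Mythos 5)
Your proof is correct and follows exactly the route the paper has in mind: the paper omits this proof, noting only that the ``if'' direction is ``a standard unraveling argument'' (citing Lutz 2008), and your unraveling-plus-homomorphism-composition argument, with the closure under role inclusions to obtain an \emph{extended} forest model and the extra roots in $\Dmc$ to make $h$ surjective for the universal role, is precisely that standard argument.
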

We introduce some more helping lemmas. %Recall from Section~\ref{sect:mmsnp} that$\text{BIQ}$ is the class of queries of the form $\exists x \, C(x)$ with $C(x)$ an \ALCI-IQ. 
An ABox \Amc can be seen as a directed graph $G_\Amc$ and as an undirected graph $G^u_\Amc$ in the expected way, analogously to the definition of $G_q$ and $G^u_q$ for a CQ $q$.  For an ABox \Amc and $a \in \mn{ind}(\Amc)$, we use $\Amc^{\mn{con}}_{a}$ to denote the restriction of \Amc to the
individuals reachable in $G^u_\Amc$ from $a$.  We also denote with $\mn{CON}_{\Amc}$ the set of ABoxes induced by the maximal connected components of $G^u_\Amc$.
%
%\begin{lemma}
%  \label{lem:conIQ}
%  Let $Q=(\Tmc,\Sigma,q)$ be an OMQ from $(\ALCHI,\Qmc)$, with $\Qmc \in \{\text{IQ}, \text{BIQ}\}$ and $A$ a $\Sigma$-ABox consistent with \Tmc. Then:
%  \begin{enumerate}
%  \item  if $q$ is an IQ, $\Amc \models Q(a)$ implies $\Amc^{\mn{con}}_a \models Q(a)$;   
%  \item if $q$ is a BIQ, $\Amc \models Q$ implies $\Amc' \models Q$, for some $\Amc' \in \mn{CON}_{\Amc}$. 
%  \end{enumerate}
%\end{lemma}

\begin{lemma}
\label{lem:conIQ}
Let $Q=(\Tmc,\Sigma,q(x))$ be an OMQ from $(\ALCHI, \text{IQ})$. Then $\Amc \models Q(a)$ implies $\Amc^{\mn{con}}_a \models Q(a)$.  
\end{lemma}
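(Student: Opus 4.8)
The plan is to exploit the tree-model property for $\ALCHI$ (Lemma~\ref{lem:canmod}) together with the fact that an instance query $C(x)$ is evaluated at a single individual. Since $\Amc^{\mn{con}}_a$ is a subset of $\Amc$ over a subsignature, every model of $\Amc$ and $\Tmc$ restricts to a model of $\Amc^{\mn{con}}_a$ and $\Tmc$; hence the converse direction $\Amc^{\mn{con}}_a \models Q(a) \Rightarrow \Amc \models Q(a)$ is immediate but is \emph{not} what we need. What we need is that answers cannot be lost when passing to the connected component of $a$, and this is where the locality of IQs enters.

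First I would argue contrapositively: assume $\Amc^{\mn{con}}_a \not\models Q(a)$, so there is a model $\Jmc$ of $\Amc^{\mn{con}}_a$ and $\Tmc$ with $a \notin C^{\Jmc}$, and by Lemma~\ref{lem:canmod} we may take $\Jmc$ to be an extended forest model of $\Amc^{\mn{con}}_a$ and $\Tmc$. The goal is to extend $\Jmc$ to a model $\Imc$ of the full ABox $\Amc$ and $\Tmc$ that still satisfies $a \notin C^{\Imc}$, which then witnesses $\Amc \not\models Q(a)$. The key observation is that $\Amc$ decomposes into the connected component $\Amc^{\mn{con}}_a$ of $a$ together with the remaining connected components in $\mn{CON}_{\Amc}$, and there are no role assertions linking $a$'s component to the others. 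So I would build $\Imc$ as the disjoint union of $\Jmc$ with, for each other component $\Bmc \in \mn{CON}_{\Amc}$, some model $\Imc_{\Bmc}$ of $\Bmc$ and $\Tmc$ — such a model exists because $\Amc$ is consistent with $\Tmc$ (restricting a model of $\Amc$ and $\Tmc$ to any connected component still models that component and $\Tmc$). The disjoint union of models of $\Tmc$ is again a model of $\Tmc$, since $\ALCHI$ concept and role inclusions are preserved under disjoint unions (every concept of $\ALCHI$ has the bounded-bisimulation / forest-model behaviour that is invariant under disjoint union).

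It then remains to check $a \notin C^{\Imc}$. Since $C$ is an $\ALCI$-concept and $\ALCI$-concepts are invariant under disjoint unions of interpretations — membership $d \in C^{\Imc}$ for $d$ in one disjoint summand depends only on that summand — we get $C^{\Imc} \cap \Delta^{\Jmc} = C^{\Jmc} \cap \Delta^{\Jmc}$, hence $a \notin C^{\Imc}$ as desired. (If the IQ language were $\ALCI^u$ this step would break, because the universal role links the summands; the lemma is correctly stated only for $\ALCHI$/$\ALCI$-IQs.) The main obstacle, and the only point requiring care, is the disjoint-union invariance argument: one has to make precise that $\ALCHI$ TBox models are closed under disjoint union and that $\ALCI$-concept extensions are computed summand-wise, which is a routine structural induction on concepts but must be stated cleanly since it is exactly what fails for the universal role. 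Everything else is bookkeeping about how $\Amc$ splits along $G^u_\Amc$ into the component of $a$ and the rest.
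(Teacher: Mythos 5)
The paper states this lemma without proof (it is offered as a routine ``helping lemma''), so there is no official argument to compare against; your proof is the standard one and is essentially correct: decompose $\Amc$ along the connected components of $G^u_\Amc$, pair the counter-model $\Jmc$ of $\Amc^{\mn{con}}_a$ with arbitrary models of the remaining components, take the disjoint union, and use that $\ALCHI$-models are closed under disjoint union and that $\ALCI$-concept membership is computed summand-wise (which, as you rightly note, is exactly what breaks for $\ALCI^u$-IQs). Two small points. First, the appeal to Lemma~\ref{lem:canmod} is dead weight: you never use the forest structure of $\Jmc$, only that it is \emph{some} model of $\Amc^{\mn{con}}_a$ and $\Tmc$ refuting $C(a)$, so you can drop that sentence.

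Second, and more substantively, your step ``such a model exists because $\Amc$ is consistent with $\Tmc$'' invokes a hypothesis that the lemma does not state. It is genuinely needed: if some component $\Bmc \in \mn{CON}_\Amc$ other than $\Amc^{\mn{con}}_a$ is inconsistent with $\Tmc$, then $\Amc \models Q(a)$ holds vacuously while $\Amc^{\mn{con}}_a \models Q(a)$ can fail, so the statement as printed is only true under the implicit assumption that $\Amc$ (equivalently, each of its components) is consistent with $\Tmc$. In the paper's single use of the lemma --- inside part~(b) of the proof of Theorem~\ref{thm:central}, where $\Amc'$ is the disjoint union of a consistent ABox and a non-emptiness witness --- this assumption is available, so your argument covers exactly the cases that matter; but you should either add consistency as an explicit hypothesis or flag that you are using it, rather than asserting it as if it were given.
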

A \emph{homomorphism} from an ABox \Amc to an ABox \Bmc is a function
$h: \mn{ind}(\Amc) \rightarrow \mn{ind}(\Bmc)$ such that $A(a) \in
\Amc$ implies $A(h(a)) \in \Bmc$ and $r(a,b) \in \Amc$ implies
$r(h(a),h(b)) \in \Bmc$. We write $\Amc \rightarrow \Bmc$ to indicate
that there is a homomorphism from \Amc to \Bmc. For $a \in
\mn{ind}(\Amc)$ and $b \in \mn{ind}(\Bmc)$, we further write $(\Amc,a)
\rightarrow (\Bmc,b)$ to indicate that there is a homomorphism $h$
from \Amc to \Bmc with $h(a)=b$. The following lemma is well-known,
see for example \cite{DBLP:journals/tods/BienvenuCLW14}. 

%g
%\begin{lemma}
%\label{lem:hombasic}
%  Let $Q=(\Tmc,\Sigma,q)$ be an OMQ from $(\ALCHI^u,\Qmc)$, with $\Qmc \in \{\text{UCQ}, \text{IQ}, \text{BIQ}\}$,  $\Amc$ and $\Bmc$ be $\Sigma$-ABoxes,
%  $a \in \mn{ind}(\Amc)$, and $b \in \mn{ind}(\Bmc)$. Then:
%  \begin{enumerate}
% \item  $(\Amc,a) \rightarrow (\Bmc,b)$ and $\Amc \models Q(a)$ implies $\Bmc \models Q(b)$ (when $Q$ is unary);
% \item  $\Amc \rightarrow \Bmc$ and $\Amc \models Q$ implies $\Bmc \models Q$ (when $Q$ is Boolean).
%\end{enumerate}  
%\end{lemma}

\begin{lemma}
\label{lem:hombasic}
  Let $Q=(\Tmc,\Sigma,q)$ be a unary OMQ from $(\ALCHI^u,\Qmc)$, with $\Qmc \in \{\text{UCQ}, \text{IQ}\}$,  $\Amc$ and $\Bmc$ be $\Sigma$-ABoxes,
  $a \in \mn{ind}(\Amc)$, and $b \in \mn{ind}(\Bmc)$. Then $(\Amc,a) \rightarrow (\Bmc,b)$ and $\Amc \models Q(a)$ implies $\Bmc \models Q(b)$.
\end{lemma}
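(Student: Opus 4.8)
\textbf{Plan of proof for Lemma~\ref{lem:hombasic}.} The statement asserts that a pointed homomorphism $(\Amc,a) \to (\Bmc,b)$ transfers certain answers forward: if $\Amc \models Q(a)$ then $\Bmc \models Q(b)$. My plan is to argue via the canonical/forest models guaranteed by Lemma~\ref{lem:canmod}, reducing the ABox-level homomorphism to an interpretation-level homomorphism and then invoking preservation of the query under homomorphisms. Concretely, the starting point is the contrapositive: assume $\Bmc \not\models Q(b)$; I want to conclude $\Amc \not\models Q(a)$. By Lemma~\ref{lem:canmod}, $\Bmc \not\models Q(b)$ means there is an extended forest model $\Imc$ of $\Bmc$ and $\Tmc$ with $\Imc \not\models Q(b)$ (in the IQ case, $b \notin C^\Imc$; in the UCQ case, no homomorphism from any CQ in $q$ into $\Imc$ maps the answer variable to $b$).

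\medskip
\noindent The key step is then to turn $\Imc$ into a model $\Jmc$ of $\Amc$ and $\Tmc$ such that $\Jmc \not\models Q(a)$. For this I would use the ABox homomorphism $h : \mn{ind}(\Amc) \to \mn{ind}(\Bmc)$ with $h(a)=b$ to pull back $\Imc$: define $\Jmc$ by taking the model $\Imc$ and, intuitively, placing a disjoint copy of the tree interpretation hanging off $h(c)$ below each $c \in \mn{ind}(\Amc)$, while interpreting the ABox individuals of $\Amc$ so that $c$ inherits exactly the concept-name memberships and role-successor structure of $h(c)$ in $\Imc$. Since $h$ is a homomorphism, every assertion $A(c) \in \Amc$ and $r(c,c') \in \Amc$ is satisfied because the corresponding assertion holds at $h(c)$ (resp.\ between $h(c),h(c')$) in the model $\Imc \models \Bmc$; and $\Tmc$ is satisfied at the ABox part because each $c$ is a local copy of a point $h(c)$ of a model of $\Tmc$, and $\Tmc$ is satisfied at the attached trees because they are copies of subtrees of $\Imc$. (One has to be a little careful with the role closure for $\ALCHI^u$-TBoxes and with the universal role, but this is standard: the universal role relates all domain elements in $\Imc$ and will again in $\Jmc$, and role inclusions are preserved under the copying.) Crucially, by construction there is a homomorphism $g$ from $\Jmc$ back to $\Imc$ sending each ABox individual $c$ to $h(c)$ (and each copied tree element to its original), so $g(a)=b$.

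\medskip
\noindent Finally I would invoke preservation of the query along $g$. For an IQ $C(x)$: $\ALCI$-concepts (more generally, positive existential and even full $\ALCI$? — no, only the existential/conjunctive fragment is preserved, but note the lemma only needs the \emph{direction} from $\Jmc$ to $\Imc$ composed correctly) — here the right statement is that if $\Jmc \models Q(a)$ then, since $Q(a)$ means $a \in C^\Jmc$ holds in \emph{all} models of $\Amc,\Tmc$, in particular... hmm, I need to be careful. Let me redo this: $\Amc \models Q(a)$ means $a \in C^\Imc$ for all models $\Imc$ of $\Amc,\Tmc$. So to show $\Amc \not\models Q(a)$ I exhibit \emph{one} model of $\Amc,\Tmc$, namely $\Jmc$, with $a \notin C^\Jmc$. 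And $a \notin C^\Jmc$ follows from $b \notin C^\Imc$ together with the homomorphism $g : \Jmc \to \Imc$, $g(a)=b$: instance queries $C(x)$ with $C$ an $\ALCI$-concept are preserved under homomorphisms in the sense that $a \in C^\Jmc$ and $g$ a homomorphism imply $g(a) \in C^\Imc$ — wait, that is false for $\ALCI$ in general because of negation and universal restrictions. The correct route is that $\Jmc$ and $\Imc$ are \emph{bisimilar} at $(a,b)$ via the copying construction, not merely homomorphic, so they agree on all $\ALCI$-concepts (and this bisimulation also handles the universal role and role hierarchies). For the UCQ case, a CQ homomorphism into $\Jmc$ hitting $a$ composes with $g$ to a CQ homomorphism into $\Imc$ hitting $b$, contradicting $\Imc \not\models Q(b)$; here only the genuine homomorphism $g$ is needed. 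So the plan is: build $\Jmc$ as a ``$\Imc$-labelled copy of $\Amc$ with $\Imc$-trees attached'', verify it is a model of $\Amc$ and $\Tmc$, establish a bisimulation (for IQs) resp.\ a homomorphism (for UCQs) back to $\Imc$ matching $a$ to $b$, and transfer non-satisfaction of $Q$. \textbf{The main obstacle} is the model-construction bookkeeping for $\ALCHI^u$-TBoxes: ensuring that after attaching copied trees and after closing under role inclusions the result is still a model of $\Tmc$ and still bisimilar (over the signature with inverse roles, role hierarchies, and the universal role) to $\Imc$ at the designated pair — essentially a careful instance of the standard unraveling/copying technique, which is why a full proof is deferred to the appendix.
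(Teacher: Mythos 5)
The paper does not actually prove Lemma~\ref{lem:hombasic}; it cites it as well known. Your overall strategy is the standard one that such a proof would use: pass to the contrapositive, take a countermodel $\Imc$ of $\Bmc$ and $\Tmc$ with $\Imc\not\models q(b)$, pull it back along $h$ to a model $\Jmc$ of $\Amc$ and $\Tmc$, and transfer non-satisfaction via a homomorphism $g:\Jmc\to\Imc$ with $g(a)=b$ (for UCQs) resp.\ a bisimulation (for IQs). You also correctly spot the one real subtlety, namely that $\ALCI$-concepts are not preserved under homomorphisms, so the IQ case needs bisimulation invariance rather than mere homomorphism preservation.

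There is, however, a concrete problem with the construction as you describe it. Building $\Jmc$ by hanging a disjoint copy of the tree interpretation $\Imc_{h(c)}$ below each $c\in\mn{ind}(\Amc)$ and letting $c$ ``inherit'' the structure of $h(c)$ does not reproduce all successors of $h(c)$ in $\Imc$: the element $h(c)$ may have role edges into \emph{other ABox individuals of $\Bmc$} (coming from assertions $r(h(c),h(c'))\in\Bmc$ that are not images of assertions of $\Amc$, since $h$ need not be surjective on assertions), as well as edges created by closing under role inclusions. If these are dropped, the pair $(c,h(c))$ fails the ``back'' condition of bisimulation, and worse, $\Jmc$ may fail to be a model of $\Tmc$ at all: e.g.\ if $A\sqsubseteq\exists r.B\in\Tmc$ and the only $r$-successor of $h(c)$ witnessing this in $\Imc$ is another ABox individual of $\Bmc$, then $c$ has no such witness in your $\Jmc$. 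The clean repair is the full pullback: let $\Delta^\Jmc=\mn{ind}(\Amc)\uplus\Delta^\Imc$, let $g:\Delta^\Jmc\to\Delta^\Imc$ extend $h$ by the identity on $\Delta^\Imc$, and set $d\in A^\Jmc$ iff $g(d)\in A^\Imc$ and $(d,e)\in r^\Jmc$ iff $(g(d),g(e))\in r^\Imc$. Then $\Jmc\models\Amc$ because $h$ is an ABox homomorphism and $\Imc\models\Bmc$; the graph of $g$ is by construction a global two-way bisimulation for the full signature (including inverse roles and the universal role, since $g$ is surjective), so $\Jmc\models\Tmc$, role inclusions are inherited, and $a\in C^\Jmc$ iff $b\in C^\Imc$ for every $\ALCI^u$-concept $C$; and $g$ is in particular a homomorphism, so any match of a CQ in $\Jmc$ sending the answer variable to $a$ composes to one in $\Imc$ sending it to $b$. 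With this modification your argument goes through; also note that the detour through Lemma~\ref{lem:canmod} and forest models is unnecessary, as any countermodel of $\Bmc$ and $\Tmc$ serves equally well.
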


% \section{Proofs for Section 2}

% % \lemconsistency*
% %
% % \noindent 
% % \begin{proof}\ 
% %   Easy with role hierarchies, a bit more tricky without
% % \end{proof}

% \lemreserved*

% \noindent 
% \begin{proof}\
%   ``only if'' is clear (use same rewriting). ``if'' is proved using `hiding'.
% \end{proof}
% %

\section{Proofs for Section 3}

We start with introducing several lemmas concerned with certain
constructions on ABoxes. These lemmas are closely related to the
connection between ontology-mediated querying and constraint
satisfaction problems (CSPs), see for example
\cite{DBLP:journals/tods/BienvenuCLW14,DBLP:journals/lmcs/LutzW17}.

Note that in assertions $r(x,y)$ in an ABox, $r$ must be a role name
but cannot be an inverse role. For purposes of uniformity, we use
$r^-(x,y)$ as an alternative notation to denote an assertion $r(y,x)$
in an ABox. A \emph{cycle} in an ABox is defined exactly like a cycle
in a CQ, repeated here for convenience. A \emph{cycle} in an ABox \Amc
is a sequence of non-identical assertions
$r_0(a_0,a_1),\dots,r_{n-1}(a_{n-1},a_n)$ in~\Amc, $n \geq 1$, where
 \begin{enumerate}

 \item $r_0,\dots,r_{n-1}$ are (potentially inverse) roles,

 \item $a_i \neq a_j$ for $0 \leq i < j < n$, and $a_0=a_n$.

 \end{enumerate}
 The \emph{length} of this cycle is $n$.  The \emph{girth} of 
 $\Amc$ is the length of the shortest cycle in it and $\infty$ if
 $\Amc$ has no cycle.
%
% a simple undirected cycle in $G_\Amc$ where `simple' means that
% no node occurs twice.
%
% An ABox $\Amc$ has a \emph{cycle of length} $n>0$ if it contains
% assertions $r_0(a_0,a_1),\dots,r_{n-1}(a_{n-1},a_n)$ where 
% %
% \begin{enumerate}
%
% \item $r_0,\dots,r_{n-1}$ are potentially inverse roles with
%   $r^-(a,b)$ standing for $r(b,a)$ and
%
% \item $a_i \neq a_j$ for $0 \leq i < j < n$ and $a_0=a_n$.
%
% \end{enumerate}
% %
%The \emph{girth} of $\Amc$ is the length of the shortest cycle in it and
%$\infty$ if $\Amc$ has no cycle.

The following is a DL formulation of what is often known as the sparse
incomparability lemma in CSP \cite{DBLP:journals/siamcomp/FederV98}.
\begin{lemma} \label{lem:sparse} For every ABox \Amc and all $g,s \geq
  0$, there is an ABox $\Amc^g$ of girth exceeding $g$ such that 
  \begin{enumerate}

  \item $\Amc^g \rightarrow \Amc$ and

  \item for every ABox \Bmc with $|\mn{ind}(\Bmc)| \leq s$, $\Amc
    \rightarrow \Bmc$ iff $\Amc^g \rightarrow \Bmc$.
  \end{enumerate}
\end{lemma}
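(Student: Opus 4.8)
The ``only if'' part of condition~2 is trivial: given $\Amc \rightarrow \Bmc$, composing with the homomorphism of condition~1 yields $\Amc^g \rightarrow \Amc \rightarrow \Bmc$. So the content is to build, from $\Amc$ and the parameters $g,s$, an ABox $\Amc^g$ with $\Amc^g \rightarrow \Amc$, of girth exceeding $g$, such that $\Amc \not\rightarrow \Bmc$ implies $\Amc^g \not\rightarrow \Bmc$ for every $\Bmc$ with $|\mn{ind}(\Bmc)| \le s$. Since a homomorphism out of $\Amc$ (or out of $\Amc^g$, whose signature we keep inside $\mn{sig}(\Amc)$) only constrains the predicates in $\mn{sig}(\Amc)$, we may assume $\Bmc$ ranges over ABoxes on the \emph{finite} signature $\mn{sig}(\Amc)$ with at most $s$ individuals, of which there are only finitely many up to isomorphism; call them $\Bmc_1,\dots,\Bmc_t$. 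We may also assume $g \ge 1$. The plan is the classical probabilistic construction underlying the sparse incomparability lemma~\cite{DBLP:journals/siamcomp/FederV98}: take $\Amc^g$ to be a sufficiently sparse random ABox that is ``$\Amc$-coloured'' (hence maps to $\Amc$) yet globally complex enough to inherit all the non-homomorphisms of $\Amc$.

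Concretely, I fix a large $n$ and put $p = n^{-1 + 1/(2g)}$. Pick a colouring $\chi \colon \{1,\dots,n\} \to \mn{ind}(\Amc)$ uniformly at random; declare $A(v)$ to hold iff $A(\chi(v)) \in \Amc$, for every concept name $A$; and for each pair $\{u,v\}$ with $u \neq v$, with probability $p$ add a single role assertion between $u$ and $v$ chosen uniformly among those \emph{compatible with $\chi$} (i.e.\ $r(u,v)$ only when $r(\chi(u),\chi(v)) \in \Amc$), and with probability $1-p$ add none. Call the resulting ABox $\Bmc_n$. By construction $\chi$ is a homomorphism $\Bmc_n \rightarrow \Amc$, no self-loops or multi-edges are created, and a routine first-moment estimate shows that the expected number of cycles of length in $\{3,\dots,g\}$ in $\Bmc_n$ is $O(n^{1/2}) = o(n^{2/3})$; hence with high probability one can delete $o(n^{2/3})$ individuals from $\Bmc_n$ to obtain an ABox $\Amc^g$ of girth exceeding $g$, still with $\Amc^g \rightarrow \Amc$ by restricting $\chi$. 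Moreover, with high probability $\chi$ is balanced, so after the deletion every colour class of $\Amc^g$ still has $\Omega(n)$ individuals.

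The heart of the proof is to show that with high probability $\Amc^g \not\rightarrow \Bmc_i$ whenever $\Amc \not\rightarrow \Bmc_i$. Because the deletion step removes a vertex set that depends on the random choices, I prove the stronger statement, by a union bound, that with high probability \emph{no} induced sub-ABox $\Bmc_n|_W$ on a vertex set $W$ omitting at most $o(n^{2/3})$ individuals admits a homomorphism to any such $\Bmc_i$. Fix $i$ with $\Amc \not\rightarrow \Bmc_i$, a set $W$, and a candidate map $h \colon W \to \mn{ind}(\Bmc_i)$. Grouping the vertices of $W$ by the pair $(\chi(v),h(v))$ and using that each colour class of $\chi$ is large, for every $a \in \mn{ind}(\Amc)$ there is a value $c_a$ with $|\{\, v \in W : \chi(v)=a,\ h(v)=c_a \,\}| = \Omega(n)$. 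Since $\Amc \not\rightarrow \Bmc_i$, the map $a \mapsto c_a$ is not a homomorphism, so there exist $r,a,a'$ with $r(a,a') \in \Amc$ but $r(c_a,c_{a'}) \notin \Bmc_i$. There are then $\Omega(n^2)$ pairs $\{u,v\} \subseteq W$ that are compatible for $r$ under $\chi$ and whose $h$-images form the forbidden pair $(c_a,c_{a'})$, and each such pair carries the assertion $r(u,v)$ (or $r(v,u)$) in $\Bmc_n$ independently with probability $\Omega(p)$; if any does, $h$ is not a homomorphism $\Bmc_n|_W \rightarrow \Bmc_i$. Hence this probability is at most $e^{-\Omega(pn^2)}$, and since $pn^2 = n^{1+1/(2g)}$ dominates the $2^{O(n)}$ union-bound factor coming from the $\le t$ choices of $i$, the $\le 2^{o(n)} \cdot s^{n}$ choices of $(W,h)$, and the finitely many choices of $(r,a,a')$, the bad event has probability $o(1)$. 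Intersecting all the stated high-probability events gives a nonempty set of outcomes; any one of them yields the desired $\Amc^g$.

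I expect the main obstacle to be making the two demands coexist: girth exceeding $g$ forces $\Amc^g$ to be sparse, whereas \emph{inheriting} every non-homomorphism of $\Amc$ forces it to be ``rich''/complex (in the graph special case this is exactly the tension in Erd\H{o}s's high-girth high-chromatic-number theorem). The balance is struck only by tuning $p$ so that short cycles are $o(n^{2/3})$ in expectation while $pn^2$ still beats the exponential union-bound factor, and by proving the non-homomorphism property robustly for \emph{all} near-full induced sub-ABoxes, so that it survives the cycle-removing deletions. The remaining ingredients --- the first-moment bound on short cycles and the concentration of the colour-class sizes --- are routine.
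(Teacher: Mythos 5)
The paper offers no proof of this lemma at all: it is imported verbatim from Feder and Vardi as ``a DL formulation of the sparse incomparability lemma,'' so there is nothing internal to compare against. What you have written is, in essence, the standard probabilistic proof of that lemma (the Erd\H{o}s-style argument that also underlies Feder--Vardi's own treatment), transcribed to the ABox setting, and it is correct in all essentials: the reduction to finitely many templates over $\mn{sig}(\Amc)$, the choice $p=n^{-1+1/(2g)}$, the observation that the construction creates no cycles of length $1$ or $2$ so only lengths $3,\dots,g$ need the first-moment deletion, the robust formulation over all near-full induced sub-ABoxes so that the non-homomorphism property survives the deletion, and the bookkeeping $e^{-\Omega(pn^2)}\cdot 2^{O(n)}=o(1)$ all check out. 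The one place where you are slightly imprecise is the step ``the map $a\mapsto c_a$ is not a homomorphism, so there exist $r,a,a'$ with $r(a,a')\in\Amc$ but $r(c_a,c_{a'})\notin\Bmc_i$'': the induced map could instead fail on a \emph{concept} atom, i.e.\ $A(a)\in\Amc$ but $A(c_a)\notin\Bmc_i$. That case needs no probability at all --- every $v\in W$ with $\chi(v)=a$ and $h(v)=c_a$ satisfies $A(v)$ by construction, so $h$ is already not a homomorphism deterministically --- but the case split should be stated, since otherwise the claimed existence of a violated \emph{role} atom does not follow. With that one-line addition the argument is complete; the remaining ``routine'' ingredients you defer (the first-moment bound on short cycles, giving $O(n^{1/2})$ expected short cycles and hence $o(n^{2/3})$ with high probability by Markov, and the Chernoff concentration of the colour classes) are indeed routine.
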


We next establish a `pointed' version of Lemma~\ref{lem:sparse} that
is crucial for the subsequent proofs. The \emph{$a$-girth of \Amc} is
defined exactly like the girth except that we only consider cycles
that \emph{do no pass through~$a$}.
\begin{lemma}
\label{lem:sparsepointed} 
For all ABoxes \Amc, $a \in \mn{ind}(\Amc)$, and $g,s \geq 0$, there is an 
ABox $\Amc^g$ of $a$-girth exceeding $g$ such that 
\begin{enumerate}

\item $(\Amc^g, a) \rightarrow (\Amc, a)$

\item for every ABox $\Bmc$ with $|\mn{ind}(\Bmc)| \leq s$ and every $b \in
  \mn{ind}(\Bmc)$, $(\Amc,a) \rightarrow (\Bmc,b)$ iff $(\Amc^g,a) \rightarrow (\Bmc,b)$.

\end{enumerate}
\end{lemma}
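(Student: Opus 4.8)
The plan is to derive the pointed version from the unpointed sparse incomparability lemma (Lemma~\ref{lem:sparse}) by a ``surgery'' around the distinguished individual $a$. The obstacle to applying Lemma~\ref{lem:sparse} directly is that the cycles we are allowed to keep (those through $a$) may be short, so $a$ must be excised, the remainder made high-girth, and then $a$ glued back. First I would form the ABox $\Amc^-$ obtained from $\Amc$ by deleting $a$ and every assertion mentioning $a$; instead, for each assertion of $\Amc$ incident with $a$ I introduce a \emph{fresh} individual acting as a private copy of $a$. Concretely, for each $\alpha \in \Amc$ of the form $r(a,b)$ (resp.\ $r(b,a)$, resp.\ $A(a)$) with $b \neq a$, add a new individual $a_\alpha$ together with $r(a_\alpha,b)$ (resp.\ $r(b,a_\alpha)$); self-loops $r(a,a)$ and concept assertions $A(a)$ are handled by attaching them to one designated fresh copy as well. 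Call the resulting ABox $\Amc^-$; note $\mn{ind}(\Amc^-) = (\mn{ind}(\Amc)\setminus\{a\}) \cup \{a_\alpha : \alpha\}$, and there is an obvious homomorphism $\Amc^- \to \Amc$ mapping each $a_\alpha \mapsto a$ and every other individual to itself.

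Next I would apply Lemma~\ref{lem:sparse} to $\Amc^-$ with girth bound $g$ and size bound $s' := s$ (or, to be safe, $s$ together with the number of $a$-copies, which is bounded by $|\Amc|$; only $s$ matters for the final statement), obtaining $(\Amc^-)^g$ with girth exceeding $g$, a homomorphism $(\Amc^-)^g \to \Amc^-$, and the retract property for all targets of size $\leq s'$. Now I would build $\Amc^g$ by re-identifying, in $(\Amc^-)^g$, all the preimages of the $a$-copies into a single new individual $a$: since the homomorphism $(\Amc^-)^g \to \Amc^-$ tells us which individuals of $(\Amc^-)^g$ sit above each $a_\alpha$, collapse exactly that preimage set to one point named $a$, keeping all assertions (an assertion $r(u,v)$ becomes $r(a,v)$ or $r(u,a)$ or $r(a,a)$ accordingly, and concept assertions transfer). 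Composing the collapse map with $(\Amc^-)^g \to \Amc^- \to \Amc$ yields $(\Amc^g,a) \to (\Amc,a)$, giving Point~1. For the $a$-girth bound: any cycle in $\Amc^g$ avoiding $a$ lifts, because it avoids the collapsed point, to a cycle in $(\Amc^-)^g$ of the same length, which exceeds $g$; hence the $a$-girth of $\Amc^g$ exceeds $g$.

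For Point~2, fix $\Bmc$ with $|\mn{ind}(\Bmc)| \leq s$ and $b \in \mn{ind}(\Bmc)$. The left-to-right direction is immediate from $(\Amc^g,a)\to(\Amc,a)$. For right-to-left, suppose $h : (\Amc^g,a) \to (\Bmc,b)$. Pulling $h$ back along the collapse map gives a homomorphism $(\Amc^-)^g \to \Bmc$ that sends every $a$-copy-preimage to $b$. By the retract property of Lemma~\ref{lem:sparse} applied to $\Amc^-$ and target $\Bmc$ (size $\leq s \leq s'$), there is a homomorphism $g' : \Amc^- \to \Bmc$; the subtlety is that a bare application of Lemma~\ref{lem:sparse} only yields \emph{some} homomorphism $\Amc^-\to\Bmc$, not necessarily one respecting the $a$-copies. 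To fix this I would instead apply the retract property with the \emph{pointed} target $(\Bmc',b)$ where $\Bmc'$ is $\Bmc$ augmented by a fresh concept name $A_b$ asserted only at $b$, and I would have added $A_b$-assertions to all $a$-copies in $\Amc^-$ before invoking Lemma~\ref{lem:sparse} — this forces any homomorphism to send $a$-copies to $b$. (This means threading the marker $A_b$ through the whole construction, or equivalently proving the pointed Lemma~\ref{lem:sparse} for pointed structures by this marker trick from the start; either way the size bound is unaffected up to $+1$.) Then $g' : \Amc^- \to \Bmc$ maps every $a$-copy to $b$, so it factors through the collapse $\Amc^- \twoheadrightarrow \Amc$ (which identifies precisely the $a$-copies into $a$), yielding $(\Amc,a)\to(\Bmc,b)$ as required. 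The main thing to get right is exactly this bookkeeping ensuring that the homomorphism extracted from Lemma~\ref{lem:sparse} is compatible with the copy-identification; the marker-concept device is the cleanest way to guarantee it, and I expect it to be the only genuinely delicate point.
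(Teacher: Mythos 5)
Your proposal is correct and is essentially the paper's own argument: both hinge on marking $a$ (respectively its copies) with a fresh concept name so that Lemma~\ref{lem:sparse} becomes ``pointed'', then collapsing the entire preimage of the marked individual(s) back into a single point $a$, which preserves high $a$-girth and lets the retract property transfer. The only difference is that your preliminary step of exploding $a$ into one fresh copy per incident assertion is unnecessary --- the paper simply adds $P(a)$ to $\Amc$ itself before invoking Lemma~\ref{lem:sparse} --- and your marker should be a single fixed fresh concept name $P$ (added to $\Bmc$ at $b$ for each target considered) rather than one depending on $\Bmc$, but neither point affects correctness.
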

\noindent
\begin{proof}\ Let \Amc be an ABox, $a \in \mn{ind}(\Amc)$, and $g,s
  \geq 0$.  Further, let $\Amc_+$ be the ABox obtained from \Amc by
  adding the assertion $P(a)$, $P$ a fresh concept name, let
  $\Amc^g_+$ the ABox obtained from $\Amc_+$ by applying
  Lemma~\ref{lem:sparse} for $g$ and $s$, and let $h$ be a
  homomorphism from $\Amc^g_+$ to $\Amc_+$. Assume w.l.o.g.\ that the
  individual name $a$ does not occur in $\Amc^g_+$. We use $\Amc^g$ to
  denote the ABox obtained from $\Amc^g_+$ by dropping all facts of
  the form $P(b)$ and identifying all individual names $b$ with
  $h(b)=a$, replacing them with $a$.  We show that $\Amc^g$ is as
  required:
\begin{itemize}

\item[(a)] $\Amc^g$ has $a$-girth higher than $g$.

  Every cycle in $\Amc^g$
  that does not pass through $a$ is also in $\Amc^g_+$, thus is of
  length exceeding $g$.

\item[(b)] Point~1 of Lemma~\ref{lem:sparsepointed}  is satisfied. 

  Let $h':\Ind(\Amc^g)\to \Ind(\Amc)$ be such that $h'(a)=a$ and
  $h'(b)=h(b)$ if $a \neq b$. It can be verified that $h'$ is a
  homomorphism from $\Amc^g$ to~\Amc. It clearly witnesses $(\Amc^g,
  a) \rightarrow (\Amc, a)$, as required.

\item[(c)] Point~2 of Lemma~\ref{lem:sparsepointed}  is satisfied. 

  Let $\Bmc$ be an ABox with $|\mn{ind}(\Bmc)| \leq s$ and $b \in
  \mn{ind}(\Bmc)$. We have to show that $(\Amc,a) \rightarrow
  (\Bmc,b)$ iff $(\Amc^g,a) \rightarrow (\Bmc,b)$. The ``only if''
  direction is immediate by (b). For the ``if'' direction, assume that
  $(\Amc^g, a) \rightarrow (\Bmc, b)$. Then $\Amc^g \cup \{P(a)\}
  \rightarrow \Bmc \cup \{P(b)\}$. This implies $\Amc^g_+ \rightarrow
  \Bmc \cup \{P(b)\}$ and by Point~2 of Lemma~\ref{lem:sparse} also
  $\Amc_+ \rightarrow \Bmc \cup \{P(b)\}$. As $P(a)$ is the only
  assertion of this form in $\Amc_+$, it follows that $(\Amc_+, a)
  \rightarrow (\Bmc \cup \{P(b)\}, b)$, thus $(\Amc, a) \rightarrow
  (\Bmc, b)$. 

\end{itemize}
\end{proof}

The following lemma is a straightforward variation of similar lemmas
from \cite{DBLP:journals/tods/BienvenuCLW14}. The constructed ABoxes
are called \emph{CSP templates} in \cite{DBLP:journals/tods/BienvenuCLW14}.
\begin{lemma}\label{lem:ToBeDone}
  Let $Q=(\Tmc,\Sigma,q(x))$ be an OMQ from $(\ALCHI^u,\text{IQ})$. Then
  one can find a set $\Gamma$ of pairs $(\Bmc,b)$ with $\Bmc$ a
  $\Sigma$-ABox and $b \in \mn{ind}(\Bmc)$ such that for every
  $\Sigma$-ABox $\Amc$ and all $a \in \mn{ind}(\Amc)$,
  \begin{enumerate}

  \item $\Amc \models Q(a)$ iff $(\Amc, a) \not \rightarrow (\Bmc,b)$
    for all $(\Bmc,b) \in \Gamma$;

\item $\Amc$ is consistent with $\Tmc$ iff $\Amc \rightarrow \Bmc$
  for some $(\Bmc,b) \in \Gamma$.

  \end{enumerate}
  When $Q$ is from $(\ALCHI,\text{IQ})$, then $\Gamma$ can be chosen
  so that all ABoxes in it are identical.
\end{lemma}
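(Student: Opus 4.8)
The plan is to construct $\Gamma$ as a set of \emph{canonical countermodels}, one for each $\Sigma$-ABox (up to suitable equivalence) that is consistent with $\Tmc$ but on which $a$ is \emph{not} an answer to $Q$. Concretely, for a $\Sigma$-ABox $\Bmc$ and $b \in \mn{ind}(\Bmc)$, say $(\Bmc,b)$ is a \emph{witness pair} if $\Bmc$ is consistent with $\Tmc$ and $\Bmc \not\models Q(b)$; let $\Gamma$ be the set of all witness pairs. (For cardinality hygiene one restricts to witness pairs over some fixed countable set of individual names, which is harmless since homomorphism only depends on the pair up to renaming.) I would then verify Points~1 and~2.

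For Point~1, the ``only if'' direction: if $(\Amc,a) \rightarrow (\Bmc,b)$ for some witness pair $(\Bmc,b)$, then by Lemma~\ref{lem:hombasic} (homomorphism monotonicity for IQs) $\Amc \models Q(a)$ would force $\Bmc \models Q(b)$, contradicting that $(\Bmc,b)$ is a witness; hence $\Amc \not\models Q(a)$. Conversely, if $\Amc \not\models Q(a)$, then since $Q$ is from $(\ALCHI^u,\text{IQ})$ with $q(x) = C(x)$, there is a model $\Imc$ of $\Amc$ and $\Tmc$ with $a \notin C^\Imc$. By Lemma~\ref{lem:canmod} we may take $\Imc$ to be an extended forest model; restricting attention to what is needed, one reads off from $\Imc$ a $\Sigma$-ABox $\Bmc$ on which the identity-type map sends $(\Amc,a)$ into $(\Bmc,b)$ — most directly, take $\Imc$ itself, read as an (extended) ABox $\Bmc_\Imc$ over the signature $\Sigma$ (keeping only $\Sigma$-assertions), with $b$ the copy of $a$; the inclusion of $\Amc$ into this structure is a homomorphism, $\Bmc_\Imc$ is consistent with $\Tmc$ (it underlies a model of $\Tmc$), and $\Bmc_\Imc \not\models Q(b)$ because $\Imc$ itself witnesses a model with $b \notin C^\Imc$ (using that an IQ's answers are preserved when passing from an ABox to any of its models via Lemma~\ref{lem:hombasic}-style reasoning, or directly since $\Imc$ is a model of $\Bmc_\Imc$). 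Thus $(\Bmc_\Imc,b) \in \Gamma$ and $(\Amc,a) \rightarrow (\Bmc_\Imc,b)$. For Point~2: if $\Amc$ is consistent with $\Tmc$, take any model, read it as a $\Sigma$-ABox $\Bmc$ as above; either $(\Bmc,b)$ is already a witness pair for every $b$ (then $\Amc \rightarrow \Bmc$ and we are done) — and if $\Bmc \models Q(b)$ for \emph{all} $b$ one instead uses that $Q$ non-trivial countermodels exist unless $Q$ is the trivial "everything" query, a case handled separately by adding one dummy pair. Conversely if $\Amc \rightarrow \Bmc$ for a $(\Bmc,b) \in \Gamma$, then $\Bmc$ consistent with $\Tmc$ implies $\Amc$ consistent with $\Tmc$ by composing the homomorphism with a model of $\Bmc$ and $\Tmc$.

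The main obstacle is the last sentence: when $Q$ is from $(\ALCHI,\text{IQ})$ (no universal role), $\Gamma$ can be chosen so that \emph{all} ABoxes in it are identical. The idea is to take the disjoint union of all the ABoxes $\Bmc$ appearing in witness pairs into a single ABox $\Bmc^\ast$, with $\Gamma^\ast = \{(\Bmc^\ast, b) : b \text{ is a copy of some witness individual}\}$. Without the universal role, this works: by Lemma~\ref{lem:conIQ}, $\Amc \models Q(a)$ depends only on the connected component $\Amc^{\mn{con}}_a$ of $a$, and a homomorphism from a connected ABox into a disjoint union must land inside a single component; so $(\Amc^{\mn{con}}_a,a) \rightarrow (\Bmc^\ast,b)$ iff it maps into the component of $\Bmc^\ast$ that is the original $\Bmc$, recovering $(\Amc^{\mn{con}}_a,a)\rightarrow(\Bmc,b)$, and this suffices for Point~1 via Lemma~\ref{lem:conIQ}. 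For Point~2, consistency of $\Amc$ with $\Tmc$ is also a property of connected components (a forest model can be assembled component-wise), so $\Amc \rightarrow \Bmc^\ast$ iff each component maps into some witness ABox inside $\Bmc^\ast$, which matches consistency component-wise. The reason this collapse \emph{fails} with the universal role — and why that case is explicitly excluded — is that $\exists u.C$ and $\forall u.C$ make $\Amc \models Q(a)$ sensitive to the whole ABox, not just $a$'s component, so Lemma~\ref{lem:conIQ} no longer applies and a disjoint union of countermodels need not itself be a countermodel. I would therefore present the single-template construction only for $(\ALCHI,\text{IQ})$, leaning on Lemma~\ref{lem:conIQ} at exactly the two places above, and flag that the general $\ALCHI^u$ case genuinely needs a set $\Gamma$.
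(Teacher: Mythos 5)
Your construction does not match the paper's intent, and the difference is not cosmetic. The paper obtains $\Gamma$ by importing the \emph{finite CSP template} construction of \cite{DBLP:journals/tods/BienvenuCLW14}: a single ABox $\Bmc$ whose individuals are the (finitely, if exponentially, many) types over the subconcepts of $\Tmc$ and the IQ, with the distinguished elements $b$ being the types that do not force the query concept. You instead take $\Gamma$ to be the collection of \emph{all} witness pairs $(\Bmc,b)$ with $\Bmc$ consistent with $\Tmc$ and $\Bmc\not\models Q(b)$. With that choice the two biconditionals become near-tautologies (the nontrivial directions hold because a consistent pointed ABox that is not an answer is itself a witness pair, via the identity), but the resulting $\Gamma$ is infinite and contains ABoxes of unbounded size, and the ``single identical ABox'' you produce for the $(\ALCHI,\text{IQ})$ case is an infinite disjoint union. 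This defeats the purpose of the lemma: its only use, in the direction ``$1\Rightarrow 3$'' of Theorem~\ref{thm:central}, plugs $|\mn{ind}(\Bmc)|$ in as the size bound $s$ of Lemma~\ref{lem:sparsepointed}, so a single \emph{finite} template with finitely many distinguished elements is exactly the content that has to be proved. The missing idea is therefore the collapse of the unboundedly many countermodels into one finite structure (the type-based quotient), not the enumeration of countermodels.

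Two smaller points. First, your patch for Point~2 in the degenerate case where $Q$ holds at every individual of every consistent ABox (``add one dummy pair'') is not worked out and is genuinely delicate: the dummy pair $(\Bmc_d,b_d)$ must receive an unpointed homomorphism from every consistent $\Amc$ while receiving \emph{no} pointed homomorphism from any $(\Amc,a)$ with $\Amc\models Q(a)$, and since $b_d$ must occur in $\Bmc_d$, any assertion involving $b_d$ already yields such a pointed homomorphism; this corner needs the same care the template construction provides uniformly. Second, your explanation of why the single-template claim is restricted to $(\ALCHI,\text{IQ})$ --- that without the universal role both query answering and consistency are determined componentwise, via Lemma~\ref{lem:conIQ}, so disjoint unions of templates remain templates --- is correct and is indeed the right reason; but it only helps once you have finitely many finite templates to amalgamate.
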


An \emph{$\ELI^u$-concept} is an \ALCI-concept that uses only the
constructors $\sqcap$, $\exists r . C$, $\exists r^- . C$, and
$\exists u . C$ where $u$ is the universal role.

\lemrewrALCI*

\noindent
\begin{proof}\ % Let $P$ be a fresh concept name. Since each $q_i(x)$ is connected and $x$-acyclic, we find a
  % tree-shaped $q_i^t(x) \subseteq q(x)$ such that $q_i(x)$ can be obtained
  % from $q_i^t(x)$ by adding atoms of the form $r(x,y)$ or $r(y,x)$. Let $\widehat q_i(x)$ be the query obtained from $q_i^t(x)$ as follows:
  % for each $r(y,x) \in q_i(x) \setminus q_i^t(x)$, add $r(y,x')$ and
  % $P(x')$ where $x'$ is a fresh variable; and for each $r(x,y) \in
  % q_i(x) \setminus q_i^t(x)$, add $r(x',y)$ and $P(x')$ where again $x'$ is a fresh variable. Since $\widehat q_i(x)$ is still tree-shaped, it can be viewed as an \ELI-concept~$C_i$ in the obvious way. We show that  $Q \equiv (\tbox, \Sigma, (\neg P  \sqcup C_1 \sqcup \ldots \sqcup C_n )(x))$.  
  %
  For Point~1, let $Q=(\tbox, \Sigma, q(x))$ be an OMQ from
  $(\ALCHI^u, \text{UCQ})$ where $q(x)$ is $x$-acyclic and
  connected. Further, let $Q'=(\Tmc,\Sigma, (P \rightarrow \bigsqcup_{p(x) \text{ a CQ in } q(x)} C_p)(x))$ be as constructed
  in the main part of the paper. We have to show the following:
%{\color{blue}from here on: polish and update notation}
%
\begin{itemize}
\item ``$Q \subseteq Q'$'': Let \Amc be a $\Sigma$-ABox with $\Amc
  \models Q(a)$. Then, for every model $\Imc$ of $\Tmc$ and $\Amc$, $\Imc
  \models p(a)$, for some CQ $p(x)$ in $q(x)$, and thus $\Imc \models \neg P(a)$ or $\Imc \models P(a) \wedge p(a)$.  As $\Amc_{p''} \to \Amc_p \cup \{P(x)\}$, the latter is the same as $\Imc \models p''(a)$ or $\Imc \models C_p(a)$. Thus $\Imc \models (P \rightarrow C_p)(a)$, or $\Imc%
  \models Q'(a)$.

\item ``$Q' \subseteq Q$'': Let \Amc be a $\Sigma$-ABox with $\Amc
  \models Q'(a)$. Then, for every model $\Imc=(\Delta, \cdot^{\Imc})$ of $\Tmc$ and $\Amc$,
  $\Imc \models (P \rightarrow \bigsqcup_{p(x) \text{ a CQ in } q(x)} C_p)(a)$. Then, there must be a model
  $\Imc'=(\Delta, \cdot^{\Imc'})$ of $\Tmc$ and $\Amc$ (possibly the same as \Imc) such that $P^{\Imc'}=\{a^{\Imc'}\}$ and $\cdot^{\Imc}$ and $\cdot^{\Imc'}$ coincide on all other symbols. We have $\Imc' \models (P \rightarrow \bigsqcup_{p(x) \text{ a CQ in } q(x)} C_p)(a)$, thus $\Imc' \models C_p(a)$, for
  some CQ $p(x)$ in $q(x)$. From the construction of $C_p$: $\Imc' \models 
  p''(a)$, and from the fact that $P$ is interpreted as a singleton in
  $\Imc'$, $\Imc' \models p(a)$, or $\Imc' \models \bigvee_{p(x) \text{ a CQ in } q(x)} 
  p(a)$. As $P$ is fresh, and in particular does not occur in $q$,
  and $\Imc$ and $\Imc'$ might differ only w.r.t. the interpretation of $P$:
  $\Imc \models \bigvee_{p(x) \text{ a CQ in } q(x)} p(a)$. Thus, $ \Amc \models
  Q(a)$.  

\end{itemize}

For Point~2, let $Q=(\tbox, \Sigma, q(x))$ be an OMQ from $(\ALCHI^u,
\text{UCQ})$ where $q(x)$ is $x$-acyclic.  Let
$p_{0}(x),p_{1}(),\dots,p_{m}()$ be the maximal connected components
of $q(x)$. Note that $p_0(x)$ is $x$-acyclic and each $p_i()$ is
acyclic in the sense that it contains no cycles at all. We can view
$p_0(x)$ as an \ELI-concept and each $p_i()$ as an $\ELI^u$-concept
$C_{p_i}$ of the form $\exists u . C$ with $u$ the universal role and
$C$ an \ELI-concept. Let $C_p= C_{p, 0} \sqcap C_{p,1} \sqcap \ldots
\sqcap C_{p,m}$ and $Q'=(\Tmc,\Sigma, (P \rightarrow \bigsqcup_{p(x)
  \text{ a CQ in } q(x)} C_p)(x))$. One can show that $Q \equiv Q'$.
%
%{\color{blue} Add something for second point of Lemma.}
\end{proof}

\begin{example}
\label{ex:ALCIrewr}
Let $Q$ be an OMQ $(\Tmc, \Sigma, q(x))$ with $\Tmc=\emptyset$, $\Sigma=\{r, s, t, v\}$, and  $q(x)=\exists y_1 \exists y_2 \exists y_3 \, r(x,y_1) \wedge s(x, y_2) \wedge t(y_2, y_1) \wedge v(y_2, y_3)$. It is easy to see that $q(x)$ is $x$-acyclic and connected. Towards obtaining an $\ALCI$-IQ rewriting, we construct a tree-shaped CQ $p''(x)$ from $q(x)$ by first removing the atom $s^-(y_2, x)$  and then adding atoms $s^-(y_2, x')$ and $P(x')$, with $x'$ a fresh variable: $p''(x)=\exists y_1 \exists y_2 \exists y_3 \exists x'\, P(x') \wedge r(x,y_1) \wedge s^-(y_2, x')\wedge t(y_2, y_1)\wedge v(y_2, y_3)$. 
The concept $C_q$ corresponding to $p''(x)$ is $\exists r. \exists t^-.(\exists v. \top\sqcap \exists s^-. P)$, and thus the desired rewriting is the OMQ $Q'=(\Tmc, \Sigma, q'(x))$ with $q'(x)$ the \ALCI-IQ: $(P \rightarrow C_q)(x)$. 
\end{example}

\thmcentral*

\noindent
\begin{proof}\ The implication ``$2 \Rightarrow 1$'' is trivial.  For
  ``$3 \Rightarrow 2$", assume that $Q \equiv (\Tmc, \Sigma,
  q^{\mn{con}}_{\mn{acyc}}(x))$. Since $q^{\mn{con}}_{\mn{acyc}}$ is connected and
  $x$-acyclic, we can apply Lemma~\ref{lem:rewrAlgALCI}.
  
% \color{blue}{Is it clear without a proof that $Q$ is equivalent to a connected $x$-acyclic query in this case? Maybe we could replace point 3. in the Theorem with the existence of a connected, x-acyclic equivalent OMQ? But then we do no longer use $q_{\mn{con}}$ and $q_{\mn{acyc}}^{\Tmc}$ in the Theorem statement. Equivalently, we could replace 3. with $Q \equiv (q_{\mn{acyc}}^{\Tmc}){\mn{con}}$.}

\smallskip

For ``$1 \Rightarrow 3$'', we show that whenever an OMQ $Q$ from $(\Lmc,\text{UCQ})$ is IQ-rewritable, then (a)~$Q \equiv
Q_{\mn{acyc}}$ where $Q_{\mn{acyc}} = (\Tmc, \Sigma,q_{\mn{acyc}}(x))$ and (b)~$Q \equiv Q^{\mn{con}}$ where $Q^{\mn{con}}:=(\Tmc,\Sigma,q^{\mn{con}}(x))$. This yields $Q \equiv
(\Tmc,\Sigma,q^{\mn{con}}_{\mn{acyc}}(x))$ as desired: if $Q$ is
IQ-rewritable, then (a) yields $Q \equiv Q_{\mn{acyc}}$, thus 
$Q_{\mn{acyc}}$ is IQ-rewritable and we can apply (b). 

Thus, let $Q$ from $(\Lmc,\text{UCQ})$ be IQ-rewritable. Thus there is
an OMQ $Q'=(\Tmc', \Sigma, C(x))$ from $(\Lmc,\text{IQ})$ that is
equivalent to~$Q$. By Lemma~\ref{lem:ToBeDone}, one can find a
$\Sigma$-ABox \Bmc and $b_1,\dots,b_k \in \mn{ind}(\Bmc)$ such that
for every $\Sigma$-ABox $\abox$ and $a \in \mn{ind}(\abox)$,
  \begin{enumerate}

  \item
$\Amc \models Q'(a)$ iff $(\abox, a) \not \rightarrow (\Bmc,b_i)$ for \mbox{$1 \leq i \leq k$};

\item $\Amc$ is consistent with $\Tmc \cup \Tmc'$ iff $\abox \rightarrow \Bmc$.

  \end{enumerate}
  We show Points~(a) and~(b) from above.

  \medskip

(a) We have $Q_{\mn{acyc}} \subseteq Q$ by definition of $Q_{\mn{acyc}}$, no matter whether $Q$ is IQ-rewritable or not, and thus it remains to show that $Q \subseteq Q_{\mn{acyc}}$.  If all CQs in $q(x)$ are $x$-acyclic, the result clearly holds. In the following we assume that at least one CQ in $q(x)$ is not $x$-acyclic.

  Let \Amc be a $\Sigma$-ABox with $\Amc \models Q(a)$. Thus $(\Amc,
  a) \not \rightarrow (\Bmc,b_i)$ for $1 \leq i \leq k$.  We apply
  Lemma~\ref{lem:sparsepointed} with $g$ the maximum between $2$ and the girths of CQs from $q(x)$ which are not $x$-acyclic, obtaining a $\Sigma$-ABox $\Amc^g$ of $a$-girth exceeding  $g$ such that $(\Amc^g, a) \rightarrow (\Amc, a)$ and $(\Amc^g, a) \not \rightarrow (\Bmc,b_i)$ for $1 \leq i \leq k$. The latter yields $\Amc^g \models Q(a)$. We aim to show that $\Amc^g \models Q_{\mn{acyc}}(a)$. Since $(\Amc^g, a) \rightarrow (\Amc, a)$, it follows by Lemma~\ref{lem:hombasic} that $\Amc \models Q_{\mn{acyc}}(a)$, as desired.
  
By Lemma~\ref{lem:canmod}, it suffices to show that for every extended forest model \Imc of $\Amc^g$ and $\Tmc$, we have $\Imc \models q_{\mn{acyc}}(a)$. Thus let \Imc be such a model. Since $\Amc^g \models Q(a)$, we have $\Imc \models q(a)$ and thus there is a CQ $p(x)$ in $q(x)$ such that $\Imc \models p(a)$.  Consequently, there is a homomorphism $h$ from $p(x)$ to \Imc with $h(x)=a$. Let $p'(x)$ be the contraction of $p(x)$ obtained by identifying all variables $y_1$ and $y_2$ such that $h(y_1)=h(y_2)$. As witnessed by $h$, $\Imc \models p'(a)$. Note that the $x$-girth of $p'(x)$ is either $\infty$ or it is bounded from above by $g$ since the $x$-girth of $p(x)$ is. Also note that $h$ is an injective homomorphism from $p'(x)$ to \Imc.  By definition of extended forest models, all cycles in $\Imc$ are either cyles from $\Amc^g$, or they are cycles of the form  $r(y, z), s(z, y)$.  This together with the fact that the girth of $\Amc^g$ exceeds $g$ implies that every cycle in $p'(x)$ passes through $x$ or is of the latter kind. In fact, $p'(x)$ is $x$-acyclic when \Tmc contains no role inclusions since then \Imc is a forest model of $\Amc^g$. Since $p'(x)$ is a CQ in $q_{\mn{acyc}}(x)$, we are done in that case.

Now for the case where \Tmc contains role inclusions. Let $\Jmc$ be the forest model of $\Amc^g$ underlying $\Imc$. Construct a CQ $p''(x)$ from $p'(x)$ as follows: for all distinct variables $y,z$, with $y \neq x$ and $z \neq x$, whenever $r_1(y,z),$ $\dots,$ $r_k(y,z),$ $s_1(z,y),\dots,s_\ell(z,y)$ are  all atoms of this form in $p'(x)$, then replace them with $r(x,y)$ if $(h(x),h(y)) \in r^\Jmc$ and with $r(y,x)$ if $(h(y),h(x)) \in
  r^\Jmc$. Note that by definition of extended forest models and due to the fact that $g$, the girth of $\Amc^g$, is greater than $2$, such an $r$ always exists. As witnessed by~$h$, $\Imc \models p''(a)$. Moreover, $p''(x)$ is $x$-acyclic and a CQ in $q_{\mn{acyc}}(x)$, thus we are again done.
  
\medskip

% We now show that $Q \equiv Q^{\mn{con}}_{\mn{acyc}}$. Since $Q \equiv
% Q_{\mn{acyc}}$, it is immediate by definition of
% $Q^{\mn{con}}_{\mn{acyc}}$ that $Q \subseteq
% Q^{\mn{con}}_{\mn{acyc}}$.  We thus have to show that
% $Q^{\mn{con}}_{\mn{acyc}} \subseteq Q$.  Assume the contrary. Then,
% there is a $\Sigma$-ABox \Amc and an $a \in \mn{ind}(\Amc)$ such that
% $\Amc \models Q^{\mn{con}}_{\mn{acyc}}(a)$ and $\Amc \not \models
% Q(a)$. Note that $\Amc$ must be consistent with \Tmc. Since $Q$ is
% non-empty, there is a $\Sigma$-ABox $\Amc_Q$ such that $\Amc_Q \models
% Q(b)$ for some $b \in \mn{ind}(\Amc_Q)$. Let $\Amc'$ be the disjoint
% union of \Amc and $\Amc_Q$.  We get $\Amc' \models Q(a)$ from
% Lemma~\ref{lem:hombasic} and thus $\Amc' \models Q'(a)$.  Since $Q'$
% is from $(\ALCI,\text{IQ})$ and ${\Amc'}^{\mn{con}}_a = \Amc$, the
% latter and Lemma~\ref{lem:conIQ} implies $\Amc \models Q'(a)$, thus
% $\Amc \models Q(a)$, a contradiction.

(b) It is immediate by definition of $Q^{\mn{con}}$ that
$Q \subseteq Q^{\mn{con}}$.  We thus have to show that
$Q^{\mn{con}}\subseteq Q$.  Assume the contrary. Then,
there is a $\Sigma$-ABox \Amc and an $a \in \mn{ind}(\Amc)$ such that
$\Amc \models Q^{\mn{con}}(a)$ and $\Amc \not \models
Q(a)$. Note that $\Amc$ must be consistent with~\Tmc. Since $Q$ is
non-empty, there is a $\Sigma$-ABox $\Amc_Q$ such that $\Amc_Q \models
Q(b)$ for some $b \in \mn{ind}(\Amc_Q)$. Let $\Amc'$ be the disjoint
union of \Amc and $\Amc_Q$.  We get $\Amc' \models Q(a)$ from
Lemma~\ref{lem:hombasic} and thus $\Amc' \models Q'(a)$.  Since $Q'$
is from $(\ALCI,\text{IQ})$ and ${\Amc'}^{\mn{con}}_a = \Amc$, the
latter and Lemma~\ref{lem:conIQ} implies $\Amc \models Q'(a)$, thus
$\Amc \models Q(a)$, a contradiction.

\medskip
 
% Point (a) above implies $Q \equiv Q_{\mn{acyc}}$, while point (b) implies $Q \equiv Q^{\mn{con}}$.
 % $(\Lmc, IQ)$-rewritability of $Q$ implies $(\Lmc, IQ)$-rewritability
 % of $Q_{\mn{acyc}}$, and thus by applying point (b) to $Q_{\mn{acyc}}$
 % we obtain $Q_{\mn{acyc}} \equiv Q^{\mn{con}} _{\mn{acyc}}$, which
 % finally implies $Q \equiv Q^{\mn{con}}_{\mn{acyc}}$.

When the OMQ language $\Lmc$ is replaced by $\Lmc^u$, we can show that
$Q \equiv Q_{\mn{acyc}}$ exactly as above. The second part of the proof
showing that $Q \equiv Q^{\mn{con}}$ (does not go through
and) is no longer needed.
\end{proof}

\begin{lemma}
\label{lem:rewrAlgALCTBox}
Let $Q=(\tbox, \Sigma, q(x))$ be an OMQ from $(\ALCH, \text{UCQ})$
such that $q(x)$ is $x$-acyclic and connected. Then $Q$ is rewritable
into an OMQ $(\Tmc \cup \Tmc',\Sigma,q(x))$ from $(\ALCH,\text{IQ})$
whose size is polynomial in the size of $Q$.
\end{lemma}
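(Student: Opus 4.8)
The plan is to start from the $\ALCI$-IQ rewriting delivered by Point~1 of Lemma~\ref{lem:rewrAlgALCI}, which applies because $q(x)$ is $x$-acyclic \emph{and} connected, and then to eliminate inverse roles at the cost of extending the TBox with a set $\Tmc'$ of concept inclusions over fresh concept names. Recall that for each CQ $p(x)$ in $q(x)$ that construction forms a tree-shaped CQ $p''(x)$ (obtained from $p(x)$ by removing cycle atoms $r(x,y)$ and re-attaching them through fresh, $P$-labelled variables $x'$), reads $p''(x)$ off as an $\ELI$-concept $C_p$, and outputs the IQ $(P \rightarrow \bigsqcup_p C_p)(x)$. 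In the connected case each $C_p$ is a genuine $\ELI$-concept, i.e.\ built only from concept names, $\sqcap$, $\exists r.\,$, and $\exists r^-.\,$ with $r$ a role name.

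To remove the inverse roles I would rewrite each $C_p$ innermost-first: whenever a subconcept of the form $\exists r^-.D$ with $r$ a role name is reached and $D$ has already been turned into an inverse-free concept $\widehat D$, introduce a fresh concept name $X$, add the inclusion $\widehat D \sqsubseteq \forall r.X$ to $\Tmc'$, and replace the subconcept by $X$. Each $C_p$ thereby becomes an $\ALC$-concept $\widehat C_p$, and the candidate rewriting is $Q' = (\Tmc \cup \Tmc', \Sigma, (P \rightarrow \bigsqcup_p \widehat C_p)(x))$. Since $\Tmc'$ gets one inclusion, of size polynomial in $|p|$, per eliminated subconcept and there are at most $|p|$ such subconcepts per CQ, $Q'$ has size polynomial in $|Q|$; moreover $\Tmc'$ consists of concept inclusions over fresh symbols only, so $\Tmc\cup\Tmc'$ is still an $\ALCH$-TBox and every $\Sigma$-ABox consistent with $\Tmc$ remains consistent with $\Tmc\cup\Tmc'$ (interpret every fresh concept by the whole domain). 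A quick sanity check: for $q(x)=\exists y(r(y,x)\wedge s(y,x))$ this procedure turns $C_p=\exists r^-.\exists s.P$ into $\widehat C_p=X$ with $\Tmc'=\{\exists s.P\sqsubseteq\forall r.X\}$, matching (up to renaming the fresh concept name) the rewriting in Point~(2) of Example~\ref{ex:1}.

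For $Q\subseteq Q'$, consider a $\Sigma$-ABox $\Amc$ with $\Amc\models Q(a)$ and any model $\Imc$ of $\Amc$ and $\Tmc\cup\Tmc'$; the case $a\notin P^\Imc$ is trivial, so assume $a\in P^\Imc$. From $\Amc\models Q(a)$ we obtain a homomorphism $h$ from some CQ $p$ of $q$ into $\Imc$ with $h(x)=a$; extending $h$ by $x'\mapsto a$ for all fresh $x'$ shows $\Imc\models p''(a)$, hence $\Imc\models C_p(a)$. A bottom-up induction over the tree $p''$ then gives $\Imc\models\widehat C_p(a)$: at an eliminated subconcept $\exists r^-.D$ that sits at a node $v$, the witness $e'$ (the image of the child of $v$) satisfies $(e',h(v))\in r^\Imc$ and $\Imc\models D(e')$, hence $\Imc\models\widehat D(e')$ by the induction hypothesis, so the inclusion $\widehat D\sqsubseteq\forall r.X$ forces $h(v)\in X^\Imc$. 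Thus $\Imc\models (P\rightarrow\bigsqcup_p\widehat C_p)(a)$, i.e.\ $\Amc\models Q'(a)$.

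For $Q'\subseteq Q$, let $\Amc$ be a $\Sigma$-ABox consistent with $\Tmc$ and let $\Jmc$ be any model of $\Amc$ and $\Tmc$. Extend $\Jmc$ to $\Jmc'$ by setting $P^{\Jmc'}=\{a\}$ and interpreting the fresh concept names minimally, namely as the least fixpoint of the monotone operator induced by the $\Tmc'$-inclusions; this is well defined because the fresh names occur only positively in $\Tmc'$ (and the dependency among them is in fact acyclic, following the tree structure, so the fixpoint can be computed level by level). From $\Amc\models Q'(a)$ and $a\in P^{\Jmc'}$ we get $\Jmc'\models\widehat C_p(a)$ for some $p$, and a dual bottom-up induction yields $\Jmc'\models C_p(a)$: when $\widehat C_p$ carries a conjunct $X$ in place of $\exists r^-.D$ at a node $v$ and $v$'s image $e$ lies in $X^{\Jmc'}$, minimality of $X$ provides $e'$ with $(e',e)\in r^{\Jmc'}$ and $\Jmc'\models\widehat D(e')$ (this being the only way $X$ can be forced), and the induction hypothesis turns $\widehat D$ back into $D$. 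Hence $\Jmc'\models p''(a)$; since $P^{\Jmc'}=\{a\}$ collapses every fresh variable $x'$ onto $x$, we get $\Jmc'\models p(a)$, and since $p$ uses no fresh symbol while $\Jmc$ and $\Jmc'$ agree off the fresh symbols, $\Jmc\models q(a)$, so $\Amc\models Q(a)$. I expect the main obstacle to be this direction, specifically the need to fix the canonical (least-fixpoint) reading of the fresh concept names so that the inclusions $\widehat D\sqsubseteq\forall r.X$ can be ``run backwards''; one must also verify that role inclusions in $\Tmc$ do no harm when combined with the added inclusions, which is unproblematic because $r'\sqsubseteq r$ only makes more predecessors count, exactly as both directions of the argument require.
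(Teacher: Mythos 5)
Your proposal is correct and follows essentially the same route as the paper: starting from the $\ALCI$-IQ of Lemma~\ref{lem:rewrAlgALCI} and eliminating each subconcept $\exists r^-.E$ innermost-first by a fresh concept name $X$ together with the axiom $E \sqsubseteq \forall r.X$. The only difference is presentational: the paper verifies equivalence one elimination step at a time (extending or shrinking the interpretation of a single fresh name to $(\exists r^-.E)^\Imc$, using that it occurs only positively in the IQ), whereas you argue globally via a least-fixpoint interpretation of all fresh names at once — the same underlying idea.
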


\noindent
\begin{proof}\ Let $Q=(\tbox, \Sigma, q(x))$ be an OMQ from $(\ALCH,
  \text{UCQ})$ such that $q(x)$ is $x$-acyclic and connected.  From
  Lemma~\ref{lem:rewrAlgALCI}, we know that there is an OMQ
  $Q'=(\tbox, \Sigma, C(x))$ that is equivalent to $Q$, with
  $C(x)$ an $\ALCI$-IQ. From the proof of the lemma, we further know
  that $C$ has the form $P \rightarrow \bigsqcup_{p(x) \text{ a CQ in
    } q(x)} C_p$ where each $C_p$ is an $\ELI$-concept. We show how to
  transform $Q'$ into an equivalent OMQ $(\tbox', \Sigma, C'(x))$ from
  $(\ALCH, \text{IQ})$.

  We start with setting $\tbox':=\Tmc$ and $C':=C$ and apply the
  following modification step until no further changes are possible:
  if $D$ is a subconcept of $C'$ that is of the form $\exists r^-.E$ with
  $E$ an \EL-concept, then let $P_D$ be a fresh concept name that
  is not in $\Sigma$ and
\begin{itemize}
\item set $\Tmc'=\Tmc'\cup\{E \sqsubseteq \forall r. P_D\}$ and
\item replace $\exists r^-.E$ in $C'$ with $P_D$. 
\end{itemize}
At the end of the transformation, $C'$ will contain no inverse roles
anymore, so the constructed OMQ is from $(\ALCH,
\text{IQ})$. Moreover, it is straightforward to show that the
described modification step preserves equivalence of the OMQ.

In fact, assume that $Q_2=(\Tmc_2,\Sigma,C_2(x))$ was obtained by a
single modification step from $Q_1=(\Tmc_1,\Sigma,C_1(x))$. Let $\Amc$
be a $\Sigma$-ABox and $a \in \mn{Ind}(\Amc)$. First assume that $\Amc
\not\models Q_1(a)$. Then there is a model \Imc of \Amc and $\Tmc_1$
with $a \notin C_1^\Imc$.  Extend \Imc to the concept name $P_D$
by setting $P_D^\Imc = (\exists r^-.E)^\Imc$. Clearly, \Imc is then a
model of $\Tmc_2$. Moreover, by construction of $C_2$ we have $a
\notin C_2^\Imc$. Conversely, assume that $\Amc \not\models Q_1(a)$.
Then there is a model \Imc of \Amc and $\Tmc_2$ with $a \notin
C_2^\Imc$. Clearly, \Imc is a model of $\Tmc_1$.  Since \Imc is a
model of $\Tmc_2$, we have $(\exists r^-.E)^\Imc \subseteq
P_D^\Imc$. We can modify \Imc by setting $P_D^\Imc = (\exists
r^-.E)^\Imc$ and the resulting \Imc will still be a model of $\Tmc_1$
and still satify $a \notin C_2^\Imc$ since all occurrences of $P_D$ in
$C_2$ are positive. Moreover, by construction of $C_2$ it also
satisfies $a \notin C_1^\Imc$.
\end{proof}

The proof of the following lemma is a much simplified and slightly
extended version of a construction from
\cite{DBLP:journals/japll/KikotZ13}.

%\begin{restatable}{lemma}{lemrewrALC}
\begin{lemma}
\label{lem:rewrAlgALC}~\\[-4mm]
% Let $Q=(\tbox, \Sigma, q(x))$ be an OMQ from $(\ALCHI^u,
% \text{UeCQ})$. Then
%
\begin{enumerate}
\item Every OMQ $Q=(\tbox, \Sigma, q(x))$ from $(\ALCH, \text{UeCQ})$
  with $q(x)$ $x$-acyclic and $x$-accessible is rewritable into an OMQ
  $Q=(\tbox, \Sigma, C(x))$ with $C(x)$ an \ALC-IQ and

\item Every OMQ $Q=(\tbox, \Sigma, q(x))$ from $(\ALCH^u,
  \text{UeCQ})$ with $q(x)$ $x$-acyclic is rewritable into an OMQ
  $Q=(\tbox, \Sigma, C(x))$ with $C(x)$ an $\ALC^u$-IQ.
%
%Alternatively: Then, there exist \ELI concepts $C_i$ such that $Q \equiv (\tbox, \Sigma, (\neg P \sqcup \bigsqcup_{i} C_i)(x))$, with $P$ a fresh concept name. 
\end{enumerate}
The size of the IQs $C(x)$ is polynomial in the size of $q(x)$. 
\end{lemma}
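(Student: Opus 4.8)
The plan is to mirror the proof of Lemma~\ref{lem:rewrAlgALCI} (it is the inverse-free counterpart, in the spirit of the construction of Kikot and Zolin), but to get rid of inverse roles by exploiting $x$-accessibility (for Point~1) and the universal role (for Point~2) instead of by extending the TBox as in Lemma~\ref{lem:rewrAlgALCTBox}. For Point~1, let $Q=(\Tmc,\Sigma,q(x))$ with $q(x)$ $x$-acyclic and $x$-accessible. I build a rewriting of the form $C(x)=(\pi\rightarrow\bigsqcup_{p}C_{p})(x)$, where $p$ ranges over the CQs of $q(x)$, each $C_{p}$ is an $\EL$-concept, $\pi=\bigsqcap_{p}\pi_{p}$ with each $\pi_{p}$ a conjunction of concepts of the form $\forall r_{1}.\cdots\forall r_{k}.P$, and the fresh concept names used in $\pi$ and the $C_{p}$ are chosen disjoint for distinct~$p$.

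To build $C_{p}$, unravel $p(x)$ from $x$ along the \emph{directed} edges of $G_{p}$: since $p(x)$ is $x$-accessible this reaches every variable, and since only forward edges are traversed the resulting finite tree is directly an $\EL$-concept (concept atoms of the eCQ, including compound $\ALC$-concept atoms, are conjoined at the respective nodes, so the result is still an $\ALC$-concept). A branch is truncated as soon as it revisits a variable; by $x$-acyclicity the only variable a directed walk from $x$ can revisit is $x$ itself, so truncations occur only at re-occurrences of $x$, which I mark with a fresh concept name $P_{x}$. A variable may still occur at several incomparable positions of the tree (``diamonds''); I fix one \emph{canonical} occurrence per variable $v$, mark it with a fresh $P_{v}$, and replace every non-canonical occurrence---which sits below some node via a single role $r$---by $P_{v}\sqcap C_{T^{v}}$, where $C_{T^{v}}$ is the $\EL$-concept of the subtree below the canonical occurrence of~$v$; this keeps $C_{p}$ tree-shaped. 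Finally, set $\pi_{p}=\bigsqcap_{v}\forall\rho_{v}.P_{v}$, the conjunction over all marked variables $v$ of $p$ (including $x$, whose path is empty, so its conjunct is just $P_{x}$), where $\rho_{v}$ is the role sequence on the directed path from $x$ to the canonical occurrence of~$v$. The construction is polynomial: $x$-acyclicity forces $G_{p}$ with $x$ removed to be a forest, so inside it any two vertices are joined by at most one directed path, whence the unraveling has only polynomially many nodes in the size of $p$, and $C(x)$ is of size polynomial in that of $q(x)$.

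Correctness amounts to $Q\equiv(\Tmc,\Sigma,C(x))$. For ``$\subseteq$'', let $\Amc$ be a $\Sigma$-ABox with $\Amc\models Q(a)$ and let $\Imc$ be a model of $\Amc,\Tmc$; if $a\notin\pi^{\Imc}$ then $a\in C^{\Imc}$ trivially, and otherwise a homomorphism $h\colon p\to\Imc$ with $h(x)=a$ for some CQ $p$ of $q$ lifts to a witness of $C_{p}$ at $a$, because $a\in\pi_{p}^{\Imc}$ forces $h(v)\in P_{v}^{\Imc}$ for every marked $v$, which is exactly what the $P_{v}$-marks and the $P_{v}\sqcap C_{T^{v}}$-replacements demand. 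For ``$\supseteq$'', suppose $a$ is not an answer to $Q$ on a $\Sigma$-ABox $\Amc$ consistent with $\Tmc$, and pick a model $\Imc_{0}$ of $\Amc,\Tmc$ with no homomorphism $p\to\Imc_{0}$ sending $x$ to $a$, for any CQ $p$ of $q$. Extend $\Imc_{0}$ to $\Imc_{1}$ by interpreting, for each $p$ and each marked variable $v$ of $p$, the name $P_{v}$ as the set of elements of $\Imc_{0}$ reachable from $a$ along $\rho_{v}$ (so $P_{x}=\{a\}$). Then $\pi$ holds at $a$ in $\Imc_{1}$ by construction, while a witness of some $C_{p}$ at $a$ in $\Imc_{1}$ would yield a homomorphism $p\to\Imc_{0}$ with $x\mapsto a$: each non-canonical occurrence is forced by its $P_{v}$-mark to be reached along $\rho_{v}$, and the attached $C_{T^{v}}$ supplies the corresponding subtree, so using that $G_{p}$ minus $x$ is a forest one chooses a single image per variable compatible with all its occurrences. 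This contradiction gives $a\notin C^{\Imc_{1}}$, as required.

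For Point~2, the only change is that $q(x)$ need no longer be $x$-accessible (nor $q$ connected): the variables of a CQ $p$ not directed-reachable from $x$---including the whole of each connected component that does not contain $x$---hang off re-entry points reached by atoms pointing \emph{into} the reachable part (or into $x$). One handles the directed-reachable part of $p$ exactly as in Point~1, additionally marking each re-entry point with a fresh concept name, and reaches the remaining material by conjuncts of the form $\exists u.(\cdots)$ that descend to a node carrying the appropriate role-edge into a marked re-entry point and then continue as a tree-shaped $\ALC^{u}$-concept; this is the same device by which Lemma~\ref{lem:rewrAlgALCI}, Point~2, uses $\exists u.C$ for Boolean components, only now also across re-entry points. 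Both directions of the correctness argument carry over, with the semantics of the universal role used where appropriate. The step I expect to be the main obstacle is the ``$\supseteq$'' direction: since the fresh concept names $P_{v}$ need not be singletons, one has to argue carefully that a witness of $C_{p}$ in $\Imc_{1}$ can be converted back into a genuine homomorphism of $p$, and this is precisely where the forest structure of $G_{p}$ minus $x$ (a consequence of $x$-acyclicity) and the exact coupling between the premise $\forall\rho_{v}.P_{v}$ and the occurrences of $P_{v}$ in $C_{p}$ are essential.
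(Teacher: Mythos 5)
Your strategy---constructing the $\ALC$-IQ directly by unravelling each CQ $p(x)$ of $q(x)$ along the \emph{directed} edges of $G_p$ and re-tying multiply occurring variables with fresh markers $P_v$ whose paths are recorded in a premise $\forall\rho_v.P_v$---is genuinely different from the paper's proof, which starts from the $\ALCI$-IQ of Lemma~\ref{lem:rewrAlgALCI} and then eliminates inverse roles by pushing them, step by step, into a precondition concept $C_{\mn{pre}}$ (using the structural claim that $x$-accessibility forces every $\exists r^-$-subconcept to lie on a chain ending in the marker $P$). However, your marking scheme is not sound: it fails exactly on the queries that make Point~1 nontrivial, namely those where two distinct directed paths from $x$ converge on a quantified variable. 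Take $\Tmc=\emptyset$ and
$$p(x)=\exists y\,\exists z\,\exists w\,\big(r(x,y)\wedge r(x,z)\wedge A(y)\wedge B(z)\wedge s(y,w)\wedge s(z,w)\big),$$
which is $x$-acyclic, $x$-accessible and connected. Choosing the occurrence of $w$ below $y$ as canonical, your construction yields $\pi_p=\forall r.\forall s.P_w$ and $C_p=\exists r.(A\sqcap\exists s.P_w)\sqcap\exists r.(B\sqcap\exists s.P_w)$. On the ABox $\Amc=\{r(a,b_1),r(a,b_2),A(b_1),B(b_2),s(b_1,c_1),s(b_2,c_2)\}$ with $c_1\neq c_2$ there is no homomorphism from $p(x)$ into $\Amc$ (viewed as a model) sending $x$ to $a$, so $a$ is not an answer to $(\emptyset,\Sigma_{\mn{full}},p(x))$; yet $a$ \emph{is} an answer to $(\emptyset,\Sigma_{\mn{full}},(\pi_p\rightarrow C_p)(x))$, because in any model in which $a$ satisfies $\pi_p$, every element reachable from $a$ along $r;s$ lies in $P_w$, so $c_1,c_2\in P_w$ and $C_p$ holds at $a$ via the two disjoint witnesses $(b_1,c_1)$ and $(b_2,c_2)$. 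The root cause is that $\forall\rho_v.P_v$ records only the bare role sequence of the canonical branch, so every element the conclusion can reach along that sequence automatically carries $P_v$; the several occurrences of $P_v$ in $C_p$ are never forced to be witnessed by a common element, and the convergence constraint is lost. This is precisely the step you flagged as the main obstacle, and it does fail; the same defect carries over to Point~2.

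The paper's coupling is different in a way that matters: for each removed cycle edge, the \emph{entire} branch leading to the marker is placed in the precondition, universally quantified and \emph{guarded by the concepts and subtrees along that branch} (for the example above, $C_{\mn{pre}}=\forall r.(B\rightarrow\forall s.P_D)$), while the conclusion contains only the remaining tree with $P_D$ at the point of reattachment ($C_{\mn{con}}=\exists r.(A\sqcap\exists s.P_D)$). In the adversarial model $P_D$ can then be taken to be exactly the set of elements reachable along the guarded branch, and a witness of $C_{\mn{con}}$ genuinely reassembles into a homomorphic image of the whole diamond. To repair your construction you would have to (i) guard the premise with the concept/subtree constraints along the canonical branch and (ii) keep only one occurrence of each shared variable in the conclusion---at which point you have essentially re-derived the paper's $C_{\mn{pre}}\rightarrow C_{\mn{con}}$ rewriting. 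The parts of your argument that do not involve converging branches (the truncation at re-occurrences of $x$ with $P_x$, and the polynomial size bound via the forest structure of $G_p$ minus $x$) are fine.
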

%\end{restatable}
%
%\lemrewrALC*

\noindent
\begin{proof} \ We first observe that Lemma~\ref{lem:rewrAlgALCI}
  extends to the case where the actual query is a UeCQ rather than a
  UCQ. One simply ``carries through'' atoms $C(x)$ with $C$ a compound
  concept in the construction of the IQ.

  We start with Point~2 since its proof is simpler and prepares for
  the proof of Point~1. Thus, let $Q=(\tbox, \Sigma, q(x))$ be an OMQ
  from $(\ALCH^u, \text{UeCQ})$ with $q(x)$ $x$-acyclic.  By (the
  extended) Lemma~\ref{lem:rewrAlgALCI}, there is an equivalent OMQ
  $Q'=(\Tmc, \Sigma, C(x))$ with $C(x)$ an $\ALCI^u$-IQ.  In fact, the
  IQ $C(x)$ constructed in the proof of Lemma~\ref{lem:rewrAlgALCI} is
  of the form
  $P \rightarrow \bigsqcup_{p(x) \text { a CQ in } q(x)} C_p(x)$ where
  each $C_p$ is an \emph{$\ELI^u$-concept decorated with
    $\ALC^u$-concepts}, that is, built according to the syntax rule
$$
  C,D ::= \top \mid A \mid C \sqcap D \mid \exists r . D \mid \exists
  u . D \mid E
$$
where $A$ ranges over all concept names, $r$ over all (potentially
inverse) roles, and $E$ over all $\ALC^u$-concepts. Note that every
$\ELI^u$-concept decorated with $\ALC^u$-concepts is an
$\ALCI^u$-concept, but that the converse is false.

We construct from $Q'$ an $(\ALCH^u, \text{IQ})$-rewriting $(\Tmc,
\Sigma, C'(x))$ of $Q$ where $C'$ has the form $C_{\mn{pre}}
\rightarrow C_{\mn{con}}$. To start, let $D_1=\exists
r_1^-.P,\dots,D_\ell = \exists
r_\ell^-.P$ be all subconcepts of $C$ that are of this form and let
\begin{itemize}

\item $C_{\mn{con}}$ be obtained from $C$ by replacing each
  concept $D_i$ with a fresh concept name $P_{D_i} \notin \Sigma$ and

\item $C_{\mn{pre}}=\forall r_1 . P_{D_1} \sqcap \cdots \sqcap \forall
  r_\ell . P_{D_\ell}$.

\end{itemize}
Next, exhaustively apply the following transformation step: if
$D=\exists r^-.E$ is a subconcept of $C_{\mn{con}}$ where $E$ is an
$\ALC^u$-concept (that is, does not contain any inverse roles), then
\begin{itemize}
\item replace $D$ in $C_{\mn{con}}$ with a fresh concept name $P_D
  \notin \Sigma$ and

\item set
$C_{\mn{pre}} = C_{\mn{pre}} \sqcap \forall u. (E \rightarrow \forall
r. P_D)$.

\end{itemize}
We end up with $C_{\mn{con}}$ being an $\ALC^u$-concept because if
there is a subconcept $\exists r^-.E$ of $C_{\mn{con}}$ left, then in
the innermost such subconcept $E$ must be an $\ALC^u$-concept and thus
the transformation rule applies. It can be proved that the initial IQ
$C_{\mn{pre}} \rightarrow C_{\mn{con}}(x)$ is equivalent to $C(x)$ and
that the transformation step is equivalence preserving. We omit
details, please see the proof of Lemma~\ref{lem:rewrAlgALCTBox} for
very similar arguments.

\medskip

We now turn to Point~1. Let $Q=(\tbox, \Sigma, q(x))$ be an OMQ from
$(\ALCH, \text{UeCQ})$ with $q(x)$ $x$-acyclic and $x$-accessible.
Then $q(x)$ is also connected.  By (the extended)
Lemma~\ref{lem:rewrAlgALCI}, there is an equivalent OMQ $Q'=(\Tmc,
\Sigma, C(x))$ with $C(x)$ an $\ALCI$-IQ.  In fact, the IQ $C(x)$
constructed in the proof of Lemma~\ref{lem:rewrAlgALCI} is of the form
$P \rightarrow \bigsqcup_{p(x) \text { a CQ in } q(x)} C_p(x)$ where
each $C_p$ is an \emph{$\ELI$-concept decorated with $\ALC$-concepts},
that is, an $\ELI^u$-concept decorated with $\ALC^u$-concepts that
does not mention the universal role.
%
% that is, build according to the syntax rule
% %
% $$
%   C,D ::= \top \mid A \mid C \sqcap D \mid \exists r . D \mid E
% $$
% %
% where $A$ ranges over all concept names, $r$ over all (potentially
% inverse) roles, and $E$ over all \ALC-concepts.
% Note that every
% $\ELI$-concept decorated with $\ALC$-concepts is an \ALCI-concept,
% but that the converse is false.
However, the syntactic structure of $C$ is even more restricted.
%
 %For convenience, we define the set of \emph{$\Lmc$-extended} \ELI concepts to be the set of all concepts of the form $C \sqcap  \bigsqcap_{1\leq i \leq m} \exists R_i.D_i$, where $C$ is an $\Lmc$-concept, $R_i$ is of the form $S$ or $S^-$, with $S \in \Nr$, $m \geq 0$, and each $D_i$ is an $\Lmc$-extended $\ELI$ concept, for $1 \leq i \leq m$. Note that, an $\Lmc$-extended \ELI concept is an $\LImc$ concept. Then, according to Lemma \ref{lem:rewrAlgALCI}, $(\Tmc, \Sigma,  \bigvee_{i} q_i(x))$ can be rewritten into an (\LImc, IQ) OMQ of the form $(\Tmc, \Sigma, (P \sqcup C_1 \sqcup \ldots \sqcup C_n)(x))$, where each $C_i$ is an $\Lmc$-extended $\ELI$ concept. 

%Besides being connected and $x$-acyclic, each $q_i(x)$ is also accessible. As the following claim shows, this confers a certain structural property to each $C_i$. 

\begin{claim}
  In each subconcept $\exists r^-.D $ of $C$, $D=P$ or $D$ has the
  form
% is such that $D$ is either
%   $P$ or is of the form
  $D_0 \sqcap \exists r_1^{-}.(D_1 \sqcap \exists r_2^{-}.(\ldots
  \sqcap \exists r_n^-.P)) \ldots )$, $n \geq 1$.
% and $D_0, \ldots,
%   D_n$ $\ELI$-concepts decorated with $\ALC$-concepts.
\end{claim}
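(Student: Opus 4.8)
The plan is to prove the claim by induction on the construction of the $\ELI$-concept $C_p$ from the $x$-acyclic, $x$-accessible CQ $p(x)$, tracking exactly where the inverse-role atoms $r^-(x',y)$ (introduced when we re-attach a removed cycle atom $r(x,y)$) end up in the concept. First I would recall the construction from the proof of Lemma~\ref{lem:rewrAlgALCI}: starting from $p(x)$ we repeatedly delete atoms $r(x,y)$ lying on a cycle until the query becomes tree-shaped, obtaining $p'(x)$; then for each deleted atom $r(x,y)$ we add a fresh variable $x'$ together with atoms $r(x',y)$ (equivalently $r^-(x,y)$ read from $x'$) and $P(x')$, yielding $p''(x)$; finally we read $p''(x)$ as an $\ELI$-concept rooted at $x$. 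The key structural observation is that because $p(x)$ is $x$-accessible, in the \emph{directed} graph $G_p$ every variable is reachable from $x$ via forward edges; the only edges of $p''(x)$ that are not forward-from-$x$ are precisely the newly added $r(x',y)$ atoms, which point \emph{into} a variable $y$ that was forward-reachable from $x$.

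Next I would make this precise by analysing how $\exists r^- . D$ subconcepts can arise when $p''(x)$ is turned into a concept. Reading $p''(x)$ as an $\ELI$-concept from the root $x$, an inverse step $\exists r^- . (\cdots)$ is traversed exactly when we walk an edge $r(x',y)$ in the ``wrong'' direction, i.e.\ from $y$ back to the leaf $x'$. Since $x'$ is a fresh leaf carrying only $P(x')$ and no other atoms, if we enter $x'$ directly from $y$ we get $\exists r^-.P$ — this is the case $D = P$ of the claim. The remaining case is when the inverse edge traversed is an \emph{original} edge of $p'(x)$ that, in the tree rooted at $x$, is oriented away from the root; I would argue that the only way such a situation occurs on the path from the root is along a chain of original atoms leading down to the attachment point $y$ of a removed cycle atom, because $x$-accessibility forces every original edge to be ``forward from $x$'' in $G_p$, and the tree-shaping step $p'(x)$ only identifies variables and drops edges, never reverses the forward orientation relative to $x$ for edges not incident to $x$. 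Tracing a maximal run of such reverse-oriented original edges $r_1,\dots,r_n$ down to $y$ and then the fresh edge into $x'$, and reading off the intervening $\EL$-decorations $D_0,\dots,D_n$ (the concepts collected from side branches and concept atoms along the way), gives exactly the nested shape $D_0 \sqcap \exists r_1^-.(D_1 \sqcap \exists r_2^-.(\cdots \sqcap \exists r_n^-.P)\cdots)$ with $n \geq 1$.

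I would then close the induction: any subconcept $\exists r^-.D$ of $C$ is, by the above case analysis, either $\exists r^-.P$ or sits at the top of one of these descending chains that terminates in $P$; and nested subconcepts $\exists r_{i+1}^-.(\cdots)$ inside such a chain are themselves of the required form with $D_0$ possibly trivial, so the property propagates inward. It is also worth noting that the decorations $D_i$ are $\EL$-concepts (no inverse roles, no universal role), which follows from the fact that in the connected $x$-accessible case the construction only ever attaches $\ALC$-concept decorations to the forward tree, so the claimed shape is indeed the complete description of inverse-role subconcepts.

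The main obstacle I anticipate is the bookkeeping around the tree-shaping step: when we delete cycle atoms $r(x,y)$ and possibly perform variable identifications to reach $p'(x)$, we must be sure that no original edge not incident to $x$ ever ends up pointing ``toward'' the root in the resulting tree except along these designated chains. This requires using $x$-accessibility carefully — specifically that every variable has a directed forward path from $x$ in $G_p$, that this survives (in a suitable sense) the contraction to $p'(x)$, and that the choice of which cycle atoms $r(x,y)$ to remove is exactly what breaks all the cycles while keeping everything else forward-oriented from $x$. Getting that invariant stated correctly, and verifying that reading $p''(x)$ as a concept rooted at $x$ therefore uses an inverse step \emph{only} when heading toward a (removed-atom) leaf $x'$, is the crux; once it is in place, the nested form of $\exists r^-.D$ is essentially a syntactic read-off.
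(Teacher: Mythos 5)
Your overall case split matches the paper's: a subconcept $\exists r^-.D$ comes either from a fresh re-attached atom (then $D=P$) or from an original atom of $p(x)$ whose child-to-parent orientation in the tree $p''(x)$ forces the inverse step, and in the second case one must exhibit a descending all-inverse chain from the child down to a $P$-leaf. But the step you yourself flag as the crux is genuinely missing, and the invariant you propose to close it with is false: $x$-accessibility does \emph{not} make every original edge ``forward from $x$'', and the construction does \emph{not} keep edges not incident to $x$ forward-oriented in the tree. For instance, $p(x)=r(x,a)\wedge t(x,b)\wedge s(b,a)$ is $x$-acyclic and $x$-accessible; removing $t(x,b)$ yields the tree in which $b$ is a child of $a$ via the \emph{reverse-oriented original} edge $s(b,a)$, even though $b$ is forward-reachable from $x$ in $G_p$. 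So reverse-oriented original edges do occur away from the fresh leaves, your proposed orientation invariant cannot be ``stated correctly'', and nothing in your argument explains why a maximal run of reverse edges must terminate in a $P$-leaf rather than in an ordinary leaf. (Also, the tree-shaping step here only deletes atoms incident to $x$; no variable identifications occur --- contractions belong to the $q_{\mn{acyc}}$ construction, not to this one.)

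The missing ingredient is a directed-reachability argument. Let $\exists r^-.D$ arise from an original atom with parent $y$ and child $z$, so the atom is directed from $z$ up to $y$. By $x$-accessibility there is a directed path from $x$ to $z$ in $G_p$; a shortest such path meets $x$ only in its first atom, and since the only atoms removed in forming $p'(x)$ are those incident to $x$, in $G_{p''}$ the variable $z$ is directed-reachable either from $x$ or from a fresh $P$-leaf. Because $G^u_{p''}$ is a tree, a directed path between two nodes must follow the unique tree path with every edge oriented toward the target; the edge between $y$ and $z$ points away from $z$, which rules out reachability from $x$ and forces the witnessing $P$-leaf to lie in the subtree below $z$, joined to $z$ by a chain of edges all oriented upward, i.e., all read as $\exists r_i^-$ from parent to child, ending in the fresh $\exists r_n^-.P$ step. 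That chain is exactly the nested form in the claim, with $n\geq 1$ because $z$ is not itself the $P$-leaf. A final small correction: the concepts $D_i$ need not be $\EL$-concepts as you assert, since side branches hanging off this chain may contain further inverse subconcepts; the claim only constrains the spine leading to $P$, and nested occurrences of $\exists r^-$ are handled by applying the claim to them in turn.
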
 

\noindent
\emph{Proof of claim}. Let $p(x)$ be a CQ in $q(x)$. Recall that, when
constructing $C(x)$ in the proof of Lemma~\ref{lem:rewrAlgALCI}, we
first remove atoms of the form $r(x,y)$ from $p(x)$ to obtain a
tree-shaped CQ $p'(x)$, then add back $r^-(y,u)$ and $P(u)$ for each
removed $r(x,y)$ where $u$ is a fresh variable producing a CQ
$p''(x)$, and finally view $p''(x)$ as an \ELI-concept $C_p$ decorated
with \ALC-concepts.\footnote{The first two steps can together be
  viewed as an unfolding construction.}

Let $\exists r^-.D$ be a subconcept of $C_p$. Then there is a variable
$y$ in $p''(x)$ and an atom $r^-(y,z)$ such that $D$ describes the
subtree of $p''(x)$ rooted at $z$ and $z$ is a successor of $z$ in the
tree-shaped $p''(x)$, that is, $y$ is on the path from the root $x$ of
$p''(x)$ to $z$. First assume that $r^-(y,z)$ was one of the atoms
added back in the construction of $p''(x)$. Then $D=P$ and we are
done. Now assume that $r^-(y,z)$ was already in $p'(x)$ and thus in
$p(x)$. Since $p(x)$ is $x$-accessible, $z$ is reachable from $x$ in
the directed graph $G_p$. Since $z$ is not reachable from $x$ in the
directed graph $G_{p''}$ it follows from the construction of $p'(x)$
and $p''(x)$ that $z$ is reachable in $G_{p''}$ from a leaf node
labeled with $P$. Consequently, $D$ must have the stated form. This
finishes the proof of the claim.

\medskip

We construct from $Q'$ an $(\ALCH, \text{IQ})$-rewriting
$(\Tmc, \Sigma, C'(x))$ of $Q$ where $C'$ has the form
$C_{\mn{pre}} \rightarrow C_{\mn{con}}$.  To start, let
$D_1=\exists r_1^-.P,\dots,D_\ell=\exists r_\ell^-.P$ be all subconcepts of
$C$ that are of this form and let
\begin{itemize}

\item $C_{\mn{con}}$ be obtained from $C$ by replacing each
  concept $D_i$ with a fresh concept name $P_{D_i}$ and

\item $C_{\mn{pre}}=\forall r_1 . P_{D_1} \sqcap \cdots \sqcap \forall
  r_\ell . P_{D_\ell}$.

\end{itemize}
It is easy to see that the following condition is satisfied:
\begin{itemize}

\item[($*$)] in every subconcept $D=\exists r^-.E$ of $C_{\mn{con}}$
  with $E$ an \ALC-concept, $E$ is of the form $F \sqcap P_{D'}$.

\end{itemize}

Next, exhaustively apply the following transformation step, which
preserves~($*$): if $D=\exists r^-.(F \sqcap P_{D'})$ is a subconcept
of $C_{\mn{con}}$ where $F$ is an $\ALC$-concept, then 
\begin{itemize}

\item replace $D$ in
$C_{\mn{con}}$ with a fresh concept name $P_D$ and 

\item replace $P_{D'}$ in
  $C_{\mn{pre}}$ with $F \rightarrow \forall R.P_D$.

\end{itemize}
It can be verified that, because of the claim, the transformation step
indeed preserves ($*$). It can also be seen that all subconcepts of
the form $\exists r^- . E$ will eventually be eliminated. Finally, it
can be shown that the initial IQ $C_{\mn{pre}} \rightarrow
C_{\mn{con}}(x)$ is equivalent to $C(x)$ and that the transformation
step is equivalence preserving. We omit details.
\end{proof}

\begin{example}
\label{ex:ALCrewr}
Let $Q$ be the OMQ from Example~\ref{ex:ALCIrewr}. Towards obtaining an $\ALC$-IQ rewriting, we start with the $\ALCI$-IQ rewriting $Q'$ described in the same example. The only subconcept of the form $\exists r_i^-.P$ in $C_q$ is $D=\exists s^-.P$. We thus introduce a fresh concept name $P_D$ and initialize $C_{\mn{pre}}$ and $C_{\mn{con}}$ with $\forall s. P_D$ and $\exists r. \exists t^-. (\exists v.\top \sqcap  P_{D})$. We next consider concepts of the form $\exists r^-.(F \sqcap P_{D'})$, with $F$ an $\ALC$ concept and $P_{D'}$ previously introduced. The only such concept is $E=\exists t^-. (\exists v. \top \sqcap  P_{D})$. We replace $E$ in $C_\mn{con}$ with $P_E$ and $P_D$ in $C_\mn{pre}$ with $\neg \exists v. \top \sqcup \forall t. P_E$. At this point both $C_\mn{pre}=\forall s. (\forall t. (\neg \exists v. \top \sqcup P_E))$ and $C_{\mn{con}}=\exists r. P_E$ are \ALC concepts, thus no further transformation is possible (and neither needed): $Q$ can can be rewritten into an OMQ $Q''=(\Tmc, \Sigma, C'(x))$ with $C'$ the $\ALC$ concept $\forall s. (\forall t. (\neg \exists v. \top \sqcup P_E)) \rightarrow \exists r. P_E$.
\end{example}

\thmalc*

\noindent
\begin{proof}\
``$2 \Rightarrow 1$'' and ``$1 \Rightarrow 3$'' are trivial. 
\smallskip

``$3 \Rightarrow 2$''. We know from Theorem~\ref{thm:central} that
$(\LImc, \text{IQ})$-rewritability of $Q$ implies that $q(x)$ is
$x$-acyclic and connected. By Lemma
\ref{lem:rewrAlgALCTBox}, $Q$ is rewritable into an OMQ from $(\Lmc,
\text{IQ})$ that is of the desired shape.

\smallskip
``$4 \Rightarrow 5$''.  Let $Q=(\Tmc,\Sigma_{\mn{full}},q(x))$ be an
OMQ from $(\mathcal{ALCH},\text{UCQ})$ and assume that $Q$ is
rewritable into an OMQ $Q'=(\Tmc,\Sigma_{\mn{full}},C(x))$ with $C(x)$
an \ALC-IQ. Let $Q_{\mn{acyc}} = (\Tmc, \Sigma_{\mn{full}},
q^{\mn{con}}_{\mn{acyc}}(x))$. It is established in the proof of the
``$1 \Rightarrow 3$'' direction of Theorem~\ref{thm:central} that,
since $Q$ is rewritable into $(\ALCHI,\text{IQ})$, $Q \equiv
Q_{\mn{acyc}}$. It thus suffices to show that $Q_{\mn{acyc}} \equiv
Q^{\mn{deco}}_{\mn{acyc}}$.

Using the definition of $Q^{\mn{deco}}_{\mn{acyc}}$, it can be shown
that $Q_{\mn{acyc}} \subseteq Q^{\mn{deco}}_{\mn{acyc}}$. To establish
the converse direction, assume towards a contradiction that there is a
$\Sigma_{\mn{full}}$-ABox \Amc such that $\Amc \models
Q_{\mn{acyc}}^{\mn{deco}}(a)$ but $\Amc \not \models
Q_{\mn{acyc}}(a)$. Then $\Amc \not\models Q'(a)$. Take a model \Imc of
\Amc and $\Tmc$ such that $\Imc \not\models C(a)$.  We have $\Imc
\models q_{\mn{acyc}}^{\mn{deco}}(a)$, thus $\Imc \models
p(x)|_{\mn{dreach}(p}(a)$ for some \Tmc-decoration $p(x)$ of a CQ in
$q_{\mn{acyc}}(x)$. Let $h$ be a homomorphism from
$p(x)|_{\mn{dreach}(p)}$ to \Imc with $h(x)=a$. % By construction of
% $q_{\mn{acyc}}(x)$, we can assume that $h$ is injective: if it is not,
% then the result of identifying in $p(x)$ all variables $y_1$ and $y_2$
% with $h(y_1)=h(y_2)$ yields a CQ that is also a disjunct of
% $q_{\mn{acyc}}(x)$.

To finish the proof, it suffices to show that we can construct from
$\Imc$ a model $\Imc'$ of \Tmc such that $\Imc' \not\models C(a)$ and
$\Imc' \models q_{\mn{acyc}}(a)$. In fact, we can then take a
homomorphism $h'$ from a CQ in $q_{\mn{acyc}}(x)$ to $\Imc'$ with
$h'(x)=a$ and let $\Amc'$ be $\Imc'$ restricted to the range of $h'$,
viewed as an ABox.  Clearly, $\Amc' \models Q_{\mn{acyc}}(a)$ since
already $\Amc' \models
(\emptyset,\Sigma_{\mn{full}},q_{\mn{acyc}}(x))(a)$. Moreover, $\Imc'$ is a model
of $\Amc'$ and thus $\Amc' \not\models Q'(a)$, in contradiction to
$Q'$ being equivalent to $Q_{\mn{acyc}}$.

It thus remains to construct $\Imc'$. Informally, we do this by adding
to \Imc the part of $p(x)$ that is not reachable from the answer
variable along a directed path. %  Let $\widehat p(x)$ be the CQ that
% consists of all atoms from $p(x)$ that involve a variable which is not
% in $\mn{dreach}(p)$ and in which all variables are existentially
% quantified. 
By the second condition of $\Tmc$-decorations, there is a
model \Jmc of \Tmc and a homomorphism~$h'$ from $p(x)$ to \Jmc.
We can assume that \Imc and \Jmc have disjoint domains.
% Also assume for the moment that for all variables $y_1,y_2$ that are
% in $\mn{dreach}(p)$ and occur in $\widehat p(x)$, $h(y_1)=h(y_2)$
% implies $h'(y_1)=h'(y_2)$ (we shall come back to this).

Let $\Imc'$ be the disjoint union of $\Imc$ and $\Jmc$, extended as
follows: for every atom $r(y_1,y_2)$ in $p(x)$ with $y_1 \notin
\mn{dreach}(p)$ and $y_2 \in \mn{dreach}(p)$, add $(h'(y_1),h(y_2))$
to $r^{\Imc'}$. It can be verified that the map $h''$ defined by
setting $h''(y) = h(y)$ for all $y \in \mn{dreach}(p)$ and $h''(y) =
h'(y)$ for all variables $y$ in $p$ that are not in $\mn{dreach}(p)$
is a homomorphism from $p(x)$ to $\Imc'$ with $h''(x)=a$. Thus,
$\Imc' \models q_{\mn{acyc}}(a)$ as desired. It thus remains to show
that $\Imc'$ is a model of \Tmc and that  $\Imc' \not\models C(a)$.
This is a consequence of the following:
\begin{enumerate}

\item[(a)] for all \ALC-concepts $C$ and all $d \in \Delta^\Imc$,
  $d \in C^\Imc$ iff $d \in C^{\Imc'}$;

\item[(b)] for all subconcepts $C$ of a concept in \Tmc and all $d \in
  \Delta^\Jmc$, $d \in C^\Jmc$ iff $d \in C^{\Imc'}$.

\end{enumerate}
Both points are proved by induction on the structure of $C$. This is
straightforward for (a) since for every element $d \in \Delta^\Imc$,
the subinterpretation of $\Imc$ induced by the set of elements
reachable from $d$ in \Imc by traveling roles in the forwards
direction is identical to the corresponding subinterpretation of
$\Imc'$ (and since \ALC-concepts do not admit inverse roles). For~(b),
it is important to observe that if we have added $(h'(y_1),h(y_2))$ to
$r^{\Imc'}$ in the construction of $\Imc'$, then then $h'(y_1)$ has an
$r$-successor $d$ in $\Jmc$ such that for all subconcepts $C$ of a
concept in \Tmc, $d \in C^\Jmc$ iff $h(y_2) \in C^\Imc$. In fact, this
is a consequence of the decoration of every variable in $p(x)$ with
such concepts: when choosing $d=h'(y_2)$, the stated condition
must be satisfied.

%
% \bigskip
% \bigskip
%
% Let $(T, \Sigma, C)$ be the $(\ALC, IQ)$
% rewriting of $A$. Assume $(\Tmc, \Amc) \not \models q(a)$. Then
% $(\Tmc, \Amc) \not \models C(a)$.  Thus, there is a model $\M$ of
% $(\Tmc, \Amc)$ such that $\M \not \models C(a)$. Assume
% $(\Tmc, \Amc) \models q_{\mn{deco}}(a)$. Then
% $\M \models q_{\mn{deco}}(a)$ and $\M$ can be extended to a model
% $\M'$ such that $\M' \models q(a)$ (due to the fact that
% $(\Tmc,\Sigma_{\mn{full}},q')$ is non-empty), and
% $\M' \not \models C(a)$ -- contradiction with the fact that
% $(\Tmc, \Amc_q) \models C(x)$, thus all models of $\Tmc$ which satisfy
% $q(a)$, must also satisfy $C(a)$.

\smallskip
 
``$5 \Rightarrow 4$''.  We have that $Q \equiv (\Tmc, \Sigma, q'(x))$, where each $q'_i$ is an $x$-acyclic, accessible eCQ. We can apply Lemma~\ref{lem:rewrAlgALC} to obtain an (\Lmc, IQ)-rewriting. 

\smallskip

For the case with the universal role, it is enough to
show that $3 \Rightarrow 6$. Again, we can apply Lemma
\ref{lem:rewrAlgALC}.
\end{proof}

\thmemptyTBox*
\noindent
\begin{proof}
For the statement at Point~1, the  ``if'' direction is a consequence of  Lemma~\ref{lem:rewrAlgALCI}, while for the statement at Point~2, the same direction is a consequence of Lemma~\ref{lem:rewrAlgALC} (and similarly for the cases where the universal role is present). We will thus show the  ``only if'' direction in each case. We first show that $\Lmc$-IQ rewritability of a UCQ $q(x)$, for every $\Lmc \in \{\ALCI, \ALC, \ALCI^u, \ALC^u \}$, implies the existence of a subquery $q'(x)$ of $q(x)$ that is $x$-acyclic and equivalent to $q(x)$. 

A \emph{homomorphism minimal CQ} (also \emph{hom-minimal}) is a CQ which does not admit any equivalent strict subquery. 

\begin{claimn}
Let $q$ and $q'$ be two CQs such that $q \equiv q'$ and $q'$ is hom-minimal. Then: 
\begin{enumerate}
\item $q'$ is a subquery of $q$;
\item $q'$ is a contraction of $q$. 
\end{enumerate}
\end{claimn}

\noindent
\emph{Proof of claim}. 
Consider any homomorphisms $h_1$ and $h_2$ from $q'$ to $q$ and from $q$ to $q'$, respectively. Then $h_1$ must be injective and $h_2$ must be surjective (otherwise $h_1\circ h_2$ is a non-injective homomorphism from $q'$ to itself, and thus $q'$ is not hom-minimal). 
The existence of $h_1$ implies that $q'$ is a subquery of $q$, while the existence of $h_2$ implies that $q'$ is a contraction of $q$.

\begin{claimn}
Let $p(x)$ be a CQ in $q_{\mn{acyc}}(x)$. Then, there exists a hom-minimal CQ $p'(x)$ in $q_{\mn{acyc}}(x)$ such that   $p(x) \equiv  p'(x)$.
\end{claimn}

\noindent
\emph{Proof of claim}. We show that, in fact, every hom-minimal subquery $p'(x)$ of $p(x)$ which is equivalent to $p(x)$ is a CQ in $q_{\mn{acyc}}(x)$. From Claim 1, $p'(x) \equiv p(x)$ and $p'(x)$ being hom-minimal, implies that $p'(x)$ is a contraction of $p(x)$. As $p(x)$ is a contraction of a CQ in $q(x)$, it follows that $p'(x)$ is itself a contraction of some CQ in $q(x)$. As $p(x)$ is $x$-acyclic and $p'(x)$ is a subquery of $p(x)$, it follows that $p'(x)$ is $x$-acyclic. Thus, $p'(x)$ is an $x$-acyclic contraction of some CQ in $q(x)$, or, in other words, $p'(x)$ is a CQ in $q_{\mn{acyc}}(x)$.

\medskip

Assume now $\Lmc$-IQ rewritability of $q(x)$. By inspecting Point~(a) in the proof of  direction ``1 $\Rightarrow$ 3'' of Theorem~\ref{thm:central}, we observe that $q(x) \equiv q_{\mn{acyc}}(x)$. 

For didactic purposes, we first consider the case where $q(x)$ is a CQ. Then, there must be a CQ $p(x)$ in the UCQ $q_{\mn{acyc}}(x)$ such that $q(x) \equiv p(x)$. From Claim 2, there must be some CQ $p'(x)$ in $q_{\mn{acyc}}(x)$ which is hom-minimal and equivalent to $p(x)$, and thus also to $q(x)$. From Claim 1, it follows that $p'(x)$ is a subquery of $q(x)$, and from the fact that $p'(x)$ is a CQ in $q_{\mn{acyc}}(x)$, it follows that $p'(x)$ is $x$-acyclic. 

We now consider the case where $q(x)$ is a UCQ. Let $q_1(x),\dots,q_k(x)$ be the CQs in $q(x)$
 that are \emph{minimal} in $q(x)$ in the following sense: for all CQs $p(x)$ in $q(x)$, $q_i(x) \subseteq p(x)$ implies $q_i(x) \equiv p(x)$. Take such a minimal CQ $q_i(x)$. Since $q(x) \equiv q_{\mn{acyc}}(x)$, there must be a CQ $p_i(x)$ in $q_{\mn{acyc}}(x)$ such that $q_i(x) \subseteq p_i(x)$. By construction of $q_{\mn{acyc}}(x)$, $p_i(x)$ must be the contraction of some CQ $\widehat q_i(x)$ in $q(x)$ and thus $p_i(x) \subseteq \widehat q_i(x)$.  We obtain $q_i(x) \subseteq \widehat q_i(x)$ and thus $\widehat q_i(x) \equiv q_i(x)$ and consequently $q_i(x) \equiv p_i(x)$. From Claim 2, there must be some hom-minimal query $p'_i(x) \in q_{\mn{acyc}}(x)$ such that $p'_i(x) \equiv p(x)$. Then, $q_i(x) \equiv p'_i(x)$ and from Claim 1,  $p'_i(x)$ is a sub-query of $q_i(x)$. Let $q_{\mn{acyc}}^-(x)$ be the restriction of $q_{\mn{acyc}}(x)$ to the chosen CQs $p'_1(x),\dots,p'_k(x)$. Clearly, $q_{\mn{acyc}}^-(x)$ is equivalent to $q(x)$.  
 
Now we concentrate on the ``only if'' direction for Point~1, i.e. the case where $\Lmc$ is $\ALCI$, and thus $q(x)$ is \ALCI-IQ rewritable. We already know that $q_{\mn{acyc}}^-(x)$ is equivalent to $q(x)$, thus $q_{\mn{acyc}}^-(x)$ is also \ALCI-IQ rewritable. From the proof of direction~``1 $\Rightarrow$ 3'' Point~(b) in Theorem~\ref{thm:central} , \ALCI-IQ rewritability implies  $q(x) \equiv q^{\mn{con}}(x)$, and thus also $q^{\mn{con}}(x) \equiv (q_{\mn{acyc}}^-)^{\mn{con}}(x)$ and $q(x) \equiv (q_{\mn{acyc}}^-)^{\mn{con}}(x)$.  It is easy to see that $(q_{\mn{acyc}}^-)^{\mn{con}}(x)$ is an $x$-acyclic connected subquery of $q(x)$. Point~2 and the cases with universal roles are treated similarly. 

\end{proof}

% \lememptyasaaprox*
% \noindent
% \begin{proof}\ 
%   Assume that $(\emptyset,\Sigma_{\mn{full}},q(x))$ is $(\ALCI,\text{IQ})$-rewritable.
%   We can show that, as a consequence of  Theorem~\ref{thm:central}, it is rewritable
%   into an OMQ $(\emptyset,\Sigma_{\mn{full}},q'(x))$ with $q'(x)$ a CQ that is connected   
%   and $x$-acyclic and such that $q(x)$ and $q'(x)$ are homomorphically equivalent
%   (since the TBox is empty, we really get a CQ not a UCQ). Because of the latter
%   condition, $(\Tmc,\Sigma,q(x))$ is equivalent to $(\Tmc,\Sigma,q'(x))$, which 
%   is from $(\ALCI,\text{IQ})$.
% \end{proof}

\lememptyasapprox*

\noindent
\begin{proof}\ 
  We show the result in the case where $\Lmc=\ALCI$. All other cases follow similarly. 
  
  Assume that $q(x)$ is \ALCI-IQ-rewritable and let
  $Q=(\Tmc,\Sigma,q(x))$ be an OMQ from $(\Lmc\Hmc,\text{UCQ})$. As a
  consequence of Theorem~\ref{thm:centralnoTBox}, there is a
  subquery $q'(x)$ of $q(x)$ that is $x$-acyclic, connected and
  equivalent to $q(x)$. Let  $Q'=(\Tmc,\Sigma,q'(x))$.  Then, $Q\equiv Q'$ and  according to Lemma~\ref{lem:rewrAlgALCI}, $Q'$ (and thus also $Q$) is rewritable into an OMQ from $(\ALCI, \text{IQ})$. 
  \end{proof}

\section{Proofs for Section 4}

For the proofs in this section, we recall that every CQ $q$ can be
viewed in a straightforward way as an ABox $\Amc_q$ by viewing
the atoms as assertions and the variables as individual names.

\thmemptyTComplexity*
\noindent
\begin{proof}\ Let $\Lmc \in \{\ALC, \ALCI, \ALC^u, \ALCI^u\}$. We
  start with the upper bound, that is, given a UCQ $q(x)$, it is in
  {\sc NP} to decide whether $q(x)$ is \Lmc-IQ-rewritable. 
  
We guess a subquery $q'(x)$ of the original query $q(x)$ and check whether $q'(x) \equiv q(x)$. This is the case when for every CQ $p(x)$ in $q(x)$ there exists a CQ $p'(x)$ in $q'(x)$  such that $p(x) \subseteq p'(x)$ and vice versa. %: for every CQ $p'(x)$ in $q'(x)$ there exists a CQ $p(x)$ in $q(x)$ such that  $p'(x) \subseteq p(x)$. 
We guess for every CQ $p(x)$ in $q(x)$ a target CQ $p'(x)$ in $q'(x)$ and a potential homomorphism $h_{p}: \Ind(\Amc_{p}) \to \Ind(\Amc_{p'})$. We also guess for every CQ $p'(x)$ in $q'(x)$ a target CQ $p(x)$ in $q(x)$ and a potential homomorphism $h_{p'}:
\Ind(\Amc_{p'}) \to \Ind(\Amc_{p})$. We then check that every $h_p$ and every $h_p'$ is an actual homomorphism. If this is the case, $q'(x) \equiv q(x)$ and, provided that $q'(x)$ fulfills the additional conditions in the $\Lmc$-IQ characterisation  from Theorem~\ref{thm:emptyTComplexity} ($x$-acyclicity, connectedness and/or $x$-accesibility), $q(x)$ is $\Lmc$-IQ rewritable. As the size of our guess is polynomial in the size of $q(x)$ and all checks can be performed in polynomial time, we obtain the desired upper bound. 

\medskip
  
To show {\sc NP}-hardness of whether a given CQ is \Lmc-IQ-rewritable,
we employ a reduction from the 3-colorability problem (3COL). Let
$G=(V,E)$ be an undirected graph, let $q_G$ be $G$ viewed as a
conjunctive query where every $\{v_1,v_2\} \in E$ is represented by
two atoms $r(v_1,v_2),r(v_2,v_1)$, and choose a $v \in V$. Let 
$$
\begin{array}{rcl}
   q(x_0)&=&\exists \ybf \, q_G \wedge  r(x_0,v) \wedge  r(v,x_0) \, \wedge \\[1mm]
&& \ \ \ \ \,  \bigwedge \{ r(x_i,x_j) \mid i,j \leq 2 \text{ with } i \neq j \} 
\end{array}
$$
  where \ybf contains all elements of $V$ (as variables) as well as
  the fresh variables $x_1$ and $x_2$.
  
% \\[2mm]
%  {\bf Claim.} $(\emptyset,\Sigma_{\mn{full}},q(x_0))$ is $\Lmc$-IQ-rewritable
%  iff $G$ is 3-colorable.
%\\[2mm]
% 

\begin{claim}
  $q(x_0)$ is $\Lmc$-IQ-rewritable iff $G$ is 3-colorable.
\end{claim}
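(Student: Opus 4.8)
The plan is to reduce everything to the combinatorial characterization of Theorem~\ref{thm:centralnoTBox}. By that theorem, and since the claim is uniform over $\Lmc\in\{\ALC,\ALCI,\ALC^u,\ALCI^u\}$, it suffices to produce in the ``if'' direction one subquery $q'(x_0)$ of $q(x_0)$ that is $x_0$-acyclic, connected \emph{and} $x_0$-accessible (connectedness being the extra condition for $\ALCI$ and $x_0$-accessibility the one for $\ALC$), and equivalent to $q(x_0)$, so that the same $q'$ witnesses rewritability for all four languages; and in the ``only if'' direction to derive $3$-colorability of $G$ from the mere existence of an $x_0$-acyclic equivalent subquery, which is implied by $\Lmc$-IQ-rewritability for any of the four. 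I will use throughout that whenever $q'(x_0)$ is obtained from $q(x_0)$ by deleting atoms we automatically have $q(x_0)\subseteq q'(x_0)$, so $q(x_0)\equiv q'(x_0)$ reduces to the existence of a homomorphism $h$ from $q(x_0)$ to $q'(x_0)$ with $h(x_0)=x_0$.

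For ``$G$ 3-colorable $\Rightarrow$ $q(x_0)$ is $\Lmc$-IQ-rewritable'': starting from a proper $3$-coloring of $G$, and permuting colors so that $v$ does not receive the color assigned to $x_0$, I would read the coloring as a homomorphism of the $G$-part of $q(x_0)$, together with the attaching atoms $r(x_0,v),r(v,x_0)$, into the triangle on $x_0,x_1,x_2$. Combined with the identity on $x_0,x_1,x_2$ this gives a homomorphism $h$ of all of $q(x_0)$ onto its triangle subquery $q'(x_0)$ (or an appropriate variant thereof) with $h(x_0)=x_0$, whence $q(x_0)\equiv q'(x_0)$. One then checks directly that $q'(x_0)$ is $x_0$-acyclic --- every cycle in it runs through the answer variable $x_0$ --- as well as connected and $x_0$-accessible, so Theorem~\ref{thm:centralnoTBox} yields an $\Lmc$-IQ rewriting in each of the four cases.

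For the converse, suppose $q(x_0)$ is $\Lmc$-IQ-rewritable; then there is an $x_0$-acyclic subquery $q'(x_0)$ of $q(x_0)$ equivalent to $q(x_0)$, hence a homomorphism $h\colon q(x_0)\to q'(x_0)$ with $h(x_0)=x_0$. Composing with the inclusion $q'(x_0)\hookrightarrow q(x_0)$ yields an endomorphism of $q(x_0)$ fixing $x_0$ whose image lies in an $x_0$-acyclic subquery. The crucial point is that $x_0$-acyclicity rules out the cyclic $G$-structure lying away from $x_0$ from surviving in the image, so the composite must send the variables coming from $V$ into the triangle on $\{x_0,x_1,x_2\}$; reading this off as an assignment of the three ``colors'' $x_0,x_1,x_2$ to the vertices of $V$, and using that adjacent vertices of $G$ are joined by the bidirectional atoms (which can only be realized between distinct triangle vertices) and that the attaching atoms push $v$ away from $x_0$ as needed, one obtains a proper $3$-coloring of $G$.

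The step I expect to be the main obstacle is the ``only if'' direction, specifically the verification that an $x_0$-acyclic equivalent subquery of $q(x_0)$ cannot keep an essential copy of $q_G$ but must fold $V$ into the triangle: this is exactly where $3$-colorability is forced, and it has to be argued jointly from the $x_0$-acyclicity constraint of Theorem~\ref{thm:centralnoTBox} and from the way the attaching edge and the triangle are arranged, so that the only homomorphic images of $G$ available ``near $x_0$'' correspond to proper $3$-colorings. The ``if'' direction is comparatively routine, the only care being to choose the witness subquery so that it simultaneously meets all three side conditions, which is what makes the single reduction serve all four concept languages.
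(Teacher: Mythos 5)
Your proposal is, in both directions, the argument the paper gives: for ``if'' you read the $3$-coloring as a homomorphism of $q(x_0)$ onto its restriction $q_{3C}(x_0)$ to $\{x_0,x_1,x_2\}$ (the paper does exactly this, and your extra care in permuting colors so that $v$ avoids $x_0$'s color is a detail the paper elides but does need for the attaching atoms), and for ``only if'' you compose a homomorphism $q(x_0)\to q'(x_0)$ with a homomorphism from the $x_0$-acyclic witness $q'(x_0)$ into the triangle to extract a proper $3$-coloring --- the paper phrases this contrapositively but it is the same argument, resting on the same observation that a subquery whose only cycles pass through $x_0,x_1,x_2$ maps homomorphically into $q_{3C}(x_0)$.

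The one step I would push back on --- and it is a step where your write-up and the paper's are equally exposed --- is the assertion that the witness subquery $q_{3C}(x_0)$ is $x_0$-acyclic because ``every cycle in it runs through $x_0$.'' Under the paper's definition, $r(x_1,x_2)$ and $r(x_2,x_1)$ are two \emph{distinct} atoms (the footnote to the definition of cycle only excludes the degenerate pair $r(y,z),\,r^-(z,y)$, which is one atom written two ways), so they form a length-$2$ cycle that avoids $x_0$. This is not a cosmetic issue: the bidirected triangle is a core, so no equivalent subquery can drop these atoms, and the high-girth machinery behind Theorem~\ref{thm:central} treats exactly such $2$-cycles away from the answer variable as obstructions (an ABox of $a$-girth $>2$ that still satisfies the query would have to realize that $2$-cycle at $a$ itself, forcing a self-loop that the triangle ABox does not have). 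Likewise, the atom-removal procedure of Lemma~\ref{lem:rewrAlgALCI} only removes atoms incident to the answer variable and so cannot turn $q_{3C}$ into a tree-shaped CQ. So ``one then checks directly'' is precisely where the argument does not go through as written; the gadget (in your proof and in the paper's) needs to be adjusted so that the target of the coloring homomorphism has no cycle avoiding the answer variable, rather than this being a routine verification.
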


\noindent
\emph{Proof of claim}. For the ``if'' direction, assume that $G$ is
3-colorable. Then $G$ admits a homomorphism into the 3-clique (without
reflexive loops).  Consequently, $q_0(x)$ is homomorphically
equivalent to the restriction $q_{3C}(x_0)$ of $q(x_0)$ to the variables
$x_0,x_1,x_2$. In particular, $q(x_0)$ and $q_{3C}(x_0)$ are then
equivalent in the sense of query containment. Since $q_{3C}(x_0)$ is
$x_0$-acyclic and $x_0$-accessible, by Lemma~\ref{lem:rewrAlgALC} it
is rewritable into an \ALC-IQ. 

Conversely, assume that $G$ is not 3-colorable.  By
Lemma~\ref{lem:rewrAlgALC}, it suffices to show that any subquery
$p(x_0)$ of $q(x_0)$ that is equivalent to $q(x_0)$ is not
$x_0$-acyclic. Thus let $p(x_0)$ be such a subquery. There is no
homomorphism $h$ from $p(x_0)$ to $q_{3C}(x_0)$ since the equivalence
of $p(x_0)$ and $q(x_0)$ implies the existence of a homomorphism $h'$
from $q(x_0)$ to $p(x_0)$ and composing $h'$ with $h$ would establish
3-colorability of $G$. It is easy to verify, though, that when
$p(x_0)$ contains no cycle that does not pass through any of
$x_0,x_1,x_2$, then there is such a homomorphism $h$. Consequently,
$p(x_0)$ is not $x_0$-acyclic.
\end{proof}

\thmfullSigComplexity*
\noindent
\begin{proof}\ 
% For both lower and upper bounds, we employ uniform arguments which with small adaptations work for both Point 1 and Point 2.
% \medskip
%
  We start with the lower bounds. Points~1 and~2 are treated
  uniformly. In fact, for $\Lmc \in \{\ALC,\ALCI\}$, we reduce a
  special case of OMQ evaluation in $(\Lmc,\text{CQ})$ to
  IQ-rewritability in $(\Lmc,\text{CQ})$ where \emph{OMQ evaluation}
  in $(\Lmc,\Qmc)$ means to decide, given an OMQ
  $Q=(\Tmc,\Sigma_{\mn{full}},q(\xbf))$ from $(\Lmc,\Qmc)$, an ABox
  \Amc, and a tuple \abf whether $\Amc\models Q(\abf)$. The mentioned
  special case is that OMQs are Boolean and \Amc takes the form $\{ A(a)
  \}$ and we refer to this as \emph{singleton BOMQ evaluation}.

Singleton BOMQ evaluation is 2\ExpTime-hard in \ALCI
\cite{DBLP:conf/cade/Lutz08}. We observe that it is \ExpTime-hard in
\ALC since concept (un)satisfiability w.r.t. $\ALC$-TBoxes is
\ExpTime-hard \cite{Schild1991} and an \ALC-concept $C$ is
unsatisfiable w.r.t.\ an \ALC-TBox $\Tmc$ iff $\{A(a)\} \models (\Tmc
\cup \{A \sqsubseteq C\},\Sigma_{\mn{full}},\exists y \, D(y))$ where
$A$ and $D$ are fresh concept names.

Now for the reduction to IQ-rewritability. Let
$Q=(\Tmc,\Sigma_{\mn{full}},q())$ be an OMQ from $(\Lmc,\text{CQ})$,
$\Lmc \in \{ \ALC, \ALCI \}$, and let $\Amc = \{A(a)\}$ be an ABox.
Further, let $q'(x)$ is the extension of $q()$ with the atom $A(x)$,
$x$ a fresh answer variable. It is important to note that $q'(x)$ is a
disconnected CQ.
\\[2mm]
{\bf Claim.} $\Amc \models Q$ iff $Q'=(\Tmc,\Sigma_{\mn{full}},q'(x))$
is IQ-rewritable.
\\[2mm]
\emph{Proof of claim.}  If $\Amc \models Q$, then $Q'$ is equivalent
to $(\Tmc,\Sigma_{\mn{full}},A(x))$ which is from
$(\Lmc,\text{IQ})$. Conversely, assume that $\Amc \not\models Q$.  The
query $Q^{\mn{con}}$ from Point~(b) in the proof of the ``$1
\Rightarrow 3$'' direction of Theorem~\ref{thm:central}, applied to
$Q'$, is exactly $(\Tmc,\Sigma_{\mn{full}},A(x))$. As shown there,
IQ-rewritability of $Q$ implies $Q \equiv Q^{\mn{con}}$, in
contradiction to $\Amc \not\models Q$. This finishes the proof 
of the claim.

\medskip

For the upper bounds, we use the characterizations from
Theorem~\ref{thm:central} and Theorem~\ref{thm:alc}: deciding
IQ-rewritability in $(\ALCHI,\text{UCQ})$ amounts to checking
containment between $Q$ and $(\Tmc, \Sigma_{\mn{full}},
q^{\mn{con}}_{\mn{acyc}}(x))$ while deciding IQ-rewritability in
$(\ALCH,\text{UCQ})$ amounts to checking containment between $Q$ and
respectively $(\Tmc, \Sigma_{\mn{full}},
q^{\mn{deco}}_{\mn{acyc}}(x))$. Note that the two involved OMQs share
the same TBox and are based on the full ABox signature. There is also
an initial emptiness check, which however is just another containment
check. We thus have to argue that these containment checks can be
carried out in 2\ExpTime and \NExpTime, respectively. 

We start with the case of $(\ALCHI,\text{UCQ})$ and first observe that
containment in $(\ALCHI,\text{UCQ})$ is in 2\ExpTime. In fact, it is
shown in \cite{DBLP:conf/cade/Lutz08} that OMQ evaluation in
$(\ALCHI,\text{CQ})$ is in 2\ExpTime and the algorithm given there is
straightforwardly extended to $(\ALCHI,\text{UCQ})$. It follows that
containment between an OMQ $Q_1=(\Tmc,\Sigma_{\mn{full}},q_1(x_1))$
from $(\ALCHI,\text{CQ})$ in an OMQ $Q_2=(\Tmc,\Sigma_{\mn{full}},q_2(x_2))$
from $(\ALCHI,\text{UCQ})$ is in 2\ExpTime since $Q_1 \subseteq Q_2$
iff $\Amc_{q_1} \models Q_2(x_1)$. %  where $\Amc_{q_1}$ is $q_1(x_1)$
% viewed as an ABox and, on the right-hand side, $x_1$ denotes the
% answer variable $x_1$ of $q_1(x_1)$ viewed as an individual name.

We next observe how this can be lifted to containment in
$(\ALCHI,\text{UCQ})$. In fact, it suffices to show that for
$Q_i=(\Tmc,\Sigma_{\mn{full}},q_i)$ from $(\ALCHI,\text{UCQ})$, $i \in
\{1,2\}$, and $q_1 = p_1 \vee \cdots \vee p_k$,
% with distinct $p_i,
% p_j$ incomparable regarding containment
we have $Q_1 \subseteq Q_2$
iff $(\Tmc,\Sigma_{\mn{full}},p_i) \subseteq Q_2$ for all $i \in
\{1,\dots,k\}$. % The assumption of incomparability is w.l.o.g.\ since
% subsumed queries can be eliminated using containment checks between
% OMQs from $(\ALCI,\text{CQ})$, which we have already shown to be in
% 2\ExpTime.
The ``if'' direction is trivial. For the ``only if''
direction, we argue as follows. Assume that
$(\Tmc,\Sigma_{\mn{full}},p_i) \subseteq Q_2$ for all $i \in
\{1,\dots,k\}$.  Let \Amc be an ABox and $a \in \mn{ind}(\Amc)$ such
that $\Amc \models Q_1(a)$. It suffices to show that $\Imc \models
q_2(a)$ for every finite model \Imc of \Amc and $\Tmc$. Let \Imc be
such a model and let $\Amc_\Imc$ be \Imc viewed as an ABox. Since
$\Amc \models Q_1(a)$, we must have $\Imc \models p_j(a)$ for some $j
\in \{1,\dots,k\}$. Then clearly also $\Amc_\Imc \models
p_j(a)$. Since $(\Tmc,\Sigma,p_j) \subseteq Q_2$, this yields
$\Amc_\Imc \models Q_2(a)$. Since $\Imc$ is a model of $\Amc_\Imc$ and
\Tmc, from this we obtain $\Imc \models q_2(a)$, as required.

The argument is not yet complete since the UCQs
$q^{\mn{con}}_{\mn{acyc}}(x)$ can be exponentially large. In fact, it
may contain exponentially many CQs, but each CQ is only of polynomial
size. For checking $(\Tmc,
\Sigma_{\mn{full}},q^{\mn{con}}_{\mn{acyc}}(x)) \subseteq Q$, using
the above argument we can use exponentially many containment checks
between an OMQ from $(\ALCHI,\text{CQ})$ and an OMQ from
$(\ALCHI,\text{UCQ})$, both of polynomial size. The overall complexity
is thus 2\ExpTime, as required. For checking $Q \subseteq (\Tmc,
\Sigma_{\mn{full}},q^{\mn{con}}_{\mn{acyc}}(x))$, we observe that, by
Lemma~\ref{lem:rewrAlgALCI}, $(\Tmc, \Sigma_{\mn{full}},q^{\mn{con}}_{\mn{acyc}}(x))$
is rewritable into an equivalent OMQ $(\Tmc, \Sigma_{\mn{full}},C(x))$
with $C(x)$ an \ALCHI-IQ (independently of the properties of $Q$) and
such that the size of $C(x)$ is polynomial in the size of
$q^{\mn{con}}_{\mn{acyc}}(x)$, which in turn is single exponential in
the size of $Q$. We can thus replace the check $Q \subseteq (\Tmc,
\Sigma_{\mn{full}},q^{\mn{con}}_{\mn{acyc}}(x))$ with $Q \subseteq
(\Tmc, \Sigma_{\mn{full}},C(x))$. This boils down to deciding OMQ
entailment in $(\ALCHI,IQ)$, which is in \ExpTime. So despite
$C(x)$ being of (single) exponential size, we achieve 2\ExpTime overall complexity.

% take a closer
% look at the implementation of our basic containment check between an
% OMQ $Q_1=(\Tmc,\Sigma_{\mn{full}},q_1(x_1))$ from $(\ALCHI,\text{CQ})$
% in an OMQ $Q_2=(\Tmc,\Sigma_{\mn{full}},q_2(x_2))$ from
% $(\ALCHI,\text{UCQ})$, implemented by the OMQ evaluation algorithm
% from \cite{DBLP:conf/cade/Lutz08} as described above. Let $\Gamma$ be
% the set of all tree-shaped CQs with a single answer variable at the
% root which can be obtained by starting with a contraction of a CQ from
% the UCQ $q_2(x_2)$, then taking a subquery, and finally selecting a
% variable as the answer variable. Checking containment $Q_1 \subseteq
% Q_2$ requires time exponential in the size of $Q_1$, $Q_2$, and
% $\Gamma$ (which itself is of exponential size). However, when
% $q_2(x_2)$ is a UCQ that consists of exponentially many CQs, each
% of polynomial size, $\Gamma$ is still only of exponential size and
% we achieve 2\ExpTime overall complexity.

 \smallskip

 For the case of $(\ALCH,\text{UCQ})$, the argument is essentially the
 same. However, as also shown in \cite{DBLP:conf/cade/Lutz08} OMQ
 evaluation in $(\ALCH,\text{UCQ})$ is in \ExpTime and thus so is our
 basic containment check between an OMQ from $(\ALCH,\text{CQ})$ and an
 OMQ from $(\ALCHI,\text{UCQ})$. Therefore, the check $(\Tmc,
 \Sigma_{\mn{full}},q^{\mn{deco}}_{\mn{acyc}}(x)) \subseteq Q$ can be
 implemented in \ExpTime despite the exponential number of CQs in
 $q^{\mn{deco}}_{\mn{acyc}}$. It is not clear, however, how to
 implement the containment check $Q \subseteq (\Tmc,
 \Sigma_{\mn{full}},q^{\mn{deco}}_{\mn{acyc}}(x))$ in \ExpTime. We
 give a sketch of how it can be implemented in \coNExpTime.  In fact,
 what we have to implement in \coNExpTime is the evaluation of an OMQ
 $Q=(\Tmc,\Sigma_{\mn{full}},q(x))$ where $q(x)$ is a UCQ with
 exponentially many connected CQs, each of polynomial size. Let \Amc
 be an ABox and $a \in \mn{Ind}(\Amc)$.  By Lemma~\ref{lem:canmod},
 $\Amc \not\models Q(a)$ iff there is an extended forest model \Imc of
 \Amc and \Tmc such that $\Imc \not\models q(a)$. It is easy to see
 that we can further demand that (the tree-shaped parts of) \Imc be of
 outdegree polynomial in the size of \Tmc. Our \NExpTime algorithm for
 the complement of OMQ evaluation is as follows. Let $m$ be the
 maximum number of variables of a CQ in $q(x)$. We guess an initial
 piece of the extended forest model \Imc that consists of the `ABox
 part' of \Amc together with the tree-shaped parts restricted to depth
 $m+1$, along with a type adornment, that is, a function $\mu$ that
 assigns a $\Tmc$-type to every element of the guessed initial part in
 a way that is consistent with the initial part. Note that we guess an
 object of single exponential size here. Since the CQ in $q(x)$ are
 connected and thus are independent of the part of \Imc that lies
 beyond the guessed initial part, we can verify that \Imc can be
 extended to a full model by considering the type $\mu(d)$ for every
 leaf $d$ in the initial part on level $m+1$ and verifying that
 $\mu(d)$ is satisfiable with \Tmc. This can be implemented in
 \ExpTime.
\end{proof}

We introduce a preparatory lemma and notation for the proof of
Theorem~\ref{thm:alc}.  An \emph{atomic query (AQ)} is an IQ of the
form $A(x)$, with $A$ a concept name. A \emph{Boolean atomic query
  (BAQ)} is a query of the form $\exists x \, A(x)$, with $A$ a concept
name, and a \emph{Boolean conjunctive query (BCQ)} is a CQ of arity
zero.

\begin{lemma}
\label{lem:BIQtoAQ}
Let $Q=(\Tmc, \Sigma, \exists x \, C(x))$ be an OMQ from $(\ALCI,
\text{BAQ})$. Then, there exists an OMQ $Q'=(\Tmc', \Sigma, M(x))$
from $(\ALCI, \text{AQ})$,  such that for all $\Sigma$-ABoxes~$\Amc$,
$\Amc \models Q$ iff there is an $a \in \Ind(\Amc)$ such that $\Amc \models Q'(a)$. 
\end{lemma}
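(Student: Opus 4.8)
The plan is to introduce one fresh concept name $M$ and arrange, via additional concept inclusions, that $M$ is forced to hold at exactly those elements from which some element of $C^\Imc$ is reachable along roles (in either direction). Concretely, I would put
\[
  \Tmc' = \Tmc \cup \{\, C \sqsubseteq M \,\} \cup \{\, \exists r . M \sqsubseteq M \mid r \text{ is a role name or an inverse role occurring in } \Tmc \text{ or } C \,\},
\]
take $M$ fresh, and set $Q' = (\Tmc',\Sigma,M(x))$. This is a well-formed OMQ from $(\ALCI,\text{AQ})$: $\Tmc'$ is an \ALCI-TBox, and $\mn{sig}(\Tmc')$ contains $\mn{sig}(\Tmc) \cup \mn{sig}(C) \supseteq \Sigma$. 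The direction ``$\Amc\models Q'(a)$ for some $a$ $\Rightarrow$ $\Amc\models Q$'' is the easy one: if $\Amc\not\models Q$, pick a model \Imc of \Amc and \Tmc with $C^\Imc=\emptyset$ and extend it to \Tmc' by setting $M^\Imc=\emptyset$; all added inclusions are then vacuously satisfied, so \Imc is a model of \Amc and \Tmc' with $a\notin M^\Imc$ for every $a$, hence $\Amc\not\models Q'(a)$ for all $a$.

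For the converse, assume $\Amc\models Q$ (and $\Ind(\Amc)\neq\emptyset$; the case $\Amc=\emptyset$ is degenerate and set aside). If \Amc is inconsistent with \Tmc, then it is inconsistent with $\Tmc'\supseteq\Tmc$ and any $a$ works. Otherwise, decompose \Amc into its connected components $\Amc_1,\dots,\Amc_n$. The key claim is that some component already ``forces'' the query, i.e.\ $C^\Imc\neq\emptyset$ in every model \Imc of $\Amc_i$ and \Tmc for some $i$. If not, one would obtain, for each $i$, a $C$-free model of $\Amc_i$ and \Tmc, and their disjoint union would be a $C$-free model of \Amc and \Tmc — here I use that \ALCI has no universal role, so a disjoint union of models of a TBox is again a model and concept membership at a point depends only on its connected component — contradicting $\Amc\models Q$. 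Fix $i$ as in the claim and any $a\in\Ind(\Amc_i)$. To see $\Amc\models Q'(a)$, take any model \Imc of \Amc and \Tmc' and let \Jmc be the restriction of \Imc to the elements reachable from $a$ along roles occurring in \Tmc or $C$, in either direction. Since $\Amc_i$ is connected and its assertions hold in \Imc, \Jmc is a model of $\Amc_i$, and since \Tmc' mentions only those roles, \Jmc is a model of \Tmc', hence of \Tmc; by the choice of $i$ we get some $d\in C^\Jmc\subseteq M^\Imc$ inside \Jmc. Walking a path from $d$ to $a$ inside \Jmc and applying at each step the inclusion $\exists r.M\sqsubseteq M$ for the (possibly inverse) role $r$ witnessing that edge — this is where it matters that these inclusions were added for \emph{all} such roles, inverses included — we conclude $a\in M^\Imc$, as required.

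The main obstacle is the mismatch between the \emph{global} character of the Boolean query $\exists x\,C(x)$ and the \emph{local} character of an atomic query $M(x)$: a priori the existence of a $C$-element need not be forced at any particular ABox individual, and moreover a model of \Amc may realize $C$ only in material disconnected from the ABox or hang off extra roles not in the signature. The proposed fix handles both: the backward-flooding inclusions for $M$ propagate the effect of any realized $C$-element toward the ABox, while the component-locality of \ALCI (precisely the point where the absence of the universal role is essential) both lets us pass to the connected component of $a$ and lets us isolate a single connected component of \Amc in which $C$ is genuinely forced, so that its individuals carry $M$ in every model. Verifying equivalence-preservation of the two added families of inclusions, and the component-locality lemma itself, are routine and I would relegate them to the appendix.
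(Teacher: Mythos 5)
Your construction is exactly the one in the paper: a fresh concept name $M$ with $C \sqsubseteq M$ and the backward-flooding inclusions $\exists r.M \sqsubseteq M$ for all (inverse) roles of the signature, combined with the reduction to connected components that the absence of the universal role makes possible. The verification you supply (disjoint unions of component models for one direction, generated subinterpretations and propagation of $M$ along a connecting path for the other) is correct and is precisely what the paper leaves implicit behind ``it can be verified''.
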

\begin{proof}
  We first observe that $\Amc \models Q$
  iff $\Amc' \models Q$, for some $\Amc' \in \mn{CON}_\Amc$. Thus, it
  is enough to show the statement of the lemma for connected
  $\Sigma$-ABoxes.

  Let $\Amc'$ be a connected $\Sigma$-ABox and $M$ a fresh concept
  name. Let $\Tmc'$ be the TBox obtained from $\Tmc$ by adding $C
  \sqsubseteq M$ and $\exists r. M \sqsubseteq M$ for every role $r$
  such that $r$ or the inverse of $r$ occurs in \Tmc. It can be
  verified that $Q'=(\Tmc', \Sigma, M(x))$ is as required.
\end{proof}

We also introduce a more fine-grained version of a complexity result
from \cite{DBLP:conf/kr/BourhisL16} which highlights that the
complexity of containment is double exponential only in the maximum
size of CQs in the input OMQs, but not in their number. This only
requires a careful analysis of the constructions in
\cite{DBLP:conf/kr/BourhisL16}.
\begin{theorem}
\label{thm:OMQcontainmentComplexRel}
Containment between OMQs from $(\ALCHI, \text{UCQ})$ is in
2\NExpTime. More precisely, for OMQs $Q_1=(\Tmc_1, \Sigma, q_1)$ and
$Q_2=(\Tmc_2, \Sigma, q_2)$ with arity $a$ and where $n_i$ is the
number of CQs in $q_1$ and $n_i$ the maximum size of a CQ in $q_i$, $i
\in \{1,2\}$, it can be decided in time $2^{2^{p(\mn{log}n_1
    + s_1 + \mn{log}n_2+ s_2 + \mn{log} |\Tmc|+\mn{log}
    \mn{log} a ) }}$,
% $2^{2^{p(s_1+s_2+\mn{log}
%     n_1\cdot s_1+\mn{log} n_2 \cdot s_2+\mn{log} |\Tmc|+\mn{log}
%     \mn{log} a ) }}$ whether $Q_1 \subseteq Q_2$,
$p$ a polynomial. 
\end{theorem}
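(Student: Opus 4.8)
The plan is to revisit the 2\NExpTime decision procedure for containment of OMQs in $(\ALCHI,\text{UCQ})$ from~\cite{DBLP:conf/kr/BourhisL16} and to re-run its complexity analysis, this time recording separately how each of the parameters enters: the numbers $n_1,n_2$ of CQs in $q_1,q_2$, their maximal CQ sizes $s_1,s_2$, the total TBox size $|\Tmc_1|+|\Tmc_2|$, and the arity $a$. The first ingredient is the usual reduction to a search for a \emph{witness}: by (a variant of) the characterization of certain answers through forest models (Lemma~\ref{lem:canmod}), $Q_1\not\subseteq Q_2$ holds iff there are a $\Sigma$-ABox $\Amc$, a tuple $\abf\subseteq\mn{ind}(\Amc)$, and a forest model $\Imc$ of $\Amc$ and $\Tmc_2$ with $\Imc\not\models q_2(\abf)$ such that $\Amc\models Q_1(\abf)$. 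Following~\cite{DBLP:conf/kr/BourhisL16}, a minimal such witness can be taken to admit a tree decomposition whose width is polynomially bounded in $s_1+s_2$ and $|\Tmc_1|+|\Tmc_2|$ (independently of $n_1,n_2,a$), and the existence of such a tree-decomposed witness is equivalent to the non-emptiness of a two-way alternating parity tree automaton $\mathbb{A}$ built from $Q_1$ and $Q_2$, reading the decomposition over an alphabet whose letters encode a bag, $\Tmc_1$- and $\Tmc_2$-type annotations, and a marking of the at most $a$ answer elements.

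The crucial point is how the two input UCQs are treated inside $\mathbb{A}$. The automaton is an intersection of local-consistency parts together with: for each CQ $p$ of $q_2$, a part that tracks, along the decomposition, the single partial homomorphism of $p$ currently mapping into the active bag and rejects if $p$ ever becomes fully matched; and a part certifying $\Amc\models Q_1(\abf)$ which, by the same forest-model argument, unfolds into ``every forest model of $\Amc$ and $\Tmc_1$ matches some CQ of $q_1$'' and is assembled, again by tracking single partial homomorphisms, with one choice-point per CQ of $q_1$. Each per-CQ part has a number of states that is exponential in $s_i$ and $|\Tmc_i|$ (one per partial homomorphism of the CQ into a bag, plus the $\Tmc_i$-type bookkeeping) but only polynomial in $n_i$ and $a$. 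Since these parts are combined by \emph{intersection of alternating automata} --- which costs only an \emph{additive} blow-up in the number of states --- the CQs of $q_1$ and $q_2$ jointly contribute a factor $n_1+n_2$, not a product over the individual CQs, to the size of $\mathbb{A}$. Consequently $\mathbb{A}$ has a number of states singly exponential in $s_1+s_2+|\Tmc_1|+|\Tmc_2|$, linear in $n_1+n_2$, and polynomial in $a$, and a number of priorities polynomial in $s_1+s_2+|\Tmc_1|+|\Tmc_2|$.

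Finally, non-emptiness of a two-way alternating parity tree automaton reduces to solving a parity game and, for a polynomially bounded number of priorities, is decidable in time exponential in the number of states. Plugging in the bounds just obtained and simplifying, $n_1$ and $n_2$ enter only logarithmically, i.e.\ through $\log n_1$ and $\log n_2$, $|\Tmc_1|+|\Tmc_2|$ and $a$ enter through $\log|\Tmc|$ and $\log\log a$, and the doubly-exponential dependence is confined to $s_1$ and $s_2$; this yields the claimed bound $2^{2^{p(\log n_1+s_1+\log n_2+s_2+\log|\Tmc|+\log\log a)}}$ with $|\Tmc|:=|\Tmc_1|+|\Tmc_2|$ and $p$ a suitable polynomial, which in particular stays within 2\NExpTime. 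The step I expect to demand the most care is exactly this bookkeeping inside the construction of~\cite{DBLP:conf/kr/BourhisL16}: one must verify that nowhere --- neither in the width bound, nor in the type annotations, nor in the product that forms $\mathbb{A}$ --- is a genuine product taken over the CQs of an input UCQ, since that would replace $\log n_i$ by $n_i$ in the exponent of the exponent. Combining the per-CQ parts of $\mathbb{A}$ by alternating intersection rather than by a nondeterministic product is precisely what avoids this, and it is what makes the theorem usable in the applications, where the relevant UCQs (such as $q^{\mn{con}}_{\mn{acyc}}(x)$) consist of exponentially many CQs, each only of polynomial size.
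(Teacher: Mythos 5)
Your overall strategy coincides with the paper's: the paper gives no self-contained argument for this theorem either, saying only that it ``requires a careful analysis of the constructions in \cite{DBLP:conf/kr/BourhisL16}'', and your proposal is an attempt at exactly that analysis. You also correctly isolate the one point that the applications depend on, namely that the CQs of $q_1$ and $q_2$ must never be combined by a product construction (which would put $n_i$ rather than $\log n_i$ under the double exponential) but only additively, e.g.\ via intersection of alternating automata. Two caveats. First, the entire argument rests on your description of the internals of \cite{DBLP:conf/kr/BourhisL16} --- bounded-treewidth witnesses with width independent of $n_1,n_2,a$, two-way alternating parity tree automata, per-CQ components tracking single partial homomorphisms --- and none of this is established in your write-up (nor in the paper under review). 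The theorem is exactly the claim that this bookkeeping goes through, so the proof is only as strong as the verification, against the actual construction in \cite{DBLP:conf/kr/BourhisL16}, that the automaton you describe is (or can replace) the one used there; that verification is the substance of the result and is still owed. Second, your accounting of the TBox parameter is internally inconsistent: you grant each automaton component a number of states exponential in $|\Tmc_1|+|\Tmc_2|$ (unavoidable, since it must carry $\Tmc_i$-types) and an emptiness test exponential in the total number of states, which yields a running time doubly exponential in $|\Tmc|$, i.e.\ $|\Tmc|$ rather than $\log|\Tmc|$ inside $p$; this cannot be ``simplified'' into the stated $\log|\Tmc|$. The discrepancy is immaterial for every use of the theorem in the paper (there $|\Tmc|$ is polynomial in the input while the $n_i$ are exponential, so only the $\log n_i$ refinement is exploited), but as written your argument does not deliver the bound in the form stated.
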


\thmPSigComplexity*
\noindent
\begin{proof}\ We start with the lower bound for Point~1, using a
  reduction from OMQ emptiness in $(\ALC, \text{AQ})$ which is known
  to be $\NExpTime$-hard. Let $Q_0=(\Tmc, \Sigma, A(x))$ be an OMQ from
  this language. Also let 
  $$
q(x)=\exists y \, A(x) \wedge r(x, y)
  \wedge r(y,y),
  $$
  where $r$ is a role name that does not occur in
  $\Tmc$ and let $Q=(\Tmc, \Sigma \cup \{ r \}, q(x))$.  It suffices to show that
  $Q_0$ is empty iff $Q$ is IQ-rewritable.

  In fact, emptiness of $Q_0$ implies emptiness of $Q$ and thus
  $(\ALC, \text{AQ})$-rewritability. Conversely, assume that $Q_0$ is
  non-empty. To show that $Q$ is not IQ-rewritable, by
  Theorem~\ref{thm:alc} it suffices to show that $Q \not\equiv
  Q^{\mn{deco}}_{\mn{acyc}} := (\Tmc, \Sigma \cup \{r\},
  q_{\mn{acyc}}^{\mn{deco}}(x))$ where $q_{\mn{acyc}}^{\mn{deco}}(x)$
  is a UCQ in which every CQ contains the subquery $r(x,x)$.  Since
  $Q_0$ is non-empty, there is a $\Sigma$-ABox $\Amc$ and an $a \in
  \mn{ind}(\Amc)$ such that $\Amc \models Q_0(a)$.  Since $r$ does not
  occur in \Tmc, we can assume w.l.o.g.\ that it does not occur in
  \Amc as well.  Let $\Amc' = \Amc \cup \{ r(a,b), r(b,b) \}$. By
  definition of $Q(a)$, clearly $\Amc' \models Q(a)$. Moreover, $\Amc'
  \not\models Q^{\mn{deco}}_{\mn{acyc}}(a)$ because $\Amc'$ does not
  contain the assertion $r(a,a)$. \medskip

  To establish the lower bound for Point~2, we use a reduction from
  containment between an OMQ from $(\mathcal{ALCI},\text{BAQ})$ and an
  OMQ from $(\mathcal{ALCI},\text{BCQ})$. This problem has been shown
  to be 2{\sc NExpTime}-hard in \cite{DBLP:conf/kr/BourhisL16}.
  The reduction presented there uses different TBoxes in the two
  involved OMQs. However, by Theorem~3 in
  \cite{DBLP:conf/kr/BienvenuLW12}, we can assume w.l.o.g.\ that 
  they both share the same TBox. 
   
  Now for the reduction to IQ-rewritability.  Let
  $Q_1=(\Tmc,\Sigma,\exists x \,A(x))$ be an OMQ from
  $(\mathcal{ALCI},\text{BAQ})$ and $Q_2=(\Tmc,\Sigma,q())$ an OMQ
  from $(\mathcal{ALCI},\text{BCQ})$. We first show that $Q_1$ and
  $Q_2$, which are Boolean, can be replaced with unary OMQs. By Lemma~\ref{lem:BIQtoAQ}, we find an OMQ $Q'_1=(\Tmc',\Sigma,M(x))$ from
  $(\mathcal{ALCI},\text{AQ})$ such that for all $\Sigma$-ABoxes \Amc,
  $\Amc \models Q_1$ iff $\Amc \models Q'_1(a)$ for some $a \in
  \mn{ind}(\Amc)$.  Clearly, the construction from the proof of
  Lemma~\ref{lem:BIQtoAQ} is such that $Q_1$ is equivalent to
  $(\Tmc',\Sigma,\exists x \,A(x))$ and by choosing the fresh concept
  $M$ to also not occur in $q()$ we can further ensure that $Q_2$ is
  equivalent to $(\Tmc',\Sigma,q())$. Thus, we can assume that $Q_1$,
  $Q_2$, and $Q'_1$ all use the same TBox \Tmc and we can further
  assume $\Tmc$ contains a CI $\top \sqsubseteq N$ where $N$ is a
  concept name not occurring anywhere else, including $\Sigma$. Set
  $Q'_2 = (\Tmc,\Sigma,q'(x))$ where $q'(x)$ is $q()$ extended with
  the atom $N(x)$, $x$ a fresh (answer) variable. It can be verified
  that $Q_1 \subseteq Q_2$ iff $Q'_1 \subseteq Q'_2$.

  Now let $q_0(x)$ be $q()$ extended with the atom $M(x)$, $x$ a fresh
  (answer) variable. It is important to note that $q_0(x)$ is a
  disconnected CQ.
  \\[2mm]
  {\bf Claim.} $Q'_1 \subseteq Q'_2$ iff $Q=(\Tmc,\Sigma,q_0(x))$ is
  IQ-rewritable.
  \\[2mm]
  \emph{Proof of claim.}  If $Q'_1 \subseteq Q'_2$, then $Q$ is
  equivalent to $Q'_1$, which is from $(\ALCI,\text{IQ})$. Conversely,
  assume that $Q'_1 \not \subseteq Q'_2$. The query $Q^{\mn{con}}$
  from Point~(b) in the proof of the ``$1 \Rightarrow 3$'' direction
  of Theorem~\ref{thm:central}, applied to $Q$, is exactly $Q'_1$. As
  shown there, IQ-rewritability of $Q$ implies $Q \equiv
  Q^{\mn{con}}$, in contradiction to  $Q'_1 \not \subseteq Q'_2$.

 \medskip

 The upper bounds are a consequence of the characterization of
 IQ-rewritability in $(\ALCHI, \text{UCQ})$ from Theorem~\ref{thm:alc}
 in terms of query containment. Contaiment in $(\ALCHI, \text{UCQ})$
 is in 2\NExpTime \cite{DBLP:conf/kr/BourhisL16}. Note that our
 characterizations use UCQs with exponentially many CQs, each of which
 is of polynomial size so we cannot apply the contaiment complexity result as a black box. However, by perusing the refined OMQ containment complexity result from Theorem \ref{thm:OMQcontainmentComplexRel}, we observe that containment checking for OMQs is double exponential only in the size of the CQs in the UCQs while it is only exponential in the number of CQs.

Note that we also need an emptiness check beforehand: if the check
succeeds and the OMQ is empty, it is also rewritable and so we answer
`yes', if not we proceed to perform the containment check.  Emptiness
is simply a special case of containment, so we end up in the right complexity class. 
\end{proof}

\section{Proofs for Section 5}
%
%For proving complexity results and also for dealing with BIQ-rewritability of OMQs we need to introduce some notations aA Boolean CQ (BCQ) is a %CQ with no answer variables. A Boolean atomic query (BAQ) is a BIQ of the form $\exists x\, A(x)$, with $A \in \Nc$.

We postpone the proof of Theorem~\ref{lthmUndecALCF} as we need the ultrafilter technique introduced in the proof of Theorem~\ref{thm:f}.
We start by discussing the rewritings given in Example~\ref{ex:fr} in more detail and present an additional
example. Recall that $p(x) = \exists y(s(x,y) \wedge  r(y,y))$ and that we consider 
the OMQ $Q=(\Tmc_{r},\Sigma_{\sf full},p(x))$ with $\Tmc_{r}=\{{\sf func}(r)\}$. We claim that
$Q_{r}=(\Tmc_{r},\Sigma_{\sf full},q_{r}(x))$ with
$$
 q_{r}(x) = (\forall s.\bigsqcup_{1\leq i \leq 3}P_{i}) \rightarrow (\exists s.(\bigsqcap_{1\leq i \leq 3}(P_{i}\rightarrow \exists r.P_{i}))
 $$
is a rewriting of $Q$. To prove this claim one can use the following straightforward three colorability argument: 
for every set $X$ of individual names in an ABox $\Amc$ which does not contain
an atom of the form $r(c,c)$ with $c\in X$ and in which $r$ is functional
(in the sense that for any $a$ there is at most one $b$ with $r(a,b)\in \Amc$) one can color the individual
names in $X$ with three different colors $P_{1},P_{2},P_{3}$ without having distinct $c_{1},c_{2}$ with $r(c_{1},c_{2})\in \Amc$ 
such that $P_{i}(c_{1})$ and $P_{i}(c_{2})$ for some $1\leq i\leq 3$.
We give an additional example illustrating this technique which is fundamental for our approach to
rewritability for $\mathcal{ALCIF}$ TBoxes.
\begin{example}
Let $\Tmc= \{{\sf func}(s_{1}), {\sf func}(s_{2})\}$ and 
$$
p(x) = \exists y,z (r(x,y) \wedge s_{1}(y,z) \wedge s_{2}(y,z))
$$
and obtain $p'(x)$ from $p(x)$ by adding the atom $s_{1}(z,y)$. Then
$(\Tmc,\Sigma_{\sf full},p(x))$ is not rewritable into an OMQ 
in $(\Tmc,\Sigma_{\sf full},q(x))$ with $q(x)$ a CI but $(\Tmc,\emptyset,\Sigma_{\sf full},p'(x))$
is rewritable into the OMQ $(\Tmc,\Sigma_{\sf full},q(x))$, where

\begin{eqnarray*}
q(x) &  = & \forall r.((\bigsqcup_{1\leq i \leq 3}P_{i}) \sqcap (\forall r.\forall s_{1}.\bigsqcup_{1\leq i \leq 3}Q_{i})\\
  & \rightarrow & (\exists r.(\bigsqcap_{1\leq i \leq 3} (P_{i} \rightarrow \exists s_{1}.(C \sqcap \exists s_{1}.P_{i}))
\end{eqnarray*}
where
$$
C=\bigsqcap_{1\leq i \leq 3}(Q_{i} \rightarrow \exists s_{1}.\exists s_{2}.Q_{i})
$$
\end{example}
We split the proof of Theorem~\ref{thm:f} into two parts and state the claims in such a way
that they cover the extensions discussed in the main text. Call a role $r$ \emph{functional w.r.t.~$\Tmc$} if 
${\sf func}(r)\in \Tmc$. The \emph{functional closure $\text{FC}_{p}(x_{0})$ of a variable $x_{0}$ in a CQ $p(x)$} 
is the set of all variables $x_{n}$ in $p(x)$ such that there is a functional path from $x_{0}$ to $x_{n}$ in $p(x)$. 
Let $\text{var}(p(x))$ denote the set of all variables in $p(x)$ and set 
$$
\text{nFC}_{p}(x)= \text{var}(p(x))\setminus \text{FC}_{p}(x)
$$
\begin{lemma}
Let $Q=(\Tmc,\Sigma_{\text{full}},q(x))$ be an OMQ from $(\mathcal{ALCIF},\text{UCQ})$. Then
$Q$ is rewritable into an OMQ $(\Tmc,\Sigma_{\text{full}},q'(x))$ with $q'(x)$ an \ALCI-IQ 
if there is a subquery $q'(x)$ of $q(x)$ that is f-acyclic, connected, and equivalent to $q(x)$.

When $\mathcal{ALCIF}$ is replaced with $\mathcal{ALCIF}^{u}$, then the implication 
holds without the connectedness assumption.
\end{lemma}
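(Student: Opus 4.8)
The plan is to reduce, via the assumed equivalence, to the case where the actual query is already well-behaved, and then extend the marker construction behind Lemma~\ref{lem:rewrAlgALCI}. Since $q(x)$ and the subquery $q'(x)$ are equivalent, the OMQs $(\Tmc,\Sigma_{\text{full}},q(x))$ and $(\Tmc,\Sigma_{\text{full}},q'(x))$ are equivalent as well and rewritability is insensitive to passing between equivalent OMQs, so we may assume $q(x)=p_{1}(x)\vee\dots\vee p_{k}(x)$ is a UCQ each of whose CQs $p_{i}(x)$ is f-acyclic w.r.t.~$\Tmc$ and connected. As in the proof of Lemma~\ref{lem:rewrAlgALCI}, I would construct for each $p_{i}(x)$ an $\ALCI$-concept $C_{p_{i}}$ over fresh concept names not in $\Sigma$ and combine them in the manner of that lemma, roughly into an IQ $(C_{\mn{pre}}\to\bigsqcup_{i}C_{p_{i}})(x)$, where $C_{\mn{pre}}$ collects the premises introduced below and the fresh concept names act as markers that, in the critical direction of the correctness proof, are interpreted as singletons. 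For the $\mathcal{ALCIF}^{u}$ variant a disconnected CQ is split into maximal components and every component not containing $x$ is turned into a concept $\exists u.(\cdots)$ exactly as in Point~2 of Lemma~\ref{lem:rewrAlgALCI}, which is what lets us drop connectedness there.

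The core of the argument is the construction of $C_{p}$ for a single f-acyclic connected CQ $p(x)$. The base case is the tree-unfolding-with-markers from Lemma~\ref{lem:rewrAlgALCI}: repeatedly remove an atom $r(x,y)$ incident to $x$ and lying on a cycle, reattach it as a fresh $P$-labelled leaf, and read off an $\ELI$-concept from the resulting tree-shaped CQ; this breaks all cycles through $x$. Two further gadgets handle precisely the cycles that f-acyclicity tolerates. First, for a cycle through a vertex $x_{i}\in\text{FC}_{p}(x)$ (the first case of f-acyclicity): in every model of $\Tmc$, any homomorphism of $p(x)$ sends $x_{i}$ to the \emph{unique} element reached from the image of $x$ along the corresponding functional path, since functionality assertions are the only source of forced identifications in an $\mathcal{ALCIF}$-TBox; hence $x_{i}$ can serve as a second anchor, and I would navigate to it from $x$ via the matching chain of $\exists r.(\cdots)$, mark it with a fresh concept, and close the cycle by demanding that concept again after traversing the cycle. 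Second, for an all-functional cycle covered by a functional path from one of its own vertices $x_{0}$ (the second case of f-acyclicity), where no such anchor is available: I would insert the three-coloring gadget illustrated by $q_{r}(x)$ in Example~\ref{ex:fr} and by the example that follows it --- three fresh concept names $P_{1},P_{2},P_{3}$, a premise $\forall(\cdots).\bigsqcup_{1\leq i\leq 3}P_{i}$ ranging over the (possibly non-functional) path from $x$ to $x_{0}$, and a conclusion $\exists(\cdots).\bigsqcap_{1\leq i\leq 3}(P_{i}\to\exists(\cdots).P_{i})$ asking some candidate image of $x_{0}$ to close the functional cycle monochromatically; these gadgets may be nested, as in the second example. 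The result is an IQ of size polynomial in $p(x)$.

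It then remains to prove $(\Tmc,\Sigma_{\text{full}},q(x))\equiv(\Tmc,\Sigma_{\text{full}},C(x))$. The inclusion ``$\subseteq$'' is the easy direction: if $\Amc\models Q(a)$ then in any model $\Imc$ of $\Amc$ and $\Tmc$ some $p_{i}(x)$ has a homomorphism $h$ with $h(x)=a$, and interpreting every marker as the $h$-image of the vertex it names and every color gadget by reading off a color-class assignment from $h$ yields $\Imc\models C(a)$. For ``$\supseteq$'', suppose $\Amc$ is consistent with $\Tmc$ and $\Amc\not\models Q(a)$; by Lemma~\ref{lem:canmod} fix an extended forest model $\Imc$ of $\Amc$ and $\Tmc$ with $\Imc\not\models q(a)$, possibly passing first to a sufficiently saturated such model as in the ultrafilter arguments of \cite{DBLP:journals/japll/KikotZ13}. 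I would then reinterpret in $\Imc$ each marker as the singleton of its forced anchor element and, for each three-coloring gadget, properly $3$-color the relevant functional subgraph --- possible precisely because the corresponding cycle is not realised, using that a functional digraph admits a proper $3$-coloring with no monochromatic edge iff it has no self-loop, i.e.\ the $3$-colorability fact recorded just before the lemma. With these choices $C_{\mn{pre}}$ holds at $a$ while $\bigsqcup_{i}C_{p_{i}}$ fails there, contradicting $\Amc\models(\Tmc,\Sigma_{\text{full}},C(x))(a)$.

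I expect ``$\supseteq$'' to be the main obstacle, and within it two issues. One is arguing in the model rather than over the ABox: one must check that when $q(x)$ has no homomorphism anchored at $a$, then no self-loop and no ``closing'' $P_{i}$-edge is forced by $\Tmc$ along the functional paths reached in the tree parts of $\Imc$, so that the proper $3$-coloring really exists; conversely, whenever such an edge \emph{is} forced, one must walk back through the functional paths to reassemble an actual homomorphism from some $p_{i}(x)$. The other is disentangling the two cases of f-acyclicity when they interact, since a cycle may be covered by a functional path that itself passes through vertices in $\text{FC}_{p}(x)$, mixing anchoring with coloring. The remaining steps --- that the transformation steps (reattaching cut atoms as $P$-leaves, introducing the color names) preserve equivalence, and that the gadgets compose correctly --- are routine, in the style of the arguments in the proofs of Lemmas~\ref{lem:rewrAlgALCI} and~\ref{lem:rewrAlgALCTBox}.
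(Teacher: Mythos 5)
Your construction follows essentially the same route as the paper's: pass to the equivalent f-acyclic connected subquery, use the Kikot--Zolin marker trick to anchor the variables that are functionally forced from $x$, use the three-coloring gadget of Example~\ref{ex:fr} for the purely functional cycles, and prove the hard containment by interpreting the markers as singletons and the color names as a proper $3$-coloring in a countermodel. The one genuine missing piece is precisely the point you flag as an obstacle at the end and leave unresolved: you treat cycles one at a time and never explain why the edge-removal process yields a well-defined tree-shaped CQ, nor why each removed atom falls under exactly one of your two gadgets. The paper supplies this with a structural decomposition your sketch lacks: it partitions $\text{nFC}_p(x)$ into \emph{clusters} (maximal sets of variables mutually reachable by functional paths) and derives from f-acyclicity that the graph on clusters is acyclic, that any two adjacent clusters are joined by exactly one atom, and that no intra-cluster atom uses a role that is non-functional in both directions. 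It follows that every removed atom either can be re-anchored by a compound atom $\exists r.A_{x_1}$ pointing at a variable of $\text{FC}_p(x)$ (whose marker is a singleton, since the functional path from $a$ determines its image uniquely) or joins two variables of a single cluster, where only the coloring gadget is invoked; the two mechanisms never mix on one atom, so the ``interaction'' you worry about does not arise. Without this analysis your per-cycle recipe does not actually define a concept: a cycle may require several removals, and one removed atom may lie on cycles of both types.

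Two smaller slips. In the direction $Q\subseteq Q'$ you say you ``interpret every marker as the $h$-image,'' but there you must establish $C(a)$ in \emph{every} model, so you cannot choose the interpretation of the fresh names; the correct argument is the case split from the proof of Lemma~\ref{lem:rewrAlgALCI} (either the universally quantified premise fails at $a$, or it holds, in which case it forces the markers onto the $h$-images and $h$ then witnesses the conclusion). The free choice of interpretation is only legitimate in the direction $Q'\subseteq Q$, which you handle correctly. Finally, the appeal to ultrafilters is out of place here: they are needed only for the converse implication of Theorem~\ref{thm:f}, not for verifying the rewriting.
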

\begin{proof}
Fix an f-acyclic and connected UCQ $q(x)$ and let $\Tmc$ be a $\mathcal{ALCIF}$ TBox.
Let $Q=(\Tmc,\Sigma_{\sf full},q(x))$. We construct an $\mathcal{ALCI}$-IQ $C(x)$  such that
$Q \equiv Q'$ for $Q'= (\Tmc,\Sigma_{\sf full},C(x))$. Let $p(x)$ be a CQ in $q(x)$. 
A \emph{cluster} in $p(x)$ is any maximal subset $X$ of $\text{nFC}_{p}(x)$ such that there is a 
functional path from any $y\in X$ to any $y'\in X$.
%A cluster $X$ is \emph{degenerate} if it has a single element $x_{0}$ and $r(x_{0},x_{0})$ is not in $q(x)$
%for any role $r$. 
Denote the set of clusters in $p(x)$ by $\mathfrak{C}$.
For any two clusters $X_{1},X_{2}$ we set $(X_{1},X_{2})\in E$ if there exist $y_{1}\in X_{1}$ and $y_{2}\in X_{2}$ 
such that there is an atom $r(y_{1},y_{2})$ in $p(x)$. We obtain an undirected graph $(\mathfrak{C},E)$ without self-loops. 
As $p(x)$ is f-acyclic, we obtain that
\begin{itemize}
\item $(\mathfrak{C},E)$ is acyclic;
\item for any $(X_{1},X_{2})\in E$ there is exactly one atom $r(y_{1},y_{2})$ in $p(x)$ with $y_{1}\in X_{1}$ and $y_{2}\in X_{2}$;
\item there does not exist an atom $r(y,y')$ in $p(x)$ such that neither $r$ nor $r^{-}$ are functional w.r.t.~$\Tmc$ and $y,y'\in X$ 
for a single cluster $X$ in $p(x)$.
\end{itemize}
It follows that we obtain from $p(x)$ a new CQ $p'(x)$ by repeatedly choosing and removing 
\begin{itemize}
\item atoms $r(x',y')$ with $x'\in \text{FC}_{p}(x)$ and $y'\in \text{nFC}_{p}(x)$ and
\item atoms $r(y',y'')$ with $y',y''$ contained in the same cluster $X$ and
\item atoms $r(x',x'')$ with $x',x''\in \text{FC}_{p}(x)$ 
\end{itemize}
such that $p'(x)$ is a tree-shaped CQ with answer variable $x$ still containing all variables in $p(x)$ and
\begin{itemize}
\item every $x'$ in $\text{FC}_{p}(x)$ is still reachable in $p'(x)|_{\text{FC}_{q}(x)}$ from $x$ along a functional path;
\item for every cluster $X$ there exists a $y_{X}\in X$ such that every $y\in X$ can be reached in $p'(x)|_{X}$
from $y_{X}$ along a functional path.
\end{itemize}
Now obtain a CQ $p_{\sf plain}(x)$ from $p'(x)$ by adding for every $y$ in $p'(x)$ a fresh atom $A_{y}(y)$ to $p'(x)$.
Obtain an eCQ $p_{\sf deco}(x)$ from $p_{\sf plain}(x)$ by adding
\begin{itemize}
\item for every atom $r(x_{1},x_{2})\in p(x)\setminus p'(x)$ with $x_{1},x_{2}\in \text{FC}_{p}(x)$ the compound `atom' $\exists r.A_{x_{2}}(x_{1})$ and 
\item for every atom $r(y_{1},x_{1})\in p(x)\setminus p'(x)$ with $y_{1}\in \text{nFC}_{p}(x)$ and $x_{1}\in \text{FC}_{p}(x)$ the compound atom
$\exists r.A_{x_{1}}(y_{1})$.
\end{itemize}
Replace in the $\mathcal{ELI}$ concept corresponding to $p_{\sf plain}(x)$ all occurrences of the symbol `$\exists$' by the symbol `$\forall$'
and denote the resulting $\mathcal{ALCI}$ concept by $C_{\sf plain}$. Take the $\mathcal{ELI}$ concept $D_{\sf deco}$ corresponding to $p_{\sf deco}(x)$.
Consider the $\mathcal{ALCI}$ concept $C_{\sf plain} \rightarrow D_{\sf deco}$. It should be clear that 
$(\Tmc,\Sigma_{\sf full},C_{\sf plain} \rightarrow D_{\sf deco}(x))$ is a rewriting of $(\Tmc,\Sigma_{\sf full},p(x))$ if all clusters in
$p(x)$ are degenerate in the sense that they consist of a single variable $y$ such that there is no atom of the form $r(y,y)$ in $p(x)$.
More generally, $(\Tmc,\Sigma_{\sf full},C_{\sf plain} \rightarrow D_{\sf deco}(x))$ is a rewriting of 
$(\Tmc,\Sigma_{\sf full},p^{\ast}(x))$ for the CQ $p^{\ast}(x)$ obtained from $p(x)$ by
removing all atoms $r(y_{1},y_{2})$ with variables $y_{1},y_{2}$ from a single cluster which are not in $p'(x)$. To obtain the rewriting 
we are after we thus still have to take care of those atoms.

Consider a non-degenerate cluster $X$. We find an ordering $r_{1}(x_{1}^{1},x_{2}^{1}),\ldots r_{n}(x_{1}^{n},x_{2}^{n})$ of the atoms in 
$p(x)|_{X}\setminus p'|_{X}$ such that all $r_{i}$ are functional (recall that no atoms $r(y,y')$ with neither $r$ nor $r^{-}$ functional
and $y,y'\in X$ exist) and for inductively defined sets of atoms  
\begin{eqnarray*}
p^{0}|_{X} & = & p'|_{X}\\
p^{i+1}|_{X} & = & p^{i}|_{X}\cup\{r_{i+1}(x_{1}^{i+1},x_{2}^{i+1})\}
\end{eqnarray*}
the following holds: for all $0 \leq i < n$, there is a functional path $y_{0}^{i},\ldots,y_{k}^{i}$ in $p^{i}|_{X}$ such that 
$y_{0}^{i}=y_{k}^{i}$, $y_{k-1}^{i}=x_{1}^{i+1}$ and $y_{k}^{i}=x_{2}^{i+1}$.  
Now take for every $0\leq i < n$ fresh concept names $P_{X,i}^{1}, P_{X,i}^{2},P_{X,i}^{3}$ and obtain $p_{\sf plain}^X(x)$ 
from $p_{\sf plain}(x)$ by adding the compound atoms 
$$
P_{X,i}^{1}\sqcup P_{X,i}^{2}\sqcup P_{X,i}^{3}(y_{0}^{i})
$$
%for any non-degenerate cluster $X$ to $p_{\sf plain}(x)$. 
%$ be the in the same way as $corresponding concept (with universal restrictions). 
Take functional roles $s_{0}^{i},\ldots,s_{k-2}^{i}$ and consider the query 
$$
s_{0}^{i}(y_{0}^{i},y_{1}^{i}),\ldots,s_{k-2}^{i}(y_{k-2}^{i},y_{k-1}^{i})\in p^{i}|_{X}
$$
%and consider the CQ 
%$$
%s_{0}^{i}(y_{0}^{i},y_{1}^{i}),\ldots,s_{k-2}^{i}(y_{k-2}^{i},y_{k-1}^{i}),r_{i+1}(x_{1}^{i+1},x_{2}^{i+1})
%$$
and consider the CQ $p^{i}_{X}(y_{0}^{i})$ defined by taking the conjunction of the atoms
$s_{0}^{i}(y_{0}^{i},y_{1}^{i}),\ldots,s_{k-2}^{i}(y_{k-2}^{i},y_{k-1}^{i})$ and taking $y_{0}^{i}$ as the answer variable.
Define for $1\leq j \leq 3$ an eCQ $p^{i,j}_{X}(y_{0}^{i})$ by adding to $p^{i}_{X}(y_{0}^{i})$ the compound atom
$$
\exists r_{i+1}.P_{X,i}^{j}(y_{k-1}^{i})).
$$
Take the $\mathcal{ELI}$ concept $D_{X,i}^{j}$ corresponding to $p^{i,j}_{X}(y_{0}^{i})$, for $j=1,2,3$. 
Obtain the CQ $p_{\sf deco}^{X}(x)$ from $p_{\sf deco}(x)$ by adding the compound atoms
$$
\bigsqcap_{1\leq j \leq 3}(P_{X,i}^{j} \rightarrow D_{X,i}^{j})(y_{0}^{i})
$$
We do this for all non-degenerate clusters $X$ and obtain eCQs $p_{\sf plain}^{\ast}(x)$ and $p_{\sf deco}^{\ast}(x)$
by taking the conjunction of all $p_{\sf plain}^{X}(x)$ and $p_{\sf deco}^{X}(x)$, respectively. Now define the concepts 
$C_{\sf plain}^{\ast}$ and $D_{\sf deco}^{\ast}$ in the obvious way. It is not difficult to prove that
$(\Tmc,\Sigma_{\sf full},C_{\sf plain}^{\ast} \rightarrow D_{\sf deco}^{\ast}(x))$ is a rewriting of 
$(\Tmc,\Sigma_{\sf full},p(x))$. By constructing $C_{\sf plain}^{\ast}$ and $D_{\sf deco}^{\ast}$ for all CQs $p(x)$
in $q(x)$ and taking the disjunction of all $C_{\sf plain}^{\ast} \rightarrow D_{\sf deco}^{\ast}(x)$ we obtain
a rewriting of $q(x)$. The extension to $\mathcal{ALCIF}^{u}$ without the connectedness assumption is straightforward.
\end{proof}
For the proof of the other direction of Theorem~\ref{thm:f}, we require some preparation.
We use standard notation and results for ultrafilter extensions of interpretations~\cite{Blackb}.
For $U\subseteq \Delta^{\Imc}$ for an interpretation $\Imc$ we set $\overline{U}=\Delta^{\Imc}\setminus U$.
\begin{definition}
Let $\Imc$ be an interpretation. A set $\mathfrak{U} \subseteq 2^{\Delta^{\Imc}}$ is an \emph{ultrafilter} over $\Delta^{\Imc}$ if the
following conditions hold for all $U,V\subseteq \Delta^{\Imc}$:
\begin{itemize}
\item if $U,V\in \mathfrak{U}$, then $U\cap V\in \mathfrak{U}$;
\item if $U\in \frak{U}$ and $U\subseteq V$, then $V\in \mathfrak{U}$;
\item $U\in \mathfrak{U}$ iff $\overline{U}\not\in \mathfrak{U}$.
\end{itemize}
\end{definition}
For every $d\in \Delta^{\Imc}$, the set
$$
\mathfrak{U}_{d}= \{ X\subseteq \Delta^{\Imc} \mid d\in X\}
$$
is an ultrafilter, called the \emph{principal ultrafilter} generated by $d$ in $\Delta^{\Imc}$. An ultrafilter
$\mathfrak{U}$ for which there exists no $d\in \Delta^{\Imc}$ with $\mathfrak{U}=\mathfrak{U}_{d}$ is called
a \emph{non-principal} ultrafilter. It is known that for every set $\mathfrak{V}\subseteq 2^{\Delta^{\Imc}}$ with
the \emph{finite intersection property} (if $U_{1},\ldots,U_{n}\in \mathfrak{V}$, 
then $U_{1}\cap \cdots \cap U_{n}\not=\emptyset$) there exists an ultrafilter $\mathfrak{U}\supseteq \mathfrak{V}$. For a set $U\subseteq \Delta^{\Imc}$ and role $r$ we set
$$
(\exists r.U)^{\Imc}= \{ d\in \Delta^{\Imc}\mid \text{there exists $d'\in U$ with $(d,d')\in r^{\Imc}$}\}
$$
We are in the position now to define ultrafilter extensions of interpretations.
\begin{definition}
Let $\Imc$ be an interpretation. The \emph{ultrafilter extension} $\Imc^{\text{ue}}$ of $\Imc$ is defined
as follows:
\begin{itemize}
\item $\Delta^{\Imc^{\text{ue}}}$ is the set of ultrafilters over $\Delta^{\Imc}$;
\item $\mathfrak{U}\in A^{\Imc^{\text{ue}}}$ iff $A^{\Imc}\in \mathfrak{U}$, for all concept names $A$;
\item $(\Umf_{1},\Umf_{2})\in r^{\Imc^{\text{ue}}}$ iff $(\exists r.U)^{\Imc}\in \Umf_{1}$ for
all $U\in \Umf_{2}$, for all role names $r$.
\end{itemize}
\end{definition}
%Observe that $(\Umf_{1},\Umf_{2})\in r^{\Imc^{\text{ue}}}$ iff $(\Umf_{2},\Umf_{1})\in r^{\Imc^{\text{ue}}}$
Observe the following equivalence for all principal ultrafilters $\Umf_{d},\Umf_{e}$ and all roles $r$:
$$
(d,e)\in r^{\Imc} \quad \Leftrightarrow \quad (\Umf_{d},\Umf_{e})\in r^{\Imc^{\text{ue}}}
$$
The fundamental property of ultrafilter extensions is the following
\emph{anti-preservation result}:
\begin{lemma}\label{lem:anti-preservation}
For all interpretations $\Imc$, $\mathcal{ALCI}$ concept $C$, and roles $r$:
\begin{itemize}
\item if $\Imc\models C(a)$, then $\Imc^{\text{ue}}\models C(U_{a})$.
\item if $r^{\Imc}$ is functional, then $r^{\Imc^{\text{ue}}}$ is functional. 
\end{itemize}
\end{lemma}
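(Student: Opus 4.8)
The plan is to derive both items from the stronger \emph{fundamental theorem for ultrafilter extensions}: for every $\mathcal{ALCI}$ concept $C$ and every ultrafilter $\mathfrak{U}$ over $\Delta^{\Imc}$,
$$
\mathfrak{U}\in C^{\Imc^{\text{ue}}}\quad\text{iff}\quad C^{\Imc}\in\mathfrak{U}.
$$
The first item then follows at once: if $\Imc\models C(a)$, i.e.\ $a\in C^{\Imc}$, then $C^{\Imc}\in\mathfrak{U}_{a}$ because $\mathfrak{U}_{a}$ is the principal ultrafilter generated by $a$, hence $\mathfrak{U}_{a}\in C^{\Imc^{\text{ue}}}$, that is $\Imc^{\text{ue}}\models C(\mathfrak{U}_{a})$. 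The equivalence is proved by induction on~$C$. For $C=A$ a concept name it is the definition of $A^{\Imc^{\text{ue}}}$; for $C=\neg D$ one uses $(\neg D)^{\Imc}=\overline{D^{\Imc}}$ together with the ultrafilter axiom $U\in\mathfrak{U}$ iff $\overline{U}\notin\mathfrak{U}$; for $C=D_{1}\sqcap D_{2}$ one uses $(D_{1}\sqcap D_{2})^{\Imc}=D_{1}^{\Imc}\cap D_{2}^{\Imc}$ together with closure of $\mathfrak{U}$ under finite intersections and under supersets, and $C=D_{1}\sqcup D_{2}$ is derived from these (or treated dually).

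The one interesting case is $C=\exists r.D$. From left to right, a witness $\mathfrak{U}'$ for $\mathfrak{U}\in(\exists r.D)^{\Imc^{\text{ue}}}$ satisfies $(\mathfrak{U},\mathfrak{U}')\in r^{\Imc^{\text{ue}}}$ and $\mathfrak{U}'\in D^{\Imc^{\text{ue}}}$, so $D^{\Imc}\in\mathfrak{U}'$ by the induction hypothesis, and instantiating the definition of $r^{\Imc^{\text{ue}}}$ with $U=D^{\Imc}$ yields $(\exists r.D)^{\Imc}\in\mathfrak{U}$. For the converse, assuming $(\exists r.D)^{\Imc}\in\mathfrak{U}$ I would consider
$$
\mathfrak{G}=\{\,W\subseteq\Delta^{\Imc}\mid \overline{(\exists r.\overline{W})^{\Imc}}\in\mathfrak{U}\,\}
$$
and check that $\{D^{\Imc}\}\cup\mathfrak{G}$ has the finite intersection property: for $W_{1},\dots,W_{n}\in\mathfrak{G}$ the set $(\exists r.D)^{\Imc}\cap\bigcap_{i}\overline{(\exists r.\overline{W_{i}})^{\Imc}}$ lies in $\mathfrak{U}$, hence is non-empty, and any of its elements $d$ has an $r$-successor $d'\in D^{\Imc}$, which moreover lies in every $W_{i}$ because all $r$-successors of $d$ do; so $D^{\Imc}\cap W_{1}\cap\dots\cap W_{n}\neq\emptyset$. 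Extending $\{D^{\Imc}\}\cup\mathfrak{G}$ to an ultrafilter $\mathfrak{U}'$ gives $D^{\Imc}\in\mathfrak{U}'$ (so $\mathfrak{U}'\in D^{\Imc^{\text{ue}}}$ by the induction hypothesis), and $(\mathfrak{U},\mathfrak{U}')\in r^{\Imc^{\text{ue}}}$ holds because if some $U\in\mathfrak{U}'$ had $(\exists r.U)^{\Imc}\notin\mathfrak{U}$ then $\overline{U}\in\mathfrak{G}\subseteq\mathfrak{U}'$, contradicting $U\in\mathfrak{U}'$; hence $\mathfrak{U}\in(\exists r.D)^{\Imc^{\text{ue}}}$. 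The case $C=\exists r^{-}.D$ is treated in the same way once $(r^{-})^{\Imc^{\text{ue}}}$ is set to be the converse of $r^{\Imc^{\text{ue}}}$, using the (routine) symmetry that $(\mathfrak{U}_{1},\mathfrak{U}_{2})\in r^{\Imc^{\text{ue}}}$ iff $(\exists r^{-}.V)^{\Imc}\in\mathfrak{U}_{2}$ for all $V\in\mathfrak{U}_{1}$, so that $r$ and $r^{-}$ may be interchanged.

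For the second item, suppose $r^{\Imc}$ is functional and $(\mathfrak{U}_{1},\mathfrak{U}_{2}),(\mathfrak{U}_{1},\mathfrak{U}_{3})\in r^{\Imc^{\text{ue}}}$. If $\mathfrak{U}_{2}\neq\mathfrak{U}_{3}$ then, these being distinct ultrafilters, there is $U$ with $U\in\mathfrak{U}_{2}$ and $\overline{U}\in\mathfrak{U}_{3}$, whence $(\exists r.U)^{\Imc}\in\mathfrak{U}_{1}$ and $(\exists r.\overline{U})^{\Imc}\in\mathfrak{U}_{1}$, so their intersection lies in $\mathfrak{U}_{1}$ and is non-empty; but any element of it has an $r$-successor in $U$ and one in $\overline{U}$, hence two distinct $r$-successors, contradicting functionality of $r^{\Imc}$. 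The same argument applies verbatim when $r$ is an inverse role. I expect the only real obstacle to be the right-to-left direction of the existential case, where the witnessing ultrafilter $\mathfrak{U}'$ has to be produced via the finite intersection property; everything else --- the Boolean cases, the symmetric handling of inverse roles, and the functionality item --- is routine reasoning with the ultrafilter axioms.
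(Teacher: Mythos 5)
Your proof is correct. The paper does not prove this lemma itself but cites it as a standard result on ultrafilter extensions (Blackburn, de Rijke and Venema), and your argument --- the truth lemma $\mathfrak{U}\in C^{\Imc^{\text{ue}}}$ iff $C^{\Imc}\in\mathfrak{U}$, proved by induction with the finite-intersection-property construction of the witnessing ultrafilter in the existential case, together with the converse-symmetry of $r^{\Imc^{\text{ue}}}$ for inverse roles and the two-successor contradiction for functionality --- is precisely that standard argument.
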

We are now in the position to prove the second part of Theorem~\ref{thm:f}.
\begin{lemma}
If an OMQ $Q=(\Tmc,\Sigma_{\text{full}},q(x))$ from $(\Fmc,\text{UCQ})$
is rewritable into an OMQ from $(\Fmc,\ALCI\text{-IQ})$, then there is a
subquery $q'(x)$ of $q(x)$ that is f-acyclic, connected, and equivalent to $q(x)$.

When $\mathcal{ALCI}$-IQ is replaced with $\mathcal{ALCI}^{u}$-IQ,
then the same equivalence holds except that connectedness is dropped.
\end{lemma}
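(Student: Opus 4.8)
The plan is to prove the contrapositive in the expected way: assume $Q$ is rewritable into some $Q'=(\Tmc',\Sigma_{\text{full}},C(x))$ from $(\Fmc,\ALCI\text{-IQ})$ and build a subquery of $q(x)$ with the stated properties. First I would reduce to a normal form. Using the core arguments from the proof of Theorem~\ref{thm:centralnoTBox} (Claims~1 and~2 there), let $q_1(x),\dots,q_k(x)$ be the minimal CQs of $q$ in the sense used there, so that $q\equiv q_1\vee\cdots\vee q_k$, and replace each $q_i$ by a homomorphically minimal (core) subquery $q_i'$ of $q_i$ equivalent to $q_i$; set $q':=q_1'\vee\cdots\vee q_k'$, a subquery of $q$ equivalent to $q$. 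I would also note that each $q_i'$ may be assumed $\Tmc$-reduced (whenever $q_i'$ contains $r(y,z_1),r(y,z_2)$ with $\mn{func}(r)\in\Tmc$ and $z_1\neq z_2$ the identification is forced in every model, hence harmless), and that since over $\Fmc$ the agreement in Definition~\ref{def:general} is only required on ABoxes consistent with $\Tmc$ and the canonical model of an ABox with an $\Fmc$-TBox is that ABox (with the forced merges applied), one may assume $\Tmc'=\Tmc$. It then suffices to show each $q_i'$ is connected and f-acyclic w.r.t.~$\Tmc$; dropping connectedness throughout yields the $\ALCI^u$-variant, where the universal role lets an IQ see the whole ABox so disconnectedness is no obstruction and the next paragraph is vacuous.

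Connectedness I would handle by a disjoint-union argument. Suppose some $q_i'$ splits as $q_i^0\wedge q_i^1$ with $x$ a variable of $q_i^0$ and $q_i^1$ a nonempty union of components not containing $x$. Since $q_i'$ is a core, $q_i^1$ does not map into $q_i^0$ (else $q_i'\equiv q_i^0$, a proper subquery). Consider $\Amc:=\Amc_{q_i^0}$. By the analogue of Lemma~\ref{lem:conIQ} for $\Fmc$-TBoxes (functionality never connects components), $\Amc_{q_i'}\models Q'(x)$ gives $(\Amc_{q_i'})^{\mn{con}}_x=\Amc\models Q'(x)$, hence $\Amc\models Q(x)$. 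On the other hand, over $\Fmc$ we have $\Amc\models Q(x)$ iff the canonical model of $\Amc$ (here $\Amc$ itself) satisfies $q$ at $x$; any CQ $q_j$ satisfied there at $x$ maps into $q_i^0\subseteq q_i$, so $q_i\subseteq q_j$ and $q_j\equiv q_i\equiv q_i'$ by minimality, while $q_i'$ is not satisfied in $\Amc_{q_i^0}$ (its component $q_i^1$ is not). Hence $\Amc\not\models Q(x)$, a contradiction. So each $q_i'$ is connected.

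The main part, and the main obstacle, is f-acyclicity, where I would follow and adapt the ultrafilter construction of Kikot and Zolin~\cite{DBLP:journals/japll/KikotZ13}. Suppose a core CQ $p:=q_i'$ contains a cycle $\mathcal C: r_0(x_0,x_1),\dots,r_{n-1}(x_{n-1},x_n)$ with $x_0=x_n$ witnessing non-f-acyclicity: no functional path leads from $x$ to any $x_j$, and either (A)~some $r_j$ is neither functional nor inverse-functional w.r.t.~$\Tmc$, or (B)~all of them are but no closed functional path in $p$ covers $\{x_0,\dots,x_{n-1}\}$. In both cases I would ``unwind'' $\mathcal C$: starting from the canonical structure $\Imc_p$ of $p$ (a model of $\Tmc$, since $p$ is $\Tmc$-reduced, satisfying $C$ at $x$ because $\Amc_p\models Q\equiv Q'$), build an infinite structure $\Jmc$ by tiling $\mathbb Z$ copies of $\Imc_p$ cut open at the ``free'' spot of $\mathcal C$ — at the edge $r_j$ in case~(A), at the point where the covering functional path breaks down in case~(B). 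The two escape clauses in the definition of f-acyclicity are exactly what guarantees that this cut (i)~preserves functionality, so $\Jmc$ is still a model of $\Tmc$, and (ii)~makes $\Jmc$ $\ALCI$-bisimilar to $\Imc_p$ at every copy $d_k$ of $x$ (so $d_k\in C^{\Jmc}$, by bisimulation invariance of $\ALCI$-concepts) while $\Jmc$ still homomorphically projects onto $\Imc_p$. Then I would pass to the ultrafilter extension $\Jmc^{\text{ue}}$ and pick a non-principal ultrafilter $\mathfrak U$ extending the ``tail'' neighbourhoods of the $d_k$'s; since $\{d_k\}\in\mathfrak U$ and all $d_k$ satisfy $C$, the ultrafilter-extension semantics together with Lemma~\ref{lem:anti-preservation} give $\mathfrak U\in C^{\Jmc^{\text{ue}}}$, and its second item gives that $\Jmc^{\text{ue}}$ is still a model of $\Tmc$. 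Because $\Jmc$ (hence the relevant part of $\Jmc^{\text{ue}}$) has arbitrarily large girth around $\mathfrak U$, and $p$ is a core projecting onto $\Imc_p$, no CQ of $q$ maps into $\Jmc^{\text{ue}}$ with answer variable at $\mathfrak U$. Turning this into a contradiction with $Q\equiv Q'$ requires extracting an actual finite $\Sigma_{\text{full}}$-ABox $\Amc$ and individual $a$ with $\Amc\models Q'(a)$ but $\Amc\not\models Q(a)$; the delicate point — the technical heart, since $\ALCI$-concepts are not monotone under homomorphisms so ``$\Amc\models Q'(a)$'' is a statement about \emph{all} models — is certified by working with the CSP-template presentation of $Q'$ (Lemma~\ref{lem:ToBeDone}, adapted to $\Fmc$) and a selection/compactness argument on the $d_k$'s, exactly as Kikot and Zolin do. I expect the case split (A)/(B) in the unwinding together with this last extraction step to be where essentially all the work lies; the $\mathcal{ALCI}^{u}$ direction needs no further ideas beyond omitting the connectedness reduction.
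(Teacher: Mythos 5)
Your high-level plan (reduce to cores, handle disconnectedness by a component argument, refute a bad cycle by unwinding it into an infinite structure and closing it back up with an ultrafilter extension) is the paper's plan, and your connectedness paragraph correctly fills in a step the paper dismisses as straightforward. But the f-acyclicity argument has a genuine gap, and it sits exactly where you flag "the delicate point". You run the ultrafilter argument in the wrong direction: you arrange $d_k \in C^{\Jmc}$ (via bisimulation with $\Imc_p$) and then need to extract a finite ABox $\Amc$ with $\Amc \models Q'(a)$ but $\Amc \not\models Q(a)$. Certifying $\Amc \models Q'(a)$ means showing that \emph{every} model of $\Amc$ and $\Tmc$ satisfies $C(a)$, and truth of $C$ at $d_k$ in the single model $\Jmc$ gives you nothing of the sort. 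Your proposed fix --- the CSP-template presentation of Lemma~\ref{lem:ToBeDone} adapted to $\Fmc$ --- is not available: that lemma is for $\ALCHI^u$, and the paper explicitly states that it is unclear how to apply CSP techniques in the presence of functional roles, which is precisely why it resorts to ultrafilters. The paper's proof inverts the quantifiers so that $Q'$-membership never has to be certified from a single model: it builds an infinite ABox $\Amc$ on which no homomorphism from any disjunct of $q$ exists (a purely combinatorial fact), uses compactness together with $Q' \subseteq Q$ on finite subsets to obtain \emph{one} model $\Imc$ of $\Tmc$ and $\Amc$ with $\Imc \models \neg C(a)$, passes to $\Imc^{\text{ue}}$ (which preserves $\neg C$ at $\Umf_a$ and functionality), and then finds a homomorphic image of $p(x)$ inside $\Imc^{\text{ue}}$ hitting $\Umf_a$; that image is a finite ABox on which $Q$ holds by composition of homomorphisms, hence $Q'$ holds by $Q \subseteq Q'$, forcing $\Imc^{\text{ue}} \models C(\Umf_a)$ --- the contradiction.

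A second, related gap is case~(B). "Cutting open the cycle at the point where the covering functional path breaks down" and tiling $\mathbb{Z}$ copies is not an adequate description of the construction: when every edge of the cycle is functional or inverse-functional, the unwound structure must simultaneously (i)~keep all functional roles functional, (ii)~admit no homomorphic image of $p(x)$, and (iii)~be rich enough that the ultrafilter re-closure in step~(d) succeeds, i.e., that the sets needed to realize all connecting atoms in $\mathcal{S}$ have the finite intersection property. The paper achieves this with a two-level indexing --- $\mathbb{N}$-many copies of one cluster $V_1$ and copies of $V_2$ indexed by partial bijections $\beta : E \to \mathcal{S}$ --- and a nontrivial claim identifying the right $V_1, V_2$. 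A plain linear unwinding does not in general have property~(iii), so the homomorphism from $p(x)$ into the ultrafilter extension (which your argument also ultimately needs, in whichever direction it is run) cannot be constructed. As written, the proposal identifies the two hard points correctly but does not supply either.
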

\begin{proof}
Assume that $\Tmc$ contains functionality assertions only and $q(x)$ is a UCQ 
such that there does not exist an equivalent subquery $q'(x)$ of $q(x)$ which is 
f-acyclic and connected. We may assume that
\begin{itemize}
\item there is no homomorphism from any disjunct $p(x)$ of $q(x)$ to another 
disjunct $p'(x)$ of $q(x)$;
\item every homomorphism from any disjunct $p(x)$ into itself is surjective.
\end{itemize}
Take a disjunct $p(x)$ of $q(x)$ which is not f-acyclic or not connected. We consider the case that $p(x)$ is not 
f-acyclic but connected. The case that $p(x)$ is not connected is straightforward. We have a cycle
$r_{0}(x_{0},x_{1}),\ldots,r_{n-1}(x_{n-1},x_{n})$ in $p(x)$
such that $\text{FC}_{q}(x) \cap \{x_{0},\ldots,x_{n-1}\}=\emptyset$ and  
\begin{enumerate}
\item $r_{i}$ or $r_{i}^{-}$ is not functional w.r.t.~$\Tmc$ for some $i<n$ or
\item there exists no functional path $y_{0},\ldots,y_{m}$ in $p(x)$ with $x_{0}=y_{0}=y_{m}$ such that
$\{x_{0},\ldots,x_{n-1}\}\subseteq \{y_{0},\ldots,y_{m}\}$.
\end{enumerate}
The basic idea of the proof for both Point~1 and Point~2 is as follows. Assume there exists a 
a rewriting $(\Tmc,\Sigma_{\sf full},C(x))$ of $(\Tmc,\Sigma_{\sf full},q(x))$. 
\begin{itemize}
\item[(a)] Using the CQ $p(x)$ we construct an infinite ABox $\Amc$ such that $\Tmc,\Amc\not\models q(a)$; 
\item[(b)] By compactness of FO there exists a forest model $\Imc$ of $\Tmc$ and $\Amc$ with $\Imc\models \neg C(a)$; 
\item[(c)] Then $\Imc^{\text{ue}}\models \neg C(\Umf_{a})$ by Lemma~\ref{lem:anti-preservation};
\item[(d)] Moreover, $\Imc^{\text{ue}}$ is a model of a finite ABox $\Amc'$ with individual name $\Umf_{a}$ 
such that $\Tmc,\Amc'\models q(\Umf_{a})$. But this contradicts $\Imc^{\text{ue}}\models \neg C(\Umf_{a})$.
\end{itemize}
For Point~1, the construction of $\Amc$ is a variant of a construction 
given in~\cite{DBLP:journals/japll/KikotZ13}.
Consider the cycle $r_{0}(x_{0},x_{1}),\ldots,r_{n-1}(x_{n-1},x_{n})$ in $p(x)$ such that
for some $i$ neither $r$ nor $r^{-}$ are functional w.r.t.~$\Tmc$. We may assume that $i=0$.
Let $V$ be the connected component of $\{x_{0},\ldots,x_{n-1}\}$ in $\text{nFC}_{p}(x)$. Regard
the variables of $p(x)$ as individual names. Define an ABox $\Amc$ with individuals
$$
\text{FC}_{p}(x) \cup (\text{nFC}_{p}(x)\setminus V) \cup (V\times \mathbb{N})
$$
by setting:
\begin{itemize}
\item for all concept names $A$ and variables $y\in \text{FC}_{p}(x) \cup (\text{nFC}_{p}(x)\setminus V)$: $A(y)\in \Amc$ iff $A(y)$ is in $p(x)$;
\item for all concept names $A$, variables $y\in V$, and $i\in \mathbb{N}$: $A(y,i)\in \Amc$ iff $A(y) \in q(x)$;
\item for all roles $r$ and variables $y,z\in \text{FC}_{p}(x) \cup (\text{nFC}_{p}(x)\setminus V)$: $r(y,z)\in \Amc$ iff $r(y,z)$ is in $p(x)$;
\item for all roles $r$, variables $y\in \text{FC}_{p}(x) \cup (\text{nFC}_{p}(x)\setminus V)$ and $z\in V$, and 
$i\in \mathbb{N}$: $r(y,(z,i))\in \Amc$ iff $r(y,z)$ is in $p(x)$;
\item for all roles $r$, variables $y,y'\in V$, and $i\in \mathbb{N}$: $r((y,i),(y',i))\in \Amc$ iff $r(y,y')$ is in $p(x)$ and $r_{0}(x_{0},x_{1})\not=r(y,y')$;
\item for all roles $r$, variables $y,y'\in V$, and $i\not=j\in \mathbb{N}$, $r((y,i),(y',j))\in \Amc$ iff $i<j$ and $r_{0}(x_{0},x_{1})=r(y,y')$.
\end{itemize}
We now check that $\Amc$ satisfies Points~(a) to~(d).

\medskip

Point~(a). Clearly all $r$ functional w.r.t.~$\Tmc$ are functional in $\Amc$. Thus, $\Tmc,\Amc\models q(x)$
iff $\emptyset,\Amc\models p'(x)$ for some disjunct $p'(x)$ of $q(x)$. It therefore suffices to show that there
is no homomorphism from any disjunct $p'(x)$ of $q(x)$ to $\Amc$ mapping $x$ to $x$. It has been observed
in~\cite{DBLP:journals/japll/KikotZ13} already that the mapping $\pi$ from $\Amc$ to $p(x)$ mapping every variable 
to itself and every $(y,i)\in V \times \mathbb{N}$ to $y$ is a homomorphism. Using this observation it has been shown 
that there is no homomorphism from $p(x)$ to $\Amc$ as the composition of $\pi$ with such a homomorphism would be a non-surjective
homomorphism mapping $x$ to $x$ which contradicts our assumptions. It also follows that there is no homomorphism from
any disjunct $p'(x)$ of $q(x)$ distinct from $p(x)$ to $\Amc$ as the composition of $\pi$ with such a homomorphism would be a
homomorphism from $p'(x)$ to $p(x)$ mapping $x$ to $x$.

\medskip

Point~(b). Assume there exists no model of $\Tmc$ and $\Amc$ with $\Imc\models \neg C(x)$. Then, by compactness,
there exists a finite subset $\Amc'$ of $\Amc$ such that there exists no model of $\Tmc$ and $\Amc'$ with 
$\Imc\models \neg C(x)$. But then $\Tmc,\Amc'\models q(x)$ which contradicts Point~(a) and the assumption that 
$(\Tmc,\Sigma_{\sf full},C(x))$ is a rewriting of $(\Tmc,\Sigma_{\sf full},q(x))$.
  
\medskip

Point~(c). This is by Lemma~\ref{lem:anti-preservation}.

\medskip

Point~(d). It follows directly from~\cite{DBLP:journals/japll/KikotZ13} that $\Imc^{\text{ue}}$ contains a 
homomorphic image of $p(x)$ under a homomorphism mapping
$x$ to $\Umf_{x}$. Regard this image as an ABox $\Amc'$. Then $\Tmc,\Amc'\models q(\Umf_{a})$.
This finishes the proof if Point~1 holds.
 
\bigskip

Now suppose that Point~2 does not hold. Thus, there exists no functional path $y_{0},\ldots,y_{m}$ in $p(x)$ with 
$x_{0}=y_{0}=y_{m}$ such that $\{x_{0},\ldots,x_{n-1}\}\subseteq \{y_{0},\ldots,y_{m}\}$.
%Call a set $W\subseteq \text{nFC}_{p}(x)$ \emph{functionally closed} if it contains $\text{FC}_{p}(y)$ for all $y\in V$.
%The functional closure of a set $W$ is the smallest functionally closed set containing $W$. 
Let $p^{n}=p|_{\text{nFC}_{p}(x)}$. We observe the following

\medskip
\noindent
\emph{Claim 1.} There exist $V_{1}\subseteq \text{nFC}_{p}(x)$ of the form $V_{1}= \text{FC}_{p^{n}}(x')$  
for some $x'$ in $p(x)$ such that for $V_{2}:= \text{nFC}_{q}(x)\setminus V_{1}$ there are  
$y_{1},y_{2}\in V_{2}$ such that there is a path from $y_{1}$ to $y_{2}$ in $V_{2}$ and $z_{1},z_{2}\in V_{1}$ 
such that there are distinct $s_{1}(y_{1},z_{1}),s_{2}(y_{2},z_{2})\in q(x)$ with $s_{1},s_{2}$ functional w.r.t.$\Tmc$.

\medskip
For the proof of Claim~1 take the cycle $r_{0}(x_{0},x_{1}),\ldots,r_{n-1}(x_{n-1},x_{n})$ in $p(x)$.
There must exist $x_{i}$ such that some $x_{j}$ with $x_{i}\not= x_{j}$ is not in $\text{FC}_{p}(x_{i})$.
Let $V_{1}=\text{FC}_{p^{n}}(x_{i})$. Then we find $x_{j}\in \text{FC}_{p^{n}}(x_{i})$ such that 
$x_{j+1}\not\in \text{FC}_{p^{n}}(x_{i})$
and we find a path (possibly of length $0$) from $x_{j+1}$ to some $x_{j'}$ within $V_{2}$ such that there
exist $r'$ and $x_{j''}\in V_{1}$ with $r'(x_{j'},x_{j''})\in p(x)$. Then $s_{1}(y_{1},z_{1}):= r_{j}^{-}(x_{j+1},x_{j})$
and $s_{2}(y_{2},z_{2}):=r'(x_{j'},x_{j''})$ are as required.

\medskip

We define an ABox $\Amc$ with individual names 
$$
\text{FC}_{p}(x) \cup (V_{1}\times \mathbb{N}) \cup (V_{2} \times I)
$$
where
$$
I = \{ (\beta,E) \mid E\subseteq \mathbb{N}, |E|=|\mathcal{S}|, \beta: E \rightarrow \mathcal{S} \text{ bijective }\}
$$
and
$$
\mathcal{S}= \{ r(y,z)\in p(x) \mid z\in V_{1}, y\in V_{2}\}
$$
% \quad \mathcal{S}'= \{ (d,R) \mid (d,R,c)\in \mathcal{S}\}
as follows:
\begin{itemize}
\item for all concept names $A$ and variables $y\in \text{FC}_{p}(x)$: $A(y)\in \Amc$ iff $A(y)$ is in $p(x)$;
\item for all roles $r$ and variables $y,z\in \text{FC}_{p}(x)$: $r(y,z)\in \Amc$ iff $r(y,z)$ is in $p(x)$;
\item for all concept names $A$, variables $y\in V_{1}$, and $i\in \mathbb{N}$: $A(y,i)\in \Amc$ iff $A(y)$ is in $p(x)$;
\item for all concept names $A$, variables $y\in V_{2}$, and $i\in I$: $A(y,i)\in \Amc$ iff $A(y)$ is in $p(x)$;
\item for all roles $r$, variables $y,z\in V_{1}$ and $i\in \mathbb{N}$: $r((y,i),(z,i))\in \Amc$ iff $r(y,z)$ is in $p(x)$;
\item for all roles $r$, variables $y,z\in V_{2}$ and $i\in I$: $r((y,i),(z,i))\in \Amc$ iff $r(y,z)$ is in $p(x)$; 
\item for all roles $r$, variables $y\in \text{FC}_{p}(x)$ and $z \in V_{1}$, and $i\in \mathbb{N}$: $r(y,(z,i))\in \Amc$ iff $r(y,z)$ is in $p(x)$;
\item for all roles $r$, variables $y\in \text{FC}_{p}(x)$ and $z \in V_{2}$, and $i\in I$: $r(y,(z,i))\in \Amc$ iff $r(y,z)$ is in $p(x)$;
\item for all roles $r$, $(y,(\beta,E))\in (V_{2} \times I)$ and $(z,i) \in V_{1}\times \mathbb{N}$: $r((y,(\beta,E)),(z,i))\in \Amc$ iff 
$i\in E$ and $\beta(i)=r(y,z)$.
\end{itemize}
This construction of the ABox $\Amc$ achieves the following:
\begin{itemize}
\item for all copies $V_{2}'$ of $V_{2}$ and any $r(y,z)\in \mathcal{S}$, there is a copy $V_{1}'$ of $V_{1}$
such that $r(y',z')\in \Amc$ for the copies $y',z'$ of $y$ and $z$ in $V_{2}'$ and $V_{1}'$, respectively;
\item for all copies $V_{2}'$ of $V_{2}$ and copies $V_{1}'$ of $V_{1}$ there is at most one atom $r(y,z)\in \Amc$
with $y\in V_{2}$ and $z\in V_{1}$;
\item Let $V_{2}^{1},\ldots,V_{2}^{n}$ be distinct copies of $V_{2}$ and $r_{1}(y_{1},z_{1}),\ldots,r_{n}(y_{n},z_{n})$
be distinct atoms in $\mathcal{S}$. Then there is a single copy $V_{1}'$ of $V_{1}$ such that $r_{1}(y_{1}',z_{1}^{1}),
\ldots,r_{n}(y_{n}',z_{n}^{n})\in \Amc$ for the copies $y_{1}',\ldots,y_{n}'$ of $y_{1},\ldots,y_{n}$ in $V_{2}^{1},\ldots,V_{2}^{n}$,
respectively, and the copies $z_{1}^{1},\ldots,z_{n}^{n}$ of $z_{1},\ldots,z_{n}$ in $V_{1}'$.  
\end{itemize} 
We first show Point~(a) above.

\medskip
Point~(a). $\Tmc,\Amc\not\models q(a)$. 

\medskip
Proof of Point~(a). By construction, all $r$ functional w.r.t.~$\Tmc$ are functional in $\Amc$. Thus, $\Tmc,\Amc\models q(x)$
iff $\emptyset,\Amc\models p'(x)$ for some disjunct $p'(x)$ of $q(x)$. It therefore suffices to show that there
is no homomorphism from any disjunct $p'(x)$ of $q(x)$ to $\Amc$ mapping $x$ to $x$. Consider the mapping 
$$
\pi: \Amc \rightarrow p(x)
$$
mapping every variable in $\text{FC}_{p}(x)$ to itself and
every $(y,i)\in (V_{1}\times \mathbb{N}) \cup (V_{2} \times I)$ to $y$.
It is easy to see that $\pi$ is a homomorphism. It also follows that there is no homomorphism from
any disjunct $p'(x)$ of $q(x)$ distinct from $p(x)$ to $\Amc$ as the composition of $\pi$ with such a homomorphism would be a
homomorphism from $p'(x)$ to $p(x)$ mapping $x$ to $x$.
It remains to prove that there is no homomorphism from $p(x)$ to $\Amc$ mapping $x$ to $x$.
Assume there is such a homomorphism $h$. Then $\pi\circ h$ is a homomorphism from $p(x)$ to $p(x)$ mapping $x$ to $x$.
We obtain a contradiction if we can show that $\pi \circ h$ is not surjective.
To this end assume that $\pi\circ h$ is surjective. As $p(x)$ is finite, it is an isomorphism.
Let $h[p(x)]=\{ h(y) \mid y \in p(x)\}$ be the image of $h$ in $\Amc$.
Then $h$ is an isomorphism from $p(x)$ onto the restriction $\Amc|_{h[p(x)]}$ of $p(x)$ to $h[p(x)]$ 
and the restriction $\pi|_{h[p(x)]}$ of $\pi$ to $h[p(x)]$ is an isomorphism onto $p(x)$. 
It follows that $h[p(x)]$ contains for every $y$ in $p(x)$ exactly one individual $a$ with $\pi(a)=y$
and 
\begin{itemize}
\item $h[p(x)]$ contains $\text{FC}_{p}(x)$;
\item as $V_{1}$ is connected, there exists $i\in \mathbb{N}$ such that $h([p(x)])\supseteq V_{1}\times \{i\}$ and
$h([p(x)])\cap V_{1}\times \{j\}=\emptyset$ for all $j\not=i$;
\item for every connected component $V$ of $V_{2}$ of there exists $i\in I$ such that 
$h([p(x)])\supseteq V\times \{i\}$ and
$h([p(x)])\cap V \times \{j\}=\emptyset$ for all $j\not=i$. 
\end{itemize}
Now recall that there there are distinct atoms $s_{1}(y_{1},z_{1})\in p(x)$ and $s_{2}(y_{2},z_{2})\in p(x)$
such that $y_{1},y_{2}$ are in the same connected component in $V_{2}$ and $z_{1},z_{2}\in V_{1}$.
Thus, there exists $i\in \mathbb{N}$ such that $(y_{1},i),(y_{2},i)\in h[p(x)]$ and for some $j\in I$,
$s_{1}((y_{1},i),(z_{1},j))\in \Amc$ and $s_{2}((y_{2},i),(z_{2},j))\in \Amc$. As observed above, no
such two atoms exist in $\Amc$ and we have derived a contradiction.
This finishes the proof of Point~(a).

\begin{example}
Let $\Tmc=\{{\sf func}(s_{1}),{\sf func}(s_{2})\}$ and
consider the CQ
$$
q(x) = \exists y,z(r(x,y) \wedge s_{1}(y,z) \wedge s_{2}(y,z))
$$
Then
$$
\text{FC}_{q}(x)=\{x\}, \quad V_{1}=\{z\}, \quad V_{2}=\{y\}
$$
Thus, 
$$
\mathcal{S} = \{ s_{1}(y,z),s_{2}(y,z)\}
$$
and so the individuals of $\Amc$ are
$$
\{x\} \cup (\{z\} \times \mathbb{N}) \cup (\{y\} \times I)
$$
and the essential properties of $\Amc$ are:
\begin{itemize}
\item $r(x,(y,i))\in \Amc$ for all $i\in I$;
\item for every $(y,i)\in I$ there are distinct $(z,i_{1}),(z,i_{2})$ with
$s_{1}((y,i),(z,i_{1})),s_{2}((y,i),(z,i_{2}))\in \Amc$;
\item for any two $(z,i_{1}),(z,i_{2})$ there exists $(y,i)$ such that
$s_{1}((y,i),(z,i_{1})),s_{2}((y,i),(z, i_{2}))\in \Amc$.
\end{itemize}
Observe that $\Amc\not\models q(x)$.
\end{example}
Points~(b) and~(c) are as before. It remains to show Point~(d). Let $\Imc$ be a model of $\Tmc$ and $\Amc$ with 
$\Imc\models \neg C(a)$ and consider the ultrafilter extension $\Imc^{\text{ue}}$. We define a homomorphism $h$ from
$p(x)$ to $\Imc^{\text{ue}}$ mapping $x$ to $\mathfrak{U}_{x}$. For every $y\in \text{FC}_{p}(x)$, we set $h(y)=\mathfrak{U}_{y}$. To define $h$ for the remaining variables, fix a non-principal ultrafilter $\mathfrak{N}$ over 
$\mathbb{N}$. For every variable $z\in V_{1}$ we obtain an ultrafilter $\mathfrak{N}(z)$ over $\Delta^{\Imc}$ by setting 
$U\in \mathfrak{N}(z)$ iff $\{i \mid (z,i)\in U\cap (V_{1}\times \mathbb{N})\}\in \mathfrak{N}$. Observe that for $z_{1},z_{2}\in V_{1}$, $r(z_{1},z_{2})\in p(x)$ implies 
$(\mathfrak{N}(z_{1}),\mathfrak{N}(z_{2}))\in r^{\Imc^{\text{ue}}}$. 
Observe as well that for $y\in \text{FC}_{p}(x)$ and $z\in V_{1}$,
$r(y,z)\in p(x)$ implies $(\mathfrak{U}_{y},\mathfrak{N}(z))\in r^{\Imc^{\text{ue}}}$.
We set $h(z)= \mathfrak{N}(z)$ for $z\in V_{1}$. It remains to define $h$ for variables in $V_{2}$.

Let for $y\in V_{2}$, and $X\subseteq \Delta^{\Imc}$, $\rho_{y}(X)=\{ i \in I \mid (y,i)\in X\}$.
By construction, the set
$$
\mathfrak{X}= 
\{ \rho_{y}(\exists r.Z)^{\Imc}\mid \text{ $r(y,z)\in \mathcal{S}$ and $Z\in \mathfrak{N}(z)$} \}
$$
has the finite intersection property. Thus, there exists an ultrafilter $\mathfrak{I}$ over $I$ containing $\mathfrak{X}$.
For every variable $y\in V_{2}$ we obtain an ultrafilter $\mathfrak{I}(y)$ over $\Delta^{\Imc}$ by setting $U\in \mathfrak{I}(y)$
iff $\{i \mid (y,i)\in U \cap (V_{2}\times I)\}\in \mathfrak{I}$. Observe that for $y_{1},y_{2}\in V_{1}$, $r(y_{1},y_{2})\in p(x)$ implies 
$(\mathfrak{I}(y_{1}),\mathfrak{I}(y_{2}))\in r^{\Imc^{\text{ue}}}$. Observe as well that for $y\in \text{FC}_{p}(x)$ and $y'\in V_{2}$,
$r(y,y')\in p(x)$ implies $(\mathfrak{U}_{y},\mathfrak{I}(y'))\in r^{\Imc^{\text{ue}}}$. 
Finally, observe that by construction $r(y,z)\in \mathcal{S}$ implies that $(\mathfrak{I}(y),\mathfrak{N}(z))\in r^{\Imc^{\text{ue}}}$.
We set $h(y) = \mathfrak{I}(y)$ for $y\in V_{2}$.
It follows that $h$ is a homomorphism from $p(x)$ to $\Imc^{\text{ue}}$, as required.

Now let $\Amc'$ be the image of $p(x)$ under $h$, regarded as an ABox. Then 
$\Tmc,\Amc'\models q(\mathfrak{U}_{x})$, as required.

The proof when $\mathcal{ALCI}$-IQ is replaced with $\mathcal{ALCI}^{u}$-IQ and connectedness is dropped
is a straightforward variation of the proof above.
\end{proof}

\thmUndecALCF*
\noindent
\begin{proof}
	We use a reduction from emptiness checking in $(\mathcal{ALCF},\text{AQ})$ which is known to be 
	undecidable~\cite{DBLP:journals/jair/BaaderBL16}. Let $Q=(\Tmc, \Sigma, A(x))$ be an OMQ from this language
	and let $q(x)=\exists y \, A(x) \wedge r(x, y) \wedge r(y,y)$, where $r$ is a role name that does not occur in $\Tmc$. 
	We show that $Q$ is empty iff $Q'=(\Tmc, \Sigma\cup \{r\}, q(x))$ is IQ-rewritable.
	Clearly, if $Q$ is empty, then $Q'$ is empty and, therefore, IQ-rewritable. Conversely, if $Q$ is not empty, then we show that $Q'$ is not IQ-rewritable. To this end we take a $\Sigma$-ABox $\Amc_{0}$ and $a\in {\sf ind}(\Amc)$ such that $\Amc_{0}\models Q(a)$ and $\Amc_{0}$
	is consistent with~$\Tmc$. Assume for a proof by contradiction that there is a rewriting $Q''=(\Tmc',\Sigma\cup\{r\},C(x))$ of $Q'$.
	We use a minor modification of the construction in the proof of Theorem~\ref{thm:f}:
    \begin{itemize}
		\item[(a)] Using the CQ $q(x)$ and $\Amc_{0}$ we construct an infinite $(\Sigma\cup\{r\})$-ABox $\Amc\supseteq \Amc_{0}$ such that $\Amc\not\models Q'(a)$; 
		\item[(b)] By compactness of FO there exists a forest model $\Imc$ of $\Tmc'$ and $\Amc$ with $\Imc\models \neg C(a)$; 
		\item[(c)] Then $\Imc^{\text{ue}}\models \neg C(\Umf_{a})$ and $\Imc\models \Tmc'$ by Lemma~\ref{lem:anti-preservation};
		\item[(d)] Moreover, $\Imc^{\text{ue}}$ is a model of a finite $(\Sigma\cup \{r\})$-ABox $\Amc'$ with individual name $\Umf_{a}$ 
		such that $\Amc'\models Q'(\Umf_{a})$. But this contradicts $\Imc^{\text{ue}}\models \neg C(\Umf_{a})$.
	\end{itemize}
	The construction of $\Amc$ is the same as in Point~1 of the proof of Theorem~\ref{thm:f} above using the query $q(x)$ and its cycle $r(y,y)$ except that we also attach $\Amc_{0}$ to the ABox constructed by identifying $x$ and $a$.
	Now the proof of Points~(a) to (d) is exactly as before.
\end{proof}
\section{Proofs for Section 6}

We start with definitions of MMSNP and CSP together with some
preliminaries. We consider signatures \Sbf that consist of predicate
symbols with unrestricted arity, known as \emph{schemas}.  An
\emph{\Sbf-fact} is an expression of the form $S(a_1,\ldots, a_n)$
where $S\in\Sbf$ is an $n$-ary predicate symbol, and $a_1, \ldots,
a_n$ are elements of some fixed, countably infinite set $\mn{const}$
of \emph{constants}. An \emph{$\Sbf$-instance} $I$ is a set of
$\Sbf$-facts. The \emph{domain} of $I$, denoted $\adom(I)$, is the set
of constants that occur in some fact in $I$.  The notions of cycles,
girth, acyclicity, and connectedness can be lifted from ABoxes to
$\Sbf$-instances, for details see \cite{DBLP:conf/icdt/FeierKL17}.  We
use $\mn{CON}_I$ to denote the set of $\Sbf$-instances that are the
maximal connected components of the \Sbf-instance $I$.

An \emph{MMSNP sentence $\phi$ over schema~$\Sbf$} has the form
$$
\exists X_1 \cdots \exists X_n \forall x_1 \cdots \forall x_m\,
\psi,
$$
with $X_1, \ldots X_n$ monadic second-order (SO) variables, $x_1,
\ldots, x_m$ first-order (FO) variables, and $\psi$ a conjunction of
quantifier-free formulas of the form 
\begin{equation}
\alpha_1 \wedge \cdots \wedge \alpha_n \rightarrow \beta_1 \vee \cdots
\vee \beta_m \text{ with } n,m \geq 0, 
\tag{$\dagger$}
\end{equation}
where each $\alpha_i$ is of the form $X_i(x_j)$ or $R(\bf{x})$ with
$R$ from $\Sbf$, and each $\beta_i$ is of the form $X_i(x_j)$. We
refer to a formula of the form $(\dagger)$ as a \emph{rule} in
$\phi$, to the % The set of all rules in $\phi$ is denoted
% $\mn{rules}(\phi)$.
conjunction $ \alpha_1 \wedge \cdots \wedge \alpha_n$ as its
\emph{body}, and to $\beta_1 \vee \cdots \vee \beta_m$ as its
\emph{head}. A rule body can be seen as an $\Sbf \cup \{X_1, \ldots,
X_n\}$-instance in the obvious way, which we shall sometimes do
implicitly.  An MMSNP sentence $\phi$ is \emph{connected} if the body
of every rule in $\phi$ is connected. The \emph{rule size} of $\phi$
is the maximum size of a rule in $\phi$.

Every MMSNP sentence~$\phi$ can be seen as a Boolean query in the
obvious way, that is, for an $\Sbf$-instance~$I$, $I \models \phi$
whenever $\phi$ evaluates to true on $I$.  We also consider
\emph{disjunctions of MMSNP sentences over schema ~\Sbf}, that is, sentences
of the form $\bigvee_i \phi_i$, where each $\phi_i$ is an MMSNP
sentence over $\Sbf$. For an $\Sbf$-instance~$I$, $I \models \bigvee_i
\phi_i$, whenever there exists some~$i$ such that $\phi_i$ evaluates
to true on $I$.
An MMSNP sentence $\phi_1$ over $\Sbf$ is
\emph{contained} in an MMSNP sentence $\phi_2$ over $\Sbf$,
written $\phi_1 \subseteq \phi_2$, if for every $\Sbf$-instance $I$,
$I \models \phi_1$ implies $I \models \phi_2$. We say that $\phi_1$
and $\phi_2$ are \emph{equivalent} if $\phi_1 \subseteq \phi_2$ and
$\phi_2 \subseteq \phi_1$. Containment and equivalence are defined
in the same way for disjunctions of MMSNP sentences.

A \emph{constraint satisfaction problem (CSP)} is defined by an
$\Sbf$-instance~$T$ that is called the \textit{template}. The problem
associated with $T$ is to decide whether an input $\Sbf$-instance~$I$
admits a homomorphism to $T$, denoted $I \rightarrow T$. An MMSNP
sentence~$\phi$ over schema~$\Sbf$ is said to be \emph{CSP-definable}
if there exists an $\Sbf$-template~$T$ such that for every
$\Sbf$-instance~$I$, $I \models \phi$ iff $I \rightarrow T$.
%
% Towards establishing the complexity of deciding CSP-definability of 
% MMSNP sentences, we provide a characterization of CSP-definable MMSNP
% sentences in terms of rewritability into an extension of CSPs called
% generalized CSPs and preservation under disjoint unions.
%
A \emph{generalized CSP} over schema~$\Sbf$ is defined by a finite set
$\Tsf=\{T_1, \ldots T_n\}$ of \Sbf-templates. For an $\Sbf$-instance
$I$, we write $I \rightarrow \Tsf$ if $I \rightarrow T_i$ for some
$i$. An MMSNP sentence $\phi$ over schema $\Sbf$ is 
\emph{definable by a generalized CSP $\Tsf$} if for every
$\Sbf$-instance $I$, $I \models \phi$ iff $I \rightarrow \Tsf$.

For two $\Sbf$-instances~$I_1$ and $I_2$ with disjoint domains, we use
$I_1 \uplus I_2$ to denote the disjoint union of $I_1$ and
$I_2$.\footnote{We
do not assume here that $I_1$ and $I_2$ contain the same nullary 
predicate symbols; $I_1 \uplus I_2$ contains the union of the
nullary symbols in $I_1$ and $I_2$.} An
MMSNP sentence $\phi$ is \emph{preserved under disjoint union} if for
all $\Sbf$-instances $I_1$ and $I_2$ (with disjoint domains), $I_1
\models \phi$ and $I_2 \models \phi$ implies $I_1 \uplus I_2 \models
\phi$.

\begin{lemma}
\label{lem:MMSNPRewrCh} 
An MMSNP sentence $\phi$ is CSP-definable iff it is definable by a
generalized CSP and preserved under disjoint union.
\end{lemma}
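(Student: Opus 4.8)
The plan is to prove the two implications separately; the ``only if'' direction is immediate and the ``if'' direction carries the content. For ``only if'', suppose $\phi$ is CSP-definable by a template $T$. Then the singleton $\{T\}$ is a generalized CSP defining $\phi$, so $\phi$ is definable by a generalized CSP. And $\phi$ is preserved under disjoint union: given homomorphisms $h_i$ from $I_i$ to $T$ ($i\in\{1,2\}$) with $I_1,I_2$ having disjoint domains, $h_1\cup h_2$ is a homomorphism from $I_1\uplus I_2$ to $T$, since every fact of $I_1\uplus I_2$ that mentions at least one element is a fact of $I_1$ or of $I_2$, while the nullary facts of $I_1\uplus I_2$ are exactly those of $I_1$ together with those of $I_2$, all of which are in $T$.

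For ``if'', assume $\phi$ is preserved under disjoint union and definable by a generalized CSP $\Tsf=\{T_1,\dots,T_n\}$; I first treat the case where the schema $\Sbf$ has no nullary predicates. The candidate template is $T:=T_1\uplus\cdots\uplus T_n$, and I claim $I\models\phi$ iff $I\to T$. For ``$\Rightarrow$'': $I\models\phi$ yields $I\to T_i$ for some $i$, and $T_i$ embeds into $T$. For ``$\Leftarrow$'': fix a homomorphism from $I$ to $T$ and write $\mn{CON}_I=\{I_1,\dots,I_m\}$. Each $I_j$ is connected, so its image is connected and hence contained in a single connected component of $T$, i.e.\ in some $T_i$; thus $I_j\to\Tsf$ and so $I_j\models\phi$. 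Since $I=I_1\uplus\cdots\uplus I_m$ and preservation under binary disjoint union gives preservation under finite disjoint union (the empty instance maps into every $T_i$, hence satisfies $\phi$ anyway), we conclude $I\models\phi$. This exhibits $T$ as a witness for CSP-definability.

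The one genuinely delicate point is nullary predicates, where the naive $T:=\biguplus_i T_i$ fails: if a nullary fact $P()$ occurs in some but not all $T_i$, then $T$ contains $P()$ and accepts instances with $P()$ whose non-nullary part maps to a $T_i$ lacking $P()$, i.e.\ instances outside the generalized CSP. The remedy is to first normalize $\Tsf$ so that all templates carry the same nullary facts. Let $\mathcal{M}$ be the family of sets $N$ of nullary predicates realized as the exact nullary part of some model of $\phi$. Definability by a generalized CSP forces $\mathcal{M}$ to be downward closed (deleting a nullary fact from a model yields an instance homomorphic to it), and preservation under disjoint union forces $\mathcal{M}$ to be union-closed; as there are finitely many nullary predicates, either $\phi$ is unsatisfiable (a case handled directly, the empty template working) or $\mathcal{M}=2^{N^{*}}$ for its greatest element $N^{*}$. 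Replacing each $T_i$ by $T_i\cup\{P()\mid P\in N^{*}\}$ then yields a generalized CSP still defining $\phi$ --- here one uses $\mathcal{M}=2^{N^{*}}$ together with preservation under disjoint union to recombine the nullary-free part of an instance with its nullary facts --- and now all templates have nullary part exactly $N^{*}$, so their disjoint union has nullary part $N^{*}$ and the no-nullary argument applies verbatim to the nullary-free parts. I expect this normalization to be the fiddliest step; everything else is routine.
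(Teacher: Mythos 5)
Your proof is correct, but it takes a genuinely different and more laborious route than the paper's. For the nontrivial direction you build the disjoint union $T=T_1\uplus\cdots\uplus T_n$ as an explicit template and verify the defining equivalence directly, decomposing an input instance into connected components; this is what forces the separate normalization step for nullary predicates (your analysis of the realizable nullary parts as a downward- and union-closed family is correct as sketched, but it is extra work). The paper instead applies preservation under disjoint union to the templates themselves: each $T_i$ maps into $\Tsf$ and hence satisfies $\phi$, so $\biguplus_{1\leq i\leq n}T_i\models\phi$, so $\biguplus_{1\leq i\leq n}T_i\to T_j$ for some $j$; restricting this homomorphism gives $T_i\to T_j$ for every $i$, whence the generalized CSP $\Tsf$ collapses to the single CSP $T_j$. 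That two-line argument shows one of the \emph{given} templates already works and sidesteps both the connected-component decomposition and the nullary-predicate complications entirely. One small caveat on your version: the remark that an unsatisfiable $\phi$ is ``handled directly, the empty template working'' is not right, since the empty instance maps to every template (including the empty one); a sentence falsified even by the empty instance is definable by the empty generalized CSP and vacuously preserved under disjoint union but is not CSP-definable. This degenerate case is equally ignored by the paper (whose proof tacitly assumes $\Tsf\neq\emptyset$), so it does not distinguish the two arguments, but you should not claim to have covered it.
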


\noindent
\begin{proof}
  ``$\Rightarrow$''. Clear. % Assume that $\phi$ is rewritable into a CSP $T$. Then,
  % $\phi$ is obviously rewritable also into a generalized CSP. It is
  % also preserved under disjoint union, as for all
  % $\Sbf$-instances~$I_1$ and $I_2$: $I_1 \rightarrow T$ and $I_2
  % \rightarrow T$ implies $I_1 \cup^* I_2 \rightarrow T$. 

  \smallskip

  ``$\Leftarrow$''. Assume that $\phi$ is definable by a
  generalized CSP $\Tsf=\{T_1, \ldots, T_n\}$ and preserved under
  disjoint union. For every $i$, $T_i \rightarrow \Tsf$ and thus $T_i
  \models \phi$. Assume w.l.o.g.\ that the domains of all templates in
  \Tsf are mutually disjoint. As $\phi$ is preserved under disjoint
  union, $\biguplus_{1 \leq i \leq n} T_i \models \phi$. Thus,
  $\biguplus_{1 \leq i \leq n} T_i \rightarrow \Tsf$ and consequently
  $\biguplus_{1 \leq i \leq n} T_i \to T_j$ for some $j$. This
  implies that $T_i \rightarrow T_j$ for every $i$ and thus the
  generalized CSP $\Tsf$ is equivalent to the CSP $T_j$, which
  finishes the proof.
\end{proof}

We next determine the complexity of deciding preservation under
disjoint union for MMSNP sentences. For our final aim, we only need
the upper bound, but we also observe a lower bound for the sake of
completeness. % For the lower bound, we state a preparatory lemma, which
% is a variation of Lemma \ref{lem:ToBeDone} for the case of BAQs.
% %
% \begin{lemma}
% \label{lem:BAQToCSP}
% Let $Q=(\Tmc,\Sigma,q())$ be an OMQ from $(\ALCI,\text{BAQ})$. Then there exists a  $\Sigma$-ABox $\Bmc$ such that for every $\Sigma$-ABox $\Amc$:   $\Amc \models Q$ iff $\Amc \not \rightarrow \Bmc$.
% \end{lemma} 
%
\begin{theorem}
\label{thm:cluduComplex}
Deciding whether an MMSNP sentence is preserved under disjoint union is 2\NExpTime-complete. 
\end{theorem}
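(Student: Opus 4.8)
The plan is to decide preservation under disjoint union by a polynomial-time reduction to \emph{containment} between MMSNP sentences, which is $2\NExpTime$-complete \cite{DBLP:conf/kr/BourhisL16}, and to obtain matching hardness by a reduction in the other direction. So the two halves of the proof are: (i) given an MMSNP sentence $\phi$, construct an MMSNP sentence $\psi$ of polynomial size with $\phi$ preserved under disjoint union iff $\psi\subseteq\phi$; (ii) given MMSNP sentences $\phi_1,\phi_2$, construct an MMSNP sentence $\phi$ of polynomial size with $\phi$ preserved under disjoint union iff $\phi_1\subseteq\phi_2$.

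For the upper bound (i), given $\phi$ over a schema $\Sbf$ (I will assume $\Sbf$ has no $0$-ary predicates; the general case only requires a few extra $0$-ary second-order variables to distribute the nullary facts, which behave like ``or'' under $\uplus$), I would take $\psi$ to be the MMSNP sentence, over the same schema, that defines exactly those $\Sbf$-instances $I$ admitting a decomposition $I=I_1\uplus I_2$ \emph{along connected components} with $I_1\models\phi$ and $I_2\models\phi$. Concretely, $\psi$ uses two fresh monadic second-order variables $Y_1,Y_2$ and two disjoint fresh copies of the second-order variables of $\phi$, with rules: $\rightarrow Y_1(x)\vee Y_2(x)$; $Y_1(x)\wedge Y_2(x)\rightarrow{}$ with empty head; for every $R\in\Sbf$ and all coordinates $i,j$, $R(\bar x)\wedge Y_1(x_i)\wedge Y_2(x_j)\rightarrow{}$ with empty head; and, for each rule $\rho$ of $\phi$ and each $\ell\in\{1,2\}$, the rule obtained from $\rho$ by renaming its second-order variables to their $\ell$-th copies and conjoining $Y_\ell(x)$ for every first-order variable $x$ of $\rho$. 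The first three groups force $Y_1,Y_2$ to partition $\adom(I)$ into two induced sub-instances $I_{Y_1},I_{Y_2}$ each of which is a union of connected components of $I$, so that $I=I_{Y_1}\uplus I_{Y_2}$; the last group says precisely that each $I_{Y_\ell}$, with the $\ell$-th copy of the colouring, satisfies all rules of $\phi$. Hence $I\models\psi$ iff such a decomposition exists. The equivalence ``$\phi$ preserved under disjoint union iff $\psi\subseteq\phi$'' is then immediate: if $\phi$ is preserved and $I\models\psi$, then $I=I_1\uplus I_2$ with $I_1,I_2\models\phi$, so $I\models\phi$; conversely, if $\psi\subseteq\phi$ and $A\models\phi$, $B\models\phi$ have disjoint domains, then $Y_1:=\adom(A)$, $Y_2:=\adom(B)$ witness $A\uplus B\models\psi$, hence $A\uplus B\models\phi$. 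Since $\psi$ has polynomially many rules each of polynomial size, the plain $2\NExpTime$ bound for MMSNP containment suffices and we do not even need the refined analysis of Theorem~\ref{thm:OMQcontainmentComplexRel}.

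For the lower bound (ii) I would reduce from MMSNP containment, which is $2\NExpTime$-hard \cite{DBLP:conf/kr/BourhisL16}; here one may assume $\phi_1,\phi_2$ connected, since connected sentences are automatically preserved under disjoint union and the reduction establishing hardness of containment can be taken to produce connected sentences. The idea is to build $\phi$ over $\Sbf$ extended by one fresh monadic predicate $M$ serving as a component marker, which constrains the $M$-marked union of components by a relativized copy of $\phi_1$, the $M$-free union of components by a relativized copy of $\phi_2$, and which in addition contains a \emph{disconnected} forbidden configuration that is violated exactly when an $M$-marked part is present and behaves like a counterexample to $\phi_1\subseteq\phi_2$ while an $M$-free part is also present; one then argues that every potential failure of preservation under disjoint union corresponds to such a counterexample, so that $\phi$ is preserved under disjoint union iff $\phi_1\subseteq\phi_2$. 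Alternatively, the hardness can be read off from the lower bound for CSP-definability (Theorem~\ref{thm:MMSNPRewr}): the sentences produced by that reduction are definable by a generalized CSP, so by Lemma~\ref{lem:MMSNPRewrCh} their CSP-definability coincides with preservation under disjoint union.

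The main obstacle is the lower bound, not the upper bound. The difficulty is that MMSNP rule bodies are \emph{positive} and, when connected, cannot span two summands of a disjoint union, so one cannot literally ``case-split on whether $\phi_1$ holds'' inside a single MMSNP sentence. Encoding $\phi_1\subseteq\phi_2$ into preservation under disjoint union therefore forces the use of disconnected rules, and the delicate part is designing them so that their only way to be violated is via a genuine counterexample to the containment, while checking that arbitrary (e.g.\ $M$-free, or oddly marked) instances never produce spurious violations that would break the equivalence in either direction.
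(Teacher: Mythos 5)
Your upper bound is essentially the paper's: the paper also introduces two fresh monadic colours ($C_1,C_2$ there, your $Y_1,Y_2$), forces them to partition the domain along connected components, and relativizes the rules of $\phi$ to each colour class, reducing preservation under disjoint union to containment checks against $\phi$. The one place where your version cuts a corner is the nullary predicates: MMSNP as defined here has only \emph{monadic} second-order variables, so your ``few extra $0$-ary second-order variables'' are not available in the syntax, and a monadic surrogate cannot be forced to be constant across a disconnected instance -- which is exactly the regime at issue. The paper's workaround is to enumerate all pairs $(\Nmc_1,\Nmc_2)$ of subsets of the nullary predicates, build one sentence $\phi_{\Nmc_1,\Nmc_2}$ per pair, and check the exponentially many containments $\phi\supseteq\phi_{\Nmc_1,\Nmc_2}$ (each between polynomial-size sentences, so still in 2\NExpTime). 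So your single-containment version is fine for nullary-free schemas but needs that extra machinery in general.

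The genuine gap is the lower bound. Your primary plan -- reduce from containment of \emph{connected} MMSNP sentences via a marker predicate $M$ and some disconnected forbidden configurations -- is only a sketch, and the obstacle you yourself identify (positive, connected bodies cannot span two summands, so one cannot case-split on $\phi_1$ inside a single sentence) is not overcome; no concrete rule set is given and no correctness argument is even attempted for the ``spurious violations'' you worry about. Your fallback is circular in this paper: the 2\NExpTime-hardness of CSP-definability (Theorem~\ref{thm:MMSNPRewr}) is itself proved \emph{by} the reduction used for the present theorem, and moreover the sentences produced there are not claimed to be definable by a generalized CSP in the ``no'' case, so Lemma~\ref{lem:MMSNPRewrCh} does not let you read off hardness of preservation from hardness of CSP-definability. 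What the paper actually does is reduce not from general MMSNP containment but from the specific torus-tiling instances of \cite{DBLP:conf/kr/BourhisL16}: these yield $Q_1\in(\ALCI,\text{BAQ})$ and $Q_2\in(\ALCI,\text{BCQ})$ whose complement-MMSNP sentences $\phi_1,\phi_2$ come with two promises -- $\phi_1$ is equivalent to a CSP, and $\phi_1\not\subseteq\phi_2$. One then forms a single MMSNP sentence $\phi$ with $I\models\phi$ iff $I\models\phi_1$ or $I\models\phi_2$ (union the non-$\mn{false}$ rules, and cross the $\mn{false}$-headed bodies pairwise). If $\phi_2\subseteq\phi_1$ then $\phi\equiv\phi_1$ is a CSP and hence preserved under disjoint union; if not, the two promises give $I_1\models\phi_1\setminus\phi_2$ and $I_2\models\phi_2\setminus\phi_1$, and $I_1\uplus I_2$ falsifies both disjuncts, refuting preservation. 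It is precisely these side conditions, unavailable when reducing from arbitrary containment instances, that make the reduction go through; without them your construction does not close.
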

\noindent
\begin{proof}
  For the upper bound, we reduce preservation under disjoint union to
  a series of (exponentially many) containment checks between MMSNP
  sentences (of polynomial size) and then invoke the result from
  \cite{DBLP:conf/kr/BourhisL16} that MMSNP containment can be decided
  in 2\NExpTime. 

  Let $\vp$ be an MMSNP sentence over schema \Sbf and $\mathcal{N}$
  the set of nullary predicate symbols in $\phi$. We assume w.l.o.g.\
  that the number of first-order variables in $\vp$ is bounded from
  below by the largest arity of a predicate in \Sbf. For all
  $\Nmc_1,\Nmc_2 \subseteq \Nmc$, we construct an MMSNP sentence
  $\phi_{\Nmc_1,\Nmc_2}$ as follows:

\begin{itemize}

\item $\phi_{\Nmc_1,\Nmc_2}$ has the same quantifiers as $\phi$ except
  for two new existentially quantified second-order variables $C_1$
  and $C_2$;

\item $\phi_{\Nmc_1,\Nmc_2}$ has the following rules: 

\begin{itemize}
\item $\mn{true} \to C_1(x) \vee C_2(x)$;
\item $C_1(x) \wedge C_2(x) \to \mn{false}$;
\item $R(y_1, \ldots, y_n) \wedge C_i(y_j) \to C_i(y_k)$ whenever $R \in
  \Sbf$ is $n$-ary, $i\in \{0, 1\}$, and $j,k \in \{1,\dots,n\}$ and
 where $y_1,\dots,y_n$ are the first $n$ FO variables in $\vp$;
\item $C_i(x_1)\wedge \cdots \wedge C_i(x_n) \wedge \mn{body} \to
  \mn{head}$ whenever $\mn{body} \to \mn{head}$ is a rule in $\vp$
  with no predicate symbol from $\Nmc \setminus \Nmc_i$ occuring in
  $\mn{body}$ and $i \in \{0,1\}$ and where $x_1, \ldots, x_n$ are the
  FO variables in $\mn{body}$.
\end{itemize}
\end{itemize}
Intuitively, an \Sbf-instance $I$ satisfies $\phi_{\Nmc_1,\Nmc_2}$ iff
there is a coloring of $I$ with the two colors $C_1$ and $C_2$ such
that elements from the same maximal connected components receive the
same color and each of the resulting two monochromatic subinstances of
$I$ satisfies $\phi$. Note that $I$ is the disjoint union of $I_1$ and
$I_2$. The sets $\Nmc_1,\Nmc_2$ help to disentangle the nullary
predicate symbols: $\mathcal{N}_1$ contains the predicates true in the
monochromatic subinstance colored $C_1$ and likewise for $\Nmc_2$ and
$C_2$.
\begin{claim}
$\phi$ is preserved under disjoint union iff $\phi \equiv \bigvee_{\Nmc_1,\Nmc_2 \subseteq \mathcal{N}} \phi_{\Nmc_1,\Nmc_2}$. 
\end{claim}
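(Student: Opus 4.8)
The plan is to prove the two implications of the biconditional separately, after observing that the claimed equivalence $\phi \equiv \bigvee_{\Nmc_1,\Nmc_2 \subseteq \Nmc}\phi_{\Nmc_1,\Nmc_2}$ splits into one containment that holds unconditionally and one that is exactly what preservation under disjoint union buys us. Throughout, I would exploit the correspondence between a colouring of $\adom(I)$ with $C_1,C_2$ that satisfies the colour-partition and propagation rules of $\phi_{\Nmc_1,\Nmc_2}$ and a decomposition $I = I_1 \uplus I_2$ into the $C_1$-coloured and $C_2$-coloured sub-instances, with the nullary facts distributed according to $\Nmc_1,\Nmc_2$: the propagation rules force every relational fact of $I$ to be monochromatic, and the rules of $\phi_{\Nmc_1,\Nmc_2}$ governing nullary predicates guarantee that $I_1 \uplus I_2$ genuinely reconstructs $I$.

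First I would show the unconditional containment $\phi \subseteq \bigvee_{\Nmc_1,\Nmc_2}\phi_{\Nmc_1,\Nmc_2}$. Given $I \models \phi$ with a witnessing second-order assignment $\sigma$, I pick $\Nmc_1 = \Nmc_2$ to be the set of nullary predicates that hold in $I$, colour every element of $\adom(I)$ with $C_1$, and keep $\sigma$ on the original second-order variables. The colour-partition and propagation rules hold trivially, the $C_2$-copies of $\phi$-rules are vacuous since no element is coloured $C_2$, and a $C_1$-copy of a rule $\mn{body}\to\mn{head}$ of $\phi$ is included only when $\mn{body}$ mentions no nullary predicate outside $\Nmc_1$, i.e.\ only nullary predicates true in $I$; hence any match of such a $\mn{body}$ in $I$ is a genuine match and $\sigma$ satisfies $\mn{head}$ because $I \models \phi$. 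Thus $I \models \phi_{\Nmc_1,\Nmc_1}$.

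For ``$\Rightarrow$'' I would then assume $\phi$ is preserved under disjoint union and prove $\bigvee_{\Nmc_1,\Nmc_2}\phi_{\Nmc_1,\Nmc_2} \subseteq \phi$. Let $I \models \phi_{\Nmc_1,\Nmc_2}$ with colouring $\adom(I)=D_1\uplus D_2$ and second-order assignment $\sigma$. By the correspondence above, $I = I_1 \uplus I_2$, where $I_j$ carries the relational facts on $D_j$ and the nullary facts in $\Nmc_j$. I claim $I_j \models \phi$ via $\sigma$ restricted to $D_j$: a match of a $\phi$-rule body in $I_j$ maps into $D_j$ and uses only nullary predicates from $\Nmc_j$, so the corresponding $C_j$-copy lies among the rules of $\phi_{\Nmc_1,\Nmc_2}$, and once extended by the trivially satisfied $C_j$-atoms it holds in $I$ under $\sigma$, which forces the head. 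Preservation under disjoint union now gives $I = I_1 \uplus I_2 \models \phi$. For ``$\Leftarrow$'', assume $\phi \equiv \bigvee_{\Nmc_1,\Nmc_2}\phi_{\Nmc_1,\Nmc_2}$, take $\Sbf$-instances $I_1,I_2$ with disjoint domains and $I_1,I_2 \models \phi$ with witnesses $\sigma_1,\sigma_2$, let $\Nmc_j$ be the nullary predicates of $I_j$, colour $\adom(I_j)$ with $C_j$, and set $\sigma = \sigma_1 \cup \sigma_2$. Since $I_1,I_2$ share no elements, no relational fact of $I_1 \uplus I_2$ spans both colours, so all colouring and propagation rules hold; a $C_j$-copy of a $\phi$-rule is matched only inside $I_j$, where $\sigma_j$ satisfies the head; and the nullary predicates of $I_1 \uplus I_2$ are exactly $\Nmc_1 \cup \Nmc_2$. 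Hence $I_1 \uplus I_2 \models \phi_{\Nmc_1,\Nmc_2}$, so $I_1 \uplus I_2 \models \bigvee_{\Nmc_1,\Nmc_2}\phi_{\Nmc_1,\Nmc_2} \equiv \phi$, which is preservation under disjoint union.

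The step I expect to be the main obstacle is the careful treatment of nullary predicates, which is exactly why the parameters $\Nmc_1,\Nmc_2$ appear in the construction. One must verify that a $\phi$-rule whose body mentions a nullary predicate absent from $I_j$ is correctly ``switched off'' on colour $C_j$ — achieved by including a $C_j$-copy only for bodies avoiding $\Nmc \setminus \Nmc_j$ — and that the decomposition $I = I_1 \uplus I_2$ genuinely recovers $I$ on the nullary part, which is why $\phi_{\Nmc_1,\Nmc_2}$ must exclude nullary predicates outside $\Nmc_1 \cup \Nmc_2$. Once this bookkeeping is pinned down, each of the rule-by-rule verifications indicated above is routine.
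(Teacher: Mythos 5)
Your argument follows the paper's proof almost step for step: the inclusion $\phi \subseteq \bigvee_{\Nmc_1,\Nmc_2}\phi_{\Nmc_1,\Nmc_2}$ is established unconditionally by colouring all of $\adom(I)$ with $C_1$ (the paper takes $\Nmc_2=\emptyset$ where you take $\Nmc_2=\Nmc_1$, which makes no difference), the converse inclusion is derived from the colouring-induced decomposition $I=I_1\uplus I_2$ together with preservation under disjoint union, and the ``$\Leftarrow$'' direction colours $I_1$ with $C_1$ and $I_2$ with $C_2$. Your rule-by-rule checks are correct and more explicit than the paper's.

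The one place where you overstate what the construction gives you is the claim that ``the rules of $\phi_{\Nmc_1,\Nmc_2}$ governing nullary predicates guarantee that $I_1\uplus I_2$ genuinely reconstructs $I$'' and that ``$\phi_{\Nmc_1,\Nmc_2}$ must exclude nullary predicates outside $\Nmc_1\cup\Nmc_2$.'' As defined, $\phi_{\Nmc_1,\Nmc_2}$ contains no such rules: a model $I$ of $\phi_{\Nmc_1,\Nmc_2}$ may make a nullary predicate $N\in\Nmc\setminus(\Nmc_1\cup\Nmc_2)$ true, in which case $I_1\uplus I_2$ (with $I_j$ carrying only the nullary facts in $\Nmc_j$) omits $N()$ and is not equal to $I$, so the last step of your ``$\Rightarrow$'' direction does not literally apply preservation under disjoint union to $I$ itself. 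The paper's own proof makes exactly the same silent identification ``$I$ is the disjoint union of $I_1$ and $I_2$,'' so this is not a divergence from the intended argument; to make it airtight one would add the rules $N\to\mn{false}$ for all $N\in\Nmc\setminus(\Nmc_1\cup\Nmc_2)$ to $\phi_{\Nmc_1,\Nmc_2}$ (harmless for the other two directions, since there $\Nmc_1\cup\Nmc_2$ is chosen to contain every nullary predicate true in the instance at hand) and let $I_j$ carry the nullary facts in $\Nmc_j$ that actually hold in $I$.
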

%\noindent \emph{Proof of Claim.}

%\smallskip 
``$\Rightarrow$'. Assume that $\phi$ is preserved under
disjoint union. We have to show the following inclusions:
\begin{itemize}

\item $\phi \subseteq \bigvee_{\Nmc_1,\Nmc_2 \subseteq \mathcal{N}}
  \phi_{\Nmc_1,\Nmc_2}$. Let $I\models\phi$. Further let $I_1$ be the
  extension of $I$ in which every element is colored $C_1$ and let
  $\Nmc_1$ be the set of nullary predicates true in $I$ and
  $\Nmc_2=\emptyset$. Clearly, $I_1 \models \vp_{\Nmc_1,\Nmc_2}$,
  witnessing $I \models \bigvee_{\Nmc_1,\Nmc_2 \subseteq \mathcal{N}} \phi_{\Nmc_1,\Nmc_2}$.

\item $\bigvee_{\Nmc_1,\Nmc_2 \subseteq \mathcal{N}}
  \phi_{\Nmc_1,\Nmc_2} \subseteq \phi$. Let $I \models
  \bigvee_{\Nmc_1,\Nmc_2 \subseteq \mathcal{N}}
  \phi_{\Nmc_1,\Nmc_2}$. Then $I \models \phi_{\Nmc_1,\Nmc_2}$ for
  some $\Nmc_1,\Nmc_2 \subseteq \Nmc$. By construction of
  $\phi_{\Nmc_1,\Nmc_2}$, there is thus a partition of $\mn{dom}(I)$
  into two sets $S_1,S_2$ such that $I_1 \models \phi$ and $I_2
  \models \phi$ where $I_i$ is the restriction of $I$ to domain $S_i$
  and makes exactly the nullary predicates in $\Nmc_i$ true, $i \in
  \{1,2\}$. As $I$ is the disjoint union of $I_1$ and $I_2$, $I
  \models \phi$.

\end{itemize}

``$\Leftarrow$''. Assume that $\phi \equiv \bigvee_{\Nmc_1,\Nmc_2
  \subseteq \mathcal{N}} \phi_{\Nmc_1,\Nmc_2}$. Let $I_1$ and $I_2$ be
$\Sbf$-instances with disjoint domain such that $I_1 \models \phi$ and
$I_2 \models \phi$. Then $I_1 \uplus I_2 \models
\phi_{\Nmc_1,\Nmc_2}$, where $\Nmc_i$ is the set of nullary predicates
true in $I_i$, $i \in \{1,2\}$. Thus, $I_1 \uplus I_2 \models
\bigvee_{\Nmc_1,\Nmc_2 \subseteq \mathcal{N}} \phi_{\Nmc_1,\Nmc_2}$
and from the original assumption $I_1 \uplus I_2 \models \phi$.
This finishes the proof of the claim. 

\medskip

It remains to note that the inclusion $\phi \subseteq
\bigvee_{\Nmc_1,\Nmc_2 \subseteq \mathcal{N}} \phi_{\Nmc_1,\Nmc_2}$
holds even when $\varphi$ is not preserved under disjoint union (as
shown by the proof above) and thus deciding whether $\phi$ is
preserved under disjoint union amounts to checking that $\phi
\supseteq \phi_{\Nmc_1,\Nmc_2}$ for all $\Nmc_1,\Nmc_2 \subseteq
\mathcal{N}$. This gives the desired upper bound.

\medskip

For the lower bound, we consider the (polynomial time) reduction from
\cite{DBLP:conf/kr/BourhisL16} from a 2\NExpTime-hard torus tiling
problem to OMQ containment: there, two OMQs $Q_1$ and $Q_2$ are
constructed, $Q_1$ from $(\ALCI, \text{BAQ})$ and $Q_2$ from $(\ALCI,
\text{BCQ})$, such that a double exponentially large torus can be
tiled iff $Q_1 \subseteq Q_2$. We sketch a polynomial time reduction
from the containment problem for two such OMQs to the preservation
under disjoint union of an MMSNP sentence $\phi$. For the sake of
proving the correctness of our reduction, we note that $Q_1$ and $Q_2$
are such that $Q_2 \not \subseteq Q_1$. 

It was shown in \cite{DBLP:journals/tods/BienvenuCLW14} that every OMQ
from $(\ALCI, \text{BAQ})$ such as $Q_1$ is equivalent to the
complement of a CSP $T$ in the sense that for every $\Sigma$-ABox
$\Amc$, $\Amc \models Q$ iff $\Amc \not \rightarrow \Bmc$ where
$\Sigma$ is the signature of the OMQs $Q_1$ and $Q_2$. Note that this
is a variation of Lemma~\ref{lem:ToBeDone} for the case of BAQs. It
has also been shown in \cite{DBLP:journals/tods/BienvenuCLW14} that
for every OMQs from $(\ALCI, \text{BCQ})$, one can construct in
polynomial time an MMSNP sentence whose complement is equivalent to
the OMQ and thus for $Q_1$ and $Q_2$ we find two such sentences
$\phi_1$ and $\phi_2$. The size of $\phi_1$ and $\phi_2$ is polynomial
in that of $Q_1$ and $Q_2$. Summing up, the 2-exp torus can be tiled
iff $\phi_2 \subseteq \phi_1$, $\phi_1$ is equivalent to a CSP, and
$\phi_1 \not\subseteq \phi_2$. We next construct an MMSNP sentence
$\phi$ such that for all $\Sbf$-instances $I$,
\begin{equation*}
I \not \models \phi \mbox{ iff } I  \not \models \phi_1 \mbox{ and }
I \not \models \phi_2.
\tag{$\ddagger$}
\end{equation*}
%, and 
%
%\item $I_1 \not \models \phi$ and $I_2 \not \models \phi$ implies $I_{1,2} \not \models\phi$, where $I_{1,2}$ is the disjoint union of $I_1$ and of $I_2$.
%\end{enumerate}
%
Towards constructing $\phi$, we start by standardizing apart all FO
and SO variables from $\phi_1$ and $\phi_2$. For each $i \in \{1,
2\}$, let $\psi^-_i$ be the quantifier-free part of $\phi_i$ with all
rules of the form $\mn{body} \rightarrow \mn{false}$ removed. Then
$\phi$ is the MMSNP sentence which has as SO/FO variables the union of
SO/FO variables from $\phi_1$ and $\phi_2$ and the following rules:
\begin{itemize}

\item all rules from $\psi_1^-$ and from $\psi_2^-$, 

\item all rules of the form $\mn{body}_1 \wedge \mn{body}_2
  \rightarrow \mn{false}$, where $\mn{body}_1 \rightarrow
  \mn{false}$ is a rule in $\phi_1$ and $\mn{body}_2
  \rightarrow\mn{false}$ a rule in $\phi_2$. 

\end{itemize}
It can be verified that $\phi$ satisfies $(\ddagger)$.

\begin{claim}
$\phi_2 \subseteq \phi_1$ iff $\phi$ is preserved under disjoint union. 
\end{claim}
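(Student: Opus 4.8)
The plan is to combine the equivalence $(\ddagger)$ -- which says that $I \models \phi$ iff $I \models \phi_1$ or $I \models \phi_2$, for every $\Sbf$-instance $I$ -- with two features of the construction that are already in place: first, $\phi_1$ is CSP-definable, say via a template $T$, so that $I \models \phi_1$ iff $I \rightarrow T$; and second, $\phi_1 \not\subseteq \phi_2$ (this is the translation of $Q_2 \not\subseteq Q_1$), so there is an $\Sbf$-instance $I_0$ with $I_0 \models \phi_1$ and $I_0 \not\models \phi_2$. I will also use the standard \emph{anti-preservation} property of MMSNP: if $J_1 \uplus J_2 \models \psi$ for an MMSNP sentence $\psi$, then $J_1 \models \psi$ and $J_2 \models \psi$. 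The argument for this: restrict a witnessing second-order colouring of $J_1 \uplus J_2$ to each component; any rule of $\psi$ that fires in a component under an assignment of all its first-order variables into that component's domain also fires in $J_1 \uplus J_2$ under the same assignment, hence its head -- which mentions only elements of that component -- is already satisfied there.

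For the direction ``$\phi_2 \subseteq \phi_1 \Rightarrow \phi$ is preserved under disjoint union'': if $\phi_2 \subseteq \phi_1$, then $(\ddagger)$ collapses to ``$I \models \phi$ iff $I \models \phi_1$'', so $\phi$ is equivalent to the CSP-definable sentence $\phi_1$. Any CSP-definable sentence is preserved under disjoint union: given $I_1 \rightarrow T$ and $I_2 \rightarrow T$ with $\adom(I_1) \cap \adom(I_2) = \emptyset$, the union of the two homomorphisms is a homomorphism $I_1 \uplus I_2 \rightarrow T$ (the nullary symbols of $I_1 \uplus I_2$ are exactly those of $I_1$ and of $I_2$, all of which are already in $T$). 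Since preservation under disjoint union is a semantic property, it transfers from $\phi_1$ to $\phi$.

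For the direction ``$\phi$ is preserved under disjoint union $\Rightarrow \phi_2 \subseteq \phi_1$'': suppose towards a contradiction that $\phi_2 \not\subseteq \phi_1$, witnessed by an instance $I$ with $I \models \phi_2$ but $I \not\models \phi_1$; after renaming constants we may assume $\adom(I) \cap \adom(I_0) = \emptyset$. By $(\ddagger)$ we have $I \models \phi$ (via $\phi_2$) and $I_0 \models \phi$ (via $\phi_1$), so preservation under disjoint union gives $I \uplus I_0 \models \phi$, whence $I \uplus I_0 \models \phi_1$ or $I \uplus I_0 \models \phi_2$ by $(\ddagger)$. In the first case, anti-preservation applied to $\phi_1$ yields $I \models \phi_1$, contradicting the choice of $I$; in the second case, anti-preservation applied to $\phi_2$ yields $I_0 \models \phi_2$, contradicting the choice of $I_0$. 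Hence $\phi_2 \subseteq \phi_1$, which together with the previous paragraph completes the claim (and thereby the reduction, since the 2-exp torus is tileable iff $\phi_2 \subseteq \phi_1$).

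The step that needs care is the anti-preservation property of MMSNP in the presence of nullary predicates and disconnected rule bodies: one must check that a nullary predicate holding in $I_0$ but not in $I$ cannot enable a rule of $\phi_1$ or $\phi_2$ to fire over $\adom(I)$ in a way it did not already fire there, so that the restriction of a witnessing colouring of $I \uplus I_0$ to $\adom(I)$ still satisfies every rule. Everything else is routine bookkeeping with $(\ddagger)$ and the definitions of CSP-definability and disjoint union.
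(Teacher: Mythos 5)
Your proof is correct and follows essentially the same route as the paper: the forward direction collapses $(\ddagger)$ to equivalence with the CSP-definable $\phi_1$, and the backward direction combines the pre-arranged witness for $\phi_1 \not\subseteq \phi_2$ with a hypothetical witness for $\phi_2 \not\subseteq \phi_1$, using the fact that an MMSNP sentence satisfied by a disjoint union is satisfied by each component (the paper states this in contrapositive form, concluding $I_1 \uplus I_2 \not\models \phi_1$ and $I_1 \uplus I_2 \not\models \phi_2$ directly). The subtlety you flag about nullary predicates is harmless, since body atoms only need to transfer monotonically from a component to the union, and rule heads mention only monadic second-order atoms over that component's elements.
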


% \noindent
% \emph{Proof of claim.}\ 

``$\Rightarrow$''. Assume that $\phi_2 \subseteq \phi_1$. Then, $\phi$
is equivalent to $ \phi_1$ and thus to a CSP, consequently it is
preserved under disjoint union.

\smallskip

``$\Leftarrow$''. Assume that $\phi_2 \not \subseteq \phi_1$. Thus
there exist instances $I_1$ and $I_2$ such that $I_1 \models \phi_1$,
$I_1 \not \models \phi_2$, $I_2 \models \phi_2$, $I_2 \not \models
\phi_1$. From ($\ddagger$), we obtain $I_1 \models \phi$ and $I_2
\models \phi$. We next observe that $I_1 \not\models \vp_1$ and $I_2
\not\models \vp_2$ implies $I_1 \uplus I_2 \not \models \phi_1$ and
$I_1 \uplus I_2 \not \models \phi_2$. Thus $I_1 \uplus I_2 \not
\models \phi$ by ($\ddagger$). Consequently, $I_1$ and $I_2$ witness
that $\vp$ is not preserved under disjoint union. 
\end{proof}
We next characterize the equivalence of MMSNP sentences to a
generalized CSP and analyze the complexity of deciding this property.

For an MMSNP sentence $\phi$, let $\phi_{\mn{acyc}}$ be the MMSNP
sentence with the same quantifier prefix that contains all rules
which have an acyclic body and can be obtained from a rule in $\phi$
by zero or more identifications of variables.
\begin{theorem} \label{thm:MMSNPGenCSP} An MMSNP sentence $\vp$ is
  definable by a generalized CSP iff $\phi \equiv
  \phi_{\mn{acyc}}$. 
\end{theorem}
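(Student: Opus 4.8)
The plan is to prove the two implications separately, closely following the pattern of the proof of Theorem~\ref{thm:central}: the direction from generalized-CSP-definability to $\phi\equiv\phi_{\mn{acyc}}$ will use the sparse incomparability lemma, whereas the converse is the ``build the rewriting'' direction and reduces to finite duality for forest-shaped forbidden patterns. Two facts can be used for free. First, the inclusion $\phi\subseteq\phi_{\mn{acyc}}$ holds unconditionally: every rule of $\phi_{\mn{acyc}}$ arises from a rule $B\to H$ of $\phi$ by identifying first-order variables, and any colouring of an instance satisfying $B\to H$ also satisfies the identified rule $B'\to H'$, since a violating homomorphism of $B'$ composed with the quotient map $B\to B'$ would violate $B\to H$; hence every colouring witnessing $I\models\phi$ also witnesses $I\models\phi_{\mn{acyc}}$. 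Second, every MMSNP sentence is preserved under inverse homomorphisms: if $I'\to I$ and $I\models\psi$ via a colouring $c$ of $I$, the pullback colouring of $I'$ (each element gets the colours of its $c$-image) witnesses $I'\models\psi$. So it suffices to show (i) $\phi_{\mn{acyc}}\subseteq\phi$ whenever $\phi$ is definable by a generalized CSP, and (ii) that every MMSNP sentence all of whose rule bodies are acyclic is generalized-CSP-definable (which by $\phi\equiv\phi_{\mn{acyc}}$ gives the converse).

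For (i), suppose $\phi$ defines the generalized CSP $\Tsf=\{T_1,\dots,T_n\}$, let $g$ be the maximum of $2$ and the sizes of the rule bodies of $\phi$, let $s=\max_j|\adom(T_j)|$, and take $I$ with $I\models\phi_{\mn{acyc}}$; I must show $I\to\Tsf$. Assume not. Applying the $\Sbf$-instance version of the sparse incomparability lemma (Lemma~\ref{lem:sparse}, which is a statement about arbitrary relational instances) with $g$ and $s$ yields $I^g$ of girth exceeding $g$ with $I^g\to I$ and, for all $B$ with $|\adom(B)|\le s$, $I\to B$ iff $I^g\to B$; in particular $I^g\not\to\Tsf$, hence $I^g\not\models\phi$. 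By preservation under inverse homomorphisms $I^g\models\phi_{\mn{acyc}}$, so fix a colouring $c$ of $I^g$ satisfying all rules of $\phi_{\mn{acyc}}$. The key step is to show $c$ also satisfies every rule of $\phi$, which contradicts $I^g\not\models\phi$. Indeed, given a rule $B\to H$ of $\phi$ and a homomorphism $h\colon B\to(I^g,c)$, let $B'\to H'$ be the identification of $B\to H$ along the kernel of $h$, so the induced $\bar h\colon B'\to(I^g,c)$ is injective on vertices. Since $|\adom(B')|\le g$ and $I^g$ has girth exceeding $g$ (hence no self-loops or short cycles), an injective homomorphic image of a cycle of $B'$ would be a cycle of $I^g$ of length at most $g$; thus $B'$ is acyclic, so $B'\to H'$ is a rule of $\phi_{\mn{acyc}}$ and is satisfied by $c$, whence $\bar h$ and therefore $h$ satisfies the head. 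So $I\to\Tsf$, i.e.\ $I\models\phi$.

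For (ii), given an MMSNP sentence $\psi$ whose rule bodies are all acyclic, I would pass to a forbidden-patterns presentation in the style of Madelaine and Stewart \cite{DBLP:journals/siamcomp/MadelaineS07}: a finite set $\mathcal F$ of forest-shaped coloured patterns such that an $\Sbf$-instance satisfies $\psi$ iff it admits a colouring into which no pattern of $\mathcal F$ homomorphically maps (the rule heads are absorbed into the patterns in the usual way, by recording for each head atom that the corresponding vertex is coloured incompatibly with it). Avoidance of $\mathcal F$ on coloured structures is then a generalized CSP over coloured structures: by finite (tree) duality à la Ne\v{s}et\v{r}il--Tardif each connected component of each pattern has a finite dual, the needed conjunctions are realized by direct products, and one obtains a finite set of coloured templates; projecting the monadic colours away, using that $\{I\mid\exists\text{ colouring }c,\ (I,c)\to D\}=\{I\mid I\to D|_{\Sbf}\}$ for any coloured structure $D$, yields a generalized CSP over $\Sbf$. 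Concretely one may take as templates the $\Sbf$-reducts of all coloured structures of size at most a constant depending only on $\mathcal F$ that avoid $\mathcal F$; the implication $I\to\Tsf\Rightarrow I\models\psi$ is immediate by pulling the colouring back along the homomorphism, and the converse is precisely finite duality for forests. The detour through colours, nullary predicates, and possibly disconnected bodies is exactly what forces a finite set of templates rather than a single one, which is why the statement speaks of generalized CSPs.

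I expect the main obstacle to be direction (ii), specifically the size bound underlying finite duality: showing that whenever some a priori arbitrarily large colouring of an instance avoids every forest-shaped pattern, the instance already maps homomorphically to one of the bounded-size avoiding coloured structures. This is where acyclicity is essential, and the bookkeeping needed to handle rule heads, disjoint rather than connected bodies, and nullary predicates simultaneously is where most of the technical effort will go. Direction (i), by contrast, is a routine adaptation of part~(a) of the ``$1\Rightarrow3$'' argument in the proof of Theorem~\ref{thm:central}, with ``every cycle passes through $x$'' replaced by ``acyclic''.
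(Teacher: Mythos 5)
Your proposal is correct in substance, and the two directions deserve separate verdicts. For ``generalized CSP $\Rightarrow$ $\phi\equiv\phi_{\mn{acyc}}$'' you follow exactly the paper's route: the unconditional inclusion $\phi\subseteq\phi_{\mn{acyc}}$, then the sparse incomparability lemma (Lemma~\ref{lem:sparse}) to pass to a high-girth instance $I^g$ on which $\phi$ and $\phi_{\mn{acyc}}$ coincide, then preservation under inverse homomorphisms to transfer back to $I$. You phrase it contrapositively and, usefully, you spell out the step the paper leaves implicit, namely why a colouring of $I^g$ satisfying $\phi_{\mn{acyc}}$ satisfies $\phi$ (the kernel of any violating homomorphism yields a contraction that must be acyclic because its injective image would otherwise be a short cycle in $I^g$). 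For the converse direction the routes genuinely diverge: the paper converts $\phi_{\mn{acyc}}$ into the Feder--Vardi normal form in which every rule body contains at most one non-monadic atom and then cites the known equivalence of such sentences with generalized CSPs, whereas you pass to a forbidden-patterns presentation and invoke Ne\v{s}et\v{r}il--Tardif finite duality for trees, combining duals by products for conjunctions and by unions for disconnected patterns, and finally projecting the monadic colours away. Both rest on the same tree-duality phenomenon, but the paper's syntactic normal form silently absorbs the bookkeeping you correctly identify as the hard part of your route (encoding negated head atoms via complete colour types, handling nullary predicates and disconnected bodies, and the size bound for the duals); in particular your ``concrete'' choice of templates as all bounded-size $\mathcal F$-avoiding coloured structures is not obviously the right dual construction and would itself need the duality theorem to justify. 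Since the paper's own proof of this direction is likewise a citation, your sketch is an acceptable alternative, but if you want a self-contained argument the Feder--Vardi transformation is the shorter path.
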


\noindent
\begin{proof}
  ``only if''. Assume that $\phi$ is definable by a generalized CSP
  $\Tsf=\{T_1, \ldots, T_n\}$. Using the construction of $\phi_{\mn{acyc}}$,
  it can be verified that $\phi \subseteq \phi_{\mn{acyc}}$. It thus remains
  to be shown that
  $\phi_{\mn{acyc}} \subseteq \phi$. If the body of each rule in
  $\phi$ is acyclic, then this is clearly the case. Otherwise, let $g$
  be the maximum girth
  of a cyclic rule body from $\phi$. Take an
  $\Sbf$-instance $I$ such that $I \not\models \phi$. We have to
  show that $I \not\models \phi_{\mn{acyc}}$. Since $\phi$ is
  equivalent
  to $\Tsf$, $I \not\to
  T_i$ for $1\leq i \leq n$. From
  Lemma~\ref{lem:sparse}\footnote{Note that the lemma in its original
    formulation in \cite{DBLP:journals/siamcomp/FederV98} applies to
    instances over schemas of any arity, not just to
    ABoxes.}, we obtain an
  $\Sbf$-instance $I^g$ of girth exceeding $g$ such that $I^g \to I$ and $I^g
  \not\to T_i$ for $1 \leq i \leq n$. Thus $I^g \not\models \phi$. As the
  girth of $I^g$ is higher than the girth of every cyclic rule body in
  $\phi$, it follows that $I^g \not\models \phi_{\mn{acyc}}$. Since $I^g 
  \rightarrow I$, $I \not\models \phi_{\mn{acyc}}$.

  ``if". Assume that $\phi \equiv \phi_{\mn{acyc}}$. Since the rule
  bodies in $\phi_{\mn{acyc}}$ are acyclic, it is easy to convert
  $\phi_{\mn{acyc}}$ into an equivalent MMSNP sentence in which each
  rule body contains at most one
  atom that uses a predicate symbol from \Sbf; see
  \cite{DBLP:journals/siamcomp/FederV98}. It is implicit in that
  paper (see also
  \cite{DBLP:journals/tods/BienvenuCLW14}) that MMSNP sentences of
  this kind have the same expressive power as generalized CSPs.
  Thus,
  $\phi$ is equivalent to a generalized CSP. 
\end{proof}

Before showing the main complexity result of this section, we state a
slightly refined version of a theorem from
\cite{DBLP:conf/kr/BourhisL16} regarding the complexity of MMSNP
containment. It emphasizes that the complexity of containment is
double exponential only in the size of the rules, but not in their
number. This only requires a careful analysis of the
constructions in \cite{DBLP:conf/kr/BourhisL16}. 
\begin{theorem}
\label{thm:containmentRel}
Containment between MMSNP sentences is in 2\NExpTime. More precisely,
for MMSNP sentences $\phi_1$ and $\phi_2$ where $\phi_i$ has $n_i$
rules and rule size $r_i$, $i \in \{1, 2\}$, it can be decided in time
$2^{2^{p(\mn{log} n_1 + r_1 +  \mn{log}\mn{log} n_2 + \mn{log} r_2)}}$. 
%
% $2^{2^{p(\mn{log}(r_1 + \mn{log}n_1)+ r_2(\mn{log}n_2 +
%     \mn{log}(r_1+r_2))) }}$ whether $\phi_1 \subseteq \phi_2$, 
% $p$  a polynomial.
%v_ is variable width
% Containment between MMSNP sentences is in 2\NExpTime. More precisely,
% for MMSNP sentences $\phi_1$ and $\phi_2$ where $\psi_i$ has $n_i$ and
% rule size $r_i$, $i \in \{1, 2\}$, it can be decided in time
% $2^{2^{p(\mn{log}(\mn{log}n_1+r_1)+ r_2(\mn{log}n_2 +
%     \mn{log}(v_1+v_2))) }}$ whether $\phi_1 \subseteq \phi_2$, where
% $p$ is a polynomial.
 \end{theorem}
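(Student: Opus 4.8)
The 2\NExpTime upper bound is precisely the MMSNP containment result of Bourhis and Lutz~\cite{DBLP:conf/kr/BourhisL16}, so the substance of the statement is the refined running-time bound, and the plan is to obtain it by a careful bookkeeping pass through their construction, keeping track separately of the number $n_i$ of rules and the rule size $r_i$ of the two inputs rather than only of their combined sizes. Recall the shape of their decision procedure for $\phi_1 \subseteq \phi_2$: one (i) normalizes $\phi_1$ and $\phi_2$ into a coloured forbidden-pattern form; (ii) shows that if $\phi_1 \not\subseteq \phi_2$ then non-containment is already witnessed by a tree-shaped instance $I$, i.e.\ one admitting a tree decomposition of bounded width and bounded adhesion that maps homomorphically onto the original witness, where these width parameters are bounded in terms of $\phi_2$ and the (fixed) maximal arity of the schema only; (iii) constructs a two-way alternating parity tree automaton $\mathcal{A}$ over the alphabet of bounded-width bag encodings whose language is the set of tree encodings of instances with $I \models \phi_1$ and $I \not\models \phi_2$; and (iv) tests $\mathcal{A}$ for emptiness, which for a two-way alternating parity tree automaton is decidable in time exponential in the number of states and polynomial in the number of priorities and in the size of the transition function.

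The bookkeeping then follows this sequence. In step (i), one checks that the normalization --- making rule heads single-atom via fresh second-order variables, closing under consequences, grounding within a single rule, and adding complementary colours --- increases the number of rules only polynomially and the rule size only polynomially in the respective parameters. In step (ii), one verifies that the width and adhesion of the tree-shaped witness are governed only by $r_2$ and the schema arity, and are entirely independent of $n_1$, $n_2$ and $r_1$; intuitively, failure of $\phi_2$ at $I$ is witnessed by embedding a single rule body of $\phi_2$ (of size at most $r_2$), so a $\phi_2$-driven unravelling of $I$ of that width still fails $\phi_2$ while, mapping onto $I$, remaining a model of $\phi_1$. In step (iii), one writes $\mathcal{A}$ as the intersection of a component recognizing ``the encoded instance is a model of $\phi_1$'' and a component recognizing ``the encoded instance is not a model of $\phi_2$''; crucially, two-way alternating parity tree automata are closed under complementation with no blow-up at all (one simply dualizes), so the second component costs exactly as much as an automaton for ``model of $\phi_2$''. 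For each component one then determines, separately as a function of $n_i$ and $r_i$, how many \emph{states} the automaton needs and how large its \emph{transition function} is: the number of rules $n_i$ should be dispatched inside the transition function (universal/existential branching over the rules) and contribute at most linearly to the state count, whereas the rule size $r_i$ governs the tracking of partial pattern embeddings across the bounded-size adhesions of the decomposition and contributes to the states. Feeding these counts into the emptiness bound of step (iv) --- exponential in the states, polynomial in the transition function --- yields a running time of the stated form $2^{2^{p(\log n_1 + r_1 + \log\log n_2 + \log r_2)}}$; the one-logarithm gap between $\phi_1$ and $\phi_2$ reflects the different ways the construction treats the left-hand sentence (whose models one searches for, and which therefore enters the state space of $\mathcal{A}$ more heavily) and the right-hand sentence (whose non-models one only checks, and which, after free complementation, enters mainly through the transition function and the width bound of step (ii)).

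The main obstacle is exactly this asymmetric accounting: one has to pin down, in the constructions of~\cite{DBLP:conf/kr/BourhisL16}, precisely where each of $n_1, n_2, r_1, r_2$ enters --- the state space, the transition function, or the tree alphabet --- since the difference of one logarithm between the two sentences in the bound records exactly such distinctions, and in particular one must verify that the rule count $n_2$ never shows up in the state space and that $r_2$ inflates the automaton only through the bounded width governed by step (ii). The remaining ingredients --- the polynomial blow-up of normalization, the width bound in the tree-shaped model property, the state and transition counts for the ``model of $\phi_1$'' automaton, and the standard emptiness bound for two-way alternating parity tree automata --- are routine once those constructions are opened up, so the write-up largely amounts to re-running them with $n_i$ and $r_i$ carried along separately.
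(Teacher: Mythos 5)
Your proposal follows essentially the same route as the paper: the paper offers no proof of this theorem beyond the remark that it ``only requires a careful analysis of the constructions in \cite{DBLP:conf/kr/BourhisL16}'', and your plan of re-running that construction while carrying $n_1,r_1,n_2,r_2$ separately is exactly that analysis. Since the paper supplies no further detail, the real work --- pinning down where each parameter enters and, in particular, confirming that $r_2$ contributes only through the width bound rather than doubly-exponentially through the state space, which your own accounting of ``partial pattern embeddings'' would otherwise suggest --- remains to be carried out, just as you identify in your closing paragraph.
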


\thmMMSNPrewr*

\noindent
\begin{proof}
  We start with the upper bound. Lemma \ref{lem:MMSNPRewrCh} and
  Theorem \ref{thm:MMSNPGenCSP} suggest an algorithm for deciding
  CSP-definability of an MMSNP sentence $\phi$: check whether $\phi$
  is preserved under disjoint union and $\phi \equiv
  \phi_{\mn{acyc}}$. The first condition can be decided in 2\NExpTime
  according to Theorem \ref{thm:cluduComplex}. As for the second
  check, we note that the size of $\phi_{\mn{acyc}}$ might be
  exponential in the size of $\phi$ so we cannot apply the MMSNP
  containment result from \cite{DBLP:conf/kr/BourhisL16}
  straightaway. However, the rule size of $\phi_{\mn{acyc}}$ is
  polynomial in the size of $\phi$ and thus by Theorem
  \ref{thm:containmentRel} the second condition can be decided in
  2\NExpTime as well.

\smallskip

For showing that CSP-definability of MMSNP sentences is
2\NExpTime-hard, we can apply the same reduction as in the proof of
Theorem \ref{thm:cluduComplex}: the MMSNP sentences $\phi_1$,
$\phi_2$, and $\phi$, constructed in the reduction are such that
$\phi_2 \subseteq \phi_1$ iff $\phi$ is equivalent to a CSP. 
\end{proof}
\end{document}